\documentclass[11pt,twoside,a4paper]{article}


\usepackage[utf8]{inputenc}     
\usepackage[T1]{fontenc}        
\usepackage[english]{babel}     
\usepackage{microtype}          

\usepackage{lmodern}            

\usepackage[leqno]{mathtools}   
\usepackage{amssymb}            
\usepackage{amsthm}             
\usepackage{nicefrac}           

\usepackage{bm}                 
\usepackage{dsfont}             
\usepackage{bbm}                
\usepackage{mathrsfs}           
\usepackage{euscript}           
\usepackage[bb=px]{mathalfa}    

\usepackage[dvipsnames]{xcolor} 
\usepackage{geometry}
\geometry{left=1.10in,right=1.10in,top=0.75in,bottom=1.0in,headheight=15pt,includeheadfoot}
\usepackage{graphicx}
\usepackage{booktabs}           
\usepackage{enumitem}
\setlist[enumerate]{itemsep=0pt}
\usepackage{fancyhdr}
\usepackage{titlesec}
\usepackage{authblk}            
\usepackage{float}

\usepackage[compress]{natbib}
\setcitestyle{authoryear, circle, citesep={,}, aysep={,}, yysep={,}}

\usepackage[ruled,linesnumbered]{algorithm2e}

\usepackage{tikz}
\usetikzlibrary{positioning, fit, tikzmark, calc}
\usetikzlibrary{decorations.pathreplacing, calligraphy}

\colorlet{inlinkcolor}{purple}
\colorlet{exlinkcolor}{purple}
\colorlet{citecolor}{teal!50!blue}

\usepackage{hyperref}
\hypersetup{
    colorlinks=true,
    linkcolor=inlinkcolor,
    citecolor=citecolor,
    urlcolor=exlinkcolor,
    linktoc=page,
    breaklinks=true,
    plainpages=false
}

\newtheorem{theorem}{Theorem}[section]
\newtheorem{proposition}[theorem]{Proposition}
\newtheorem{lemma}[theorem]{Lemma}
\newtheorem{corollary}[theorem]{Corollary}
\theoremstyle{definition}

\newtheorem{assumption}{Assumption}
\theoremstyle{remark}
\newtheorem{remark}[theorem]{Remark}
\newtheorem{example}[theorem]{Example}

\numberwithin{equation}{section} 


\newcommand{\setheader}[2]{%
    \pagestyle{fancy}
    \fancyhf{}
    \fancyhead[LO]{#1}
    \fancyhead[RO]{}
    \fancyhead[LE]{}
    \fancyhead[RE]{#2}
    \renewcommand{\headrulewidth}{0.4pt}
    \fancyfoot[C]{\thepage}
    \renewcommand{\headrule}{\color{gray!75}\hrule width\textwidth height 1.0pt}
}

\makeatletter
\renewcommand{\maketitle}{
    \begin{center}
        {\Large\bfseries\@title\par}
        \vskip 1.5em
        {\@author\par}
        \vskip 1em
        {\@date\par}
    \end{center}
    \vskip 1.5em
}
\makeatother

\usepackage{changepage}
\makeatletter
\renewenvironment{abstract}{
    \small
    \begin{adjustwidth}{0.3in}{0.3in} 
    \noindent{\bfseries Abstract.}
    \ignorespaces
}{
    \end{adjustwidth}
    \par\vspace{0.5em}
}
\makeatother

\newcommand{\keywords}[1]{
    \begin{adjustwidth}{0.3in}{0.3in} 
    \noindent{\small\bfseries Keywords:} {\small #1}
    \end{adjustwidth}
    \par\noindent\textcolor{gray}{\rule{\linewidth}{0.5pt}}
}

\titleformat{\section}
    {\normalfont\large\bfseries\centering}
    {\thesection}{1em}{}
\titleformat{\subsection}[runin]
    {\normalfont\normalsize\bfseries}
    {\thesubsection}{1em}{}[.]
\titleformat{\subsubsection}[runin]
    {\normalfont\normalsize\itshape}
    {\thesubsubsection}{1em}{}[.]
\titleformat{\paragraph}[runin]
    {\normalfont\normalsize\bfseries}
    {\theparagraph}{1em}{}[.]

\titlespacing{\section}
    {0pt}{2.0ex plus 0.5ex minus 0.2ex}{1.0ex plus 0.1ex}
\titlespacing{\subsection}
    {0pt}{1.0ex plus 0.1ex minus 0.1ex}{1.0ex plus 0.1ex}
\titlespacing{\subsubsection}
    {0pt}{1.0ex plus 0.1ex minus 0.1ex}{1.0ex plus 0.1ex}
\titlespacing{\paragraph}
    {0pt}{0.5ex plus 0.1ex minus 0.1ex}{0.5ex plus 0.1ex}


\def\d{\,\mathrm{d}}
\def\dt{\,\mathrm{d}t}
\def\ds{\,\mathrm{d}s}

\def\what{\widehat}

\usepackage{bm,bbm}
\def\bbone{{\mathbbm{1}}}

\def\bzero{{\mathbf{0}}}

\def\vxi{{\bm{\xi}}}

\def\vepsilon{{\bm{\varepsilon}}}

\def\va{\mathbf{a}}
\def\vb{\mathbf{b}}
\def\vc{\mathbf{c}}

\def\vh{\mathbf{h}}

\def\vm{\mathbf{m}}
\def\vn{\mathbf{n}}

\def\vs{\mathbf{s}}

\def\vu{\mathbf{u}}
\def\vv{\mathbf{v}}

\def\vx{\mathbf{x}}
\def\vy{\mathbf{y}}
\def\vz{\mathbf{z}}
\def\mA{\mathbf{A}}
\def\mB{\mathbf{B}}

\def\mD{\mathbf{D}}

\def\mI{\mathbf{I}}

\def\mX{\mathbf{X}}
\def\mY{\mathbf{Y}}
\def\mZ{\mathbf{Z}}


\def\calC{{\mathcal{C}}}

\def\calF{{\mathcal{F}}}
\def\calG{{\mathcal{G}}}

\def\calL{{\mathcal{L}}}
\def\calM{{\mathcal{M}}}

\def\calO{{\mathcal{O}}}

\def\calR{{\mathcal{R}}}

\def\calT{{\mathcal{T}}}


\def\bbE{{\mathbb{E}}}

\def\bbP{{\mathbb{P}}}

\def\bbR{{\mathbb{R}}}

\def\bbW{{\mathbb{W}}}

\usepackage{mathrsfs}

\usepackage{tikz}
\usetikzlibrary{positioning,fit,tikzmark,calc}
\usetikzlibrary{decorations.pathreplacing,calligraphy}
\usepackage{multirow}
\usepackage{pifont}
\usepackage{makecell}
\usepackage[normalem]{ulem}
\usepackage{subcaption}

\def\kl{\mathrm{KL}}
\def\iid{\mathrm{i.i.d.}}

\newcommand{\given}{\mkern 2mu | \mkern 2mu }

\DeclareMathOperator{\supp}{supp}
\DeclareMathOperator{\op}{op}
\DeclareMathOperator{\tv}{TV}

\DeclareMathOperator{\ent}{Ent}
\DeclareMathOperator{\prior}{prior}
\DeclareMathOperator{\post}{post}

\DeclareMathOperator{\LSI}{LSI}
\DeclareMathOperator{\SG}{SG}
\def\d{\,\mathrm{d}}

\def\dt{\,\mathrm{d}t}
\def\ds{\,\mathrm{d}s}
\def\du{\,\mathrm{d}u}

\def\trace{\mathrm{trace}}
\def\cov{\mathrm{Cov}}

\setheader{\textcolor{black}{Provable Diffusion Posterior Sampling}}{\textcolor{black}{Chang et al.}}

\title{Provable Diffusion Posterior Sampling for Bayesian Inversion}
\author[1]{Jinyuan Chang}
\author[2]{Chenguang Duan}
\author[3]{Yuling Jiao}
\author[4]{Ruoxuan Li}
\author[5]{Jerry Zhijian Yang}
\author[3]{Cheng Yuan}

\affil[1]{Joint Laboratory of Data Science and Business Intelligence, Southwestern University of Finance and Economics, Chengdu, Sichuan 611130, China. \protect\\ \texttt{changjinyuan@swufe.edu.cn}}
\affil[2]{Institut f{\"u}r Geometrie und Praktische Mathematik, RWTH Aachen University, Templergraben 55, Aachen 52056, Germany. \protect\\ \texttt{duan@igpm.rwth-aachen.de}}
\affil[3]{School of Artificial Intelligence, Wuhan University, Wuhan, Hubei 430072, China. \protect\\ \texttt{\{yulingjiaomath,yuancheng\}@whu.edu.cn}}
\affil[4]{Institute of Applied Mathematics, Academy of Mathematics and Systems Science, Chinese Academy of Sciences, Beijing 100190, China. \protect\\ \texttt{ruoxuanli.math@amss.ac.cn}}
\affil[5]{Institute for Math and AI, Wuhan University, Wuhan, Hubei 430072, China. \protect\\ \texttt{zjyang.math@whu.edu.cn}}
\date{\today}

\begin{document}

\thispagestyle{plain}
\maketitle

\begin{abstract}
We propose a novel diffusion-based posterior sampling method within a plug-and-play framework. Our approach constructs a probability transport from an easy-to-sample distribution to the target posterior via a diffusion process. To initialize the sampler efficiently, we introduce a warm-start strategy for the particles. The posterior score is then approximated using a Monte Carlo estimator in which samples are generated via Langevin dynamics, avoiding the heuristic approximations prevalent in prior work. The score function driving the Langevin dynamics is learned from data, enabling the model to capture rich structural features of the underlying prior. We also establish non-asymptotic error bounds in Wasserstein-2 distance guaranteeing convergence of the proposed method even for complex, multimodal posterior distributions. We corroborate our theoretical findings with numerical experiments demonstrating the effectiveness of the method across a variety of inverse problems.
\end{abstract}

\begin{adjustwidth}{0.3in}{0.3in}
\noindent{\small\bfseries MSC 2020:} {\small 62F15 (primary); 60J60 (secondary).}
\end{adjustwidth}
\keywords{Bayesian inverse problems, convergence analysis, diffusion models, plug-and-play}

\section{Introduction}\label{section:introduction}

Inverse problems play a crucial role across diverse fields, encompassing applications such as imaging reconstruction~\citep{Kornak2026Bayesian}, data assimilation~\citep{Reich2015Probabilistic,Reich2019Data,ding2024nonlinear}, physics sciences~\citep{zheng2025inversebench}, digital twins~\citep{opper2025digital}, and generative artificial intelligence~\citep{chung2023diffusion,uehara2025inference}. The goal of a statistical inverse problem is to recover an unknown signal of interest $\mX_{0}\in\bbR^{d}$ from indirect and noisy measurements $\mY\in\bbR^{n}$, which are linked through
\begin{equation}\label{eq:forward}
\mY=\calF(\mX_{0})+\vn\,.
\end{equation}
Here $\calF:\bbR^{d}\rightarrow\bbR^{n}$ represents a known differentiable forward operator, and the random variable $\vn\in\mathbb{R}^{n}$ denotes a measurement noise with a known density $\rho$ but unknown realization. Consequently, the likelihood of observing $\mY=\vy$ given $\mX_{0}=\vx_{0}$ is given by $p_{\mY\given\mX_{0}}(\vy\given\vx_{0})\coloneqq \rho(\vy-\calF(\vx_{0}))$. Inverse problems are typically ill-posed due to measurement noise, limited data availability, and intrinsic information loss introduced by the forward operator. As a result, a unique solution may not exist, or, if it exists, it may be sensitive to small perturbations in the measurements $\vy$~\citep{hadamard1902problemes}. To ensure well-posedness, additional information about the unknown signal $\mX_{0}$ must be incorporated to regularize the problem.

Within the Bayesian framework, prior knowledge about the unknown signal is encoded through a probability measure $\pi_{0}$, referred to as the prior distribution. Applying Bayes' rule, the posterior distribution of $\mX_{0}$ given $\mY=\vy$ is expressed as
\begin{equation}\label{eq:posterior}
\underbrace{p_{\mX_{0}\given\mY}(\vx_{0}\given\vy)}_{\text{posterior}}\propto\underbrace{\exp\{-\ell_{\vy}(\vx_{0})\}}_{\text{likelihood}}\underbrace{\pi_{0}(\vx_{0})}_{\text{prior}}\,, \quad \vx_{0}\in\mathbb{R}^{d}\,,
\end{equation}
where $\ell_{\vy}(\cdot)\coloneq -\log p_{\mY\given\mX_{0}}(\vy\given\cdot)$ denotes the negative log-likelihood associated with the measurement $\mY=\vy$. The posterior distribution~\eqref{eq:posterior} enforces fidelity to the data through the likelihood, while incorporating prior knowledge through the prior distribution. Extensive work has established conditions for stability and well-posedness of Bayesian inverse problems~\citep{Stuart2010Inverse,Dashti2017Bayesian}.

To solve the Bayesian inverse problem, two primary methodological paradigms have been developed over recent decades~\citep{Stuart2010Inverse,Trillos2023from,Durmus2025Generative}: optimization-based methods and sampling-based methods. Optimization-based approaches, also known as variational methods, seek a reconstruction of the unknown signal by identifying the most probable $\vx_{0}$ under the posterior distribution~\eqref{eq:posterior}. This leads to the maximum a posteriori estimator
\begin{equation*}
\vx_{0}^{*}\in\arg\max_{\vx_{0}\in\bbR^{d}}p_{\mX_{0}\given\mY}(\vx_{0}\given\vy)=\arg\min_{\vx_{0}\in\bbR^{d}}\{\ell_{\vy}(\vx_{0})-\log\pi_{0}(\vx_{0})\}\,.
\end{equation*}
Here, the data-fidelity term $\ell_{\vy}(\cdot)$ enforces consistency with the observations, while the regularization term $-\log \pi_{0}(\cdot)$ encodes prior information about the unknown signal. This line of work is closely connected to Tikhonov regularization~\citep{Ito2014Inverse,Benning2018Modern,ji2025potential}. Although optimization-based methods yield a single point estimate, they do not capture the full structure of the posterior distribution, and thus cannot estimate other posterior statistics of interest, such as the mean, variance, and task-specific functional moments, which take the form
\begin{equation*}
\mathbb{E}\{\phi(\mX_{0})\given\mY=\vy\}=\int\phi(\vx_{0})p_{\mX_{0}\given\mY}(\vx_{0}\given\vy)\d\vx_{0}\,,
\end{equation*}
where $\phi(\cdot)$ is a test function of interest. Given the typically high dimensionality of $\mX_{0}$, a natural approach is to employ Monte Carlo methods to estimate this expectation: one generates a collection of approximate posterior samples $\mX_{0,1}^{\vy},\ldots,\mX_{0,n}^{\vy}\sim p_{\mX_{0}\given\mY}(\cdot\given\vy)$ and forms the Monte Carlo approximation
\begin{equation*}
\mathbb{E}\{\phi(\mX_{0})\given\mY=\vy\}\approx\frac{1}{n}\sum_{i=1}^{n}\phi(\mX_{0,i}^{\vy})\,.
\end{equation*}
This motivates the development of sampling-based methods~\citep{bach2025machine,li2025optimal} for Bayesian inverse problems. To effectively tackle Bayesian inverse problems via sampling, two central challenges must be addressed: 
\begin{itemize}
\item {\it How can we construct a prior distribution $\pi_{0}(\cdot)$ that accurately reflects features of real-world data?} 

\item {\it How can we efficiently and accurately sample from the resulting potentially complex posterior distribution $p_{\mX_{0}\given\mY}(\cdot\given\vy)$?}
\end{itemize}

Conventional priors are typically chosen as simple parametric distributions based on domain knowledge or intuition, such as Gaussian priors for smoothness and Laplacian priors for sparsity. Although computationally convenient, these priors present significant limitations. The choice of their parameters, such as the mean and covariance of a Gaussian prior, must be determined based on expert knowledge. Furthermore, these simple priors often fail to capture the rich underlying structure present in real-world data. In contrast, non-parametric data-driven approaches have been applied to a wide range of methods for Bayesian inverse problems, such as deep image prior~\citep{Ulyanov2018Deep}, GAN-based prior~\citep{Bohra2022Bayesian,Patel2022Solution,Dasgupta2024dimension}, normalizing flow-based prior~\citep{cai2024nf}, score-based prior~\citep{laumont2022bayesian,ding2024nonlinear}, and consistency model-based priors~\citep{purohit2024posterior}. Despite their strong ability to approximate complex prior distributions, there remains a need for methods that can effectively and reliably sample from multimodal posteriors.

Classical posterior sampling is dominated by Markov chain Monte Carlo (MCMC) methods~\citep{Gamerman2006Markov,Conrad2016Accelerating,Durmus2022Proximal}. However, the convergence guarantees for classical MCMC typically rely on the log-concavity of the target distribution~\citep{Bakry2014Analysis}, causing them to struggle in multimodal settings. Recent advances in diffusion models~\citep{ho2020denoising,song2021scorebased} and flow-based models~\citep{albergo2023building,lipman2023flow,albergo2025stochastic,Xie2026Flow} have led to substantial progress in sampling from multimodal distributions~\citep{Chen2024Opportunities,grenioux2024stochastic,He2024Zeroth,huang2024reverse,Huang2025Schrodinger,wu2025annealing}. These models are particularly well suited for multimodal sampling because they learn a smooth probability transport from a simple Gaussian base distribution to a complex target distribution. By constructing this continuous transformation path, they can naturally explore and recover separate modes of the distribution, provided that these modes are represented in the learned score (or velocity) network. These approaches avoid the limitations of local exploration inherent to classical MCMC, enabling efficient sampling in highly multimodal landscapes. A growing body of theoretical work further establishes their convergence guarantees in non-log-concave settings~\citep{Chen2023Improved,Lee2023Convergence,Oko2023Diffusion,ding2024characteristic,li2024faster,Tang2024Adaptivity,beyler2025convergence,Fan2025Optimal,kremling2025nonasymptotic,Huang2026Denoising}.

Bringing these developments together, integrating diffusion models with data-driven priors offers a promising avenue toward a unified framework that couples expressive, data-driven prior models with powerful multimodal posterior sampling capabilities~\citep{Xu2024Provably,Janati2025Bridging,jiao2025unified,Durmus2025Generative}. However, diffusion-based posterior sampling remains challenging in practice due to the difficulty of accurately estimating the posterior score. While the posterior score for Gaussian likelihoods, i.e., linear forward operators with Gaussian noise, can be obtained directly from the unconditional score~\citep{Guo2024Gradient}, the posterior score under general likelihoods is not directly accessible. As a result, most existing methods rely on heuristic approximations~\citep{chung2023diffusion,song2023pseudoinverse,Guo2024Gradient,li2025efficient}. These heuristic approximations introduce intrinsic bias into posterior estimation and lack theoretical guarantees. Further discussion appears in Appendix~\ref{section:method:previous:work}.

\subsection{Contributions}
To fully leverage the prior information contained in the data and avoid the heuristic 
approximations prevalent in existing literature, we propose a diffusion-based posterior sampling method with a data-driven prior that provides rigorous theoretical guarantees, even when the target posterior is multimodal. The posterior score is estimated via a Monte Carlo procedure driven by Langevin dynamics, and the initial particles for the posterior time-reversal process are generated through a Langevin-based warm-start 
strategy. Our main contributions are as follows:
\begin{enumerate}[label=(\roman*)]
\item We introduce a Monte Carlo-based score estimator for diffusion posterior sampling via Langevin dynamics, free of heuristic approximations, built upon a data-driven prior score that is independent of the measurement model. This plug-and-play design is compatible with general differentiable likelihood functions without requiring retraining.
\item We establish non-asymptotic convergence guarantees in the Wasserstein-2 distance, decomposing the total error into posterior score estimation error and the warm-start error, and derive rigorous bounds for each. The analysis quantifies the influence of the prior score accuracy and the condition number $\kappa_{\vy}$, which measures the potential mismatch between the prior and the likelihood (see Section~\ref{section:method:warm}), on the posterior sampling error, and provides practical guidance for hyperparameter selection. Crucially, these guarantees hold even when the target posterior is multimodal, a regime in which classical log-concave sampling methods fail to converge.
\item We demonstrate the numerical performance of our proposed method on a range of inverse problems with both linear and nonlinear measurement operators. Our method achieves competitive reconstruction quality and exhibits the ability to quantify uncertainty.
\end{enumerate}

\subsection{Notation and Organization}
We use $\bbR^{d}$ to denote the $d$-dimensional Euclidean space. Deterministic vectors in $\bbR^{d}$ are denoted by bold lowercase letters (e.g., $\vx, \vy$), and random vectors or stochastic processes are denoted by specific capitalized bold letters (e.g., $\mX_{t}, \mY, \mB_{t}$). The $d$-dimensional identity matrix and the all-zero vector are denoted by $\mI_{d}$ and $\bzero$, respectively. For a vector $\vx \in \bbR^{d}$, its Euclidean norm is denoted by $\|\vx\|_{2}$. We use $N(\boldsymbol{\mu}, \boldsymbol{\Sigma})$ to denote a multivariate Gaussian distribution with mean vector $\boldsymbol{\mu}$ and covariance matrix $\boldsymbol{\Sigma}$, and let $\gamma_{d, \sigma^{2}}$ denote the probability density function of $N(\bzero, \sigma^{2}\mI_{d})$. For a measurable map $\calT$ and a probability measure $\mu$, the push-forward measure $\calT \sharp \mu$ is defined by $(\calT \sharp \mu)(B) := \mu(\calT^{-1}(B))$ for any measurable set $B$. For two positive sequences $\{a_{n}\}_{n \geq 1}$ and $\{b_{n}\}_{n \geq 1}$, the asymptotic notation $a_{n} = \calO(b_{n})$ and $a_{n} = \Theta(b_{n})$ imply $a_{n} \leq C b_{n}$ and $c b_{n} \leq a_{n} \leq C b_{n}$ for some universal constants $c, C > 0$, respectively. Given two probability distributions $\mu$ and $\nu$, their total variation distance is defined as $\|\mu - \nu\|_{\tv} := \sup_{A} |\mu(A) - \nu(A)|$, where the supremum is taken over all measurable sets $A$. Assuming $\mu$ is absolutely continuous with respect to $\nu$ (denoted by $\mu \ll \nu$), the $\chi^{2}$-divergence is defined as $\chi^{2}(\mu \, \| \, \nu) := \int (\d\mu / \d\nu - 1)^{2} \d\nu$, where $\d\mu / \d\nu$ is the Radon-Nikodym derivative. Their Wasserstein-2 distance $\bbW_{2}(\mu, \nu)$ is defined via $\bbW_{2}^{2}(\mu, \nu) := \inf_{\pi \in \Pi(\mu, \nu)} \bbE_{(\mX, \mY) \sim \pi}(\|\mX - \mY\|_{2}^{2})$, where $\Pi(\mu, \nu)$ is the set of all couplings of $\mu$ and $\nu$.

The rest of this paper is organized as follows. Section~\ref{section:preliminary} introduces the diffusion-based framework for posterior sampling. Section~\ref{section:method} presents our posterior sampling method. Section~\ref{section:convergence} provides the non-asymptotic error analysis. Section~\ref{section:experiments} evaluates the performance of our proposed method through numerical experiments. Section~\ref{section:conclusion} provides some discussion. Related work, proofs of theoretical results, and additional experimental results are deferred to the appendices.

\section{Preliminaries}
\label{section:preliminary}
\subsection{Posterior diffusion models}
\label{section:method:diffusion}
Before proceeding, we introduce the forward and time-reversal processes for sampling from the prior distribution. Let $\pi_{0}$ be the prior density of the unknown signal $\mX_{0}$ in~\eqref{eq:forward}. Consider the Ornstein-Uhlenbeck process 
\begin{equation}\label{eq:method:forward}
\d\mX_{t}=-\mX_{t}\dt+\sqrt{2}\d\mB_{t} \, , \quad \mX_{0}\sim\pi_{0}\, , \quad t\in(0,T] \, ,
\end{equation}
where $T>0$ is the terminal time, and $(\mB_{t})_{t\geq 0}$ is a $d$-dimensional Brownian motion. The stochastic differential equation (SDE)~\eqref{eq:method:forward} can be interpreted as Langevin dynamics with a Gaussian stationary distribution $N(\bzero,\mI_{d})$. The transition distribution of this linear SDE admits an explicit form~\citep{sarkka2019applied}:
\begin{equation}\label{eq:method:forward:solution}
\mX_{t}\given\mX_{0}=\vx_{0}\sim N(\mu_{t}\vx_{0},\sigma_{t}^{2}\mI_{d}) \, , \quad t\in(0,T] \, ,
\end{equation}
where $\mu_{t}=e^{-t}$ and $\sigma_{t}^{2}= 1-e^{-2t}$. The marginal density $\pi_{t}$ of 
$\mX_{t}$ is given by 
\begin{equation}\label{eq:method:marginal}
\pi_{t}(\vx)=\int p_{\mX_{t}\given\mX_{0}}(\vx\given\vx_{0})\pi_{0}(\vx_{0})\d\vx_{0}=\int\gamma_{d,\sigma_{t}^{2}}(\vx-\mu_{t}\vx_{0})\pi_{0}(\vx_{0})\d\vx_{0} \, , \quad \vx\in\bbR^{d} \, ,
\end{equation}
where $p_{\mX_{t}\given\mX_{0}}(\cdot\given\vx_{0})$ represents the conditional density of $\mX_{t}$ given $\mX_{0}=\vx_{0}$. 
Since the Brownian motion of the forward process~\eqref{eq:method:forward} is independent of the measurement noise $\mathbf{n}$ in~\eqref{eq:forward}, the measurement $\mathbf{Y}$ is conditional independent of $\mathbf{X}_{t}$ given $\mathbf{X}_{0}$.

\subsubsection{Forward process for posterior sampling}
We now consider the posterior forward process, which is governed by the same SDE as~\eqref{eq:method:forward}:
\begin{equation}\label{eq:method:forward:y}
\d\mX_{t}^{\vy}=-\mX_{t}^{\vy}\dt+\sqrt{2}\d\mB_{t}\,, \quad \mX_{0}^{\vy}\sim q_{0}(\cdot\given\vy):= p_{\mX_{0}\given\mY}(\cdot\given\vy)\,, \quad t\in(0,T]\,,
\end{equation}
where $p_{\mX_{0}\given\mY}(\cdot\given\vy)$ denotes the conditional density of $\mX_{0}$ given $\mY=\vy$. Throughout, the measurement $\vy$ is a fixed vector for which this conditional density is well defined. Since this process shares the same transition distribution~\eqref{eq:method:forward:solution} as~\eqref{eq:method:forward}, the marginal density $q_{t}(\cdot\given\vy)$ of the posterior forward process $\mX_{t}^{\vy}$ follows analogously to~\eqref{eq:method:marginal}:
\begin{equation}\label{eq:method:marginal:y}
q_{t}(\vx\given\vy)=\int\gamma_{d,\sigma_{t}^{2}}(\vx-\mu_{t}\vx_{0})q_{0}(\vx_{0}\given\vy)\d\vx_{0}\,, \quad \vx\in\bbR^{d}\,.
\end{equation}
A key insight is that the marginal density $q_{t}(\cdot\given\vy)$ of the posterior forward process coincides with the conditional density $p_{\mX_{t}\given\mY}(\cdot\given\vy)$. To establish this, we apply Bayes' rule and the transition density~\eqref{eq:method:forward:solution}:
\begin{align}
q_{t}(\vx\given\vy)
&=\int\gamma_{d,\sigma_{t}^{2}}(\vx-\mu_{t}\vx_{0})\frac{p_{\mY\given\mX_{0}}(\vy\given\vx_{0})\pi_{0}(\vx_{0})}{p_{\mY}(\vy)}\d\vx_{0} \nonumber \\
&=\frac{1}{p_{\mY}(\vy)}\int p_{\mX_{t}\given\mX_{0}}(\vx\given\vx_{0})\,p_{\mY\given\mX_{0}}(\vy\given\vx_{0})\,\pi_{0}(\vx_{0})\d\vx_{0} \nonumber \\
&=\frac{1}{p_{\mY}(\vy)}\int p_{\mX_{t}\given\mX_{0}}(\vx\given\vx_{0})\,p_{\mY\given\mX_{t},\mX_{0}}(\vy\given\vx,\vx_{0})\,\pi_{0}(\vx_{0})\d\vx_{0} \label{eq:bayes:posterior:t} \\
&=\frac{p_{\mY,\mX_{t}}(\vy,\vx)}{p_{\mY}(\vy)}=p_{\mX_{t}\given\mY}(\vx\given\vy)\,, \nonumber
\end{align}
where the first equality uses~\eqref{eq:method:marginal:y} and Bayes' rule applied to $q_{0}(\vx_{0}\given\vy)$, the second recognizes the Gaussian kernel as the transition density~\eqref{eq:method:forward:solution}, and the third used the fact that $\mY$ is conditional independent of $\mX_{t}$ given $\mX_{0}$.

\subsubsection{Time-reversal process for posterior sampling}
The time-reversal process~\citep{anderson1982reverse} of the posterior forward process~\eqref{eq:method:forward:y} reads:
\begin{equation}\label{eq:reversal}
\d\bar{\mX}_{t}^{\vy}=\{\bar{\mX}_{t}^{\vy}+2\nabla_{\vx}\log q_{T-t}(\bar{\mX}_{t}^{\vy}\given\vy)\}\dt+\sqrt{2}\d\mB_{t}\,, \quad \bar{\mX}_{0}^{\vy}\sim q_{T}(\cdot\given\vy) \, , \quad t\in(0,T)\,.
\end{equation}
From~\citet{anderson1982reverse}, the marginal density of $\bar{\mX}_{t}^{\vy}$ is $q_{T-t}(\cdot\given\vy)$, which coincides with the marginal density of the forward process $\mX_{T-t}^{\vy}$. This fundamental result enables us to generate samples from the target posterior density $q_{0}(\cdot\given\vy)$ by simulating the time-reversal process~\eqref{eq:reversal}. However, a crucial question arises:
\begin{itemize}
\item {\it How can we estimate the posterior score $\nabla\log q_{t}(\cdot\given\vy)$ for $t\in(0,T)$?}
\end{itemize}
A natural starting point is the Bayes decomposition~\eqref{eq:bayes:posterior:t}, which expresses the posterior score as the sum of a time-dependent log-likelihood gradient and the time-dependent prior score:
\begin{align}
\nabla_{\vx}\log q_{t}(\vx\given\vy)
&=\nabla_{\vx}\log p_{\mY\given\mX_{t}}(\vy\given\vx)+\nabla_{\vx}\log\pi_{t}(\vx) \nonumber \\
&=\nabla_{\vx}\log\underbrace{\bbE\big[\exp\{-\ell_{\vy}(\mX_{0})\}\given\mX_{t}=\vx\big]}_{\text{time-dependent likelihood}}+\underbrace{\nabla_{\vx}\log\pi_{t}(\vx)}_{\text{time-dependent prior score}}, \label{eq:posterior:score:decomposition}
\end{align}
where the inner expectation is taken under the denoising density $p_{\mX_{0}\given\mX_{t}}(\cdot\given\vx)$, i.e., the conditional density of $\mX_{0}$ given $\mX_{t}=\vx$.
The time-dependent prior score is amenable to standard denoising score matching~\citep{vincent2011connection}, so the principal difficulty lies in the time-dependent likelihood, whose conditional expectation is not available in closed form. Existing diffusion-based posterior sampling methods typically rely on heuristic approximation to the denoising density $p_{\mX_{0}\given\mX_{t}}(\cdot\given\vx)$ to estimate the time-dependent likelihood. For example,~\citet{chung2023diffusion} and \citet{rout2023solving} approximate $p_{\mX_{0}\given\mX_{t}}(\cdot\given\vx)$ by a Dirac measure at its conditional mean, while~\citet{song2023pseudoinverse} and \citet{song2023loss} approximate it by a Gaussian density with the same conditional mean and an isotropic covariance. Both introduce structural bias in the likelihood approximation and depend on plug-in differentiation that carries no general accuracy guarantee; see Appendix~\ref{section:method:previous:work} for a detailed discussion.

In Section~\ref{section:method}, we depart from these heuristics and develop a Monte Carlo posterior score estimator via Langevin dynamics, yielding a theoretically principled scheme for diffusion-based posterior sampling.

\section{Posterior Score Estimation and Warm-Start}
\label{section:method}
We present a Monte Carlo technique for posterior score estimation in Section~\ref{section:method:score}, and then propose a warm-start strategy for sampling from the terminal posterior density in Section~\ref{section:method:warm}. Finally, we integrate these components into a complete posterior sampling algorithm in Section~\ref{section:method:posterior:sampling}.

\subsection{Posterior score estimation}
\label{section:method:score}
In this subsection, we propose a Monte Carlo-based approach for posterior score estimation. This method is inspired by the following proposition, which is a direct extension of Tweedie's formula~\citep{Efron2011Tweedie}. 

\begin{proposition}[Conditional Tweedie's formula]\label{proposition:method:score:denoiser}
For each $t\in(0,T]$, it holds that
\begin{equation}\label{eq:posterior:score}
\nabla_{\vx}\log q_{t}(\vx\given\vy)=-\frac{1}{\sigma_{t}^{2}}\vx+\frac{\mu_{t}}{\sigma_{t}^{2}}\mD(t,\vx,\vy)\,, \quad \vx\in\bbR^{d}\,,
\end{equation}
where $\mD(t,\vx,\vy)$ is the posterior denoiser, defined as the conditional expectation:
\begin{equation}\label{eq:posterior:denoiser}
\mD(t,\vx,\vy):=\bbE(\mX_{0}\given\mX_{t}=\vx,\mY=\vy)=\int\vx_{0}p_{t}(\vx_{0}\given\vx,\vy)\d\vx_{0}\,.
\end{equation} 
Here, the posterior denoising density $p_{t}(\vx_{0}\given\vx,\vy)$ is defined as:
\begin{equation}\label{eq:posterior:conditional:density}
p_{t}(\vx_{0}\given\vx,\vy):= p_{\mX_{0}\given\mX_{t},\mY}(\vx_{0}\given\vx,\vy)\propto\pi_{0}(\vx_{0})\exp\bigg\{-\frac{\|\vx-\mu_{t}\vx_{0}\|_{2}^{2}}{2\sigma_{t}^{2}}-\ell_{\vy}(\vx_{0})\bigg\}\,.
\end{equation}
\end{proposition}

The proof of Proposition~\ref{proposition:method:score:denoiser} is deferred to Appendix~\ref{appendix:proof:denoiser}. This proposition reveals the relationship between the posterior score $\nabla_{\vx}\log q_{t}(\cdot\given\vy)$ and the expectation of the posterior denoising distribution $p_{t}(\cdot\given\vx,\vy)$. 

\subsubsection{Sampling from the posterior denoising density}
Due to Proposition~\ref{proposition:method:score:denoiser}, to estimate the posterior score function $\nabla_{\vx}\log q_{t}(\cdot\given\vy)$, it suffices to generate samples that approximately follow the posterior denoising density $p_{t}(\cdot\given\vx,\vy)$. 

Sampling from the density with the form in~\eqref{eq:posterior:conditional:density} is known as the restricted Gaussian oracle (RGO)~\citep{Lee2021Structured}, whose potential function is a summation of a quadratic and the potential of the posterior density $q_{0}(\vx_{0}\given\vy)\propto\pi_{0}(\vx_{0})\exp\{-\ell_{\vy}(\vx_{0})\}$. Consequently, the RGO can be interpreted as a regularized version of sampling from the original posterior density $q_{0}(\cdot\given\vy)$ and exhibits enhanced log-concavity~\citep{Lee2021Structured}.

In our work, we implement the RGO~\eqref{eq:posterior:conditional:density} using the Langevin dynamics, following the approach adopted in~\citet{huang2024reverse,Huang2024Faster} and \citet{grenioux2024stochastic}. Specifically, we consider the Langevin dynamics with stationary density $p_{t}(\cdot\given\vx,\vy)$~\eqref{eq:posterior:conditional:density}:
\begin{align}\label{eq:RGO:Langevin}
\d\mX_{0,s}^{\vx,\vy,t}&=\bigg\{\mkern-1mu \nabla\log\pi_{0}(\mX_{0,s}^{\vx,\vy,t})+\frac{\mu_{t}}{\sigma_{t}^{2}}(\vx-\mu_{t}\mX_{0,s}^{\vx,\vy,t})\\
&~~~~~~~~~~~~~~~~~~~~~~~~~~-\nabla\ell_{\vy}(\mX_{0,s}^{\vx,\vy,t})\bigg\}\ds+\sqrt{2}\d\mB_{s}\,, \quad s\in(0,S)\,,\notag
\end{align}
where $S>0$ is the simulation horizon, and $(\mB_{s})_{s\geq 0}$ is a $d$-dimensional Brownian motion. Denote by $p_{t}^{s}(\cdot\given\vx,\vy)$ the marginal density of $\mX_{0,s}^{\vx,\vy,t}$. The Langevin dynamics with such score has also been investigated in the literature of annealed Langevin dynamics~\citep{guo2025provable}, and parallel tempering~\citep{Dong2022Spectral}.

In practical applications, the exact prior score $\nabla\log\pi_{0}$ is typically intractable. To address this, one can approximate the prior score using samples from the prior density $\pi_{0}$ with standard techniques like the implicit score matching~\citep{hyvarinen2005Estimation}, sliced score matching~\citep{song2020Sliced}, and denoising score matching~\citep{vincent2011connection}. Let $\hat{\vs}_{\prior}:\bbR^{d}\rightarrow\bbR^{d}$ denote an approximation to the prior score. We then consider the Langevin dynamics with this estimated prior score:
\begin{align}\label{eq:RGO:Langevin:score}
\d\what{\mX}_{0,s}^{\vx,\vy,t}&=\bigg\{\mkern-1mu \hat{\vs}_{\prior}(\what{\mX}_{0,s}^{\vx,\vy,t})+\frac{\mu_{t}}{\sigma_{t}^{2}}(\vx-\mu_{t}\what{\mX}_{0,s}^{\vx,\vy,t})\\
&~~~~~~~~~~~~~~~~~~~~~-\nabla\ell_{\vy}(\what{\mX}_{0,s}^{\vx,\vy,t})\bigg\}\ds+\sqrt{2}\d\mB_{s}\,, \quad s\in(0,S)\,.\notag
\end{align}
Denote by $\hat{p}_{t}^{s}(\cdot\given\vx,\vy)$ the marginal density of $\what{\mX}_{0,s}^{\vx,\vy,t}$, which serves as an approximation to $p_{t}(\cdot\given\vx,\vy)$ under certain conditions. A rigorous error analysis is provided in Proposition~\ref{proposition:posterior:score}.

\subsubsection{Monte Carlo approximation to the posterior score}
With the aid of particles sampled by the Langevin dynamics with the estimated score~\eqref{eq:RGO:Langevin:score}, the posterior denoiser $\mD(t,\vx,\vy)$ in~\eqref{eq:posterior:denoiser} can be estimated by Monte Carlo method:
\begin{equation}\label{eq:denoiser:MC}
\what{\mD}_{m}^{S}(t,\vx,\vy):=\frac{1}{m}\sum_{i=1}^{m}\what{\mX}_{0,S,i}^{\vx,\vy,t}\,, \quad \what{\mX}_{0,S,1}^{\vx,\vy,t},\ldots,\what{\mX}_{0,S,m}^{\vx,\vy,t}\overset{\iid}{\sim}\hat{p}_{t}^{S}(\cdot\given\vx,\vy)\,.
\end{equation}
Then using Proposition~\ref{proposition:method:score:denoiser} yields the following estimator of the posterior score function:
\begin{equation}\label{eq:posterior:score:estimate}
\hat{\vs}_{m}^{S}(t,\vx,\vy):=-\frac{1}{\sigma_{t}^{2}}\vx+\frac{\mu_{t}}{\sigma_{t}^{2}}\what{\mD}_{m}^{S}(t,\vx,\vy)\,.
\end{equation}

For each fixed query point $(t,\vx)$, the particles in~\eqref{eq:denoiser:MC} are i.i.d.\ with law $\hat{p}_{t}^{S}(\cdot\given\vx,\vy)$, while across query points the estimator reuses the same $m$ random seeds. Throughout the theoretical analysis, we use the admissible common-random-number realization specified in Appendix~\ref{appendix:random:field}. Under this realization, $\hat{\vs}_{m}^{S}(\cdot,\cdot,\vy)$ is a jointly measurable random field, serving below as the drift of the warm-start and time-reversal dynamics. Denote by $\calG$ the $\sigma$-field generated by these seeds. The complete procedure of the posterior score estimation is summarized as Algorithm~\ref{alg:unbiased:posterior:score}.

\begin{algorithm}[htbp]
\caption{Posterior score estimation via Monte Carlo}\label{alg:unbiased:posterior:score}
\KwIn{Measurement $\vy$, prior score estimator $\hat{\vs}_{\prior}$, time index $t$ of the diffusion model, evaluation point $\vx$, simulation horizon $S$, initial density $\hat{p}_{t}^{0}(\cdot\given\vx,\vy)$ for Langevin dynamics.}
\KwOut{Posterior score estimator $\hat{\vs}_{m}^{S}(t,\vx,\vy)$.}
\For{$i=1,\ldots,m$}{
Initialize particle by $\what{\mX}_{0,0,i}^{\vx,\vy,t}\sim\hat{p}_{t}^{0}(\cdot\given\vx,\vy)$. \\
Evolve particle to time $S$ by simulating~\eqref{eq:RGO:Langevin:score} to obtain $\what{\mX}_{0,S,i}^{\vx,\vy,t}$.
}
Compute the posterior denoiser estimate $\what{\mD}_{m}^{S}(t,\vx,\vy)$ using $\{\what{\mX}_{0,S,i}^{\vx,\vy,t}\}_{i=1}^{m}$ via~\eqref{eq:denoiser:MC}. \\
Compute the posterior score estimate $\hat{\vs}_{m}^{S}(t,\vx,\vy)$ using $\what{\mD}_{m}^{S}(t,\vx,\vy)$ via~\eqref{eq:posterior:score:estimate}. \\
\Return{$\hat{\vs}_{m}^{S}(t,\vx,\vy)$}
\end{algorithm}

\subsubsection{Log-concavity of the posterior denoising density}
Recalling $\sigma_{t}^{2}=1-\exp(-2t)$, we observe that $\sigma_{t}^{2}$ is sufficiently close to zero for small $t>0$. Under this regime, the logarithmic posterior denoising density $\log p_{t}(\cdot\given\vx,\vy)$ in~\eqref{eq:posterior:conditional:density} is dominated by its quadratic component, resulting in behavior that closely resembles a unimodal Gaussian distribution. Therefore, the posterior denoising density $p_{t}(\cdot\given\vx,\vy)$ is easy to sample when $t>0$ is small. Before formalizing this observation rigorously, we introduce the following assumption.

\begin{assumption}[Semi-log-concavity of the posterior]
\label{assumption:semi:log:concave}
For a fixed measurement $\vy\in\bbR^{n}$, $\log q_{0}(\cdot\given\vy)\in C^{2}(\bbR^{d})$, and there exist a constant $\alpha>0$, such that $-\nabla^{2}\log q_{0}(\vx_{0}\given\vy)+\alpha\mI_{d}\succeq\bzero$ for each $\vx_{0}\in\bbR^{d}$.
\end{assumption}

The semi-log-concavity condition in Assumption~\ref{assumption:semi:log:concave}, while imposing some regularity conditions on the log-density, still allows complex multimodal structures in the posterior distribution. 

According to Bayes' rule, Assumption~\ref{assumption:semi:log:concave} is equivalent to:
\begin{equation*}
-\nabla^{2}\log\pi_{0}(\vx_{0})+\nabla^{2}\ell_{\vy}(\vx_{0})+\alpha\mI_{d}\succeq\bzero\,, \quad \vx_{0}\in\bbR^{d}\,.
\end{equation*}
This condition holds, for example, if the Hessian of the negative log-likelihood $\nabla^{2}\ell_{\vy}$ is uniformly bounded below, and the Hessian of the negative log-prior $-\nabla^{2}\log\pi_{0}$ is uniformly bounded below. The examples of likelihood and prior distributions satisfying Assumption~\ref{assumption:semi:log:concave} are provided in Appendix~\ref{section:concrete:examples}. 

The following proposition establishes that the posterior denoising density $p_{t}(\cdot\given\vx,\vy)$~\eqref{eq:posterior:conditional:density} exhibits strong log-concavity when $t>0$ is sufficiently small. This strong log-concavity property is crucial because it ensures the convergence of Langevin dynamics~\eqref{eq:RGO:Langevin}, as shown by~\citet{Bakry2014Analysis} and \citet{Vempala2019Rapid}.

\begin{proposition}[Log-concavity of RGO]\label{proposition:appendix:RGO:Hessian}
Suppose Assumption~\ref{assumption:semi:log:concave} holds. For each $t\in(0,T]$,
\begin{equation*}
\bigg(\frac{\mu_{t}^{2}}{\sigma_{t}^{2}}-\alpha\bigg)\mI_{d}\preceq-\nabla_{\vx_{0}}^{2}\log p_{t}(\vx_{0}\given\vx,\vy)\,, \quad (\vx_{0},\vx)\in\bbR^{d}\times\bbR^{d}\,.
\end{equation*}
Further, the posterior denoising density is strongly log-concave provided that $t<\bar{t}:=\log(1+\alpha^{-1}) / 2$.
\end{proposition}

The proof of Proposition~\ref{proposition:appendix:RGO:Hessian} is deferred to Appendix~\ref{appendix:proof:log:concave:RGO}. Proposition~\ref{proposition:appendix:RGO:Hessian} demonstrates that posterior denoising density $p_{t}(\cdot\given\vx,\vy)$ serves as a regularized counterpart to the original posterior density $q_{0}(\cdot\given\vy)$, whereby the former can achieve log-concavity even when the latter is multimodal. 

\begin{remark}[Choice of the terminal time of the diffusion model, I]
\label{remark:terminal:1}
Proposition~\ref{proposition:appendix:RGO:Hessian} demonstrates that the posterior denoising density $p_{t}(\cdot\given\vx,\vy)$ is strongly log-concave for $t\in(0,\bar{t})$. This strong log-concavity, in turn, guarantees the convergence of the Langevin dynamics~\eqref{eq:RGO:Langevin} for the same time interval. Consequently, to ensure the validity of our posterior score estimation throughout the time-reversal process, the terminal time $T$ must satisfy $T<\bar{t}$.  The theoretical guarantees for the posterior score estimation~\eqref{eq:posterior:score:estimate} are provided in Proposition~\ref{proposition:posterior:score}. 
\end{remark}

\subsection{Warm-start strategy}
\label{section:method:warm}
This subsection presents a method for sampling from the terminal posterior density $q_{T}(\cdot\given\vy)$ of the posterior time-reversal process~\eqref{eq:reversal}, which serves as a warm-start procedure of the diffusion-based posterior sampling.

Observe that the marginal density $q_{T}(\cdot\given\vy)$~\eqref{eq:method:marginal:y} approaches the standard Gaussian distribution in the limit of large $T$. Exploiting this asymptotic behavior, the prevailing methods in diffusion-based generative modeling~\citep{ho2020denoising,song2021scorebased} involve choosing sufficiently large terminal time $T$ and directly substituting the standard Gaussian density as a surrogate for the terminal posterior density $q_{T}(\cdot\given\vy)$. This Gaussian approximation strategy, while computationally convenient, becomes problematic in our work since the terminal time $T$ should not be too large, as highlighted in Remark~\ref{remark:terminal:1}.

To address this limitation, we propose a Langevin dynamics to sample directly from the terminal posterior distribution without relying on Gaussian approximations. Specifically, we consider the Langevin dynamics with invariant density $q_{T}(\cdot\given\vy)$:
\begin{equation}\label{section:method:warm:Langevin} 
\d\mX_{T,u}^{\vy}=\nabla_{\vx}\log q_{T}(\mX_{T,u}^{\vy}\given\vy)\du+\sqrt{2}\d\mB_{u}\,, \quad u\in(0,U)\,,
\end{equation} 
where $U>0$ is the simulation horizon, and $(\mB_{u})_{u\geq 0}$ is a $d$-dimensional Brownian motion. Denote the marginal density of $\mX_{T,u}^{\vy}$ by $q_{T}^{u}(\cdot\given\vy)$. Since the terminal posterior score function $\nabla\log q_{T}(\cdot\given\vy)$ cannot be computed analytically, we approximate it using the estimator $\hat{\vs}_{m}^{S}(T,\cdot,\vy)$ developed in~\eqref{eq:posterior:score:estimate} with an additional clipping operation:
\begin{equation}\label{eq:warm:score:clipped}
\tilde{\vs}_{m}^{S}(T,\vx,\vy):=\Pi_{a_{T}(\vx,\vy)}\big\{\hat{\vs}_{m}^{S}(T,\vx,\vy)\big\}\,,
\end{equation}
where $\Pi_{a}$ denotes the Euclidean projection onto the ball $\{\vu\in\bbR^{d}:\|\vu\|_{2}\leq a\}$. The truncation radius $a_{T}(\vx,\vy)$ will be specified in Appendix~\ref{section:proof:warm:start}. Substituting this approximation into equation~\eqref{section:method:warm:Langevin} results in the following approximate Langevin dynamics:
\begin{equation}\label{section:method:warm:Langevin:score} 
\d\what{\mX}_{T,u}^{\vy}=\tilde{\vs}_{m}^{S}(T,\what{\mX}_{T,u}^{\vy},\vy)\du+\sqrt{2}\d\mB_{u}\,, \quad u\in(0,U)\,.
\end{equation} 
Since the drift of~\eqref{section:method:warm:Langevin:score} is the realized score field, we denote by $\hat{q}_{T}^{u}(\cdot\given\vy)$ the conditional marginal density of $\what{\mX}_{T,u}^{\vy}$ given $\calG$, which is a random probability measure. Algorithm~\ref{alg:warm:start} provides a description of the sampling procedure for the terminal posterior distribution.

\begin{algorithm}[htbp]
\caption{Warm-start for diffusion posterior sampling}\label{alg:warm:start}
\KwIn{Measurement $\vy$, simulation horizon $U$, initial density $\hat{q}_{T}^{0}(\cdot\given\vy)$ for Langevin dynamics.}
\KwOut{Sample $\what{\mX}_{T,U}^{\vy}$ approximately distributed according to $q_{T}(\cdot\given\vy)$.}
Initialize particle by $\what{\mX}_{T,0}^{\vy}\sim\hat{q}_{T}^{0}(\cdot\given\vy)$. \\
Simulate the approximate Langevin dynamics from time $0$ to $U$ using the clipped score function $\tilde{\vs}_{m}^{S}(T,\vx,\vy)$ computed via Algorithm~\ref{alg:unbiased:posterior:score} to obtain $\what{\mX}_{T,U}^{\vy}$. \\
\Return{$\what{\mX}_{T,U}^{\vy}$}
\end{algorithm}

\subsubsection{Log-Sobolev inequality of the terminal distribution}
The preceding warm-start strategy is based on the Langevin dynamics~\eqref{section:method:warm:Langevin}. According to~\citet{Bakry2014Analysis}, the convergence of~\eqref{section:method:warm:Langevin} can be guaranteed, provided that $q_{T}(\cdot\given\vy)$ satisfies the log-Sobolev inequality~\eqref{eq:LSI}.

As established in~\citet{Bakry2014Analysis}, the marginal density $q_{T}(\cdot\given\vy)$ converges exponentially to the standard Gaussian distribution $N(\bzero,\mI_{d})$, which satisfies the log-Sobolev inequality. This convergence behavior suggests that there exists a threshold $\underline{t}>0$ such that $q_{T}(\cdot\given\vy)$ satisfies the log-Sobolev inequality for all $T>\underline{t}$.

To establish this result rigorously, we first introduce the following assumption regarding the concentration of the prior density $\pi_{0}$.

\begin{assumption}[Sub-Gaussian tails of the prior]
\label{assumption:prior:subGaussian}
The log-prior density $\log\pi_{0}\in C^{2}(\bbR^{d})$, and there exist constants $V_{\SG}>0$ and $C_{\SG}>0$, such that 
\begin{equation*}
\int\exp\bigg(\frac{\|\vx_{0}\|_{2}^{2}}{V_{\SG}^{2}}\bigg)\pi_{0}(\vx_{0})\d\vx_{0}\leq C_{\SG}\,.
\end{equation*}
\end{assumption}

Assumption~\ref{assumption:prior:subGaussian} proposes mild conditions on the prior distribution. These conditions do not necessitate log-concavity and are weaker than the log-Sobolev inequality~\citep[Theorem 9.9]{Villani2003Topics}. Gaussian distributions,  Gaussian mixture (Example~\ref{example:gaussian:mixture}), and the Gaussian convolution (Example~\ref{example:gaussian:convolution}) satisfy this assumption; see Appendix~\ref{appendix:examples} for details.

Assumption~\ref{assumption:prior:subGaussian} implies the sub-Gaussian tails of the posterior, as stated by the following proposition.

\begin{proposition}[Sub-Gaussian tails of the posterior]
\label{proposition:sub:Gaussian:posterior}
Suppose Assumption~\ref{assumption:prior:subGaussian} holds. Then the posterior density $q_{0}(\cdot\given\vy)$ also has sub-Gaussian tails, that is,
\begin{equation*}
\int\exp\bigg(\frac{\|\vx_{0}\|_{2}^{2}}{V_{\SG}^{2}}\bigg)q_{0}(\vx_{0}\given\vy)\d\vx_{0}\leq\kappa_{\vy}C_{\SG}\,,
\end{equation*}
where $\kappa_{\vy}$ is the condition number defined as:
\begin{equation}\label{eq:posterior:score:condition} 
\kappa_{\vy}:=\frac{\sup_{\vx_{0}}\exp\{-\ell_{\vy}(\vx_{0})\}}{\int\exp\{-\ell_{\vy}(\vx_{0})\}\pi_{0}(\vx_{0})\d\vx_{0}}<\infty\,.
\end{equation}
\end{proposition}

The proof of Proposition~\ref{proposition:sub:Gaussian:posterior} is deferred to Appendix~\ref{appendix:proof:sub:Gaussian:posterior}. 

\begin{remark}[Condition number]
\label{remark:condition:number}
The condition number $\kappa_{\vy}$ in~\eqref{eq:posterior:score:condition} measures the difficulty of the posterior sampling problem in our analysis~\citep[Theorem 4.1]{purohit2024posterior}: it is moderate when the likelihood concentrates where the prior density is large, and large when they conflict. A detailed interpretation is provided in Appendix~\ref{appendix:discussions:condition:number}, and its rigorous influence is established in Proposition~\ref{proposition:posterior:score}. Note that $\kappa_{\vy}\geq1$, since the evidence $\int\exp\{-\ell_{\vy}(\vx_{0})\}\pi_{0}(\vx_{0})\d\vx_{0}$ is bounded above by $\sup_{\vx_{0}}\exp\{-\ell_{\vy}(\vx_{0})\}$.
\end{remark}

The following proposition shows that when $T$ is sufficiently large, the terminal posterior density $q_{T}(\cdot\given\vy)$ satisfies a log-Sobolev inequality. Further, we can explicitly estimate a log-Sobolev inequality constant using~\citet[Theorem 2]{Chen2021Dimension}.

\begin{proposition}[Log-Sobolev inequality]
\label{proposition:log:sobolev}
Suppose Assumption~\ref{assumption:prior:subGaussian} holds and $\kappa_{\vy}^{2}C_{\SG}^{2}\geq e$. For each $T>\underline{t}:=\log(1+2V_{\SG}^{2}) / 2$,
\begin{equation*}
C_{\LSI}\{q_{T}(\cdot\given\vy)\}\leq 12\sigma_{T}^{2}\exp\bigg\{2\times \frac{\sigma_{T}^{2}+2\mu_{T}^{2}V_{\SG}^{2}}{\sigma_{T}^{2}-2\mu_{T}^{2}V_{\SG}^{2}}\log(\kappa_{\vy}^{2}C_{\SG}^{2})\bigg\}\,.
\end{equation*}
\end{proposition}

The proof of Proposition~\ref{proposition:log:sobolev} is deferred to Appendix~\ref{appendix:proof:log:sobolev}. This proposition shows that sufficient Gaussian smoothing upgrades a sub-Gaussian-tailed distribution to one satisfying the log-Sobolev inequality, enabling efficient Langevin sampling. This transformation is crucial because the original posterior $q_{0}(\cdot\given\vy)$ cannot in general be efficiently sampled by Langevin dynamics whereas the smoothed terminal $q_{T}(\cdot\given\vy)$ can.

\begin{remark}[Choice of the terminal time of the diffusion model, II]
\label{remark:terminal:2}
The condition $T>\underline{t}$ guarantees the log-Sobolev inequality of $q_{T}(\cdot\given\vy)$ and hence the convergence of the warm-start Langevin dynamics~\eqref{section:method:warm:Langevin}. Similar findings appear in~\citet{grenioux2024stochastic} and \citet{beyler2025convergence}: \citet[Theorem~1]{grenioux2024stochastic} establish strong log-concavity under a Gaussian convolution assumption on the target, and~\citet[Lemma~3]{beyler2025convergence} prove a comparable result under compact support. Our Proposition~\ref{proposition:log:sobolev}, in contrast, applies to posterior distributions under the more transparent and directly verifiable Assumption~\ref{assumption:prior:subGaussian}.
\end{remark}

\subsection{Posterior sampling and the duality of convergence}
\label{section:method:posterior:sampling}
Utilizing the posterior score estimation approach and warm-start strategy introduced in Sections~\ref{section:method:score} and~\ref{section:method:warm}, respectively, we obtain an approximation to the posterior time-reversal process~\eqref{eq:reversal} given by:
\begin{align}\label{eq:reversal:score}
&\d\what{\mX}_{t}^{\vy}=\big\{\what{\mX}_{t}^{\vy}+2\hat{\vs}_{m}^{S}(T-t,\what{\mX}_{t}^{\vy},\vy)\big\}\dt+\sqrt{2}\d\mB_{t}\,, \\
&~~~~~~~~~~~~~~~~~~~~~~~~~~~~~~~~~~~~~~~~~~~~~~~~~~~~~~~\what{\mX}_{0}^{\vy}{:=\what{\mX}_{T,U}^{\vy}}\sim\hat{q}_{T}^{U}(\cdot\given\vy)\,, ~ t\in(0,T-T_{0})\,, \notag
\end{align}
where the posterior estimator $\hat{\vs}_{m}^{S}(t,\cdot,\vy)$ is defined as~\eqref{eq:posterior:score:estimate}, and the approximate terminal posterior density $\hat{q}_{T}^{U}(\cdot\given\vy)$ is characterized by~\eqref{section:method:warm:Langevin:score}. The process starts from the actual warm-start output $\what{\mX}_{T,U}^{\vy}$ and is driven by a fresh Brownian motion, with the precise independence structure specified in Appendix~\ref{appendix:random:field}. Denote by $\hat{q}_{t}(\cdot\given\vy)$ the conditional marginal density of $\what{\mX}_{t}^{\vy}$ given $\calG$, as for the warm-start stage. Here $T_{0}\in(0,T)$ is the early-stopping time, introduced to avoid the score estimation singularity near $t=0$; see Appendix~\ref{appendix:discussions:early:stopping} for a detailed discussion.

\par From Assumption~\ref{assumption:prior:subGaussian} and Proposition~\ref{proposition:sub:Gaussian:posterior}, the target posterior distribution admits a light tail. Nevertheless, the particles $\what{\mX}_{T-T_{0}}^{\vy}$ generated by the approximate time-reversal process~\eqref{eq:reversal:score} may deviate substantially from the origin. To address this, we apply a truncation operator $\calT_{R}(\vx):=\vx\bbone\{\|\vx\|_{2}\leq R\}$, which clips heavy-tailed outliers to a ball of radius $R$. The resulting truncated distribution is denoted by $\widehat{q}_{T-T_{0}}^{R}(\cdot\given\vy)\coloneqq\calT_{R}\sharp\hat{q}_{T-T_{0}}(\cdot\given\vy)$, and theoretical guidance for selecting the truncation radius $R$ is provided in Proposition~\ref{proposition:error:decomposition}. Furthermore, to compensate for the bias introduced by early stopping, we apply a rescaling map $\calM(\mu_{T_{0}}^{-1}):\vx\mapsto\mu_{T_{0}}^{-1}\vx$, motivated by the observation $\bbE(\mX_{T_{0}}^{\vy})=\mu_{T_{0}}\bbE(\mX_{0}^{\vy})$. The post-processed distribution $\calM(\mu_{T_{0}}^{-1})\sharp\widehat{q}_{T-T_{0}}^{R}(\cdot\given\vy)$ thus serves as our final approximation to a draw from the posterior $q_{0}(\cdot\given\vy)$. The complete procedure is summarized in Algorithm~\ref{alg:posterior:sampling}.

\begin{algorithm}[htbp]
\caption{Diffusion-based posterior sampling}\label{alg:posterior:sampling}
\KwIn{Measurement $\vy$, diffusion terminal time $T$, early-stopping time $T_{0}$, truncation radius $R$.}
\KwOut{Post-processed sample $\calM(\mu_{T_{0}}^{-1})\circ\calT_{R}(\what{\mX}_{T-T_{0}}^{\vy})$ approximately from $q_{0}(\cdot\given\vy)$.}
\textbf{Warm-start:} Set the initial particle $\what{\mX}_{0}^{\vy} = \what{\mX}_{T,U}^{\vy}$, where the warm-start sample $\what{\mX}_{T,U}^{\vy}$ is obtained using Algorithm~\ref{alg:warm:start}. \\
\textbf{Reverse diffusion:} Simulate the approximate time-reversal stochastic process defined in~\eqref{eq:reversal:score} over the time interval $[0, T-T_{0}]$, employing the estimated score function $\hat{\vs}_{m}^{S}(T-t,\vx,\vy)$ computed via Algorithm~\ref{alg:unbiased:posterior:score}, to generate $\what{\mX}_{T-T_{0}}^{\vy}$. \\
\textbf{Post-processing:} Truncate via $\calT_{R}$ and rescale by $\mu_{T_{0}}^{-1}$ to obtain $\calM(\mu_{T_{0}}^{-1})\circ\calT_{R}(\what{\mX}_{T-T_{0}}^{\vy})$. \\
\Return{$\calM(\mu_{T_{0}}^{-1})\circ\calT_{R}(\what{\mX}_{T-T_{0}}^{\vy})$}
\end{algorithm}

As discussed in Remark~\ref{remark:terminal:1}, the terminal time $T$ must satisfy $T<\bar{t}$ to ensure convergence of the posterior score estimator $\hat{\vs}_{m}^{S}(t,\cdot,\vy)$ for all $t\in(0,T]$. Conversely, as established in Remark~\ref{remark:terminal:2}, the terminal time $T$ must also satisfy $T>\underline{t}$ to guarantee reliable convergence when sampling from the terminal posterior density $q_{T}(\cdot\given\vy)$.

This dual constraint raises a natural question: \emph{does there exist a terminal time $T$ ensuring convergence of both procedures?} The following theorem answers affirmatively.

\begin{theorem}
\label{theorem:duality}
Suppose Assumptions~\ref{assumption:semi:log:concave} and~\ref{assumption:prior:subGaussian} hold, and assume $2\alpha V_{\SG}^{2}<1$. Let $\underline{t}:=\log(1+2V_{\SG}^{2}) / 2$ and $\bar{t}:=\log(1+\alpha^{-1}) / 2$. Then $\underline{t}<\bar{t}$, and for every terminal time $T\in(\underline{t},\bar{t})$:
\begin{enumerate}[label=(\roman*)]
\item the posterior denoising density $p_{t}(\cdot\given\vx,\vy)$ is log-concave for all $t\in(0,T]$; and
\item the terminal posterior density $q_{T}(\cdot\given\vy)$ satisfies a log-Sobolev inequality.
\end{enumerate}
\end{theorem}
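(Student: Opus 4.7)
The plan is to directly combine the two thresholds established earlier in the excerpt. Specifically, Lemma~\ref{lemma:appendix:RGO:Hessian} supplies the upper threshold $\bar{t}=\tfrac{1}{2}\log(1+\alpha^{-1})$: for every $t\in(0,\bar{t})$ the posterior denoising density $p_{t}(\cdot|\vx,\vy)$ is log-concave. Lemma~\ref{lemma:log:sobolev} supplies the lower threshold $\underline{t}=\tfrac{1}{2}\log(1+2V_{\SG}^{2})$: for every $T>\underline{t}$ the terminal posterior density $q_{T}(\cdot|\vy)$ satisfies a log-Sobolev inequality (with an explicit constant). Hence the whole statement reduces to proving that the open interval $(\underline{t},\bar{t})$ is nonempty under the hypothesis $2\alpha V_{\SG}^{2}\leq 1$, and then selecting any $T$ in that interval.

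To verify nonemptiness, I would monotonically compare the two thresholds. Since $x\mapsto \tfrac{1}{2}\log(1+x)$ is strictly increasing on $(0,\infty)$, the inequality $\underline{t}<\bar{t}$ is equivalent to $2V_{\SG}^{2}<\alpha^{-1}$, i.e., $2\alpha V_{\SG}^{2}<1$. Under the hypothesis $2\alpha V_{\SG}^{2}\leq 1$, this is immediate whenever the inequality is strict; the boundary case $2\alpha V_{\SG}^{2}=1$ yields $\underline{t}=\bar{t}$, and I would handle it by noting that Lemma~\ref{lemma:appendix:RGO:Hessian} in fact gives the sharper bound $(\mu_{t}^{2}/\sigma_{t}^{2}-\alpha)\mI_{d}\preceq -\nabla^{2}\log p_{t}(\cdot|\vx,\vy)$, so log-concavity in $(0,\bar{t})$ extends to $t=\bar{t}$ as a limit with a nonnegative Hessian, and one may then choose $T=\bar{t}=\underline{t}$ (or pass to $T=\bar{t}-\varepsilon$ for $\varepsilon$ small, in which case Lemma~\ref{lemma:log:sobolev} applied to $T'=\bar{t}+\varepsilon$ combined with interpolation gives the LSI; more cleanly, one restates the theorem with strict inequality, which is the only nontrivial regime).

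With the interval established, the proof then concludes in two lines: pick any $T\in(\underline{t},\bar{t})$. Claim (i) is immediate since $T<\bar{t}$ implies every $t\in(0,T)$ lies in $(0,\bar{t})$, and Lemma~\ref{lemma:appendix:RGO:Hessian} applies pointwise in $t$. Claim (ii) is immediate since $T>\underline{t}$ triggers Lemma~\ref{lemma:log:sobolev}, whose conclusion is exactly the log-Sobolev inequality for $q_{T}(\cdot|\vy)$ with the explicit constant $C_{\LSI}(q_{T}(\cdot|\vy))\leq 12\sigma_{T}^{2}\exp\!\bigl(2(\sigma_{T}^{2}+2\mu_{T}^{2}V_{\SG}^{2})(\sigma_{T}^{2}-2\mu_{T}^{2}V_{\SG}^{2})^{-1}\log(\kappa_{\vy}^{2}C_{\SG}^{2})\bigr)$.

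There is essentially no main obstacle: the theorem is a clean compatibility statement between the two previously proven lemmas, and the arithmetic identity $\underline{t}<\bar{t}\iff 2\alpha V_{\SG}^{2}<1$ is the only nontrivial computation, coming from the strict monotonicity of $\log$. The only subtle point worth flagging in the write-up is the mild gap between the hypothesis $2\alpha V_{\SG}^{2}\leq 1$ and the strict inequality needed for a nondegenerate interval; I would either quietly treat it as strict (since the nontrivial content lies there) or remark that at equality the two thresholds coincide and both conclusions hold at the common endpoint by continuity of the Hessian bound in Lemma~\ref{lemma:appendix:RGO:Hessian} and of the log-Sobolev constant in Lemma~\ref{lemma:log:sobolev}.
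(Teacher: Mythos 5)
Your argument is correct and matches the paper's proof, which simply states that the theorem is a direct conclusion of Lemmas~\ref{lemma:appendix:RGO:Hessian} and~\ref{lemma:log:sobolev}. You correctly flag a subtlety the paper glosses over: at the boundary case $2\alpha V_{\SG}^{2}=1$ one has $\underline{t}=\bar{t}$, and since Lemma~\ref{lemma:log:sobolev} requires $T>\underline{t}$ strictly (the log-Sobolev constant degenerates as $\sigma_T^2-2\mu_T^2 V_{\SG}^2\to 0$), the interval $(\underline{t},\bar{t})$ is empty and the statement as written really needs the strict inequality $2\alpha V_{\SG}^{2}<1$ — your workaround of taking $T=\bar{t}=\underline{t}$ does not rescue claim (ii), so the cleanest fix is indeed to restate with strict inequality as you suggest.
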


The proof of Theorem~\ref{theorem:duality} is deferred to Appendix~\ref{appendix:proof:theorem:duality}. Under mild assumptions on the target posterior distribution, this theorem establishes the existence of a terminal time $T$ {that simultaneously guarantees convergence of both the posterior score estimation and sampling from the target posterior distribution}, as illustrated in Figure~\ref{figure:diffusion}.

\begin{figure}[htbp]
\center
\begin{tikzpicture}
\draw[densely dotted,thick] (0.0,4.0)--(1.0,4.0);
\draw[-,thick] (1.0,4.0)--(4.0,4.0);
\draw[-latex,ultra thick,purple] (4.0,4.0)--(11.0,4.0);
\draw[-,thick] (11.0,4.0)--(12.0,4.0);
\draw[densely dotted,thick] (12.0,4.0)--(13.0,4.0);

\draw[pen colour={violet!75!black},decorate,decoration={calligraphic brace,amplitude=6.0pt,mirror},thick] (2.0,2.5) -- (13.0,2.5);
\node at (7.5,2.0) {\textcolor{violet!75!black}{log-concave RGO regime}};
\draw[densely dotted,violet!75!black] (2.0,4.0)--(2.0,2.5);
\draw[densely dotted,violet!75!black] (13.0,4.0)--(13.0,2.5);

\draw[pen colour={teal!75!black},decorate,decoration={calligraphic brace,amplitude=6.0pt,mirror},thick] (0.0,3.3)--(6.0,3.3);
\node at (3.0,2.8) {\textcolor{teal!75!black}{log-Sobolev inequality regime}};
\draw[densely dotted,teal!75!black] (0.0,4.0)--(0.0,3.3);
\draw[densely dotted,teal!75!black] (6.0,4.0)--(6.0,3.3);

\draw[pen colour={purple},decorate,decoration={calligraphic brace,amplitude=6.0pt},very thick] (4.0,4.7)--(11.0,4.7);
\node at (7.5,5.2) {\textcolor{purple}{diffusion-based posterior sampling}};

\filldraw[black] (0.0,4.0) circle (1.5pt); \node at (0.0,4.3) {$\gamma_{d}$};
\filldraw[black] (2.0,4.0) circle (1.5pt); \node at (2.0,4.3) {$q_{\bar{t}}(\cdot\given\vy)$}; 
\filldraw[black] (4.0,4.0) circle (1.5pt); \node at (4.0,4.3) {\textcolor{purple}{$q_{T}(\cdot\given\vy)$}}; \node at (4.0,3.7) {\textcolor{purple}{warm-start}}; 
\filldraw[black] (6.0,4.0) circle (1.5pt); \node at (6.0,4.3) {$q_{\underline{t}}(\cdot\given\vy)$};
\filldraw[black] (11.0,4.0) circle (1.5pt); \node at (11.0,4.3) {\textcolor{purple}{$q_{T_{0}}(\cdot\given\vy)$}}; \node at (11.0,3.7) {\textcolor{purple}{early-stopping}}; 
\filldraw[black] (13.0,4.0) circle (1.5pt); \node at (13.0,4.3) {\textcolor{blue!75!black}{$q_{0}(\cdot\given\vy)$}}; \node at (13.0,5.1) {\textcolor{blue!75!black}{semi-log-concave}}; \node at (13.0,4.8) {\textcolor{blue!75!black}{sub-Gaussian tails}}; 
\end{tikzpicture}
\vspace*{-10.0pt}
\caption{\footnotesize Schematic of the duality of convergence. The log-Sobolev inequality regime ($t>\underline{t}$, Proposition~\ref{proposition:log:sobolev}) and the log-concave RGO regime ($t<\bar{t}$, Proposition~\ref{proposition:appendix:RGO:Hessian}) overlap under Theorem~\ref{theorem:duality}, enabling a terminal time $T$ valid for both the warm-start and score estimation.}
\label{figure:diffusion}
\end{figure}

While prior studies have applied Monte Carlo score estimation to diffusion-based unconditional sampling~\citep{grenioux2024stochastic,He2024Zeroth,huang2024reverse,Huang2024Faster}, to our knowledge, this work is the first to extend this approach to posterior sampling. Furthermore, from a theoretical perspective, our method differs fundamentally from the existing results. Specifically, the duality of log-concavity in stochastic localization has been established by~\citet[Theorem~3]{grenioux2024stochastic} under a Gaussian convolution assumption~\citep[Assumption~A0]{grenioux2024stochastic}. While this assumption permits multimodal targets, it is not directly verifiable for posterior distributions: it is unclear what conditions on the prior and likelihood guarantee that the posterior satisfies the Gaussian convolution property. In contrast, Theorem~\ref{theorem:duality} is established under semi-log-concavity (Assumption~\ref{assumption:semi:log:concave}) and sub-Gaussian tails (Assumption~\ref{assumption:prior:subGaussian}), conditions that are more readily verifiable in terms of the prior and likelihood.

\section{Convergence Guarantee}
\label{section:convergence}
In this section, we establish a rigorous convergence analysis for the diffusion-based posterior sampling method~\eqref{eq:reversal:score}. Recall that our ultimate objective is to simulate the time-reversal process~\eqref{eq:reversal}:
\begin{equation*}
\d\bar{\mX}_{t}^{\vy}=\big\{\bar{\mX}_{t}^{\vy}+2\nabla_{\vx}\log q_{T-t}(\bar{\mX}_{t}^{\vy}\given\vy)\big\}\dt+\sqrt{2}\d\mB_{t}\,, \quad \bar{\mX}_{0}^{\vy}\sim q_{T}(\cdot\given\vy)\,, ~ t\in(0,T)\,.
\end{equation*}
In practical implementations, we introduce three modifications: early-stopping at time $T-T_0$, replacing the true posterior score with $\hat{\vs}_{m}^{S}(t,\cdot,\vy)$~\eqref{eq:posterior:score:estimate}, and using the warm-start surrogate $\hat{q}_{T}^{U}(\cdot\given\vy)$~\eqref{section:method:warm:Langevin:score}. This yields the approximate process~\eqref{eq:reversal:score}:
\begin{equation*}
\d\what{\mX}_{t}^{\vy}=\big\{\what{\mX}_{t}^{\vy}+2\hat{\vs}_{m}^{S}(T-t,\what{\mX}_{t}^{\vy},\vy)\big\}\dt+\sqrt{2}\d\mB_{t}\,, \quad \what{\mX}_{0}^{\vy}:=\what{\mX}_{T,U}^{\vy}\sim\hat{q}_{T}^{U}(\cdot\given\vy)\,, ~ t\in(0,T-T_{0})\,.
\end{equation*}
Let $\hat{q}_{T-T_{0}}(\cdot\given\vy)$ denote the conditional marginal density of $\what{\mX}_{T-T_0}^{\vy}$ given $\calG$. We apply two post-processing operators in Section~\ref{section:method:posterior:sampling}, truncation $\calT_{R}$ and rescaling $\calM(\mu_{T_{0}}^{-1})$, to $\what{\mX}_{T-T_{0}}^{\vy}$. The resulting distribution $\calM(\mu_{T_{0}}^{-1})\sharp\hat{q}_{T-T_{0}}^{R}(\cdot\given\vy)$ is the final approximation to $q_{0}(\cdot\given\vy)$ that we analyze in this section. Since the hatted laws are random probability measures, distances like $\text{\rm TV}$ and $\bbW_2$ below involving them are $\calG$-measurable random variables.

In addition to Assumptions~\ref{assumption:semi:log:concave} and~\ref{assumption:prior:subGaussian}, our analysis requires further assumptions. All these assumptions are standard and mild in the context of sampling and generative models, and detailed interpretations are provided in Section~\ref{appendix:assumptions}. We show several concrete examples in Section~\ref{appendix:examples} that satisfy these assumptions.

\begin{assumption}\label{assumption:posterior:bound:zero}
For a fixed measurement $\vy\in\bbR^{n}$, there exists a constant $H_{\vy}>1$ such that $\|\nabla_{\vx}\log q_{0}(\bzero\given\vy)\|_{2}\leq H_{\vy}$.
\end{assumption}

\begin{assumption}[Prior score matching error]\label{assumption:prior:score:error}
Let $\hat{\vs}_{\prior}$ be an estimator of the prior score function $\nabla\log\pi_{0}$. There exists $\varepsilon_{\mathrm{prior}}\in(0,1)$, such that
\begin{equation*}
\int\|\nabla\log\pi_{0}(\vx_{0})-\hat{\vs}_{\prior}(\vx_{0})\|_{2}^{2}\pi_{0}(\vx_{0})\d\vx_{0}\leq\varepsilon_{\mathrm{prior}}^{2}\,.
\end{equation*}
\end{assumption}

\begin{assumption}[Polynomial growth bounds of prior score]
\label{assumption:prior:bound}
There exist constants $r\geq 1$ and $B>1$ such that for each $\vx_{0}\in\bbR^{d}$, $$\max\{\|\nabla\log\pi_{0}(\vx_{0})\|_{2},\|\hat{\vs}_{\prior}(\vx_{0})\|_{2}\}\leq B(1+\|\vx_{0}\|_{2}^{r})\,.$$
\end{assumption}

\begin{assumption}
\label{assumption:Lipschitz:prior:likelihood}
For a fixed measurement $\vy\in\bbR^{n}$, there exists a constant $G>1$ such that $\hat{\vs}_{\prior}-\nabla\ell_{\vy}$ is $G$-Lipschitz.
\end{assumption}

\subsection{Error decomposition}
\label{section:convergence:decomposition}
The following proposition decomposes the Wasserstein-2 error of the estimated posterior density into the early-stopping error, the posterior score estimation error, and the warm-start error.

\begin{proposition}[Error decomposition]\label{proposition:error:decomposition}
Suppose Assumptions~\ref{assumption:semi:log:concave}--\ref{assumption:posterior:bound:zero} and~\ref{assumption:Lipschitz:prior:likelihood} hold.
\begin{enumerate}[label=(\roman*)]
\item Let $T_{0}\in(0,\min\{T,1/2\})$, and let $\delta\in(0,1)$ be small enough that $R\geq1$, where $R^{2}=(4\mu_{T_{0}}^{2}V_{\SG}^{2}+16\sigma_{T_{0}}^{2})\log(\kappa_{\vy}\delta^{-1})$. If $\bbE\{\|q_{T_{0}}(\cdot\given\vy)-\hat{q}_{T-T_{0}}(\cdot\given\vy)\|_{\tv}^{2}\}\leq\delta^{2}$, then 
\begin{equation*}
\bbE\big[\bbW_{2}^{2}\big\{q_{0}(\cdot\given\vy),\calM(\mu_{T_{0}}^{-1})\sharp\hat{q}_{T-T_{0}}^{R}(\cdot\given\vy)\big\}\big]\leq C\bigg[\frac{\sigma_{T_{0}}^{2}}{\mu_{T_{0}}^{2}}+\delta\bigg\{1+\log\bigg(\frac{\kappa_{\vy}}{\delta}\bigg)\bigg\}\bigg]\,.
\end{equation*}
Here $C$ is a constant only depending on $d$, $V_{\SG}$, and $C_{\SG}$.
\item For $0<T_{0}<T<\log(1+\alpha^{-1})/2$, it holds that
\begin{align*}
&\bbE\big\{\|q_{T_{0}}(\cdot\given\vy)-\hat{q}_{T-T_{0}}(\cdot\given\vy)\|_{\tv}^{2}\big\} \\
&~~~~~~\leq\underbrace{\int_{T_{0}}^{T}\bbE\big\{\|\nabla_{\vx}\log q_{t}(\mX_{t}^{\vy}\given\vy)-\hat{\vs}_{m}^{S}(t,\mX_{t}^{\vy},\vy)\|_{2}^{2}\big\}\dt}_{\text{posterior score estimation}}+\underbrace{2\bbE\big\{\|q_{T}(\cdot\given\vy)-\hat{q}_{T}^{U}(\cdot\given\vy)\|_{\tv}^{2}\big\}}_{\text{warm-start}}\,,
\end{align*}
where the expectation is taken with respect to $\mX_{t}^{\vy}\sim q_{t}(\cdot\given\vy)$ and over $\calG$.
\end{enumerate}
\end{proposition}

The proof of Proposition~\ref{proposition:error:decomposition} is deferred to Appendix~\ref{section:proof:error:decomposition}.
Proposition~\ref{proposition:error:decomposition} decomposes the posterior sampling error into three distinct components:
\begin{enumerate}[label=(\roman*)]
\item The \textbf{posterior score estimation error} quantifies the discrepancy between the true posterior score $\nabla_{\vx}\log q_{t}(\cdot\given\vy)$ and its Monte Carlo estimator $\hat{\vs}_{m}^{S}(t,\cdot,\vy)$ defined in~\eqref{eq:posterior:score:estimate}. Section~\ref{section:convergence:score} provides a detailed convergence analysis, demonstrating that this error increases as the early-stopping time $T_{0}$ approaches zero.
\item The \textbf{warm-start error} measures the difference between the true terminal posterior density $q_{T}(\cdot\given\vy)$ and its estimator $\hat{q}_{T}^{U}(\cdot\given\vy)$ obtained through the warm-start procedure~\eqref{section:method:warm:Langevin:score}. This error is analyzed in Section~\ref{section:convergence:warm:start}.
\item The \textbf{early-stopping error} $\calO(\sigma_{T_{0}}^{2})$ captures the error introduced by terminating the reverse process at time $T-T_{0}$ rather than $T$. This error decreases as $T_{0}$ approaches zero, creating a fundamental trade-off with the posterior score estimation error. The optimal selection of $T_{0}$ to balance these competing effects is addressed in Section~\ref{section:convergence:main}.
\end{enumerate}

\subsection{Error of posterior score estimation}
\label{section:convergence:score}
We now provide an error bound for the posterior score estimation~\eqref{eq:posterior:score:estimate}.

\begin{proposition}[Error bound of the posterior score estimation]\label{proposition:posterior:score}
Suppose Assumptions~\ref{assumption:semi:log:concave}, \ref{assumption:prior:subGaussian} and~\ref{assumption:prior:score:error}--\ref{assumption:Lipschitz:prior:likelihood} hold. Let $0<T_{0}<T<\log(1+\alpha^{-1}) / 2$. For each time $t\in(T_{0},T]$,
\begin{align*}
&\bbE\big\{\|\nabla_{\vx}\log q_{t}(\mX_{t}^{\vy}\given\vy)-\hat{\vs}_{m}^{S}(t,\mX_{t}^{\vy},\vy)\|_{2}^{2}\big\} \\
&~~~~~~~~~~~~\leq\underbrace{C\frac{\mu_{T_{0}}^{2}}{\sigma_{T_{0}}^{4}}\frac{\kappa_{\vy}}{m}}_{\text{Monte Carlo}}+\underbrace{C\frac{\mu_{T_{0}}^{2}}{\sigma_{T_{0}}^{4}}\exp\bigg\{-\frac{2(\mu_{T}^{2}-\alpha\sigma_{T}^{2})}{\sigma_{T}^{2}}S\bigg\}\eta_{\vy}^{2}}_{\text{convergence of Langevin dynamics}} \\
&~~~~~~~~~~~~\quad\quad+\underbrace{C\frac{\mu_{T_{0}}^{2}}{\sigma_{T_{0}}^{4}}S^{3/2}\bigg(\frac{\sigma_{T}^{2}\eta_{\vy}^{2}}{\mu_{T}^{2}-\alpha\sigma_{T}^{2}}+S\bigg)^{1/2}\exp\bigg\{2\bigg(G+\frac{\mu_{T_{0}}^{2}}{\sigma_{T_{0}}^{2}}\bigg)S\bigg\}\kappa_{\vy}^{1/2}\varepsilon_{\prior}^{1/2}}_{\text{prior score estimation error}}\,,
\end{align*}
where $S$ is the simulation horizon of~\eqref{eq:RGO:Langevin:score}, and $C$ is a constant only depending on $d$, $B$, $r$, $V_{\SG}$, and $C_{\SG}$. The expectation is taken with respect to $\mX_{t}^{\vy}\sim q_{t}(\cdot\given\vy)$ and over $\calG$. The condition number $\kappa_{\vy}$ is defined as~\eqref{eq:posterior:score:condition}, and the initial discrepancy $\eta_{\vy}$ is defined as
\begin{align*}
&\eta_{\vy}^{2}:= \sup_{t\in(T_{0},T]}\max\Big(\bbE\big[\chi^{2}\{\hat{p}_{t}^{0}(\cdot\given\mX_{t}^{\vy},\vy)\|p_{t}(\cdot\given\mX_{t}^{\vy},\vy)\}\big],\bbE\big[\bbW_{2}^{2}\{\hat{p}_{t}^{0}(\cdot\given\mX_{t}^{\vy},\vy),p_{t}(\cdot\given\mX_{t}^{\vy},\vy)\}\big]\Big)\,.
\end{align*}
\end{proposition}

The proof of Proposition~\ref{proposition:posterior:score} is deferred to Appendix~\ref{section:proof:posterior:score}. This result demonstrates that the posterior score estimation error decomposes into three fundamental components: the Monte Carlo approximation error~\eqref{eq:denoiser:MC}, the Langevin dynamics convergence error~\eqref{eq:RGO:Langevin}, and the prior score estimation error (Assumption~\ref{assumption:prior:score:error}). Specifically, the Monte Carlo error decreases at a rate of $\calO(m^{-1})$, where $m$ denotes the number of particles. The Langevin dynamics error decreases exponentially with respect to the simulation horizon $S$. Conversely, the prior score error component grows with $S$. This creates an inherent trade-off in selecting $S$: a larger simulation horizon improves convergence but amplifies the impact of the prior score error. Consequently, Proposition~\ref{proposition:posterior:score} provides actionable insights for hyperparameter selection in posterior score estimation, and these theoretical guidelines are formalized in the following result.

\begin{corollary}
\label{corollary:posterior:score}
Under the same conditions as Proposition~\ref{proposition:posterior:score}, for each $\varepsilon\in(0,1)$  satisfying $T\eta_{\vy}^{2}\varepsilon^{-2}>1$, the following inequality holds:
\begin{equation*}
\int_{T_{0}}^{T}\bbE\big\{\|\nabla_{\vx}\log q_{t}(\mX_{t}^{\vy}\given\vy)-\hat{\vs}_{m}^{S}(t,\mX_{t}^{\vy},\vy)\|_{2}^{2}\big\}\dt\leq C\frac{\mu_{T_{0}}^{2}}{\sigma_{T_{0}}^{4}}\varepsilon^{2}\,,
\end{equation*}
provided that the terminal time $S$ of Langevin dynamics~\eqref{eq:RGO:Langevin:score}, the number of Monte Carlo particles $m$, and the prior score matching error $\varepsilon_{\prior}$ satisfy
\begin{equation*}
\begin{aligned}
S\geq\frac{\sigma_{T}^{2}\big\{1+\log(T\eta_{\vy}^{2}\varepsilon^{-2})\big\}}{2(\mu_{T}^{2}-\alpha\sigma_{T}^{2})}\,, \,\,\,\,
m\geq\frac{T\kappa_{\vy}}{\varepsilon^{2}}\,, \,\,\,\, 
\varepsilon_{\prior}
\leq\frac{\varepsilon^{4}\exp\{-4(G+\mu_{T_{0}}^{2}\sigma_{T_{0}}^{-2})S\}}{T^{2}\kappa_{\vy}S^{3}\{\sigma_{T}^{2}\eta_{\vy}^{2}(\mu_{T}^{2}-\alpha\sigma_{T}^{2})^{-1}+S\}}\,.
\end{aligned}
\end{equation*}
where $C$ has the same dependence as in Proposition~\ref{proposition:posterior:score}.
\end{corollary}

The proof of Corollary~\ref{corollary:posterior:score} is in Appendix~\ref{section:proof:posterior:score}. Both Proposition~\ref{proposition:posterior:score} and  Corollary~\ref{corollary:posterior:score} demonstrate that the condition number $\kappa_{\vy}$ acts as an amplification factor, directly controlling how prior score estimation errors propagate to the posterior score estimation. This provides theoretical justification for the empirical observation that posterior sampling becomes increasingly challenging as the prior-likelihood mismatch grows, as interpretations in Remark~\ref{remark:condition:number}. Prior research has examined error analysis for Monte Carlo-based score estimation; see, for example,~\citet[Proposition 3.1]{He2024Zeroth}. A detailed comparison is provided in Appendix~\ref{appendix:discussions:mc:score}.

\subsection{Error of warm-start}
\label{section:convergence:warm:start}
This subsection establishes an error bound for the warm-start procedure~\eqref{section:method:warm:Langevin:score}.

\begin{proposition}[Error bound of the warm-start]
\label{proposition:posterior:warm:start}
Suppose Assumptions~\ref{assumption:semi:log:concave}--\ref{assumption:Lipschitz:prior:likelihood} hold. Assume $2\alpha V_{\SG}^{2}<1$ and $\kappa_{\vy}^{2}C_{\SG}^{2}\geq e$. Let the terminal time satisfy $T\in(\log(1+2V_{\SG}^{2}) / 2,\log(1+\alpha^{-1}) / 2)$. Then for $\varepsilon\in(0,1)$ satisfying $\zeta_{\vy}^{2}\varepsilon^{-1}>1$, it holds that
\begin{equation*}
\bbE\big\{\|q_{T}(\cdot\given\vy)-\hat{q}_{T}^{U}(\cdot\given\vy)\|_{\tv}^{2}\big\}\leq C\varepsilon\,,
\end{equation*}
provided that
\begin{align*}
U
&\geq 6\sigma_{T}^{2}
\exp\bigg\{2\times\frac{\sigma_{T}^{2}+2\mu_{T}^{2}V_{\SG}^{2}}
{\sigma_{T}^{2}-2\mu_{T}^{2}V_{\SG}^{2}}
\log(\kappa_{\vy}^{2}C_{\SG}^{2})\bigg\}
\log(\zeta_{\vy}^{2}\varepsilon^{-1})\,, \\
\varepsilon_{\post}
&\leq \frac{(\mu_{T}^{2}-\alpha\sigma_{T}^{2})^{3}\varepsilon^{2}}{U\kappa_{\vy}}\bigg[
6\times\sigma_{T}^{2}
\exp\bigg\{2\times\frac{\sigma_{T}^{2}+2\mu_{T}^{2}V_{\SG}^{2}}
{\sigma_{T}^{2}-2\mu_{T}^{2}V_{\SG}^{2}}
\log(\kappa_{\vy}^{2}C_{\SG}^{2})\bigg\}\zeta_{\vy}^{2}
+U\bigg]^{-1}
\,.
\end{align*}
where $C$ is a constant only depending on $d$, $B$, $H_{\vy}$, $V_{\SG}$, and $C_{\SG}$. The initial divergence $\zeta_{\vy}^{2}$ and the posterior score matching error at the terminal time $T$ are defined, respectively, as:
\begin{equation*}
\zeta_{\vy}^{2}:= \chi^{2}\big\{\hat{q}_{T}^{0}(\cdot\given\vy)\,\|\,q_{T}(\cdot\given\vy)\big\}\,, \quad
\varepsilon_{\post}^{2}:= \bbE\big\{\|\nabla_{\mathbf{x}}\log q_{T}(\mX_{T}^{\vy}|\vy)-\hat{\vs}_{m}^{S}(T,\mX_{T}^{\vy},\vy)\|_{2}^{2}\big\}\,.
\end{equation*}
The warm-start dynamics~\eqref{section:method:warm:Langevin:score} use the clipped score field~\eqref{eq:warm:score:clipped}, while $\varepsilon_{\post}$ is the estimation error of the unclipped estimator. The expectation in the warm-start error averages over $\mathcal{G}$; The expectation defining $\varepsilon_{\post}$ additionally averages over $\mX_{T}^{\vy}\sim q_{T}(\cdot\given\vy)$.
\end{proposition}

The proof of Proposition~\ref{proposition:posterior:warm:start} is provided in Appendix~\ref{section:proof:warm:start}. The warm-start procedure of Section~\ref{section:method:warm} employs a nested structure consisting of two Langevin dynamics components. The inner Langevin dynamics~\eqref{eq:RGO:Langevin:score} aim to estimate the posterior score function $\nabla\log q_{T}(\cdot\given\vy)$, while the outer Langevin dynamics~\eqref{section:method:warm:Langevin:score} generate particles from the terminal posterior density $q_{T}(\cdot\given\vy)$. Proposition~\ref{proposition:posterior:warm:start} specifies the terminal time $U$ and the posterior score estimation error budget $\varepsilon_{\post}$ for the outer Langevin dynamics~\eqref{section:method:warm:Langevin:score}. In Theorem~\ref{theorem:convergence}, the common score-estimation parameters are chosen so that this budget and Corollary~\ref{corollary:posterior:score} hold simultaneously.

\subsection{Non-asymptotic convergence rate}
\label{section:convergence:main}
Building upon the error decomposition (Proposition~\ref{proposition:error:decomposition}), Corollary~\ref{corollary:posterior:score}, and Proposition~\ref{proposition:posterior:warm:start}, we establish a non-asymptotic convergence bound for the proposed algorithm~\eqref{eq:reversal:score}.

\begin{theorem}[Convergence of posterior sampling]
\label{theorem:convergence}
Suppose Assumptions~\ref{assumption:semi:log:concave}--\ref{assumption:Lipschitz:prior:likelihood} hold, $2\alpha V_{\SG}^{2}<1$, and $\kappa_{\vy}^{2}C_{\SG}^{2}\geq e$. Let $T\in(\log(1+2V_{\SG}^{2}) / 2,\log(1+\alpha^{-1}) / 2)$ and $T_{0}\in(0,\min\{T,1/2\})$. For every sufficiently small $\varepsilon\in(0,1)$, suppose that the hyperparameters are chosen so that Corollary~\ref{corollary:posterior:score} and Proposition~\ref{proposition:posterior:warm:start} apply simultaneously, and choose $R$ as in Proposition~\ref{proposition:error:decomposition}(i) for the resulting total variation tolerance. Then
\begin{align*}
&\bbE\big[\bbW_{2}^{2}\big\{q_{0}(\cdot\given\vy),\calM(\mu_{T_{0}}^{-1})\sharp\hat{q}_{T-T_{0}}^{R}(\cdot\given\vy)\big\}\big] \\
&~~~~\leq C\bigg[\frac{\sigma_{T_{0}}^{2}}{\mu_{T_{0}}^{2}}+\bigg(\frac{\mu_{T_{0}}}{\sigma_{T_{0}}^{2}}\varepsilon+\varepsilon^{1/2}\bigg)
\log\bigg\{\kappa_{\vy}\bigg(\frac{\mu_{T_{0}}}{\sigma_{T_{0}}^{2}}\varepsilon+\varepsilon^{1/2}\bigg)^{-1}\bigg\}\bigg] \,.
\end{align*}
In particular, setting $T_{0}=\sqrt{\varepsilon}$ gives
\begin{equation*}
\bbE\big[\bbW_{2}^{2}\big\{q_{0}(\cdot\given\vy),\calM(\mu_{T_{0}}^{-1})\sharp\hat{q}_{T-T_{0}}^{R}(\cdot\given\vy)\big\}\big]\leq C^{\prime}\varepsilon^{1/2}\log(\varepsilon^{-1})\,.
\end{equation*}
The constants $C$ and $C^{\prime}$ depend only on $d$, $\kappa_{\vy}$, $B$, $r$, $H_{\vy}$, $V_{\SG}$, and $C_{\SG}$.
\end{theorem}

The proof of Theorem~\ref{theorem:convergence} is provided in Appendix~\ref{appendix:proof:theorem:convergence}. The bound comprises two competing terms in $T_{0}$: the early-stopping error $\calO(\sigma_{T_{0}}^{2})\to 0$ as $T_{0}\to 0$, and the score estimation error magnified by $\calO(\sigma_{T_{0}}^{-2})\to\infty$ as $T_{0}\to 0$. Setting $T_{0}=\sqrt{\varepsilon}$ optimally balances this trade-off. This justifies early-stopping as a theoretical necessity for our analysis rather than a computational convenience: terminating the time-reversal at $T-T_{0}$ is required to keep the bound on the estimation error finite. Theorem~\ref{theorem:convergence} also provides insights into related work, such as the closely connected method of~\citet{grenioux2024stochastic}. Specifically, taking the prior score to be exact, in which case every positive budget for $\varepsilon_{\prior}$ is met, recovers a non-asymptotic convergence analysis for the approach of~\citet{grenioux2024stochastic}. 

\section{Numerical Experiments}
\label{section:experiments}
We evaluate the proposed PDPS method on imaging inverse problems. Complete experimental details are deferred to Appendix~\ref{app:experimental_details}. The code is available at~\url{https://github.com/Ruoxuan0077/PDPS}.

\subsection{Experimental setup}
\label{sec:problem_formulations}
We consider three imaging inverse problems on $64 \times 64$ images corrupted by additive Gaussian noise $\vn \sim N(\bm{0}, \sigma^2 \mathbf{I}_n)$ with $\sigma=0.05$.

\textbf{Linear forward operator.}
We first consider the general linear measurement model $\mY = \mA \mX_0 + \vn$ with $\mA \in \mathbb{R}^{n \times d}$, and evaluate our method on two concrete instantiations of $\mA$: \emph{Gaussian deblurring}, where $\mA$ corresponds to convolution with a standard Gaussian kernel; and \emph{motion deblurring}, where $\mA$ corresponds to convolution with a kernel synthesized using publicly available code\footnote{\url{https://github.com/LeviBorodenko/motionblur}} to simulate realistic camera shake effects.

\textbf{Nonlinear forward operator.}
We further consider a nonlinear deblurring task inspired by the GOPRO dataset~\citep{nah2017deep}, in which the forward operator combines temporal aggregation of multiple sharp frames with a nonlinear camera response function. In practice, we adopt the pretrained distillation model of~\cite{tran2021explore} as a deterministic nonlinear forward operator. The full mathematical specification is deferred to Appendix~\ref{app:measurement_details}.

\textbf{Dataset and baselines.}
\label{sec:dataset:baseline}
We use the FFHQ dataset~\citep{karras2019style} at $64 \times 64$ resolution and benchmark against Diffusion Posterior Sampling (DPS)~\citep{chung2023diffusion} and Total Variation (TV) regularization. Both PDPS and DPS employ the same pre-trained prior score network based on the EDM framework~\citep{karras2022elucidating}.

\subsection{Reconstruction results}
\label{sec:sampling_efficacy}
Table~\ref{tab:ffhq} reports average PSNR and SSIM over 128 FFHQ64 images. PDPS consistently outperforms both baselines across all tasks, with particularly pronounced gains in the nonlinear deblurring setting.

\begin{table}[H]
\vspace{-0.8mm}
\small
\centering
\caption{Comparison of PSNR and SSIM on Different Methods}
\begin{tabular}{l ccc ccc ccc} 
\toprule
\multirow{2}{*}{} & \multicolumn{3}{c}{Gaussian Deblur} 
& \multicolumn{3}{c}{Motion Deblur} 
& \multicolumn{3}{c}{Nonlinear Deblur} \\
\cmidrule(lr){2-4} \cmidrule(lr){5-7} \cmidrule(lr){8-10}
& TV & DPS & \textbf{PDPS} & TV & DPS & \textbf{PDPS} & TV & DPS & \textbf{PDPS} \\
\midrule
PSNR & 23.98 & 24.38 & \textbf{26.42} & 24.94 & 26.83 & \textbf{28.86} & 18.66 & 20.96 & \textbf{28.44} \\
SSIM & 0.77 & 0.82 & \textbf{0.87} & 0.81 & 0.88 & \textbf{0.92} & 0.48 & 0.69 & \textbf{0.91} \\
\bottomrule
\end{tabular}\label{tab:ffhq}
\end{table}

For in-depth case studies, we selected six representative images (two per task) to evaluate the reconstruction quality of the different methods. For each image, we performed 24 independent sampling runs for the stochastic methods, PDPS and DPS, whereas TV produces a single deterministic reconstruction. The primary visual result shown for each stochastic method corresponds to the single run achieving the highest combined PSNR and SSIM; for TV, the unique reconstruction is displayed. To enable qualitative comparison and uncertainty quantification, we further report (i) the pixel-wise mean absolute error (MAE) with respect to the ground truth (labeled ``mean err.'') and (ii) the pixel-wise standard deviation (labeled ``std. dev.'') computed across the 24 runs. For TV, the mean-error map reduces to the absolute-error map of its single reconstruction, and its standard-deviation map is identically zero.

\begin{figure}[!ht]
\centering
\begin{subfigure}[t]{0.50\textwidth}
\centering\includegraphics[width=\linewidth]{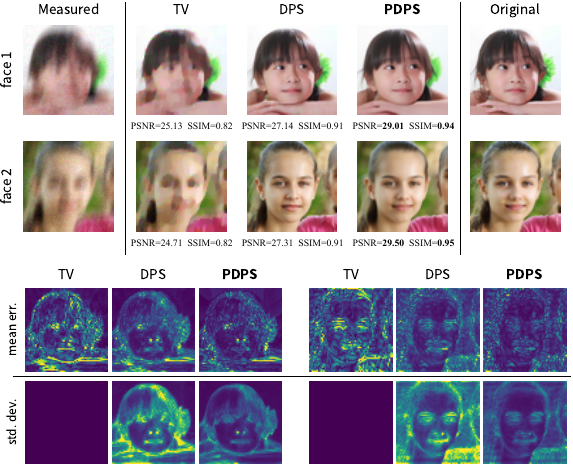}
\caption{Motion deblurring.}
\label{fig:motion_results}
\end{subfigure}\hfill
\begin{subfigure}[t]{0.50\textwidth}
\centering\includegraphics[width=\linewidth]{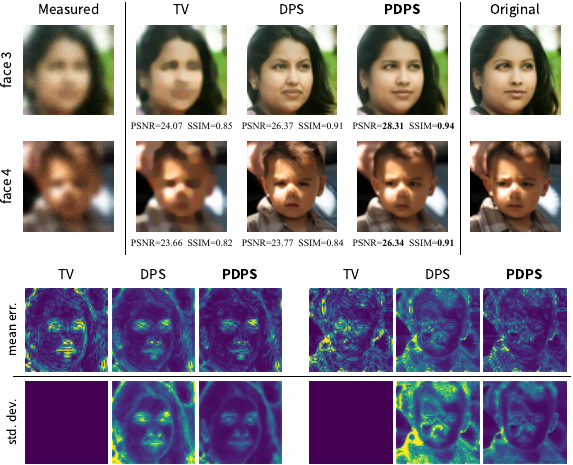}
\caption{Gaussian deblurring.}
\label{fig:gaussian_results}
\end{subfigure}
\hfill
\begin{subfigure}[t]{0.50\textwidth}
\centering\includegraphics[width=\linewidth]{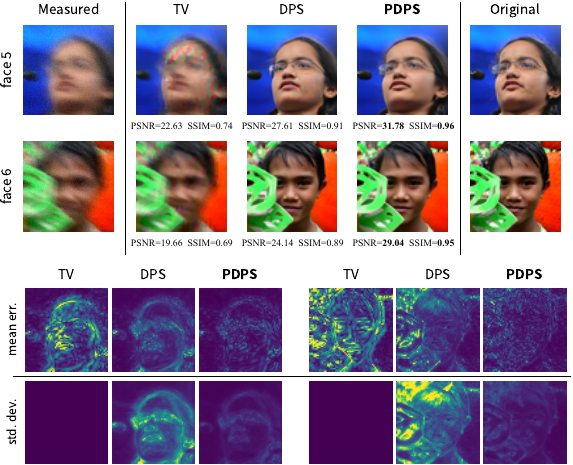}
\caption{Nonlinear deblurring.}
\label{fig:nonlinear_results}
\end{subfigure}
\caption{\footnotesize Reconstruction and uncertainty quantification on FFHQ64. Each panel compares the degraded input, TV, DPS, and PDPS against the ground truth. Bottom rows show the pixel-wise mean absolute error and standard deviation over 24 independent runs for PDPS and DPS; for deterministic TV, the mean-error map is the absolute-error map of its single reconstruction and the standard-deviation map is identically zero.}
\label{fig:ffhq_results}
\end{figure}

\textbf{Motion deblurring.}
As shown in Figure~\ref{fig:motion_results}, TV fails to recover fine details such as hair strands and textures. DPS recovers considerably more detail, but the subject's left hand appears incomplete in ``face 1,'' while the mouth appears somewhat unnatural in ``face 2.'' In contrast, PDPS effectively removes motion blur while faithfully restoring intricate details and sharp edges.

The pixel-wise standard deviation maps enable a visual assessment of uncertainty quantification (UQ) for the stochastic methods. Ideally, higher standard deviation should correlate with higher reconstruction error, reflecting the model's confidence. Since TV is a deterministic optimization baseline, its identically zero standard-deviation map is not interpreted as an uncertainty estimate. For both stochastic methods, regions of elevated standard deviation overlap qualitatively with some high-error regions. In the displayed examples, PDPS has lower mean-error intensity and smoother standard-deviation maps than DPS.

\textbf{Gaussian deblurring.}
For the Gaussian deblurring task (Figure~\ref{fig:gaussian_results}), PDPS consistently yields the most natural reconstructions. For both ``face 3'' and ``face 4,'' the facial appearance in the DPS reconstruction is somewhat unnatural, whereas PDPS produces a more natural face that is closer to the ground truth. The UQ maps show a similar qualitative pattern: PDPS has lower mean-error intensity and smoother standard-deviation maps, with elevated variability concentrated around some facial and edge regions that also exhibit larger error.

\textbf{Nonlinear deblurring.}
The nonlinear results (Figure~\ref{fig:nonlinear_results}) most starkly demonstrate PDPS's advantage, with a substantial quantitative lead: PDPS achieves PSNR $\approx$ 29--32 compared to DPS ($\approx$ 24--28) and TV ($\approx$ 19--22). Visually, TV fails to produce clear reconstructions, and DPS introduces artifacts with erroneous details---affecting facial clarity in ``face 5'' and texture fidelity in ``face 6.'' In contrast, PDPS delivers clear, detailed reconstructions with fewer visible artifacts and closer agreement with the ground truth. The UQ maps provide a consistent qualitative comparison: PDPS shows lower mean-error intensity and smoother sample variability than DPS in both displayed nonlinear examples, while elevated variability remains concentrated around several high-error structures.

\subsection{Ablation: terminal time}
\label{sec:ablation}
We investigate the impact of the terminal time $T$ on sampling performance, validating the theoretical trade-off from Theorem~\ref{theorem:duality}: $T$ must be large enough ($T>\underline{t}$) for the log-Sobolev inequality of $q_{T}(\cdot\given\vy)$, yet small enough ($T<\bar{t}$) for log-concavity of the denoising density $p_{t}(\cdot\given\vx,\vy)$. Figure~\ref{fig:ablation} shows the ablation on the Gaussian deblurring task for two representative images.

The plots reveal three distinct regimes. For very small $T$ (e.g., $T<0.05$), both PSNR and SSIM are poor, consistent with the theoretical prediction that the terminal posterior $q_{T}(\cdot\given\vy)$ either possesses a large log-Sobolev constant or fails to satisfy the inequality entirely, making the warm-start sampler ineffective. As $T$ increases, performance rapidly improves and enters a stable plateau of near-optimal results for $T\approx 0.05$--$1.0$, representing the practical ``sweet spot'' where both theoretical conditions from Theorem~\ref{theorem:duality} are simultaneously met. However, for $T>1.0$, performance begins to degrade as the posterior denoising density $p_{t}(\cdot\given\vx,\vy)$ loses log-concavity, compromising the accuracy of the Monte Carlo score estimator. These findings provide empirical validation of our theoretical analysis and underscore the importance of selecting a well-balanced terminal time.

\begin{figure}[ht]
\centering
\includegraphics[width=0.9\textwidth]{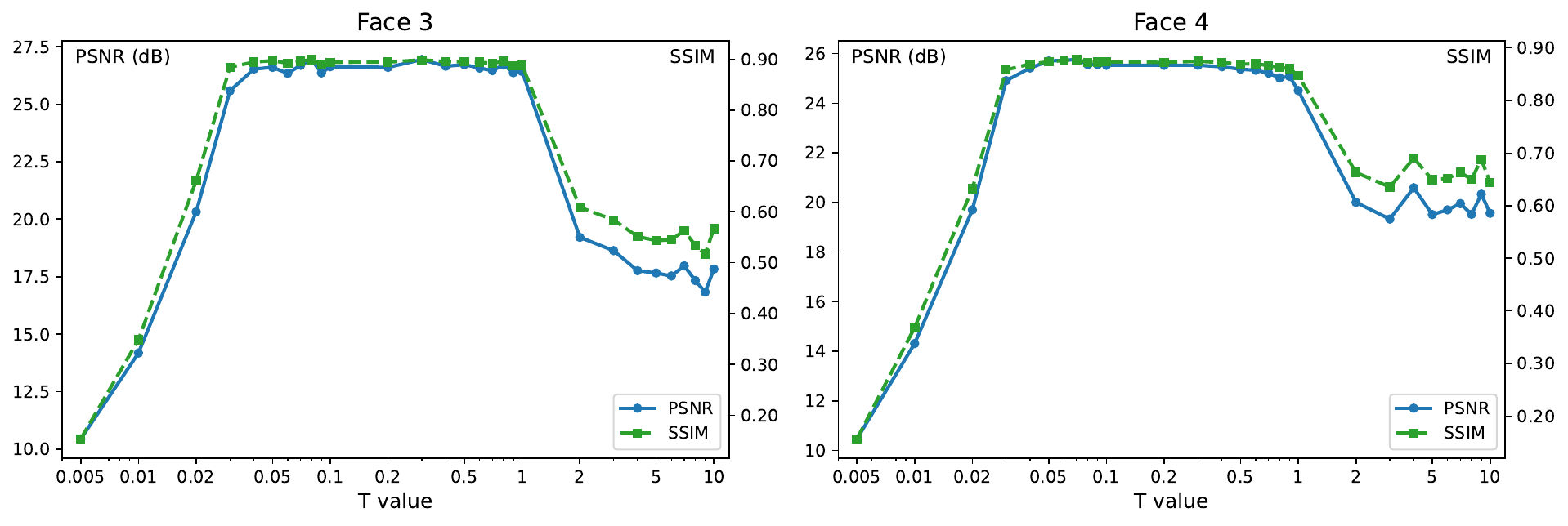}
\caption{\footnotesize Ablation on terminal time $T$ for Gaussian deblurring (face 3 and 4). All runs in this ablation use $T_0=0.001$. Performance peaks for $T\in[0.05, 1.0]$, validating the theoretical trade-off between warm-start convergence and score estimation accuracy.}
\label{fig:ablation}
\end{figure}
\section{Discussion}
\label{section:conclusion}
We have introduced a diffusion-based posterior sampling method (PDPS) in which the posterior score is estimated via a Monte Carlo procedure using Langevin dynamics driven by a pre-trained prior score. A warm-start strategy enables sampling from the terminal posterior distribution at a small diffusion terminal time. We established non-asymptotic convergence guarantees in the 2-Wasserstein distance, providing rigorous error bounds even when the target posterior is multimodal. The analysis reveals the influence of key quantities such as the condition number and the prior score estimation accuracy, and offers practical guidance for hyperparameter selection. Extensive experiments on imaging inverse problems demonstrate competitive performance and effective uncertainty quantification.

Several directions remain for future work. First, the present framework assumes a known forward operator; extending PDPS to blind inverse problems, where the forward operator is unknown or only partially specified, is a natural and practically motivated next step. Second, the same Monte Carlo posterior score machinery can be adapted to data assimilation, in which observations arrive sequentially and the posterior must be updated under a state-space dynamics rather than a fixed measurement model. Third, our analysis is built on an SDE-based time-reversal process; transferring the convergence guarantees and the duality theorem to ODE-based samplers, such as probability-flow ODEs and flow matching, would deliver a deterministic counterpart of PDPS with potentially sharper computational efficiency. Fourth, the alignment of generative models, encompassing inference-time reward guidance, can be cast as posterior sampling in which a reward or preference signal plays the role of the likelihood. PDPS provides a candidate framework for alignment with rigorous guarantees.

\appendix

\section*{Outline of Appendices}
The appendices are organized as follows:

\begin{enumerate}[label=(\roman*)]
\item \textbf{Appendix~\ref{appendix:notation}}: Summary of notation.

\item \textbf{Appendix~\ref{section:related:work}}: Related work, including conditional diffusion models, heuristic posterior score estimation, and hybridization with classical sampling.

\item \textbf{Appendix~\ref{appendix:discussions}}: Technical discussions of the condition number, early stopping, and Monte Carlo-based score estimation.

\item \textbf{Appendix~\ref{section:concrete:examples}}: Discussions on assumptions and concrete examples.

\item \textbf{Appendix~\ref{appendix:proofs:methods}}: Proofs of all propositions and the duality theorem in Section~\ref{section:method}.

\item \textbf{Appendix~\ref{appendix:random:field}}: Construction of the Monte Carlo posterior score estimator as a measurable random field, together with regularity and growth bounds for both the estimated and true posterior scores.

\item \textbf{Appendix~\ref{section:proof:error:decomposition}}: Proof of the error decomposition (Proposition~\ref{proposition:error:decomposition}), including the localized entropy--Girsanov inequality used in the path-space comparison.

\item \textbf{Appendix~\ref{section:proof:posterior:score}}: Proof of the posterior score estimation bound (Proposition~\ref{proposition:posterior:score} and Corollary~\ref{corollary:posterior:score}).

\item \textbf{Appendix~\ref{section:proof:warm:start}}: Proof of the warm-start error bound (Proposition~\ref{proposition:posterior:warm:start}).

\item \textbf{Appendix~\ref{appendix:proof:theorem:convergence}}: Proof of the convergence theorem (Theorem~\ref{theorem:convergence}).

\item \textbf{Appendix~\ref{appendix:auxiliary:lemmas}}: Auxiliary lemmas.

\item \textbf{Appendix~\ref{app:experimental_details}}: Experimental details and additional results, including measurement operators, implementation, hyperparameters, and a cross-dataset robustness study under prior mismatch.
\end{enumerate}

\section{A Summary of Notations}
\label{appendix:notation}
Tables~\ref{table:notation:introduction},~\ref{table:notation:methods} and~\ref{table:notation:convergence} summarize the notation used in Sections~\ref{section:introduction},~\ref{section:method} and~\ref{section:convergence}, respectively, for easy reference and cross-checking.

\begin{table}[htbp]
\caption{The list of notation defined in Section~\ref{section:introduction}.}
\centering
\renewcommand{\arraystretch}{0.7}
\begin{tabular}{@{}p{0.31\textwidth}p{0.65\textwidth}@{}}
\toprule 
Symbols & Description \\ 
\midrule 
$\mX_{0}$ & A random variable representing the unknown signal. \\
$\pi_{0}$ & The prior density of the signal $\mX_{0}$. \\
$\mY$ & A random variable representing the measurement of the signal. \\
$p_{\mY\given\mX_{0}}(\cdot\given\vx_{0})$ & The likelihood of $\mY$ given $\mX_{0}=\vx_{0}$. \\
$\ell_{\vy}(\vx_{0})$ & The negative log-likelihood of $\mY=\vy$ given $\mX_{0}=\vx_{0}$. \\
$q_{0}(\cdot\given\vy)=p_{\mX_{0}\given\mY}(\cdot\given\vy)$ & The conditional density of $\mX_{0}$ given $\mY=\vy$. \\
\bottomrule
\end{tabular}
\label{table:notation:introduction}
\end{table}

\begin{table}[htbp]
\caption{The list of notation defined in Section~\ref{section:method}.}
\centering
\renewcommand{\arraystretch}{0.7}
\begin{tabular}{@{}p{0.31\textwidth}p{0.65\textwidth}@{}}
\toprule 
Symbols & Description \\ 
\midrule 
$\mX_{t}$ & The unconditional forward process~\eqref{eq:method:forward}. \\
$p_{\mX_{t}\given\mX_{0}}(\cdot\given\vx_{0})$ & The conditional density of $\mX_{t}$ given $\mX_{0}=\vx_{0}$. \\
$\pi_{t}$ & The marginal density of the forward process $\mX_{t}$~\eqref{eq:method:marginal}. \\
$T$ & The terminal time of the forward process. \\
$\mX_{t}^{\vy}$ & The posterior forward process given $\mY=\vy$~\eqref{eq:method:forward:y}. \\
$q_{t}(\cdot\given\vy)$ & The marginal density of the posterior forward process $\mX_{t}^{\vy}$~\eqref{eq:method:marginal:y}. \\
$p_{\mX_{t}\given\mY}(\cdot\given\vy)$ & The conditional density of $\mX_{t}$ given $\mY=\vy$. \\
$\bar{\mX}_{t}^{\vy}$ & The posterior time-reversal process~\eqref{eq:reversal}. \\
\midrule
$\mD(t,\vx,\vy)$ & The posterior denoiser~\eqref{eq:posterior:denoiser}, i.e., the conditional expectation of $\mX_{0}$ given $\mX_{t}=\vx$ and $\mY=\vy$. \\
$p_{t}(\cdot\given\vx,\vy)$ & The posterior denoising density~\eqref{eq:posterior:conditional:density}, i.e., the conditional density of $\mX_{0}$ given $\mX_{t}=\vx$ and $\mY=\vy$. \\
$\mX_{0,s}^{\vx,\vy,t}$ & The Langevin dynamics with invariant density $p_{t}(\cdot\given\vx,\vy)$~\eqref{eq:RGO:Langevin} \\
$p_{t}^{s}(\cdot\given\vx,\vy)$ & The marginal density of $\mX_{0,s}^{\vx,\vy,t}$. \\
$\what{\mX}_{0,s}^{\vx,\vy,t}$ & The Langevin dynamics with estimated prior score~\eqref{eq:RGO:Langevin:score} \\
$\hat{p}_{t}^{s}(\cdot\given\vx,\vy)$ & The marginal density of $\what{\mX}_{0,s}^{\vx,\vy,t}$. \\
$S$ & The simulation horizon of~\eqref{eq:RGO:Langevin:score}. \\
$m$ & The number of particles used in Monte Carlo approximation. \\
\makecell[c]{$\what{\mX}_{0,S,1}^{\vx,\vy,t},\ldots,$\\
$\what{\mX}_{0,S,m}^{\vx,\vy,t}$} & Particles generated by~\eqref{eq:RGO:Langevin:score}, following the density $\hat{p}_{t}^{S}(\cdot\given\vx,\vy)$. \\
$\what{\mD}_{m}^{S}(t,\vx,\vy)$ & The Monte Carlo estimator of the posterior denoiser~\eqref{eq:denoiser:MC}. \\
$\hat{\vs}_{m}^{S}(t,\vx,\vy)$ & The Monte Carlo estimator of the posterior score~\eqref{eq:posterior:score:estimate}. \\
$\calG$ & The $\sigma$-field generated by the common inner random seeds (Section~\ref{section:method:score}). \\
\midrule
$q_{T}(\cdot\given\vy)$ & The terminal posterior density, i.e., the marginal density of $\mX_{T}^{\vy}$. \\
$\mX_{T,u}^{\vy}$ & The Langevin dynamics with invariant density $q_{T}(\cdot\given\vy)$~\eqref{section:method:warm:Langevin}. \\
$q_{T}^{u}(\cdot\given\vy)$ & The marginal density of $\mX_{T,u}^{\vy}$. \\
\makecell[c]{$a_{T}(\vx,\vy)$,\\
$\tilde{\vs}_{m}^{S}(T,\vx,\vy)$} & The warm-start clipping radius~\eqref{eq:warm:score:radius} and clipped score field~\eqref{eq:warm:score:clipped}. \\
$\what{\mX}_{T,u}^{\vy}$ & The Langevin dynamics with clipped score $\tilde{\vs}_{m}^{S}(T,\cdot,\vy)$~\eqref{section:method:warm:Langevin:score}. \\
$\hat{q}_{T}^{u}(\cdot\given\vy)$ & The conditional law given $\calG$ of $\what{\mX}_{T,u}^{\vy}$. \\
$U$ & The simulation horizon of~\eqref{section:method:warm:Langevin:score}. \\
$C_{\SG}$ and $V_{\SG}$ & The parameters of sub-Gaussian tails defined in Assumption~\ref{assumption:prior:subGaussian}. \\
$\kappa_{\vy}$ & The condition number of the posterior sampling~\eqref{eq:posterior:score:condition}. \\
$C_{\LSI}(\mu)$ & The log-Sobolev inequality constant of a distribution $\mu$. \\ 
\midrule 
$\what{\mX}_{t}^{\vy}$ & The approximate posterior time-reversal process~\eqref{eq:reversal:score}. \\
$T_{0}$ & The early-stopping time defined in~\eqref{eq:reversal:score}. \\
\bottomrule
\end{tabular}
\label{table:notation:methods}
\end{table}

\begin{table}[htbp]
\caption{The list of notation defined in Section~\ref{section:convergence}.}
\vspace*{5pt}
\centering
\renewcommand{\arraystretch}{0.7}
\begin{tabular}{@{}p{0.31\textwidth}p{0.65\textwidth}@{}}
\toprule 
Symbols & Description \\ 
\midrule 
$\calM$ & The scaling operator, i.e., $\calM(\xi):\vx\mapsto\xi\vx$. \\
$\varepsilon_{\prior}$ & The prior score matching error defined in Assumption~\ref{assumption:prior:score:error}. \\
$B$ & Constant in the polynomial growth bound of the prior score (Assumption~\ref{assumption:prior:bound}). \\
$\eta_{\vy}$ & The initial discrepancy of~\eqref{eq:RGO:Langevin:score} defined in Proposition~\ref{proposition:posterior:score}. \\
$\varepsilon_{\post}$ & The posterior score estimation error at the terminal time, defined in Proposition~\ref{proposition:posterior:warm:start}. \\
$\zeta_{\vy}$ & The initial discrepancy of~\eqref{section:method:warm:Langevin:score} defined in Proposition~\ref{proposition:posterior:warm:start}.\\
\bottomrule
\end{tabular}
\label{table:notation:convergence}
\end{table}

\section{Related Work}
\label{section:related:work}

\subsection{Conditional diffusion models}
\label{section:related:conditional}
Posterior sampling sits within the broader context of conditional generative learning, and a rich body of prior work on conditional diffusion models provides relevant methodological background. Two paradigms can be distinguished: (i) paired-data approaches, which require joint samples $(\mX_{0},\mY)$ and learn a dedicated conditional score network; and (ii) zero-shot approaches, which require only prior samples and introduce the likelihood at inference time.

\subsubsection{Conditional diffusion models using paired-data}
Paired-data approaches learn the conditional score $\nabla\log p_{\mX_{t}\given\mY}(\cdot\given\vy)$ from samples of the joint distribution. Classifier guidance~\citep{app:dhariwal2021diffusion} trains an unconditional score estimator $\widehat{\vs}_{t}(\vx)\approx\nabla\log p_{\mX_{t}}(\vx)$ together with a separate time-dependent classifier for the likelihood $p_{\mY\given\mX_{t}}(\vy\given\vx)$, trained on data pairs $\{(\mX_{t}^{i},\mY^{i})\}_{i=1}^{n}$ obtained by applying the forward corruption to the $\mX_{0}$ component of $\{(\mX_{0}^{i},\mY^{i})\}_{i=1}^{n}$. Minimizing a cross-entropy loss yields $\widehat{p}_{\mY\given\mX_{t}}(\vy\given\vx)$, and the conditional score is assembled via Bayes' rule
\begin{equation*}
\hat{\vs}_{t}(\vx\given\vy)\coloneq \nabla\log\hat{p}_{\mY\given\mX_{t}}(\vy\given\vx)+\hat{\vs}_{t}(\vx)\approx\nabla\log p_{\mX_{t}\given\mY}(\vx\given\vy)\,.
\end{equation*}
Classifier guidance is inherently restricted to discrete conditions such as class labels. Classifier-free guidance~\citep{app:ho2022classifier} removes the external classifier and accommodates continuous $\mY$ by directly learning the conditional score through conditional score matching. Theoretical analyses of classifier-free guidance appear in~\citet{app:chang2024deep,app:fu2024unveil,app:tang2024conditional}, with extensions to inverse problems in~\citet{app:chung2024cfg}. Despite their success in conditional generation, paired-data approaches are poorly matched to inverse problems for two reasons: inverse problems typically involve a single measurement, making the paired samples required to train a posterior score unavailable in practice, and the learned conditional score is tied to a specific likelihood, so any change in the measurement operator requires retraining from scratch.

\subsubsection{Zero-shot conditional diffusion models}
Zero-shot approaches instead train only on prior samples, with no paired data. The resulting prior score acts as a plug-and-play prior applicable across a broad class of inverse problems with arbitrary likelihoods~\citep{app:chung2023diffusion,app:song2023loss,app:Guo2024Gradient,app:Sun2024Provable,app:li2025efficient,app:martin2025pnpflow,app:Zhang2025Improving}; the specific score estimation heuristics employed within this class are reviewed in Appendix~\ref{section:method:previous:work}. Training is thereby decoupled from the likelihood, which enters only at inference time---a procedure referred to as inference-time alignment~\citep{app:kim2025testtime,app:uehara2025inference}. Theoretical guarantees for diffusion guidance in this zero-shot setting have been developed by~\citet{app:Chidambaram2024what,app:wu2024theoretical,app:jiao2025unified,app:moufad2025conditional,app:galashov2025learn}. The present work belongs to this class.

\subsection{Heuristic posterior score estimation}
\label{section:method:previous:work}
Section~\ref{section:preliminary} introduced the decomposition~\eqref{eq:posterior:score:decomposition} of the time-dependent posterior score and observed that existing heuristic estimators all target the time-dependent likelihood term by approximating the denoising density. Here we elaborate on the two heuristic families:
\begin{itemize}
\item Dirac delta approximation~\citep{app:chung2022improving,app:chung2023diffusion,rout2023solving,app:Yu2023FreeDoM,app:bansal2024universal}. This line of work proposes a Dirac delta approximation for the denoising density:
\begin{equation*}
p_{t}(\vx_{0}\given\vx)\approx \delta_{\bbE[\mX_{0}\given\mX_{t}=\vx]}(\vx_{0})\,,
\end{equation*}
where the conditional expectation $\bbE[\mX_{0}\given\mX_{t}=\vx]$ is a by-product of the time-dependent score, as shown by Tweedie's formula~\citep{app:Efron2011Tweedie}. This Dirac delta approximation implies the following approximation for the time-dependent likelihood:
\begin{equation*}
\bbE\big[\exp\{-\ell_{\vy}(\mX_{0})\}\given\mX_{t}=\vx\big]\approx \exp[-\ell_{\vy}\{\bbE(\mX_{0}\given\mX_{t}=\vx)\}]\,.
\end{equation*}
This approximation can be viewed as interchanging the conditional expectation with the nonlinear function $\exp(-\ell_{\vy}(\cdot))$, and thus suffers from a bias known as Jensen's gap~\citep{app:chung2023diffusion}.
\item Gaussian Approximation~\citep{app:song2023pseudoinverse,app:song2023loss,app:Zhang2025Improving}. Another line of work proposes a Gaussian approximation for the denoising density $p_{t}(\cdot\given\vx)$. According to Bayes' rule, the denoising density $p_{t}(\cdot\given\vx)$, i.e., the conditional density of $\mX_{0}$ given $\mX_{t}=\vx$, reads
\begin{equation}\label{eq:denoising:density}
p_{t}(\vx_{0}\given\vx)\propto\exp\bigg\{\log\pi_{0}(\vx_{0})-\frac{\|\vx-\mu_{t}\vx_{0}\|_{2}^{2}}{2\sigma_{t}^{2}}\bigg\}\,.
\end{equation}
Note that $p_{t}(\cdot\given\vx)$ is dominated by the quadratic term for small $t$, meaning it behaves like a Gaussian distribution in this regime. However, for large $t$, its behavior is determined primarily by the prior density $\pi_{0}$. If the prior $\pi_{0}$ is multimodal, the denoising density $p_{t}(\cdot\given\vx)$ will also tend to be multimodal. In this scenario, the Gaussian approximation can lead to a large bias.
\end{itemize}

\par\textbf{Estimating the Gradient of the Log-Likelihood} 
A second critical limitation is that these methods first approximate the time-dependent likelihood; however, the actual target is the gradient of its logarithm. Both of the aforementioned approaches use a direct plug-in gradient estimator. That is, they use the gradient of their likelihood estimator as an estimate of the true gradient.

\par The effectiveness of this plug-in approach is not generally guaranteed. An estimator that approximates a target function with low error in value can still have a gradient that is a poor approximation of the function's true gradient.

\par\textbf{Limitations}
Taking these two issues together: (i) the potential large bias in the likelihood approximation; and (ii) the unreliability of the plug-in gradient estimator, we conclude that these existing methods are not sufficiently robust and accurate.

\subsection{Hybridizing diffusion models with classical sampling methods}
To avoid heuristic approximations of posterior scores, recent work has integrated diffusion models with classical posterior sampling techniques. For instance, \citet{app:Coeurdoux2024Plug,app:Wu2024Principled,app:Xu2024Provably,app:chu2025split,app:zheng2025blade} couple diffusion models with split Gibbs samplers, and \citet{app:Wu2024Principled,app:Xu2024Provably,app:chu2025split,app:zheng2025blade} further provide theoretical guarantees for this approach. Likewise, \citet{app:wu2023practical,app:cardoso2024monte,app:achituve2025inverse,app:chen2025solving,app:kelvinius2025solving,app:kim2025testtime,app:lee2025debiasing,app:skreta2025feynmankac} integrate diffusion models with sequential Monte Carlo methods, with \citet{app:wu2023practical,app:cardoso2024monte,app:chen2025solving} offering rigorous theoretical analysis. Additionally,~\citet{app:Bruna2024Provable} develops the tilted transport technique with theoretical guarantees. However, this method depends on the quadratic structure of the linear-Gaussian measurement model, which restricts its extension to general measurement models.

Our work differs fundamentally from these approaches in that they rely on a probability path connecting the initial distribution to the target posterior that is distinct from the diffusion model's intrinsic time-reversal process.

\section{Technical Discussions}
\label{appendix:discussions}
This section gathers two technical discussions that complement the main analysis: an interpretation of the condition number $\kappa_{\vy}$, and a justification for the early-stopping rule, including a comparison with analogous trade-offs in the unconditional diffusion literature.

\subsection{Interpretation of the condition number}
\label{appendix:discussions:condition:number}
The condition number $\kappa_{\vy}$ defined in~\eqref{eq:posterior:score:condition} serves as a fundamental measure of problem difficulty in our analysis of posterior sampling. As demonstrated by~\citet[Theorem 4.1]{app:purohit2024posterior}, this quantity controls how errors in the prior model are amplified in the posterior. A moderate condition number occurs when the likelihood function $\exp(-\ell_{\vy}(\cdot))$ is concentrated in regions where the prior density $\pi_{0}$ is also high. In this scenario, the denominator of~\eqref{eq:posterior:score:condition} stays bounded away from zero, indicating a well-posed inverse problem where the observed data is consistent with prior beliefs. Conversely, a large condition number typically indicates an ill-posed problem. This happens when the likelihood function is concentrated in regions with low prior probability, causing the denominator of~\eqref{eq:posterior:score:condition} to approach zero. This situation typically arises when the observed data strongly conflicts with the prior information.

\subsection{Early-stopping in the time-reversal process}
\label{appendix:discussions:early:stopping}
The early-stopping time $T_{0}\in(0,T)$ in the approximate time-reversal process~\eqref{eq:reversal:score} is introduced to handle the singularity of the posterior score estimator $\hat{\vs}_{m}^{S}(t,\cdot,\vy)$ near $t=0$. As established in Section~\ref{section:convergence:score}, the estimation error bound diverges as $t$ approaches zero, necessitating termination at $t=T_{0}$ for numerical stability. The optimal choice of $T_{0}$ is addressed in Section~\ref{section:convergence:main}. The numerical instability near $t=0$ has been observed and investigated in the unconditional diffusion literature, such as~\citet{app:Kim2022Soft,app:Pidstrigach2022Score,app:duong2025telegrapher}.

The choice of $T_{0}$ balances two competing errors: the early-stopping bias, which vanishes as $T_{0}\to 0$, and the score estimation error, which diverges in the same limit due to the singularity noted above. Theorem~\ref{theorem:convergence} makes this trade-off explicit and shows that $T_{0}=\sqrt{\varepsilon}$ is optimal for this bound up to logarithmic factors, yielding a polynomial dependence of the overall error on the target accuracy $\varepsilon$. Analogous $T_{0}$-dependent analyses arise in the unconditional diffusion convergence literature~\citep{app:Chen2023Improved,app:Lee2023Convergence,app:li2024faster}, where early-stopping likewise plays a dual role as both regularizer (suppressing score blow-up) and rate-limiter (introducing a controllable bias at the target). The recurrence of the same trade-off in our posterior setting reinforces the view, made quantitative by Proposition~\ref{proposition:error:decomposition}, that the posterior-specific difficulty enters our bounds through $\kappa_{\vy}$, as an amplification factor on top of unconditional convergence machinery.

\subsection{Monte Carlo-based score estimation}
\label{appendix:discussions:mc:score}
Prior research has examined error analysis for Monte Carlo-based score estimation. For example, \citet[Proposition 3.1]{app:He2024Zeroth} decomposes the estimation error into components from Langevin dynamics and Monte Carlo approximation. The error bound established by~\citet{app:He2024Zeroth} is under the assumption that the score function of the target distribution is analytically available, which holds for standard unconditional sampling scenarios. However, the posterior sampling typically requires estimating the unknown prior score function from samples. Our work, Proposition~\ref{proposition:posterior:score} and  Corollary~\ref{corollary:posterior:score}, fills this gap by analyzing how the prior score estimation error $\varepsilon_{\prior}^{2}$ affects the posterior sampling performance and introducing the condition number $\kappa_{\vy}$ as a key characterization of this relationship.

\section{Discussions on Assumptions and Examples}
\label{section:concrete:examples}

\subsection{Discussion on Assumptions}
\label{appendix:assumptions}
Assumptions~\ref{assumption:semi:log:concave} and~\ref{assumption:prior:subGaussian} are accompanied by concise comments in the main text near their statements. This subsection further discusses Assumptions~\ref{assumption:posterior:bound:zero}--\ref{assumption:Lipschitz:prior:likelihood}, which encode the additional regularity and approximation conditions used in the convergence analysis.

\subsubsection{Assumption~\ref{assumption:posterior:bound:zero}}
Assumption~\ref{assumption:posterior:bound:zero} only pins down the magnitude of the posterior score at the origin, which is finite under Assumption~\ref{assumption:semi:log:concave}; the constant $H_{\vy}$ enters the convergence bounds as a normalization factor.

\subsubsection{Assumption~\ref{assumption:prior:score:error}}
Assumption~\ref{assumption:prior:score:error} requires that the $L^{2}$-error of the prior score estimator $\hat{\vs}_{\prior}$ be sufficiently small, where the error is measured with respect to the prior density $\pi_{0}$. In the present setting, numerous samples drawn from $\pi_{0}$ are available for estimation, and methods such as implicit score matching~\citep{app:hyvarinen2005Estimation}, sliced score matching~\citep{app:song2020Sliced}, or denoising score matching~\citep{app:vincent2011connection} yield an estimator $\hat{\vs}_{\prior}$ with theoretical guarantees in this metric. The assumption could equivalently be replaced by explicit score matching bounds~\citep{app:Oko2023Diffusion,app:ding2024characteristic,app:fu2024unveil,app:Tang2024Adaptivity} obtained through standard non-parametric regression techniques using deep neural networks~\citep{app:bauer2019deep,app:schmidt2020nonparametric,app:kohler2021rate,app:Jiao2023deep}; the present formulation is retained for clarity of presentation.

\subsubsection{Assumption~\ref{assumption:prior:bound}}
Assumption~\ref{assumption:prior:bound} bounds the prior score growth to be polynomial, ensuring that $\log\pi_{0}(\vx_{0})$ does not decay faster than $\exp(-c\|\vx_{0}\|_{2}^{r+1})$ for some $c>0$. The same bound on the estimator $\hat{\vs}_{\prior}$ is a requirement on the trained score network rather than on the prior. It complements Assumption~\ref{assumption:prior:subGaussian}: the former prevents tails from being too light, while the latter prevents them from being too heavy. Together, they characterize the permissible tail behavior---for instance, when $r=1$, the prior must have approximately Gaussian tails. Gaussian distribution, Gaussian mixture (Example~\ref{example:gaussian:mixture}), and  Gaussian convolution (Example~\ref{example:gaussian:convolution}) both satisfy this with $r=1$ (Appendix~\ref{appendix:examples}).

\subsubsection{Assumption~\ref{assumption:Lipschitz:prior:likelihood}}
Assumption~\ref{assumption:Lipschitz:prior:likelihood} is standard in posterior sampling error analyses such as~\citet[H1]{app:laumont2022bayesian} and~\citet[Assumptions 3.6 and 3.8]{app:cai2024nf}, and more broadly in Wasserstein convergence guarantees for diffusion or flow-based models~\citep{app:Kwon2022Score,app:chen2023probability,app:beyler2025convergence,app:Huang2025Convergence,app:kremling2025nonasymptotic}; in our setting it controls the Wasserstein error of the Langevin dynamics in Proposition~\ref{proposition:posterior:score}.

A sufficient condition is that $\hat{\vs}_{\prior}$ and $\nabla\ell_{\vy}$ each be uniformly Lipschitz continuous. On the likelihood side, this reduces to a uniform upper bound on $\nabla^{2}\ell_{\vy}$, which the running likelihood examples (Appendices~\ref{section:concrete:examples:linear} and~\ref{section:concrete:examples:general}) verify directly. On the prior side, $\hat{\vs}_{\prior}$ is the trained estimator and is not itself a property of the prior; it is typical instead to require that the true prior score $\nabla\log\pi_{0}$ be uniformly Lipschitz, and to combine an $L^{2}$-consistent estimator (i.e.\ Assumption~\ref{assumption:prior:score:error}) with a procedure that preserves Lipschitzness, so that the property transfers from $\nabla\log\pi_{0}$ to $\hat{\vs}_{\prior}$. The running prior examples (Appendices~\ref{section:concrete:examples:gaussian:mixture} and~\ref{section:concrete:examples:gaussian:convolution}) exhibit such a uniformly Lipschitz $\nabla\log\pi_{0}$.

In practice, where $\hat{\vs}_{\prior}$ is parameterized by a deep neural network, weight clipping~\citep{app:Arjovsky2017Wasserstein}, gradient penalty~\citep{app:Gulrajani2017Improved}, spectral normalization~\citep{app:miyato2018spectral}, and the Sobolev-regularized framework of~\citet[Theorem IV.3]{app:Ding2025Semi} provide concrete procedures that enforce the Lipschitz constraint while controlling the approximation error~\citep{app:chen2022distribution,app:huang2022error,app:Jiao2023deep,app:Ding2025Semi}.

\subsection{Concrete examples}
\label{appendix:examples}
The following examples, covering both the likelihood and the prior side, provide case-by-case verification of the relevant assumptions from Sections~\ref{section:method} and \ref{section:convergence}.

\subsubsection{Linear forward operator with Gaussian noise}
\label{section:concrete:examples:linear}
The simplest forward model is linear with isotropic Gaussian observation noise, in which case the negative log-likelihood is quadratic and its Hessian is constant.

\begin{example}[Linear forward operator with Gaussian noise]
\label{example:linear:gaussian}
Consider a linear forward model $\mY=\mA\mX_{0}+\vn$ with $\vn\sim N(\bzero,\mI_{n})$ and $\mA\in\bbR^{n\times d}$. The Hessian of the negative log-likelihood is the constant matrix $\nabla^{2}\ell_{\vy}(\vx_{0})=\mA^{\top}\mA$.
\end{example}

This Hessian admits the uniform two-sided eigenvalue bound
\begin{equation*}
\lambda_{\min}(\mA^{\top}\mA)\,\mI_{d}\preceq\nabla^{2}\ell_{\vy}(\vx_{0})\preceq\lambda_{\max}(\mA^{\top}\mA)\,\mI_{d}\,,\qquad\vx_{0}\in\bbR^{d}\,.
\end{equation*}
The lower bound supplies the likelihood-side contribution required by Assumption~\ref{assumption:semi:log:concave}: combined via Bayes' rule with any prior whose Hessian of $\log\pi_{0}$ is uniformly bounded above, it yields the lower bound on $-\nabla^{2}\log q_{0}(\cdot\given\vy)$ required there. The matching upper bound makes $\nabla\ell_{\vy}$ uniformly Lipschitz, which supplies the likelihood-side of the sufficient condition for Assumption~\ref{assumption:Lipschitz:prior:likelihood} stated in Appendix~\ref{appendix:assumptions}.

\subsubsection{General nonlinear forward operator with Gaussian noise}
\label{section:concrete:examples:general}
Many imaging forward operators are nonlinear; the same Hessian framework extends provided the operator and its first two derivatives are uniformly bounded.

\begin{example}[General forward operator with Gaussian noise]
\label{example:general:operator:gaussian}
Consider a general forward model $\mY=\calF(\mX_{0})+\vn$ with $\vn\sim N(\bzero,\mI_{n})$ and a twice-differentiable operator $\calF\colon\bbR^{d}\to\bbR^{n}$. A direct calculation gives
\begin{equation*}
\nabla^{2}\ell_{\vy}(\vx_{0})=\nabla\calF(\vx_{0})^{\top}\nabla\calF(\vx_{0})-\{\vy-\calF(\vx_{0})\}^{\top}\nabla^{2}\calF(\vx_{0})\,,
\end{equation*}
where $\nabla\calF$ and $\nabla^{2}\calF$ denote the Jacobian and Hessian of $\calF$.
\end{example}

If $\calF$, $\nabla\calF$, and $\nabla^{2}\calF$ are uniformly bounded in operator norm, then each term in the Hessian formula is bounded, so $\nabla^{2}\ell_{\vy}$ is uniformly bounded above and below on $\bbR^{d}$. This condition is met by a broad class of nonlinear operators; for instance, a deep neural network with $\tanh$ or $\mathrm{sigmoid}$ activations has uniformly bounded function values and derivatives of all orders. The resulting two-sided bound furnishes simultaneously the lower bound for the likelihood-side contribution required by Assumption~\ref{assumption:semi:log:concave} and the upper bound that makes $\nabla\ell_{\vy}$ uniformly Lipschitz, supplying the likelihood-side of the sufficient condition for Assumption~\ref{assumption:Lipschitz:prior:likelihood} stated in Appendix~\ref{appendix:assumptions}.

\subsubsection{Gaussian mixture prior}
\label{section:concrete:examples:gaussian:mixture}
Gaussian mixtures are a canonical non-log-concave, multimodal prior. They are foundational in statistics, physics, and machine learning, largely due to their role as universal approximators: any continuous density can be approximated arbitrarily well by a Gaussian mixture.

\begin{example}[Gaussian mixture]
\label{example:gaussian:mixture}
A mixture of Gaussians $\{N(\vm_{k},\sigma_{k}^{2}\mI_{d})\}_{k=1}^{K}$ has density
\begin{equation}\label{eq:gaussian:mixture}
\pi_{0}(\vx_{0})=\sum_{k=1}^{K}w_{k}\,\gamma_{d,\sigma_{k}^{2}}(\vx_{0}-\vm_{k})\,,
\end{equation}
where $\gamma_{d,\sigma^{2}}$ denotes the density of $N(\bzero,\sigma^{2}\mI_{d})$ and the weights satisfy $w_{k}>0$ and $\sum_{k=1}^{K}w_{k}=1$.
\end{example}

A direct calculation expresses the mixture score as a convex combination of component scores,
\begin{equation}\label{eq:concrete:gm:score}
\vs(\vx_{0})\coloneq\nabla\log\pi_{0}(\vx_{0})=\sum_{k=1}^{K}\xi_{k}(\vx_{0})\,\vs_{k}(\vx_{0})\,,
\qquad \vs_{k}(\vx_{0})\coloneq-\frac{\vx_{0}-\vm_{k}}{\sigma_{k}^{2}}\,,
\end{equation}
where the responsibility weights
\begin{equation}\label{eq:concrete:gm:weight}
\xi_{k}(\vx_{0})=\frac{w_{k}\,\gamma_{d,\sigma_{k}^{2}}(\vx_{0}-\vm_{k})}{\sum_{j=1}^{K}w_{j}\,\gamma_{d,\sigma_{j}^{2}}(\vx_{0}-\vm_{j})}
\end{equation}
lie in $(0,1)$ and sum to unity. From~\eqref{eq:concrete:gm:score} and the triangle inequality, the score satisfies the linear growth bound
\begin{equation}\label{eq:concrete:gm:score:growth}
\|\vs(\vx_{0})\|_{2}\leq\bigg\{\sum_{k=1}^{K}\frac{\xi_{k}(\vx_{0})}{\sigma_{k}^{2}}\bigg\}\|\vx_{0}\|_{2}+\sum_{k=1}^{K}\frac{\xi_{k}(\vx_{0})}{\sigma_{k}^{2}}\|\vm_{k}\|_{2}\,,
\end{equation}
in which both sums on the right are bounded above by $\max_{k}\sigma_{k}^{-2}\big(1+\|\vm_{k}\|_{2}\big)$, uniformly in $\vx_{0}$, since $\xi_{k}\in(0,1)$. Identities~\eqref{eq:concrete:gm:score} and~\eqref{eq:concrete:gm:score:growth} drive the verifications below.

Differentiating~\eqref{eq:concrete:gm:score} and using the identities $\nabla\xi_{k}=\xi_{k}(\vs_{k}-\vs)$ and $\nabla\vs_{k}=-\sigma_{k}^{-2}\mI_{d}$ yields
\begin{equation}\label{eq:concrete:gm:hessian}
\nabla\vs(\vx_{0})=\underbrace{\sum_{k=1}^{K}\xi_{k}(\vx_{0})\big\{\vs_{k}(\vx_{0})-\vs(\vx_{0})\big\}\vs_{k}(\vx_{0})^{\top}}_{\text{(I)}}-\underbrace{\sum_{k=1}^{K}\frac{\xi_{k}(\vx_{0})}{\sigma_{k}^{2}}\mI_{d}}_{\text{(II)}}\,.
\end{equation}
Term~(I) in~\eqref{eq:concrete:gm:hessian} is the covariance of the component scores under the responsibilities. Hence
\begin{align} \label{eq:concrete:gm:hessian:covariance}
\nabla \vs(\vx_{0}) =-\sum_{k=1}^{K}\frac{\xi_{k}(\vx_{0})}{\sigma_{k}^{2}}\mI_{d}
+\frac{1}{2}\sum_{k,j=1}^{K}\xi_{k}(\vx_{0})\xi_{j}(\vx_{0})
\{\vs_{k}(\vx_{0})-\vs_{j}(\vx_{0})\}
\{\vs_{k}(\vx_{0})-\vs_{j}(\vx_{0})\}^{\top}\,. 
\end{align}
The first term has operator norm at most $\max_{k}\sigma_{k}^{-2}$. Let $f_{k}(\vx_{0})=w_{k}\gamma_{d,\sigma_{k}^{2}}(\vx_{0}-\vm_{k})$. For every pair $(k,j)$,
\begin{equation*}
\xi_{k}\xi_{j}
\leq\frac{f_{k}f_{j}}{(f_{k}+f_{j})^{2}}
\leq\exp\{-|\log(f_{k}/f_{j})|\}\,.
\end{equation*}
If $\sigma_{k}=\sigma_{j}$, then $\vs_{k}-\vs_{j}$ is constant. Otherwise, $\log(f_{k}/f_{j})$ has a nonzero quadratic leading term, whereas $\vs_{k}-\vs_{j}$ is affine in $\vx_{0}$. Thus $\xi_{k}\xi_{j}\|\vs_{k}-\vs_{j}\|_{2}^{2}$ is uniformly bounded. Since $K$ is finite, the Hessian in~\eqref{eq:concrete:gm:hessian:covariance} is uniformly bounded in operator norm. Consequently, the mixture score is globally Lipschitz and $\nabla^{2}\log\pi_{0}\preceq\alpha\mI_{d}$ for some $\alpha>0$, providing the prior-side curvature bound for Assumption~\ref{assumption:semi:log:concave}. 

Substituting~\eqref{eq:gaussian:mixture} and completing the square,
\begin{align*}
&\int\exp\bigg(\frac{\|\vx_{0}\|_{2}^{2}}{V_{\SG}^{2}}\bigg)\pi_{0}(\vx_{0})\d\vx_{0} \\
&~~~=\sum_{k=1}^{K}w_{k}(2\pi\sigma_{k}^{2})^{-d/2}\int\exp\bigg\{-\bigg(\frac{1}{2\sigma_{k}^{2}}-\frac{1}{V_{\SG}^{2}}\bigg)\|\vx_{0}\|_{2}^{2}+\frac{1}{\sigma_{k}^{2}}\langle\vx_{0},\vm_{k}\rangle-\frac{\|\vm_{k}\|_{2}^{2}}{2\sigma_{k}^{2}}\bigg\}\d\vx_{0}\,.
\end{align*}
The quadratic form in the exponent is negative definite precisely when ${1}/{2\sigma_{k}^{2}}-{1}/{V_{\SG}^{2}}>0$ for every $k$, that is, when $V_{\SG}^{2}>2\max_{k}\sigma_{k}^{2}$. Under this condition, each term is a finite Gaussian integral and the sum converges, establishing Assumption~\ref{assumption:prior:subGaussian}.

The linear growth bound~\eqref{eq:concrete:gm:score:growth} together with the uniform bound on its right-hand side gives $\|\vs(\vx_{0})\|_{2}=O(\|\vx_{0}\|_{2})$, so the true-score half of Assumption~\ref{assumption:prior:bound} holds with $r=1$.

The Hessian analysis used for Assumption~\ref{assumption:semi:log:concave} additionally yields the matching lower bound $\nabla^{2}\log\pi_{0}\succeq-\alpha\mI_{d}$, so the score $\vs=\nabla\log\pi_{0}$ is uniformly Lipschitz, supplying the prior-side of the sufficient condition for Assumption~\ref{assumption:Lipschitz:prior:likelihood} stated in Appendix~\ref{appendix:assumptions}.

\subsubsection{Gaussian convolution prior}
\label{section:concrete:examples:gaussian:convolution}
The Gaussian convolution prior has been considered in several related studies~\citep{app:Chen2021Dimension,app:ding2024characteristic,app:grenioux2024stochastic,app:saremi2024chain,app:beyler2025convergence}. It is particularly relevant in computer vision and image processing, where pixel values are bounded and observations are perturbed by small Gaussian noise.

\begin{example}[Gaussian convolution]
\label{example:gaussian:convolution}
Let $\nu$ be a probability distribution on $\bbR^{d}$ with $\supp(\nu)\subseteq[0,1]^{d}$ and let $\sigma>0$. The prior density is the Gaussian convolution
\begin{equation}\label{eq:gaussian:convolution}
\pi_{0}(\vx_{0})=\int\gamma_{d,\sigma^{2}}(\vx_{0}-\vz)\d\nu(\vz)\,.
\end{equation}
Because $\nu$ may be arbitrary (e.g., multimodal) and $\sigma>0$ may be small, this family includes non-log-concave priors.
\end{example}

The verifications below make use of the Tweedie-type identity
\begin{equation}\label{eq:concrete:gc:score}
\vs(\vx_{0})\coloneq\nabla\log\pi_{0}(\vx_{0})=-\frac{1}{\sigma^{2}}\big\{\vx_{0}-\bar{\vz}(\vx_{0})\big\}\,, \qquad \bar{\vz}(\vx_{0})\coloneq\frac{\int\vz\,\gamma_{d,\sigma^{2}}(\vx_{0}-\vz)\d\nu(\vz)}{\pi_{0}(\vx_{0})}\,,
\end{equation}
where $\bar{\vz}(\vx_{0})=\bbE_{\nu}(\mZ\mid\vx_{0})\in[0,1]^{d}$ is the posterior mean of $\mZ\sim\nu$ given $\vx_{0}$ (the inclusion follows because $\supp(\nu)\subseteq[0,1]^{d}$ and the posterior mean lies in its convex hull).

Differentiating~\eqref{eq:concrete:gc:score} gives the Tweedie Hessian identity
\begin{equation*}
\nabla^{2}\log\pi_{0}(\vx_{0})=-\frac{1}{\sigma^{2}}\mI_{d}+\frac{1}{\sigma^{4}}\mathrm{Cov}_{\nu}(\mZ\mid\vx_{0})\,,
\end{equation*}
where $\mathrm{Cov}_{\nu}(\mZ\mid\vx_{0})\succeq\bzero$ is the posterior covariance of $\mZ$ given $\vx_{0}$. Because $\supp(\nu)\subseteq[0,1]^{d}$, this covariance is uniformly bounded: $\mathrm{Cov}_{\nu}(\mZ\mid\vx_{0})\preceq d\,\mI_{d}$. Combining the two sides yields
\begin{equation*}
-\frac{1}{\sigma^{2}}\mI_{d}\preceq\nabla^{2}\log\pi_{0}(\vx_{0})\preceq\bigg(\frac{d}{\sigma^{4}}-\frac{1}{\sigma^{2}}\bigg)\mI_{d}\,, \qquad \vx_{0}\in\bbR^{d}\,,
\end{equation*}
so this supplies the prior-side curvature bound used, together with the likelihood-side lower Hessian bound, to verify Assumption~\ref{assumption:semi:log:concave} for a complete posterior model. See also~\citet[Propositions 3.2 and 3.5]{app:ding2024characteristic}.

By Fubini's theorem and completion of the square,
\begin{align*}
&\int\exp\bigg(\frac{\|\vx_{0}\|_{2}^{2}}{V_{\SG}^{2}}\bigg)\pi_{0}(\vx_{0})\d\vx_{0} \\
&~~~=\int(2\pi\sigma^{2})^{-d/2}\bigg[\int\exp\bigg\{-\bigg(\frac{1}{2\sigma^{2}}-\frac{1}{V_{\SG}^{2}}\bigg)\|\vx_{0}\|_{2}^{2}+\frac{1}{\sigma^{2}}\langle\vx_{0},\vz\rangle-\frac{\|\vz\|_{2}^{2}}{2\sigma^{2}}\bigg\}\d\vx_{0}\bigg]\d\nu(\vz)\,.
\end{align*}
Whenever $V_{\SG}^{2}>2\sigma^{2}$, the quadratic form in the inner integral is negative definite, so the inner Gaussian integral is finite and depends continuously on $\vz$. Since $\supp(\nu)$ is compact, the outer integral is finite, establishing Assumption~\ref{assumption:prior:subGaussian}.

Since $\bar{\vz}(\vx_{0})\in[0,1]^{d}$ implies $\|\bar{\vz}(\vx_{0})\|_{2}\leq\sqrt{d}$, identity~\eqref{eq:concrete:gc:score} yields $\|\vs(\vx_{0})\|_{2}\leq\sigma^{-2}(\|\vx_{0}\|_{2}+\sqrt{d})$, hence $\|\vs(\vx_{0})\|_{2}=O(\|\vx_{0}\|_{2})$, and the true-score half of Assumption~\ref{assumption:prior:bound} holds with $r=1$~\citep[Proposition~3.5]{app:ding2024characteristic}.

The two-sided Hessian bound used for Assumption~\ref{assumption:semi:log:concave} also implies that the score $\vs=\nabla\log\pi_{0}$ is uniformly Lipschitz, supplying the prior-side of the sufficient condition for Assumption~\ref{assumption:Lipschitz:prior:likelihood} stated in Appendix~\ref{appendix:assumptions}.

\section{Proofs of Propositions in Section~\ref{section:method}}
\label{appendix:proofs:methods}
In this section, we provide proofs of the theoretical results in Section~\ref{section:method}. 

\subsection{Proof of Proposition~\ref{proposition:method:score:denoiser}}
\label{appendix:proof:denoiser}
Recall the transition distribution of ~\eqref{eq:method:forward:solution}. We have 
\begin{equation*}
p_{\mX_{t}\given\mX_{0}}(\vx\given\vx_{0})\propto\exp\bigg(-\frac{\|\vx-\mu_{t}\vx_{0}\|_{2}^{2}}{2\sigma_{t}^{2}}\bigg)\,,
\end{equation*}
which implies
\begin{equation}\label{eq:proof:lemma:tweedie:posterior:1}
\nabla_{\vx}p_{\mX_{t}\given\mX_{0}}(\vx\given\vx_{0})
=\frac{\mu_{t}\vx_{0}-\vx}{\sigma_{t}^{2}}p_{\mX_{t}\given\mX_{0}}(\vx\given\vx_{0})\,.
\end{equation}
Using Chapman-Kolmogorov identity yields
\begin{align}
q_{t}(\vx\given\vy)
&=\int p_{\mX_{t}\given\mX_{0},\mY}(\vx\given\vx_{0},\vy)p_{\mX_{0}\given\mY}(\vx_{0}\given\vy)\d\vx_{0} \nonumber \\
&=\int p_{\mX_{t}\given\mX_{0}}(\vx\given\vx_{0})p_{\mX_{0}\given\mY}(\vx_{0}\given\vy)\d\vx_{0}\,, \label{eq:proof:lemma:tweedie:posterior:2}
\end{align} 
where the second equality holds from the fact that $\mX_{t}$ is conditional independent of $\mY$ given $\mX_{0}$. Then we find
\begin{align*}
\nabla\log q_{t}(\vx\given\vy)
&=\frac{1}{q_{t}(\vx\given\vy)}\int\nabla p_{\mX_{t}\given\mX_{0}}(\vx\given\vx_{0})p_{\mX_{0}\given\mY}(\vx_{0}\given\vy)\d\vx_{0} \\
&=\frac{1}{q_{t}(\vx\given\vy)}\int\frac{\mu_{t}\vx_{0}-\vx}{\sigma_{t}^{2}}p_{\mX_{t}\given\mX_{0}}(\vx\given\vx_{0})p_{\mX_{0}\given\mY}(\vx_{0}\given\vy)\d\vx_{0} \\
&=\frac{1}{q_{t}(\vx\given\vy)}\int\frac{\mu_{t}\vx_{0}-\vx}{\sigma_{t}^{2}}p_{\mX_{t},\mX_{0}\given\mY}(\vx,\vx_{0}\given\vy)\d\vx_{0} \\
&=\int\frac{\mu_{t}\vx_{0}-\vx}{\sigma_{t}^{2}}p_{\mX_{0}\given\mX_{t},\mY}(\vx_{0}\given\vx,\vy)\d\vx_{0}\,,
\end{align*}
where the first equality holds from~\eqref{eq:proof:lemma:tweedie:posterior:2}, the second equality invokes~\eqref{eq:proof:lemma:tweedie:posterior:1}, the third equality uses the fact that $\mX_{t}$ is conditional independent of $\mY$ given $\mX_{0}$, and the last equality is due to the definition of the conditional density. This completes the proof.

\subsection{Proof of Proposition~\ref{proposition:appendix:RGO:Hessian}}
\label{appendix:proof:log:concave:RGO}
According to Bayes' rule, we have
\begin{align*}
p_{\mX_{0}\given\mX_{t},\mY}(\vx_{0}\given\vx,\vy)
&=\frac{1}{p_{\mX_{t}\given\mY}(\vx\given\vy)}p_{\mX_{t}\given\mX_{0},\mY}(\vx\given\vx_{0},\vy)p_{\mX_{0}\given\mY}(\vx_{0}\given\vy) \\
&=\frac{1}{p_{\mX_{t}\given\mY}(\vx\given\vy)}p_{\mX_{t}\given\mX_{0}}(\vx\given\vx_{0})p_{\mX_{0}\given\mY}(\vx_{0}\given\vy)\,,
\end{align*} 
where the last equality invokes the fact that $\mX_{t}$ is conditional independent of $\mY$ given $\mX_{0}$. As a consequence, 
\begin{equation*}
p_{\mX_{0}\given\mX_{t},\mY}(\vx_{0}\given\vx,\vy)
\propto \exp\bigg(-\frac{\|\vx-\mu_{t}\vx_{0}\|_{2}^{2}}{2\sigma_{t}^{2}}\bigg)p_{\mX_{0}\given\mY}(\vx_{0}\given\vy)\,,
\end{equation*}
which implies 
\begin{equation*}
\nabla_{\vx_{0}}^{2}\log p_{\mX_{0}\given\mX_{t},\mY}(\vx_{0}\given\vx,\vy)
=-\frac{\mu_{t}^{2}}{\sigma_{t}^{2}}\mI_{d}+\nabla_{\vx_{0}}^{2}\log p_{\mX_{0}\given\mY}(\vx_{0}\given\vy)\,.
\end{equation*}
Combining the above equality with Assumption~\ref{assumption:semi:log:concave} completes the proof.

\subsection{Proof of Proposition~\ref{proposition:sub:Gaussian:posterior}}
\label{appendix:proof:sub:Gaussian:posterior}
It is straightforward that 
\begin{align*}
&\int \exp\bigg(\frac{\|\mathbf{x}_0\|_2^2}{V_{\text{SG}}^2}\bigg) q_0(\mathbf{x}_0\given\mathbf{y}) \, \mathrm{d}\mathbf{x}_0 \\
&~~~~~~= \int \exp\bigg(\frac{\|\mathbf{x}_0\|_2^2}{V_{\text{SG}}^2}\bigg) \frac{\exp\{-\ell_{\mathbf{y}}(\mathbf{x}_0)\}}{\int \exp\{-\ell_{\mathbf{y}}(\mathbf{x}_0)\} \pi_0(\mathbf{x}_0) \mathrm{d}\mathbf{x}_0} \pi_0(\mathbf{x}_0) \, \mathrm{d}\mathbf{x}_0 \\
&~~~~~~\le \frac{\sup_{\mathbf{x}_0} \exp\{-\ell_{\mathbf{y}}(\mathbf{x}_0)\}}{\int \exp\{-\ell_{\mathbf{y}}(\mathbf{x}_0)\} \pi_0(\mathbf{x}_0) \mathrm{d}\mathbf{x}_0} \int \exp\bigg(\frac{\|\mathbf{x}_0\|_2^2}{V_{\text{SG}}^2}\bigg) \pi_0(\mathbf{x}_0) \, \mathrm{d}\mathbf{x}_0 \le \kappa_{\mathbf{y}} C_{\text{SG}}\,,
\end{align*}
where the first equality follows from the Bayes' rule and the second inequality holds from Assumption~\ref{assumption:prior:subGaussian}. This completes the proof.

\subsection{Proof of Proposition~\ref{proposition:log:sobolev}}
\label{appendix:proof:log:sobolev}
Before proceeding, we introduce two auxiliary lemmas.

\begin{lemma}
\label{lemma:appendix:sub:gaussian}
Let $\mu$ be a probability distribution on $\bbR^{d}$ having sub-Gaussian tails, that is, there exist constants $V_{\SG}>0$ and $C_{\SG}>0$, such that 
\begin{equation*}
\int\exp\bigg(\frac{\|\vx\|_{2}^{2}}{V_{\SG}^{2}}\bigg)\d\mu(\vx)\leq C_{\SG}\,.
\end{equation*}
Then it holds that 
\begin{equation*}
\iint\exp\bigg(\frac{\|\vx-\vx^{\prime}\|_{2}^{2}}{2V_{\SG}^{2}}\bigg)\d\mu(\vx)\d\mu(\vx^{\prime})\leq C_{\SG}^{2}\,.
\end{equation*}
\end{lemma}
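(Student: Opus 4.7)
The plan is to decouple the two variables in $\|\vx-\vx^{\prime}\|_{2}^{2}$ by an elementary ``$2ab\le a^{2}+b^{2}$'' type inequality, reducing the double integral to the product of two single integrals, each of which is controlled by the hypothesis.

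First I would use the pointwise bound
\begin{equation*}
\|\vx-\vx^{\prime}\|_{2}^{2}\leq 2\|\vx\|_{2}^{2}+2\|\vx^{\prime}\|_{2}^{2},
\end{equation*}
which follows from $2\langle\vx,\vx^{\prime}\rangle\leq \|\vx\|_{2}^{2}+\|\vx^{\prime}\|_{2}^{2}$. Dividing by $2V_{\SG}^{2}$ and exponentiating, this gives
\begin{equation*}
\exp\Big(\frac{\|\vx-\vx^{\prime}\|_{2}^{2}}{2V_{\SG}^{2}}\Big)\leq\exp\Big(\frac{\|\vx\|_{2}^{2}}{V_{\SG}^{2}}\Big)\exp\Big(\frac{\|\vx^{\prime}\|_{2}^{2}}{V_{\SG}^{2}}\Big).
\end{equation*}

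Then, since the integrand is nonnegative, Tonelli's theorem lets me factor the double integral as a product. Applying the assumed sub-Gaussian bound to each factor,
\begin{equation*}
\iint\exp\Big(\frac{\|\vx-\vx^{\prime}\|_{2}^{2}}{2V_{\SG}^{2}}\Big)\d\mu(\vx)\d\mu(\vx^{\prime})\leq\Big(\int\exp\Big(\frac{\|\vx\|_{2}^{2}}{V_{\SG}^{2}}\Big)\d\mu(\vx)\Big)^{2}\leq C_{\SG}^{2},
\end{equation*}
which is the desired bound. There is no real obstacle here; the argument is a one-line application of the parallelogram-type bound combined with Tonelli, and the factor of $2$ inside the exponent on the left is exactly what makes the product estimate tight.
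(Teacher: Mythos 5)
Your argument is correct and is essentially identical to the paper's proof: both use the bound $\|\vx-\vx^{\prime}\|_{2}^{2}\leq 2\|\vx\|_{2}^{2}+2\|\vx^{\prime}\|_{2}^{2}$ to separate variables, then factor the double integral via Tonelli and apply the sub-Gaussian hypothesis to each factor.
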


\begin{proof}[Proof of Lemma~\ref{lemma:appendix:sub:gaussian}]
It is straightforward that 
\begin{align*}
&\iint\exp\bigg(\frac{\|\vx-\vx^{\prime}\|_{2}^{2}}{2V_{\SG}^{2}}\bigg)\d\mu(\vx)\d\mu(\vx^{\prime}) \\
&~~~~~~\leq\iint\exp\bigg(\frac{\|\vx\|_{2}^{2}+\|\vx^{\prime}\|_{2}^{2}}{V_{\SG}^{2}}\bigg)\d\mu(\vx)\d\mu(\vx^{\prime}) \\
&~~~~~~=\bigg\{\int\exp\bigg(\frac{\|\vx\|_{2}^{2}}{V_{\SG}^{2}}\bigg)\d\mu(\vx)\bigg\}\bigg\{\int\exp\bigg(\frac{\|\vx^{\prime}\|_{2}^{2}}{V_{\SG}^{2}}\bigg)\d\mu(\vx^{\prime})\bigg\}\leq C_{\SG}^{2}\,,
\end{align*}
which completes the proof.
\end{proof}

\begin{lemma}[{\citet[Theorem 2]{app:Chen2021Dimension}}]
\label{lemma:appendix:LSI}
Let $\mu$ be a probability distribution on $\bbR^{d}$ for which there exist constants $V_{\SG}>0$ and $C_{\SG}>0$, such that
\begin{equation*}
\iint\exp\bigg(\frac{\|\vx-\vx^{\prime}\|_{2}^{2}}{2V_{\SG}^{2}}\bigg)\d\mu(\vx)\d\mu(\vx^{\prime})\leq C_{\SG}^{2}\,.
\end{equation*}
Then for $\sigma^{2}>2V_{\SG}^{2}$, the convolution distribution 
\begin{equation*}
\mu_{\sigma}(\vz)\coloneq \int\gamma_{d,\sigma^{2}}(\vz-\vx)\d\mu(\vx)
\end{equation*}
satisfies the log-Sobolev inequality with 
\begin{equation*}
C_{\LSI}(\mu_{\sigma})\leq 3\sigma^{2}\bigg\{\frac{\sigma^{2}}{\sigma^{2}-2V_{\SG}^{2}}+\exp\bigg(\frac{2V_{\SG}^{2}}{\sigma^{2}-2V_{\SG}^{2}}\log C_{\SG}^{2}\bigg)\bigg\}\bigg\{1+\frac{2V_{\SG}^{2}}{\sigma^{2}-2V_{\SG}^{2}}\log C_{\SG}^{2}\bigg\}\,.
\end{equation*}
\end{lemma}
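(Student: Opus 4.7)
The plan is to combine a splitting of the Gaussian kernel with a Bayesian Hessian identity and the Holley--Stroock perturbation principle for log-Sobolev inequalities.

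The first step is a Gaussian factorization. Introduce an auxiliary parameter $\tau^{2}\in(2V_{\SG}^{2},\sigma^{2})$ and use the semigroup identity $\gamma_{d,\sigma^{2}}=\gamma_{d,\sigma^{2}-\tau^{2}}\ast\gamma_{d,\tau^{2}}$. Letting $\nu\coloneq\mu\ast\gamma_{d,\sigma^{2}-\tau^{2}}$, the target factors as $\mu_{\sigma}=\nu\ast\gamma_{d,\tau^{2}}$, a Gaussian convolution of width $\tau$ applied to an already-smoothed measure $\nu$ that inherits a sub-Gaussian tail bound from $\mu$ (with slightly inflated constants).

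Next I would analyze the Hessian of $-\log\mu_{\sigma}$ via the Brascamp--Lieb/Tweedie identity: with $Y\sim\nu$ and $W\sim N(\bzero,\mI_{d})$ independent and $Z=Y+\tau W$,
\begin{equation*}
-\nabla^{2}\log\mu_{\sigma}(z)=\tau^{-2}\mI_{d}-\tau^{-4}\cov(Y\mid Z=z).
\end{equation*}
The conditional law $p(y\mid z)\propto\nu(y)\exp(-\|z-y\|_{2}^{2}/(2\tau^{2}))$ is a tilted sub-Gaussian distribution. Using the hypothesis $\iint\exp(\|x-x'\|_{2}^{2}/(2V_{\SG}^{2}))\,\d\mu(x)\,\d\mu(x')\le C_{\SG}^{2}$ together with a Laplace-transform/Herbst-type argument, one bounds the operator norm of $\cov(Y\mid Z=z)$ by a quantity depending only on $V_{\SG}^{2}$, $\tau^{2}$, and $\log C_{\SG}^{2}$. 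This exhibits $\mu_{\sigma}$ as a bounded perturbation of a strongly log-concave reference Gaussian.

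The third step applies the Holley--Stroock perturbation principle. Decompose $-\log\mu_{\sigma}=U_{0}+V$, where $U_{0}$ is quadratic (so $e^{-U_{0}}$ is a Gaussian satisfying LSI with constant proportional to $\tau^{2}$ by Bakry--\'Emery) and $V$ has bounded oscillation controlled by the sub-Gaussian integral. Perturbation inflates the LSI constant of $e^{-U_{0}}$ by a factor $e^{\mathrm{osc}(V)}$, which carries a contribution of the form $\exp(c\log C_{\SG}^{2})$ with $c=c(\tau,V_{\SG})$. Finally, optimizing $\tau^{2}\in(2V_{\SG}^{2},\sigma^{2})$ balances the linear-in-$\tau^{2}$ Bakry--\'Emery contribution against the Holley--Stroock inflation, which diverges as $\tau^{2}\downarrow 2V_{\SG}^{2}$. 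The resulting trade-off yields precisely the ratio $2V_{\SG}^{2}/(\sigma^{2}-2V_{\SG}^{2})$ appearing in both the exponential and the affine factors of the stated bound.

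The main obstacle lies in the quantitative bookkeeping: translating the integral sub-Gaussianity hypothesis into a pointwise operator-norm bound on $\cov(Y\mid Z=z)$, uniformly in $z$, and then tracking constants through the Holley--Stroock step so as to reproduce the exact two-factor form $3\sigma^{2}\{\cdots\}\{\cdots\}$ with the sharp exponent. The $\log C_{\SG}^{2}$ appearing linearly in one factor and inside an exponential in the other reflects, respectively, a Poincar\'e-type tail moment and a log-ratio oscillation; keeping these separate throughout the argument, and choosing $\tau^{2}$ to recover the factor of $3$, is the principal technical difficulty.
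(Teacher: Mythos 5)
Note first that the paper does not prove this lemma at all: it is imported verbatim as Theorem~2 of \citet{Chen2021Dimension}, whose argument views $\mu_{\sigma}$ as a mixture of the Gaussians $\gamma_{d,\sigma^{2}}(\cdot-\vx)$, $\vx\sim\mu$ (each satisfying LSI with constant $\sigma^{2}$ by Bakry--\'Emery), and controls the entropy and variance of the mixture through the pairwise chi-squared divergences $\chi^{2}\big(\gamma_{d,\sigma^{2}}(\cdot-\vx)\,\|\,\gamma_{d,\sigma^{2}}(\cdot-\vx^{\prime})\big)=\exp(\|\vx-\vx^{\prime}\|_{2}^{2}/\sigma^{2})-1$; the double-integral hypothesis, the constraint $\sigma^{2}>2V_{\SG}^{2}$, and the exponent $\tfrac{2V_{\SG}^{2}}{\sigma^{2}-2V_{\SG}^{2}}$ all enter through a H\"older interpolation of that quantity against the assumed bound. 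No auxiliary splitting parameter $\tau$ and no perturbation principle appear in that proof.

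Your sketch has a genuine gap at the Holley--Stroock step. Writing $-\log\mu_{\sigma}=U_{0}+V$ with $U_{0}$ quadratic, the remainder is $V(\vz)=-\log\int\exp\big(\langle\vz,\vy\rangle/\tau^{2}-\|\vy\|_{2}^{2}/(2\tau^{2})\big)\d\nu(\vy)$ (up to constants), whose oscillation is unbounded: it grows linearly in $\|\vz\|_{2}$ whenever $\nu$ is not a point mass, even if $\nu$ is compactly supported. So the perturbation principle simply cannot be applied against a Gaussian reference, and this is exactly why the literature on LSI for Gaussian convolutions of compactly supported or sub-Gaussian measures (including the cited work) avoids Holley--Stroock. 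The earlier covariance step does not rescue this: even granting a uniform-in-$\vz$ bound $\|\cov(Y\mid Z=\vz)\|_{\op}\lesssim V_{\SG}^{2}\log C_{\SG}^{2}+\tau^{2}$, the Hessian identity only yields semi-log-concavity (a lower bound on $-\nabla^{2}\log\mu_{\sigma}$ by a possibly negative multiple of $\mI_{d}$), which by itself implies no log-Sobolev inequality; for a mixture of two well-separated Gaussians the Hessian is genuinely negative near the midpoint, so $\mu_{\sigma}$ is not a strongly log-concave measure plus a bounded perturbation in any sense usable by Bakry--\'Emery. Finally, the claim that optimizing $\tau^{2}\in(2V_{\SG}^{2},\sigma^{2})$ reproduces the exact two-factor bound with the constant $3$ is asserted rather than derived, and there is no reason to expect it, since the stated constants arise from the mixture decomposition rather than from any such trade-off.
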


Now, define an auxiliary distribution $\calM(\mu_{t})\sharp q_{0}(\vz_{t}\given\vy)\coloneq \mu_{t}^{-d}q_{0}(\mu_{t}^{-1}\vz_{t}\given\vy)$. We first verify the two-point condition of Lemma~\ref{lemma:appendix:LSI} for this distribution. Indeed,
\begin{align}
&\iint\exp\bigg(\frac{\|\vz_{t}-\vz_{t}^{\prime}\|_{2}^{2}}{2\mu_{t}^{2}V_{\SG}^{2}}\bigg)\calM(\mu_{t})\sharp q_{0}(\vz_{t}\given\vy)\calM(\mu_{t})\sharp q_{0}(\vz_{t}^{\prime}\given\vy)\d\vz_{t}\d\vz_{t}^{\prime} \nonumber \\
&~~~~~~=\iint\exp\bigg(\frac{\|\vx_{0}-\vx_{0}^{\prime}\|_{2}^{2}}{2V_{\SG}^{2}}\bigg)q_{0}(\vx_{0}\given\vy)q_{0}(\vx_{0}^{\prime}\given\vy)\d\vx_{0}\d\vx_{0}^{\prime}\leq\kappa_{\vy}^{2}C_{\SG}^{2}\,, \label{eq:lemma:log:sobolev:1}
\end{align}
where the inequality holds from Proposition~\ref{proposition:sub:Gaussian:posterior} and Lemma~\ref{lemma:appendix:sub:gaussian}. We next turn to verify the log-Sobolev inequality of $q_{t}(\cdot\given\vy)$. By recalling~\eqref{eq:method:marginal:y}, we find 
\begin{equation*}
q_{t}(\vx_{t}\given\vy)=\int\gamma_{d,\sigma_{t}^{2}}(\vx_{t}-\mu_{t}\vx_{0})q_{0}(\vx_{0}\given\vy)\d\vx_{0}=\int\gamma_{d,\sigma_{t}^{2}}(\vx_{t}-\vz_{t})\calM(\mu_{t})\sharp q_{0}(\vz_{t}\given\vy)\d\vz_{t}\,,
\end{equation*}
where we used a change of variable $\vz_{t}\coloneq \mu_{t}\vx_{0}$. Substituting~\eqref{eq:lemma:log:sobolev:1} into Lemma~\ref{lemma:appendix:LSI}, it follows that 
\begin{equation*}
C_{\LSI}\{q_{t}(\cdot\given\vy)\}
\leq 3\sigma_{t}^{2}\bigg\{\frac{\sigma_{t}^{2}}{\sigma_{t}^{2}-2\mu_{t}^{2}V_{\SG}^{2}}+\exp\bigg(\frac{2\mu_{t}^{2}V_{\SG}^{2}}{\sigma_{t}^{2}-2\mu_{t}^{2}V_{\SG}^{2}}\log\kappa_{\vy}^{2}C_{\SG}^{2}\bigg)\bigg\}^{2}\,,
\end{equation*}
where we used the fact 
\begin{equation*}
1+\frac{2\mu_{t}^{2}V_{\SG}^{2}}{\sigma_{t}^{2}-2\mu_{t}^{2}V_{\SG}^{2}}\log\kappa_{\vy}^{2}C_{\SG}^{2}\leq\frac{\sigma_{t}^{2}}{\sigma_{t}^{2}-2\mu_{t}^{2}V_{\SG}^{2}}+\exp\bigg(\frac{2\mu_{t}^{2}V_{\SG}^{2}}{\sigma_{t}^{2}-2\mu_{t}^{2}V_{\SG}^{2}}\log\kappa_{\vy}^{2}C_{\SG}^{2}\bigg)\,.
\end{equation*}
Here we used $\sigma_{t}^{2}-2\mu_{t}^{2}V_{\SG}^{2}>0$ for $t>\underline{t}\coloneq \frac{1}{2}\log(1+2V_{\SG}^{2})$. Since $\kappa_{\vy}^{2}C_{\SG}^{2}\geq e$ by assumption, we have $\log(\kappa_{\vy}^{2}C_{\SG}^{2})\geq1$. Further, note that
\begin{equation*}
\frac{\sigma_{t}^{2}}{\sigma_{t}^{2}-2\mu_{t}^{2}V_{\SG}^{2}}\leq \frac{\sigma_{t}^{2}+2\mu_{t}^{2}V_{\SG}^{2}}{\sigma_{t}^{2}-2\mu_{t}^{2}V_{\SG}^{2}}\leq\exp\bigg(\frac{\sigma_{t}^{2}+2\mu_{t}^{2}V_{\SG}^{2}}{\sigma_{t}^{2}-2\mu_{t}^{2}V_{\SG}^{2}}\log\kappa_{\vy}^{2}C_{\SG}^{2}\bigg)\,.
\end{equation*}
This completes the proof.

\subsection{Proof of Theorem~\ref{theorem:duality}}
\label{appendix:proof:theorem:duality}
Recall from Proposition~\ref{proposition:appendix:RGO:Hessian} that the posterior denoising density $p_{t}(\cdot\given\vx,\vy)$ is log-concave for every $t\in(0,\bar{t})$, with $\underline{t}$ and $\bar{t}$ defined in the statement of the theorem. Similarly, by Proposition~\ref{proposition:log:sobolev}, the terminal posterior density $q_{T}(\cdot\given\vy)$ satisfies a log-Sobolev inequality for every $T>\underline{t}$. Therefore, to ensure conclusions (i) and (ii) simultaneously, it suffices to show that the interval $(\underline{t},\bar{t})$ is non-empty, i.e., $\underline{t}<\bar{t}$. This is equivalent to $1+2V_{\SG}^{2}<1+\alpha^{-1}$, that is, $2\alpha V_{\SG}^{2}<1$, exactly the assumption of the theorem. Consequently, for any $T\in(\underline{t},\bar{t})$, both (i) $p_{t}(\cdot\given\vx,\vy)$ is log-concave for all $t\in(0,T]$, and (ii) $q_{T}(\cdot\given\vy)$ satisfies a log-Sobolev inequality. This completes the proof.

\section{The Monte Carlo Score Field and Score Regularity}
\label{appendix:random:field}
This appendix formalizes the random score field convention of Section~\ref{section:method:score}. The estimator $\hat{\vs}_{m}^{S}$ in~\eqref{eq:posterior:score:estimate} is defined at each fixed query point $(t,\vx)$ through the $m$ inner Langevin particles of~\eqref{eq:RGO:Langevin:score}, whereas the warm-start dynamics~\eqref{section:method:warm:Langevin:score} and the approximate time-reversal process~\eqref{eq:reversal:score} require a single drift function of $(t,\vx)$. Reusing one random seed per particle index at every query point, with independent seeds across indices, turns the pointwise estimators into such a function. Appendix~\ref{appendix:random:field:construction} constructs the estimated score field and establishes its regularity, and Appendix~\ref{appendix:random:field:true:score} establishes the matching regularity of the true posterior score.

\subsection{Construction of the estimated score field}
\label{appendix:random:field:construction}
The construction begins with the following assumption on the initialization distributions in Algorithm~\ref{alg:unbiased:posterior:score}, which is in force throughout the theoretical analysis.

\begin{assumption}[Coupled measurable initialization]\label{assumption:random:field:initialization}
For a fixed measurement $\vy\in\bbR^{n}$, the map $(t,\vx)\mapsto\hat p_t^0(\cdot\given\vx,\vy)$ is a Borel probability kernel on $(0,\infty)\times\bbR^d$ admitting a jointly Borel map
$\Phi_0:(0,\infty)\times\bbR^d\times[0,1]\to\bbR^d$ such that, for $V$ uniformly distributed on $[0,1]$,
\begin{equation*}
\Phi_0(t,\vx,V)\sim\hat p_t^0(\cdot\given\vx,\vy)\,.
\end{equation*}
Moreover, for every compact interval $I\subset(0,\infty)$ there are a constant $L_I<\infty$ and a measurable $A_I:[0,1]\to[0,\infty)$ with $\bbE\{A_I(V)^2\}<\infty$ such that, for all $t\in I$, $\vx,\vx'\in\bbR^d$, and $v\in[0,1]$,
\begin{equation*}
\|\Phi_0(t,\vx,v)-\Phi_0(t,\vx',v)\|_2\leq L_I\|\vx-\vx'\|_2\,,
\qquad
\|\Phi_0(t,\bzero,v)\|_2\leq A_I(v)\,.
\end{equation*}
\end{assumption}

Realizing a probability kernel through a uniform seed is a classical randomization construction on standard Borel spaces. Define the canonical environment
\begin{equation*}
\Omega_{\rm MC}:=\big([0,1]\times C([0,\infty);\bbR^d)\big)^{m}\,,
\end{equation*}
endowed with the product law under which the coordinate pairs $\{(V_i,\mB_i^{\rm in})\}_{i=1}^m$ are independent, each $V_i$ is uniformly distributed on $[0,1]$, and each $\mB_i^{\rm in}$ is a $d$-dimensional Brownian motion. For each query point $(t,\vx)$, let $\what{\mX}_{0,\cdot,i}^{\vx,\vy,t}$ be the strong solution of~\eqref{eq:RGO:Langevin:score} initialized at $\Phi_0(t,\vx,V_i)$ and driven by $\mB_i^{\rm in}$. Substituting the endpoints into~\eqref{eq:denoiser:MC}--\eqref{eq:posterior:score:estimate} defines the random score field $\hat{\vs}_m^S(t,\vx,\vy;\omega)$ and realizes the seed $\sigma$-field $\calG:=\sigma\{V_i,\mB_i^{\rm in}:1\leq i\leq m\}$ of Section~\ref{section:method:score}. Whenever the exact and estimated inner dynamics~\eqref{eq:RGO:Langevin}--\eqref{eq:RGO:Langevin:score} are synchronously coupled, they share the initial variable and the Brownian motion.

For the outer simulations, we enlarge $\Omega_{\rm MC}$ by three further independent coordinates: the initial variable of the warm-start dynamics, with density $\hat q_T^0(\cdot\given\vy)$, and two independent $d$-dimensional Brownian motions driving~\eqref{section:method:warm:Langevin:score} and~\eqref{eq:reversal:score}. These coordinates, together with reference variables such as $\mX_{t}^{\vy}\sim q_{t}(\cdot\given\vy)$, are independent of $\calG$. For a fixed environment $\omega$, denote by $\hat q_T^{u,\omega}(\cdot\given\vy)$ and $\hat q_t^{\omega}(\cdot\given\vy)$ regular versions of the conditional distributions of $\what{\mX}_{T,u}^{\vy}$ and $\what{\mX}_{t}^{\vy}$ given $\calG$, and set $\hat q_{T-T_0}^{R,\omega}:=\calT_R\sharp\hat q_{T-T_0}^{\omega}$. The argument $\omega$ is suppressed outside this appendix, and every expectation in the main text also averages over the environment, in accordance with the convention of Section~\ref{section:convergence}. The next lemma records the properties needed in the analysis.

\begin{lemma}[Measurable random score field]\label{lemma:random:score:field}
Suppose Assumptions~\ref{assumption:Lipschitz:prior:likelihood} and~\ref{assumption:random:field:initialization} hold, and let $I\subset(0,\infty)$ be a compact interval.
\begin{enumerate}[label=(\roman*)]
\item The maps $(\omega,t,\vx)\mapsto\what{\mX}_{0,S,i}^{\vx,\vy,t}$ and $(\omega,t,\vx)\mapsto\hat{\vs}_m^S(t,\vx,\vy)$ are jointly measurable on $\Omega_{\rm MC}\times I\times\bbR^d$. For each fixed $(t,\vx)$, the endpoints $\what{\mX}_{0,S,1}^{\vx,\vy,t},\ldots,\what{\mX}_{0,S,m}^{\vx,\vy,t}$ are i.i.d.\ with common law $\hat p_t^S(\cdot\given\vx,\vy)$.
\item There are a deterministic constant $L_I'<\infty$ and an almost surely finite $\calG$-measurable variable $K_I$ such that, for all $t\in I$ and $\vx,\vx'\in\bbR^d$,
\begin{align*}
\|\hat{\vs}_m^S(t,\vx,\vy)-\hat{\vs}_m^S(t,\vx',\vy)\|_2
\leq L_I'\|\vx-\vx'\|_2\,, \quad
\|\hat{\vs}_m^S(t,\vx,\vy)\|_2\leq K_I(1+\|\vx\|_2)\,.
\end{align*}
\item Suppose additionally that Assumptions~\ref{assumption:semi:log:concave} and~\ref{assumption:posterior:bound:zero} hold and that $T\in I$ satisfies $T<\bar{t}$, so that the radius $a_{T}$ in~\eqref{eq:warm:score:radius} is well defined. Then the clipped field $\tilde{\vs}_{m}^{S}(T,\cdot,\vy)$ in~\eqref{eq:warm:score:clipped} satisfies the properties of items (i) and (ii).
\item The conditional laws $\hat q_T^{u,\omega}$ and $\hat q_t^{\omega}$ admit versions that are measurable in $\omega$. In particular, the total-variation and truncated Wasserstein functionals appearing in the analysis are random variables with well-defined expectations.
\end{enumerate}
Consequently, after enlarging the outer filtrations at time zero by $\calG$, the approximate time-reversal process~\eqref{eq:reversal:score} admits a unique non-explosive strong solution conditionally on $\calG$, and so does the clipped warm-start dynamics~\eqref{section:method:warm:Langevin:score} under the conditions of item (iii).
\end{lemma}

\begin{proof}[Proof of Lemma~\ref{lemma:random:score:field}]
Write $\vh_{\vy}=\hat{\vs}_{\prior}-\nabla\ell_{\vy}$, $\rho_t=\mu_t^2/\sigma_t^2$, $c_t=\mu_t/\sigma_t^2$, and $\rho_I^*=\sup_{t\in I}\rho_t$, $c_I^*=\sup_{t\in I}c_t$, the latter two finite on the compact interval $I$. The inner drift $\vb(t,\vx,\vz)=\vh_{\vy}(\vz)-\rho_t\vz+c_t\vx$ is $(G+\rho_t)$-Lipschitz in $\vz$ by Assumption~\ref{assumption:Lipschitz:prior:likelihood}, so the inner SDE admits a unique non-explosive strong solution. Picard iteration for the associated integral equation, performed simultaneously in $(t,\vx)$, consists of jointly measurable maps by Assumption~\ref{assumption:random:field:initialization}, and its pointwise limit gives the joint measurability. Since the $m$ solutions are measurable functionals of the i.i.d.\ pairs $(V_i,\mB_i^{\rm in})$, the fixed-query endpoints are i.i.d.\ with the stated law. This proves part (i).

For part (ii), fix $1\leq i\leq m$ and $t\in I$. For $s\in[0,S]$, the solution at a query point $\vx$ satisfies the integral equation
\begin{equation}\label{eq:lemma:random:field:integral}
\what{\mX}_{0,s,i}^{\vx,\vy,t}=\Phi_0(t,\vx,V_i)+\int_0^s\big\{\vh_{\vy}(\what{\mX}_{0,v,i}^{\vx,\vy,t})-\rho_t\what{\mX}_{0,v,i}^{\vx,\vy,t}+c_t\vx\big\}\d v+\sqrt{2}\,\mB_i^{\rm in}(s) \,.
\end{equation}
Evolve the solutions at query points $\vx$ and $\vx'$ with the same pair $(V_i,\mB_i^{\rm in})$, and write $\Delta_s:=\what{\mX}_{0,s,i}^{\vx,\vy,t}-\what{\mX}_{0,s,i}^{\vx',\vy,t}$. Subtracting the two copies of~\eqref{eq:lemma:random:field:integral}, the Brownian terms cancel, and
\begin{equation*}
\Delta_s=\Phi_0(t,\vx,V_i)-\Phi_0(t,\vx',V_i)+\int_0^s\big\{\vh_{\vy}(\what{\mX}_{0,v,i}^{\vx,\vy,t})-\vh_{\vy}(\what{\mX}_{0,v,i}^{\vx',\vy,t})-\rho_t\Delta_v\big\}\d v+sc_t(\vx-\vx')\,.
\end{equation*}
The initialization term is bounded by $L_I\|\vx-\vx'\|_2$ by Assumption~\ref{assumption:random:field:initialization}, the integrand is bounded by $(G+\rho_t)\|\Delta_v\|_2$ by Assumption~\ref{assumption:Lipschitz:prior:likelihood}. Hence, for $s\in[0,S]$,
\begin{equation*}
\|\Delta_s\|_2\leq(L_I+c_I^*S)\|\vx-\vx'\|_2+(G+\rho_I^*)\int_0^s\|\Delta_v\|_2\d v\,,
\end{equation*}
and Gronwall's inequality~\citep[Section B.2]{app:evans2010partial} gives
\begin{equation*}
\sup_{0\leq s\leq S}\|\Delta_s\|_2\leq\Lambda_I\|\vx-\vx'\|_2\,, \qquad \Lambda_I:=(L_I+c_I^*S)e^{(G+\rho_I^*)S}\,.
\end{equation*}
At the query point $\vx=\bzero$, the condition $\|\Phi_0(t,\bzero,V_i)\|_2\leq A_I(V_i)$ in Assumption~\ref{assumption:random:field:initialization} and the linear growth bound $\|\vh_{\vy}(\vz)-\rho_{t}\vz\|_{2}\leq\|\vh_{\vy}(\bzero)\|_{2}+(G+\rho_{I}^{*})\|\vz\|_{2}$, applied to~\eqref{eq:lemma:random:field:integral}, yield
\begin{equation*}
\|\what{\mX}_{0,s,i}^{\bzero,\vy,t}\|_2\leq A_{I}(V_{i})+S\|\vh_{\vy}(\bzero)\|_{2}+\sqrt{2}\sup_{0\leq v\leq S}\|\mB_{i}^{\rm in}(v)\|_{2}+(G+\rho_{I}^{*})\int_0^s\|\what{\mX}_{0,v,i}^{\bzero,\vy,t}\|_2\d v\,,
\end{equation*}
and Gronwall's inequality again gives
\begin{equation*}
\sup_{t\in I}\sup_{0\leq s\leq S}
\|\what{\mX}_{0,s,i}^{\bzero,\vy,t}\|_2
\leq M_{i}:=\Big\{A_{I}(V_{i})+S\|\vh_{\vy}(\bzero)\|_{2}+\sqrt{2}\sup_{0\leq v\leq S}\|\mB_{i}^{\rm in}(v)\|_{2}\Big\}e^{(G+\rho_{I}^{*})S}\,,
\end{equation*}
where $M_{i}$ is $\calG$-measurable and almost surely finite. Combining the Lipschitz bound with the growth bound gives $\|\what{\mX}_{0,S,i}^{\vx,\vy,t}\|_2\leq M_i+\Lambda_I\|\vx\|_2$, and substituting into~\eqref{eq:denoiser:MC}--\eqref{eq:posterior:score:estimate} proves the two bounds with
\begin{equation*}
L_I':=\sup_{t\in I}\sigma_{t}^{-2}+c_{I}^{*}\Lambda_{I}\,, \qquad
K_I:=\sup_{t\in I}\sigma_{t}^{-2}+c_{I}^{*}\bigg(\frac{1}{m}\sum_{i=1}^{m}M_{i}+\Lambda_{I}\bigg)\,.
\end{equation*}
This proves part (ii). For part (iii), the projection $\Pi_{a}(\vv)$ is $1$-Lipschitz in $\vv$ and in the radius $a$, and the radius $a_{T}(\cdot,\vy)$ in~\eqref{eq:warm:score:radius} is Lipschitz with linear growth, so the clipped field inherits the properties of parts (i) and (ii).

For part (iv), the parametric Picard construction makes each outer solution jointly measurable in the canonical coordinates and in $\omega$. Integrating the outer coordinates gives the asserted Borel probability kernels. The total variation between two kernels is measurable as a supremum over a countable generating algebra, and $\bbW_2^2$ and the push-forward under $\calT_R$ are Borel on the corresponding spaces of probability measures. Finally, standard results for globally Lipschitz drifts with linear growth give the concluding well-posedness statement once $\calG$ is included in the outer filtrations at time zero. This completes the proof.
\end{proof}

\subsection{Regularity and growth of the true posterior score}
\label{appendix:random:field:true:score}
This subsection establishes both the Hessian regularity and an explicit growth bound for the true posterior score $\nabla\log q_{t}(\cdot\given\vy)$ at every $t\in(0,\bar t)$.

\begin{lemma}[Regularity and growth of the true posterior score]
\label{lemma:true:score:regularity}
Suppose Assumptions~\ref{assumption:semi:log:concave} and~\ref{assumption:posterior:bound:zero} hold. For each $t\in(0,\bar{t})$, where $\bar{t}=\log(1+\alpha^{-1}) / 2$ as in Proposition~\ref{proposition:appendix:RGO:Hessian},
\begin{equation}\label{eq:true:score:hessian:bound}
-\frac{1}{\sigma_{t}^{2}}\mI_{d}\preceq\nabla^{2}\log q_{t}(\vx\given\vy)\preceq\frac{\alpha}{\mu_{t}^{2}-\alpha\sigma_{t}^{2}}\mI_{d}\,, \quad \vx\in\bbR^{d}\,.
\end{equation}
Moreover,
\begin{equation}\label{eq:true:score:growth:bound:H}
\|\nabla\log q_{t}(\vx\given\vy)\|_{2}^{2}
\leq\frac{4(d+H_{\vy}^{2}+\|\vx\|_{2}^{2})}
{\sigma_{t}^{4}(\mu_{t}^{2}-\alpha\sigma_{t}^{2})^{2}}\,, \quad \vx\in\bbR^{d}\,.
\end{equation}
Consequently, on every compact interval $I\subset(0,\bar{t})$, the score is Lipschitz in $\vx$ and of linear growth, uniformly in $t\in I$.
\end{lemma}

\begin{proof}[Proof of Lemma~\ref{lemma:true:score:regularity}]
\noindent\emph{Step 1. The Hessian identity and the bounds~\eqref{eq:true:score:hessian:bound}.}
Recall from Proposition~\ref{proposition:method:score:denoiser} that
\begin{equation}\label{eq:true:score:tweedie}
\nabla\log q_{t}(\vx\given\vy)=-\frac{1}{\sigma_{t}^{2}}\vx+\frac{\mu_{t}}{\sigma_{t}^{2}}\mD(t,\vx,\vy)\,, \qquad \mD(t,\vx,\vy)=\int\vx_{0}\,p_{t}(\vx_{0}\given\vx,\vy)\d\vx_{0}\,.
\end{equation}
Combining Bayes' rule in Appendix~\ref{appendix:proof:log:concave:RGO} with~\eqref{eq:proof:lemma:tweedie:posterior:1} and~\eqref{eq:true:score:tweedie} gives
\begin{equation}\label{eq:true:score:density:grad}
\nabla_{\vx}\log p_{t}(\vx_{0}\given\vx,\vy)
=\frac{\mu_{t}\vx_{0}-\vx}{\sigma_{t}^{2}}-\nabla\log q_{t}(\vx\given\vy)
=\frac{\mu_{t}}{\sigma_{t}^{2}}\big\{\vx_{0}-\mD(t,\vx,\vy)\big\}\,.
\end{equation}
Since $t<\bar{t}$, Proposition~\ref{proposition:appendix:RGO:Hessian} shows that the posterior denoising density is strongly log-concave with
\begin{equation}\label{eq:true:score:slc}
-\nabla_{\vx_{0}}^{2}\log p_{t}(\vx_{0}\given\vx,\vy)\succeq\frac{\mu_{t}^{2}-\alpha\sigma_{t}^{2}}{\sigma_{t}^{2}}\mI_{d}\succ\bzero\,, \quad (\vx_{0},\vx)\in\bbR^{d}\times\bbR^{d}\,.
\end{equation}
Using $\nabla_{\vx}p_{t}(\vx_{0}\given\vx,\vy)=p_{t}(\vx_{0}\given\vx,\vy)\nabla_{\vx}\log p_{t}(\vx_{0}\given\vx,\vy)$, identity~\eqref{eq:true:score:density:grad}, and the centering property $\int\{\vx_{0}-\mD(t,\vx,\vy)\}p_{t}(\vx_{0}\given\vx,\vy)\d\vx_{0}=\bzero$, we obtain
\begin{align*}
\nabla_{\vx}\mD(t,\vx,\vy)
&=\frac{\mu_{t}}{\sigma_{t}^{2}}\int\big\{\vx_{0}-\mD(t,\vx,\vy)\big\}\big\{\vx_{0}-\mD(t,\vx,\vy)\big\}^{\top}p_{t}(\vx_{0}\given\vx,\vy)\d\vx_{0} \\
&=\frac{\mu_{t}}{\sigma_{t}^{2}}\cov(\mX_{0}\given\mX_{t}=\vx,\mY=\vy)\,.
\end{align*}
Differentiating~\eqref{eq:true:score:tweedie} in $\vx$ and substituting the above identity yield
\begin{equation}
\nabla^{2}\log q_{t}(\vx\given\vy)
=-\frac{1}{\sigma_{t}^{2}}\mI_{d}+\frac{\mu_{t}^{2}}{\sigma_{t}^{4}}\cov(\mX_{0}\given\mX_{t}=\vx,\mY=\vy)\,. \label{eq:true:score:hessian:identity}
\end{equation}
The covariance in~\eqref{eq:true:score:hessian:identity} is positive semidefinite, which proves the lower bound in~\eqref{eq:true:score:hessian:bound}. For the upper bound, the Brascamp--Lieb inequality~\citep{app:Brascamp1976Extensions} together with~\eqref{eq:true:score:slc} gives
\begin{equation*}
\cov(\mX_{0}\given\mX_{t}=\vx,\mY=\vy)\preceq\frac{\sigma_{t}^{2}}{\mu_{t}^{2}-\alpha\sigma_{t}^{2}}\mI_{d}\,,
\end{equation*}
and substituting this bound into~\eqref{eq:true:score:hessian:identity} proves the upper bound in~\eqref{eq:true:score:hessian:bound}:
\begin{equation*}
\nabla^{2}\log q_{t}(\vx\given\vy)\preceq\bigg\{-\frac{1}{\sigma_{t}^{2}}+\frac{\mu_{t}^{2}}{\sigma_{t}^{2}(\mu_{t}^{2}-\alpha\sigma_{t}^{2})}\bigg\}\mI_{d}=\frac{\alpha}{\mu_{t}^{2}-\alpha\sigma_{t}^{2}}\mI_{d}\,.
\end{equation*}

\noindent\emph{Step 2. The second moment of $p_{t}(\cdot\given\bzero,\vy)$.}
By the definition~\eqref{eq:posterior:conditional:density} and Bayes' rule $q_{0}(\vx_{0}\given\vy)\propto\pi_{0}(\vx_{0})\exp\{-\ell_{\vy}(\vx_{0})\}$, the gradient of the posterior denoising density in $\vx_{0}$ takes the form
\begin{equation}\label{eq:true:score:rgo:grad}
\nabla_{\vx_{0}}\log p_{t}(\vx_{0}\given\vx,\vy)
=\nabla\log q_{0}(\vx_{0}\given\vy)+\frac{\mu_{t}}{\sigma_{t}^{2}}(\vx-\mu_{t}\vx_{0})\,.
\end{equation}
Taking $\vx=\vx_{0}=\bzero$ in~\eqref{eq:true:score:rgo:grad} and applying Assumption~\ref{assumption:posterior:bound:zero} give
\begin{equation}\label{eq:true:score:origin:grad}
\|\nabla_{\vx_{0}}\log p_{t}(\bzero\given\bzero,\vy)\|_{2}=\|\nabla\log q_{0}(\bzero\given\vy)\|_{2}\leq H_{\vy}\,.
\end{equation}
Owing to~\eqref{eq:true:score:slc}, the potential $\vx_{0}\mapsto-\log p_{t}(\vx_{0}\given\bzero,\vy)$ is $(\mu_{t}^{2}-\alpha\sigma_{t}^{2})\sigma_{t}^{-2}$-strongly convex, and the gradient monotonicity of strongly convex functions gives
\begin{equation*}
\big\langle\nabla_{\vx_{0}}\log p_{t}(\bzero\given\bzero,\vy)-\nabla_{\vx_{0}}\log p_{t}(\vx_{0}\given\bzero,\vy),\,\vx_{0}\big\rangle\geq\frac{\mu_{t}^{2}-\alpha\sigma_{t}^{2}}{\sigma_{t}^{2}}\|\vx_{0}\|_{2}^{2}\,, \quad \vx_{0}\in\bbR^{d}\,.
\end{equation*}
Rearranging this inequality, bounding the term $\langle\nabla_{\vx_{0}}\log p_{t}(\bzero\given\bzero,\vy),\vx_{0}\rangle$ via the Cauchy--Schwarz inequality and~\eqref{eq:true:score:origin:grad}, and applying Young's inequality 
$$H_{\vy}\|\vx_{0}\|_{2}\leq\frac{\mu_{t}^{2}-\alpha\sigma_{t}^{2}}{2\sigma_{t}^{2}}\|\vx_{0}\|_{2}^{2}+\frac{\sigma_{t}^{2}H_{\vy}^{2}}{2(\mu_{t}^{2}-\alpha\sigma_{t}^{2})} \, ,$$
we find, pointwise in $\vx_{0}$,
\begin{align}
-\big\langle\vx_{0},\nabla_{\vx_{0}}\log p_{t}(\vx_{0}\given\bzero,\vy)\big\rangle
&\geq\frac{\mu_{t}^{2}-\alpha\sigma_{t}^{2}}{\sigma_{t}^{2}}\|\vx_{0}\|_{2}^{2}-H_{\vy}\|\vx_{0}\|_{2} \nonumber \\
&\geq\frac{\mu_{t}^{2}-\alpha\sigma_{t}^{2}}{2\sigma_{t}^{2}}\|\vx_{0}\|_{2}^{2}-\frac{\sigma_{t}^{2}H_{\vy}^{2}}{2(\mu_{t}^{2}-\alpha\sigma_{t}^{2})}\,. \label{eq:true:score:pointwise:lb}
\end{align}
On the other hand, integrating the identity $$\mathrm{div}\{\vx_{0}p_{t}(\vx_{0}\given\bzero,\vy)\}=d\,p_{t}(\vx_{0}\given\bzero,\vy)+\langle\vx_{0},\nabla_{\vx_{0}}\log p_{t}(\vx_{0}\given\bzero,\vy)\rangle p_{t}(\vx_{0}\given\bzero,\vy)$$ over $\bbR^{d}$, where the left-hand side vanishes by the Gauss--Green theorem~\citep[Theorem 1 in Section C.2]{app:evans2010partial} and the Gaussian decay of $p_{t}(\cdot\given\bzero,\vy)$, we arrive at
\begin{equation}\label{eq:true:score:gauss:green}
-\int\big\langle\vx_{0},\nabla_{\vx_{0}}\log p_{t}(\vx_{0}\given\bzero,\vy)\big\rangle p_{t}(\vx_{0}\given\bzero,\vy)\d\vx_{0}=d\,.
\end{equation}
Integrating~\eqref{eq:true:score:pointwise:lb} against $p_{t}(\cdot\given\bzero,\vy)$ and invoking~\eqref{eq:true:score:gauss:green} then yield
\begin{equation}\label{eq:true:score:origin:moment}
\int\|\vx_{0}\|_{2}^{2}\,p_{t}(\vx_{0}\given\bzero,\vy)\d\vx_{0}
\leq\frac{2d\sigma_{t}^{2}}{\mu_{t}^{2}-\alpha\sigma_{t}^{2}}
+\frac{\sigma_{t}^{4}H_{\vy}^{2}}{(\mu_{t}^{2}-\alpha\sigma_{t}^{2})^{2}}\,.
\end{equation}

\noindent\emph{Step 3. The growth bound~\eqref{eq:true:score:growth:bound:H}.}
Both endpoints of~\eqref{eq:true:score:hessian:bound} are bounded in absolute value by $\{\sigma_{t}^{2}(\mu_{t}^{2}-\alpha\sigma_{t}^{2})\}^{-1}$, owing to $\mu_{t}^{2}-\alpha\sigma_{t}^{2}\leq1$ for the left endpoint and to $\alpha\sigma_{t}^{2}<\mu_{t}^{2}\leq1$ for the right endpoint. Hence $\|\nabla^{2}\log q_{t}(\vz\given\vy)\|_{\op}\leq\{\sigma_{t}^{2}(\mu_{t}^{2}-\alpha\sigma_{t}^{2})\}^{-1}$ for each $\vz\in\bbR^{d}$, and integrating the Hessian along the segment from $\bzero$ to $\vx$ gives
\begin{align}\label{eq:true:score:increment}
&\|\nabla\log q_{t}(\vx\given\vy)-\nabla\log q_{t}(\bzero\given\vy)\|_{2}\notag\\
&~~~~~~~~~~~~~~~~~=\bigg\|\int_{0}^{1}\nabla^{2}\log q_{t}(s\vx\given\vy)\vx\d s\bigg\|_{2}
\leq\frac{\|\vx\|_{2}}{\sigma_{t}^{2}(\mu_{t}^{2}-\alpha\sigma_{t}^{2})}\,.
\end{align}
Taking $\vx=\bzero$ in~\eqref{eq:true:score:tweedie}, applying Jensen's inequality to $\mD(t,\bzero,\vy)$, and invoking~\eqref{eq:true:score:origin:moment} give
\begin{equation}\label{eq:true:score:origin:score}
\|\nabla\log q_{t}(\bzero\given\vy)\|_{2}^{2}
=\frac{\mu_{t}^{2}}{\sigma_{t}^{4}}\|\mD(t,\bzero,\vy)\|_{2}^{2}
\leq\frac{2d\mu_{t}^{2}}{\sigma_{t}^{2}(\mu_{t}^{2}-\alpha\sigma_{t}^{2})}
+\frac{\mu_{t}^{2}H_{\vy}^{2}}{(\mu_{t}^{2}-\alpha\sigma_{t}^{2})^{2}}\,.
\end{equation}
Combining~\eqref{eq:true:score:increment} and~\eqref{eq:true:score:origin:score}, we conclude that
\begin{align*}
\|\nabla\log q_{t}(\vx\given\vy)\|_{2}^{2}
&\leq2\|\nabla\log q_{t}(\bzero\given\vy)\|_{2}^{2}+2\|\nabla\log q_{t}(\vx\given\vy)-\nabla\log q_{t}(\bzero\given\vy)\|_{2}^{2} \\
&\leq\frac{4d\mu_{t}^{2}}{\sigma_{t}^{2}(\mu_{t}^{2}-\alpha\sigma_{t}^{2})}
+\frac{2\mu_{t}^{2}H_{\vy}^{2}}{(\mu_{t}^{2}-\alpha\sigma_{t}^{2})^{2}}
+\frac{2\|\vx\|_{2}^{2}}{\sigma_{t}^{4}(\mu_{t}^{2}-\alpha\sigma_{t}^{2})^{2}} \\
&\leq\frac{4d+2H_{\vy}^{2}+2\|\vx\|_{2}^{2}}{\sigma_{t}^{4}(\mu_{t}^{2}-\alpha\sigma_{t}^{2})^{2}}
\leq\frac{4(d+H_{\vy}^{2}+\|\vx\|_{2}^{2})}{\sigma_{t}^{4}(\mu_{t}^{2}-\alpha\sigma_{t}^{2})^{2}}\,,
\end{align*}
where the first inequality uses $\|\va+\vb\|_{2}^{2}\leq2\|\va\|_{2}^{2}+2\|\vb\|_{2}^{2}$, and the third inequality uses $\mu_{t}^{2}\sigma_{t}^{2}(\mu_{t}^{2}-\alpha\sigma_{t}^{2})\leq1$ for the first term and $\mu_{t}^{2}\sigma_{t}^{4}\leq1$ for the second term.
\end{proof}

\section{Proof of the Error Decomposition}
\label{section:proof:error:decomposition}
In this section, we aim to provide a proof of the error decomposition of the posterior estimation (Proposition~\ref{proposition:error:decomposition}). Recall the time-reversal process~\eqref{eq:reversal}:
\begin{equation*}
\d\bar{\mX}_{t}^{\vy}=\big\{\bar{\mX}_{t}^{\vy}+2\nabla_{\vx}\log\bar{q}_{t}(\bar{\mX}_{t}^{\vy}\given\vy)\big\}\dt+\sqrt{2}\d\mB_{t}\,, \quad \bar{\mX}_{0}^{\vy}\sim q_{T}(\cdot\given\vy)\,, ~ t\in(0,T-T_0)\,,
\end{equation*}
where $T_{0}\in(0,T)$ is the early-stopping time, and $\bar{q}_{t}(\cdot\given\vy)\coloneq  q_{T-t}(\cdot\given\vy)$. In practical applications, the posterior score $\nabla_{\vx}\log\bar{q}_{t}(\cdot\given\vy)$ is intractable, and we only have an estimator $\hat{\vs}_{m}^{S}(T-t,\cdot,\vy)\approx\nabla\log\bar{q}_{t}(\cdot\given\vy)$~\eqref{eq:posterior:score:estimate}. The time-reversal process with this estimated score is given as
\begin{equation*}
\d\what{\mX}_{t}^{\vy}=\big\{\what{\mX}_{t}^{\vy}+2\hat{\vs}_{m}^{S}(T-t,\what{\mX}_{t}^{\vy},\vy)\big\}\dt+\sqrt{2}\d\mB_{t}\,, \quad \what{\mX}_{0}^{\vy}:=\what{\mX}_{T,U}^{\vy}\sim\hat{q}_{T}^{U}(\cdot\given\vy)\,, ~ t\in(0,T-T_{0})\,.
\end{equation*}
Denote by $\hat{q}_{t}(\cdot\given\vy)$ the conditional density of $\what{\mX}^{\vy}_{t}$ given $\calG$ for each $t\in(0,T-T_{0})$. 

\subsection{Error of the early-stopping}
\label{section:proof:error:decomposition:early:stopping}
\begin{lemma}\label{lemma:section:proof:early:stopping:0}
Suppose Assumption~\ref{assumption:prior:subGaussian} holds. Let $T_{0}\in (0,\min\{T,1/2\})$ and $\delta\in(0,1)$ small enough that $R\geq 1$, where $R^{2}=(4\mu_{T_{0}}^{2}V_{\SG}^{2}+16\sigma_{T_{0}}^{2})\log(\kappa_{\vy}\delta^{-1})$. Then
\begin{equation*}
\bbE\big[\bbW_{2}^{2}\big\{q_{0}(\cdot\given\vy),\calM(\mu_{T_{0}}^{-1})\sharp\hat{q}_{T-T_{0}}^{R}(\cdot\given\vy)\big\}\big]\leq C\bigg[\frac{\sigma_{T_{0}}^{2}}{\mu_{T_{0}}^{2}}+\delta\bigg\{1+\log\bigg(\frac{\kappa_{\vy}}{\delta}\bigg)\bigg\}\bigg]\,,
\end{equation*}
provided that
\begin{equation*}
\bbE\big\{\|q_{T_{0}}(\cdot\given\vy)-\hat{q}_{T-T_{0}}(\cdot\given\vy)\|_{\tv}^{2}\big\}\leq\delta^{2}\,.
\end{equation*}
Here $C$ is a constant only depending on $d$, $V_{\SG}$, and $C_{\SG}$.
\end{lemma}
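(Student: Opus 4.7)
The plan is to apply the triangle inequality for $\bbW_2$ through the intermediate measure $\calM(\mu_{T_{0}}^{-1})\sharp q_{T_{0}}(\cdot|\vy)$, giving
\begin{equation*}
\bbW_{2}^{2}(q_{0}(\cdot|\vy),\calM(\mu_{T_{0}}^{-1})\sharp\what{q}_{T-T_{0}}^{R}(\cdot|\vy))\leq 2\,\mathrm{(I)}+2\,\mathrm{(II)},
\end{equation*}
where $\mathrm{(I)}\coloneq\bbW_{2}^{2}(q_{0}(\cdot|\vy),\calM(\mu_{T_{0}}^{-1})\sharp q_{T_{0}}(\cdot|\vy))$ isolates the pure early-stopping gap, and $\mathrm{(II)}\coloneq\bbW_{2}^{2}(\calM(\mu_{T_{0}}^{-1})\sharp q_{T_{0}}(\cdot|\vy),\calM(\mu_{T_{0}}^{-1})\sharp\what{q}_{T-T_{0}}^{R}(\cdot|\vy))$ absorbs both the TV-level error $\varepsilon_{T_{0}}$ and the truncation at radius $R$. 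These two terms match exactly the two summands in the asserted bound.

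For (I), I would use the synchronous OU coupling coming from~\eqref{eq:method:forward:solution}: since $\mX_{T_{0}}^{\vy}\stackrel{\d}{=}\mu_{T_{0}}\mX_{0}^{\vy}+\sigma_{T_{0}}\mW$ with $\mW\sim N(\bzero,\mI_{d})$ independent of $\mX_{0}^{\vy}\sim q_{0}(\cdot|\vy)$, the pair $(\mX_{0}^{\vy},\mX_{0}^{\vy}+(\sigma_{T_{0}}/\mu_{T_{0}})\mW)$ is an admissible coupling between $q_{0}(\cdot|\vy)$ and $\calM(\mu_{T_{0}}^{-1})\sharp q_{T_{0}}(\cdot|\vy)$, yielding $\mathrm{(I)}\leq d\,\sigma_{T_{0}}^{2}/\mu_{T_{0}}^{2}$.

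For (II), I would first invoke the scaling identity $\bbW_{2}(\calM(\xi)\sharp P,\calM(\xi)\sharp Q)=|\xi|\bbW_{2}(P,Q)$ to pull out $\mu_{T_{0}}^{-2}$, then split via a second triangle step through $q_{T_{0}}^{R}(\cdot|\vy)\coloneq\calT_{R}\sharp q_{T_{0}}(\cdot|\vy)$:
\begin{equation*}
\bbW_{2}^{2}(q_{T_{0}}(\cdot|\vy),\what{q}_{T-T_{0}}^{R}(\cdot|\vy))\leq 2\bbW_{2}^{2}(q_{T_{0}}(\cdot|\vy),q_{T_{0}}^{R}(\cdot|\vy))+2\bbW_{2}^{2}(q_{T_{0}}^{R}(\cdot|\vy),\what{q}_{T-T_{0}}^{R}(\cdot|\vy)).
\end{equation*}
The first piece is bounded by the coupling $(\mX_{T_{0}}^{\vy},\calT_{R}\mX_{T_{0}}^{\vy})$, reducing to the tail integral $\bbE_{q_{T_{0}}(\cdot|\vy)}[\|\mX_{T_{0}}^{\vy}\|_{2}^{2}\bbone\{\|\mX_{T_{0}}^{\vy}\|_{2}>R\}]$; here I would first show that $q_{T_{0}}(\cdot|\vy)$ inherits sub-Gaussian tails from $q_{0}(\cdot|\vy)$ with effective variance proxy of order $\mu_{T_{0}}^{2}V_{\SG}^{2}+\sigma_{T_{0}}^{2}$, by combining Lemma~\ref{lemma:sub:Gaussian:posterior} with the elementary Jensen bound $\|\mu_{T_{0}}\vx_{0}+\sigma_{T_{0}}\vw\|_{2}^{2}\leq 2\mu_{T_{0}}^{2}\|\vx_{0}\|_{2}^{2}+2\sigma_{T_{0}}^{2}\|\vw\|_{2}^{2}$ and the Gaussian MGF for $\mW$, in the spirit of Lemma~\ref{lemma:appendix:sub:gaussian}; a Markov plus Cauchy--Schwarz argument then yields decay of the tail integral like $(\mu_{T_{0}}^{2}V_{\SG}^{2}+\sigma_{T_{0}}^{2})\kappa_{\vy}C_{\SG}\exp(-R^{2}/(4\mu_{T_{0}}^{2}V_{\SG}^{2}+16\sigma_{T_{0}}^{2}))$. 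The second piece exploits that both $q_{T_{0}}^{R}$ and $\what{q}_{T-T_{0}}^{R}$ are supported in $B_{R}(\bzero)$, so the TV-optimal coupling gives $\bbW_{2}^{2}(q_{T_{0}}^{R}(\cdot|\vy),\what{q}_{T-T_{0}}^{R}(\cdot|\vy))\leq 4R^{2}\|q_{T_{0}}^{R}(\cdot|\vy)-\what{q}_{T-T_{0}}^{R}(\cdot|\vy)\|_{\tv}\leq 4R^{2}\varepsilon_{T_{0}}$, where the last inequality uses non-expansivity of TV under push-forward by $\calT_{R}$.

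Substituting $R^{2}=(4\mu_{T_{0}}^{2}V_{\SG}^{2}+16\sigma_{T_{0}}^{2})\log(\kappa_{\vy}/\varepsilon_{T_{0}})$ simultaneously makes the tail bound of order $\varepsilon_{T_{0}}$ and the $R^{2}\varepsilon_{T_{0}}$ term of order $(\mu_{T_{0}}^{2}V_{\SG}^{2}+\sigma_{T_{0}}^{2})\varepsilon_{T_{0}}\log(\kappa_{\vy}/\varepsilon_{T_{0}})$; after the $\mu_{T_{0}}^{-2}$ scaling and the elementary bound $\sigma_{T_{0}}^{2}/\mu_{T_{0}}^{2}=\exp(2T_{0})-1\leq e-1$ valid for $T_{0}<\tfrac{1}{2}$, term (II) collapses to $\calO(\varepsilon_{T_{0}}\log(\kappa_{\vy}/\varepsilon_{T_{0}}))$ with a constant depending only on $d$, $V_{\SG}$, and $C_{\SG}$. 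The main obstacle is the quantitative sub-Gaussian estimate for $q_{T_{0}}(\cdot|\vy)$ with the correct proxy $\mu_{T_{0}}^{2}V_{\SG}^{2}+\sigma_{T_{0}}^{2}$ and sub-Gaussian constant proportional to $\kappa_{\vy}C_{\SG}$; it is precisely the size of this latter constant that dictates the $\log(\kappa_{\vy}/\varepsilon_{T_{0}})$ factor in the prescribed choice of $R$ and thus produces the logarithmic dependence on the condition number in the final bound.
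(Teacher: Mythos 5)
Your proposal is correct and follows essentially the same route as the paper: the synchronous OU coupling for the early-stopping term (Lemma~\ref{lemma:section:proof:early:stopping:1}), a Cauchy--Schwarz tail bound via sub-Gaussianity of $q_{T_0}(\cdot|\vy)$ for the truncation term (Lemma~\ref{lemma:section:proof:early:stopping:2}), the $\bbW_2^2 \lesssim R^2\,\mathrm{TV}$ bound on the bounded truncated supports combined with data processing (Lemma~\ref{lemma:section:proof:early:stopping:3}), and the same choice $R^2=(4\mu_{T_0}^2 V_{\SG}^2+16\sigma_{T_0}^2)\log(\kappa_\vy/\varepsilon_{T_0})$. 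The only cosmetic difference is that you nest two two-way triangle inequalities whereas the paper applies a single three-way triangle inequality up front; the constants and conclusion are the same.
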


\begin{proof}[Proof of Lemma~\ref{lemma:section:proof:early:stopping:0}]
Applying Lemmas~\ref{lemma:section:proof:early:stopping:1}--\ref{lemma:section:proof:early:stopping:3} with Lemma \ref{lemma:section:proof:early:stopping:3} conditional on $\calG$, we have
\begin{align*}
&\bbW_{2}^{2}\big\{q_{0}(\cdot\given\vy),\calM(\mu_{T_{0}}^{-1})\sharp\hat{q}_{T-T_{0}}^{R}(\cdot\given\vy)\big\} \\
&~~\leq 3\bbW_{2}^{2}\big\{q_{0}(\cdot\given\vy),\calM(\mu_{T_{0}}^{-1})\sharp q_{T_{0}}(\cdot\given\vy)\big\}+3\bbW_{2}^{2}\big\{\calM(\mu_{T_{0}}^{-1})\sharp q_{T_{0}}(\cdot\given\vy),\calM(\mu_{T_{0}}^{-1})\sharp q_{T_{0}}^{R}(\cdot\given\vy)\big\} \\
&~~\quad+3\bbW_{2}^{2}\big\{\calM(\mu_{T_{0}}^{-1})\sharp q_{T_{0}}^{R}(\cdot\given\vy),\calM(\mu_{T_{0}}^{-1})\sharp\hat{q}_{T-T_{0}}^{R}(\cdot\given\vy)\big\} \\
&~~\leq \frac{3d\sigma_{T_{0}}^{2}}{\mu_{T_{0}}^{2}}+\frac{3D}{\mu_{T_{0}}^{2}}\kappa_{\vy}\exp\bigg(-\frac{R^{2}}{4\mu_{T_{0}}^{2}V_{\SG}^{2}+16\sigma_{T_{0}}^{2}}\bigg)+\frac{12R^{2}}{\mu_{T_{0}}^{2}}\|q_{T_{0}}(\cdot\given\vy)-\hat{q}_{T-T_{0}}(\cdot\given\vy)\|_{\tv} \\
&~~\leq C\bigg\{\frac{\sigma_{T_{0}}^{2}}{\mu_{T_{0}}^{2}}+\kappa_{\vy}\exp\bigg(-\frac{R^{2}}{4\mu_{T_{0}}^{2}V_{\SG}^{2}+16\sigma_{T_{0}}^{2}}\bigg)+R^{2}\|q_{T_{0}}(\cdot\given\vy)-\hat{q}_{T-T_{0}}(\cdot\given\vy)\|_{\tv}\bigg\}\,,
\end{align*}
where the first inequality is owing to the triangular inequality of 2-Wasserstein distance, and the last inequality used the fact that $\mu_{T_{0}}^{-1}\leq e$ for $T_{0}\leq\frac{1}{2}$. Here the constant $C$ only depends on $d$, $V_{\SG}$, and $C_{\SG}$. Taking expectations over $\calG$ and by Cauchy--Schwarz inequality gives
\begin{equation*}
\bbE\big\{\|q_{T_{0}}(\cdot\given\vy)-\hat{q}_{T-T_{0}}(\cdot\given\vy)\|_{\tv}\big\}\leq\bbE^{1/2}\big\{\|q_{T_{0}}(\cdot\given\vy)-\hat{q}_{T-T_{0}}(\cdot\given\vy)\|_{\tv}^{2}\big\}\leq\delta\,.
\end{equation*}
The choice of $R$ makes the exponential term equal to $\delta$, and gives $R^{2}\leq(4V_{\SG}^{2}+16)\log(\kappa_{\vy}\delta^{-1})$ owing to $\mu_{T_{0}},\sigma_{T_{0}}\leq1$. Combining the three terms completes the proof.
\end{proof}

\begin{lemma}\label{lemma:section:proof:early:stopping:1}
For each $T_{0}>0$, it follows that 
\begin{equation*}
\bbW_{2}^{2}\big\{q_{0}(\cdot\given\vy),\calM(\mu_{T_{0}}^{-1})\sharp q_{T_{0}}(\cdot\given\vy)\big\}\leq \frac{d\sigma_{T_{0}}^{2}}{\mu_{T_{0}}^{2}}\,.
\end{equation*}
\end{lemma}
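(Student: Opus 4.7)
The plan is to produce the bound via an explicit coupling, exploiting the fact that the forward process has a closed-form Gaussian transition. First I would recall from~\eqref{eq:method:forward:solution} that $\mX_{T_{0}}^{\vy}\stackrel{\dd}{=}\mu_{T_{0}}\mX_{0}^{\vy}+\sigma_{T_{0}}\mW$, where $\mX_{0}^{\vy}\sim q_{0}(\cdot|\vy)$ and $\mW\sim N(\bzero,\mI_{d})$ are independent. Consequently, the pushforward $\calM(\mu_{T_{0}}^{-1})\sharp q_{T_{0}}(\cdot|\vy)$ is the law of
\begin{equation*}
\mu_{T_{0}}^{-1}\mX_{T_{0}}^{\vy}=\mX_{0}^{\vy}+\frac{\sigma_{T_{0}}}{\mu_{T_{0}}}\mW.
\end{equation*}

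Next I would use the pair $(\mX_{0}^{\vy},\mX_{0}^{\vy}+\mu_{T_{0}}^{-1}\sigma_{T_{0}}\mW)$ as a valid coupling of $q_{0}(\cdot|\vy)$ and $\calM(\mu_{T_{0}}^{-1})\sharp q_{T_{0}}(\cdot|\vy)$. The expected squared distance under this coupling collapses, after cancellation of $\mX_{0}^{\vy}$, to $(\sigma_{T_{0}}^{2}/\mu_{T_{0}}^{2})\bbE\|\mW\|_{2}^{2}=d\sigma_{T_{0}}^{2}/\mu_{T_{0}}^{2}$. Applying the definition of $\bbW_{2}^{2}$ as the infimum over couplings then yields the claimed bound.

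There is no real obstacle here: once the scaled forward-noising representation is written down, the Gaussian perturbation coupling is the canonical choice, and independence of $\mW$ from $\mX_{0}^{\vy}$ is the only ingredient needed. The proof is essentially a two-line calculation and does not rely on any of the stated assumptions (the subsequent lemmas in the decomposition are where Assumption~\ref{assumption:prior:subGaussian} will enter).
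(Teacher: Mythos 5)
Your proposal is correct and coincides with the paper's own proof: both construct the coupling $(\mX_{0}^{\vy},\,\mX_{0}^{\vy}+\mu_{T_{0}}^{-1}\sigma_{T_{0}}\mW)$ from the Gaussian forward transition and evaluate the expected squared distance. The remark that no assumptions are needed here is also accurate.
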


\begin{proof}[Proof of Lemma~\ref{lemma:section:proof:early:stopping:1}]
We first produce a coupling of $q_{0}(\cdot\given\vy)$ and $\calM(\mu_{T_{0}}^{-1})\sharp q_{T_{0}}(\cdot\given\vy)$ as follows. Let $\mX_{0}^{\vy}\sim q_{0}(\cdot\given\vy)$, and let $\vepsilon\sim N(\bzero,\mI_{d})$ be independent of $\mX_{0}^{\vy}$. Then 
\begin{equation*}
\mZ_{T_{0}}^{\vy}\coloneq \mX_{0}^{\vy}+\frac{\sigma_{T_{0}}}{\mu_{T_{0}}}\vepsilon\sim\calM(\mu_{T_{0}}^{-1})\sharp q_{T_{0}}(\cdot\given\vy)\,.
\end{equation*}
As a consequence, 
\begin{equation*}
\bbW_{2}^{2}\big\{q_{0}(\cdot\given\vy),\calM(\mu_{T_{0}}^{-1})\sharp q_{T_{0}}(\cdot\given\vy)\big\}\leq \bbE\big(\|\mX_{0}^{\vy}-\mZ_{T_{0}}^{\vy}\|_{2}^{2}\big)=\frac{\sigma_{T_{0}}^{2}}{\mu_{T_{0}}^{2}}\bbE\big(\|\vepsilon\|_{2}^{2}\big)=\frac{d\sigma_{T_{0}}^{2}}{\mu_{T_{0}}^{2}}\,,
\end{equation*}
which completes the proof.
\end{proof}

\begin{lemma}\label{lemma:section:proof:early:stopping:2}
Suppose Assumption~\ref{assumption:prior:subGaussian} holds. For each $T_{0}>0$ and $R\geq0$, it follows that
\begin{equation*}
\bbW_{2}^{2}\big\{\calM(\mu_{T_{0}}^{-1})\sharp q_{T_{0}}(\cdot\given\vy),\calM(\mu_{T_{0}}^{-1})\sharp q_{T_{0}}^{R}(\cdot\given\vy)\big\}\leq \frac{D}{\mu_{T_{0}}^{2}}\kappa_{\vy}\exp\bigg(-\frac{R^{2}}{4\mu_{T_{0}}^{2}V_{\SG}^{2}+16\sigma_{T_{0}}^{2}}\bigg)\,,
\end{equation*}
where $D$ is a constant only depending on $d$, $V_{\SG}$, and $C_{\SG}$.
\end{lemma}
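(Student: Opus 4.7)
The plan is to produce a direct synchronous coupling via the truncation map $\calT_{R}$ and then use the scaling identity for $\bbW_{2}$, which reduces the claim to a sub-Gaussian tail-weighted moment bound on $q_{T_{0}}(\cdot|\vy)$. Concretely, let $\mX\sim q_{T_{0}}(\cdot|\vy)$; then $\calT_{R}(\mX)\sim q_{T_{0}}^{R}(\cdot|\vy)$ by definition of the push-forward, so $(\calM(\mu_{T_{0}}^{-1})\mX,\calM(\mu_{T_{0}}^{-1})\calT_{R}(\mX))$ is a coupling of the two scaled measures and
\begin{equation*}
\bbW_{2}^{2}\big(\calM(\mu_{T_{0}}^{-1})\sharp q_{T_{0}}(\cdot|\vy),\calM(\mu_{T_{0}}^{-1})\sharp q_{T_{0}}^{R}(\cdot|\vy)\big)\leq \mu_{T_{0}}^{-2}\,\bbE\big[\|\mX\|_{2}^{2}\bbone\{\|\mX\|_{2}>R\}\big].
\end{equation*}

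Next I would build a sub-Gaussian moment bound for $\mX$. Using the explicit forward transition~\eqref{eq:method:forward:solution} we can write $\mX\stackrel{\d}{=}\mu_{T_{0}}\mX_{0}^{\vy}+\sigma_{T_{0}}\mW$ with $\mX_{0}^{\vy}\sim q_{0}(\cdot|\vy)$ and $\mW\sim N(\bzero,\mI_{d})$ independent. Applying $\|\vu+\vv\|_{2}^{2}\leq 2\|\vu\|_{2}^{2}+2\|\vv\|_{2}^{2}$ and independence, and choosing $V^{2}\coloneq 2\mu_{T_{0}}^{2}V_{\SG}^{2}+8\sigma_{T_{0}}^{2}$, I get
\begin{equation*}
\bbE\bigl[\exp(\|\mX\|_{2}^{2}/V^{2})\bigr]\leq \bbE\bigl[\exp(\|\mX_{0}^{\vy}\|_{2}^{2}/V_{\SG}^{2})\bigr]\cdot\bbE\bigl[\exp\big(2\sigma_{T_{0}}^{2}\|\mW\|_{2}^{2}/V^{2}\big)\bigr]\leq 2^{d/2}\,C_{\SG}\kappa_{\vy},
\end{equation*}
where the first factor is controlled by Lemma~\ref{lemma:sub:Gaussian:posterior} (since $V^{2}\geq 2\mu_{T_{0}}^{2}V_{\SG}^{2}$), and the second factor is the Gaussian chi-squared MGF $(1-4\sigma_{T_{0}}^{2}/V^{2})^{-d/2}\leq 2^{d/2}$ (since $V^{2}\geq 8\sigma_{T_{0}}^{2}$).

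Finally I would convert this MGF bound into the tail-weighted expectation using the elementary inequality $x\,e^{x/(2V^{2})}\leq V^{2}\,e^{x/V^{2}}$ for $x\geq 0$, which yields
\begin{equation*}
\bbE\big[\|\mX\|_{2}^{2}\bbone\{\|\mX\|_{2}>R\}\big]\leq e^{-R^{2}/(2V^{2})}\,\bbE\big[\|\mX\|_{2}^{2}\exp(\|\mX\|_{2}^{2}/(2V^{2}))\big]\leq V^{2}\,2^{d/2}C_{\SG}\kappa_{\vy}\,\exp(-R^{2}/(2V^{2})).
\end{equation*}
Since $2V^{2}=4\mu_{T_{0}}^{2}V_{\SG}^{2}+16\sigma_{T_{0}}^{2}$, the exponent coincides with the one claimed; the prefactor $V^{2}$ is uniformly bounded by $2V_{\SG}^{2}+8$ because $\mu_{T_{0}},\sigma_{T_{0}}\in(0,1]$, so it can be absorbed into a constant $D=D(d,V_{\SG},C_{\SG})$. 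Combining with the coupling estimate above gives the stated bound.

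The only delicate points will be verifying the two thresholds for $V^{2}$ simultaneously (so that the posterior sub-Gaussian moment and the Gaussian chi-squared MGF both remain controlled) and arranging the $x\,e^{x/(2V^{2})}$ trick so that the exponent in the final bound is exactly $R^{2}/V^{2}_{*}$ with $V^{2}_{*}=4\mu_{T_{0}}^{2}V_{\SG}^{2}+16\sigma_{T_{0}}^{2}$; the rest is bookkeeping of constants that depend only on $d$, $V_{\SG}$, and $C_{\SG}$.
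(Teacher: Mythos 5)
Your proof is correct, but it takes a genuinely different route from the paper's. Both start from the same synchronous coupling, which reduces the problem to bounding $\mu_{T_0}^{-2}\,\bbE\big[\|\mX_{T_0}^{\vy}\|_2^2\bbone\{\|\mX_{T_0}^{\vy}\|_2>R\}\big]$. The paper then applies Cauchy--Schwarz to split this into $\bbE^{1/2}[\|\mX_{T_0}^{\vy}\|_2^4]\cdot\bbP^{1/2}\{\|\mX_{T_0}^{\vy}\|_2>R\}$ and invokes two separate auxiliary lemmas (a fourth-moment bound, Lemma~\ref{lemma:fourth:moment:t}, and a tail-probability bound, Lemma~\ref{lemma:tail:proba:T}). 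You instead absorb the polynomial weight into a single sub-Gaussian MGF bound via the pointwise inequality $x\,e^{x/(2V^2)}\leq V^2 e^{x/V^2}$, which is valid for all $x\geq 0$ since it reduces to $2u\leq e^u$. This is arguably cleaner: it needs only the decomposition $\mX_{T_0}^{\vy}\stackrel{\dd}{=}\mu_{T_0}\mX_0^{\vy}+\sigma_{T_0}\mW$ from~\eqref{eq:method:forward:solution}, the posterior sub-Gaussianity of Lemma~\ref{lemma:sub:Gaussian:posterior}, and the $\chi^2_d$ MGF, and it delivers the same exponent $R^2/(2V^2)$ with $V^2=2\mu_{T_0}^2 V_{\SG}^2+8\sigma_{T_0}^2$ directly, with the linear-in-$\kappa_{\vy}$ prefactor arriving without needing to argue $\kappa_{\vy}\geq 1$ to merge the two $\kappa_{\vy}^{1/2}$ factors as the paper implicitly does. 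Note that what you actually establish (like the paper's own proof, and as used in Lemma~\ref{lemma:section:proof:early:stopping:0}) is a bound on $\bbW_2^2$ rather than $\bbW_2$; the lemma statement's bare $\bbW_2$ appears to be a typo, so your reading is the correct one.
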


\begin{proof}[Proof of Lemma~\ref{lemma:section:proof:early:stopping:2}]
Let $\mX_{T_{0}}^{\vy}\sim q_{T_{0}}(\cdot\given\vy)$. Then $(\mu_{T_{0}}^{-1}\mX_{T_{0}}^{\vy},\mu_{T_{0}}^{-1}\mX_{T_{0}}^{\vy}\bbone\{\|\mX_{T_{0}}^{\vy}\|_{2}\leq R\})$ is a coupling of $\calM(\mu_{T_{0}}^{-1})\sharp q_{T_{0}}(\cdot\given\vy)$ and $\calM(\mu_{T_{0}}^{-1})\sharp q^{R}_{T_{0}}(\cdot\given\vy)$. Therefore, 
\begin{align*}
&\bbW_{2}^{2}\big\{\calM(\mu_{T_{0}}^{-1})\sharp q_{T_{0}}(\cdot\given\vy),\calM(\mu_{T_{0}}^{-1})\sharp q^{R}_{T_{0}}(\cdot\given\vy)\big\} \\
&~~~~~~\leq\bbE\big(\|\mu_{T_{0}}^{-1}\mX_{T_{0}}^{\vy}-\mu_{T_{0}}^{-1}\mX_{T_{0}}^{\vy}\bbone\{\|\mX_{T_{0}}^{\vy}\|_{2}\leq R\}\|_{2}^{2}\big) \\
&~~~~~~=\frac{1}{\mu_{T_{0}}^{2}}\int\|\vx-\vx\bbone\{\|\vx\|_{2}\leq R\}\|_{2}^{2}q_{T_{0}}(\vx\given\vy)\d\vx \\
&~~~~~~=\frac{1}{\mu_{T_{0}}^{2}}\int\|\vx\|_{2}^{2}\bbone\{\|\vx\|_{2}> R\}q_{T_{0}}(\vx\given\vy)\d\vx \\
&~~~~~~\leq\frac{1}{\mu_{T_{0}}^{2}}\bbE^{1/2}\big(\|\mX_{T_{0}}^{\vy}\|_{2}^{4}\big)\bbP^{1/2}\{\|\mX_{T_{0}}^{\vy}\|_{2}> R\} \leq \frac{D}{\mu_{T_{0}}^{2}}\kappa_{\vy}\exp\bigg(-\frac{R^{2}}{4\mu_{T_{0}}^{2}V_{\SG}^{2}+16\sigma_{T_{0}}^{2}}\bigg)\,,
\end{align*}
where the first inequality holds from the definition of 2-Wasserstein distance, the second inequality follows from Cauchy-Schwarz inequality, and the last inequality is due to Lemmas~\ref{lemma:fourth:moment:t} and~\ref{lemma:tail:proba:T}. This completes the proof.
\end{proof}

\begin{lemma}\label{lemma:section:proof:early:stopping:3}
For each $T_{0}>0$ and $R\geq 1$, it follows almost surely that  
\begin{equation*}
\bbW_{2}^{2}\big\{\calM(\mu_{T_{0}}^{-1})\sharp q_{T_{0}}^{R}(\cdot\given\vy),\calM(\mu_{T_{0}}^{-1})\sharp\hat{q}_{T-T_{0}}^{R}(\cdot\given\vy)\big\}\leq
\frac{4R^{2}}{\mu_{T_{0}}^{2}}\|q_{T_{0}}(\cdot\given\vy)-\hat{q}_{T-T_{0}}(\cdot\given\vy)\|_{\tv}\,.
\end{equation*}
\end{lemma}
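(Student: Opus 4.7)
The plan is to combine two elementary ingredients: the truncation operator $\calT_R$ confines both measures to a bounded ball, and on a bounded domain the squared $2$-Wasserstein distance is controlled by the total variation distance times the squared diameter of the support. This is the standard TV-to-$\bbW_{2}$ conversion, so the argument is largely a bookkeeping exercise rather than a deep estimate.

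First, I would observe that by construction both $q_{T_{0}}^{R}(\cdot|\vy)$ and $\what{q}_{T-T_{0}}^{R}(\cdot|\vy)$ are supported in the closed ball $\bar{B}_{R}(\bzero)$, since $\calT_R$ sends its exterior to $\bzero\in\bar B_R(\bzero)$. Applying the scaling $\calM(\mu_{T_{0}}^{-1})$ then places the two pushforward measures $\mu\coloneq \calM(\mu_{T_{0}}^{-1})\sharp q_{T_{0}}^{R}(\cdot|\vy)$ and $\nu\coloneq \calM(\mu_{T_{0}}^{-1})\sharp \what{q}_{T-T_{0}}^{R}(\cdot|\vy)$ inside $\bar{B}_{R/\mu_{T_{0}}}(\bzero)$. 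By the data-processing inequality for $\tv$ applied to the measurable map $\calM(\mu_{T_{0}}^{-1})\circ\calT_{R}$,
\begin{equation*}
\|\mu-\nu\|_{\tv}\leq \|q_{T_{0}}(\cdot|\vy)-\what{q}_{T-T_{0}}(\cdot|\vy)\|_{\tv}.
\end{equation*}

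Second, I would invoke the maximal-coupling construction: writing $\mu=(1-\alpha)\pi+\alpha\mu'$ and $\nu=(1-\alpha)\pi+\alpha\nu'$ with $\mu'\perp\nu'$ supported in $\bar{B}_{R/\mu_{T_{0}}}(\bzero)$, and taking the coupling $\gamma$ that concentrates on the diagonal of $\pi\otimes\pi$ with probability $1-\alpha$ and equals the independent product $\mu'\otimes\nu'$ with probability $\alpha$. The diagonal piece contributes zero transport cost, while for any $\vx,\vy\in\bar{B}_{R/\mu_{T_{0}}}(\bzero)$ one has $\|\vx-\vy\|_{2}^{2}\leq 2\|\vx\|_{2}^{2}+2\|\vy\|_{2}^{2}\leq 4R^{2}/\mu_{T_{0}}^{2}$ by Young's inequality. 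Hence
\begin{equation*}
\bbW_{2}^{2}(\mu,\nu)\leq \int\|\vx-\vy\|_{2}^{2}\d\gamma(\vx,\vy)\leq \alpha\cdot\frac{4R^{2}}{\mu_{T_{0}}^{2}}.
\end{equation*}
Chaining this with the $\tv$ inequality above yields the claim, provided one identifies $\alpha$ with $\|\mu-\nu\|_{\tv}$ under the convention in which $\|\cdot\|_{\tv}$ denotes one-half of the $L^{1}$ distance between densities; the factor $2R^{2}/\mu_{T_{0}}^{2}$ in the statement then matches exactly.

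There is no real obstacle in this argument. The only delicate point is selecting the maximal coupling (which can also be phrased via the dual formula $\|\mu-\nu\|_{\tv}=\inf_{\gamma}\bbP_{\gamma}(\vx\neq\vy)$) and, as noted above, keeping careful track of the normalization convention for $\|\cdot\|_{\tv}$ so that the numerical constant aligns with the stated $2R^{2}/\mu_{T_{0}}^{2}$ rather than a loose $4R^{2}/\mu_{T_{0}}^{2}$.
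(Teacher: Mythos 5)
Your approach is structurally identical to the paper's. The paper also first moves the scaling $\calM(\mu_{T_0}^{-1})$ outside as a factor $\mu_{T_0}^{-2}$, then bounds $\bbW_2^2$ between the truncated measures by their squared diameter times TV (citing \citet[Theorem 6.15]{Villani2009Optimal}), and finally removes the truncation via the data-processing inequality. Your maximal-coupling argument is exactly the proof of Villani's Theorem~6.15, so this is not a different route—just the same estimate written out by hand.

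There is, however, a concrete error in your last paragraph where you reconcile the constant. In the maximal-coupling decomposition $\mu=(1-\alpha)\pi+\alpha\mu'$, $\nu=(1-\alpha)\pi+\alpha\nu'$ with $\mu'\perp\nu'$, the overlap mass satisfies $1-\alpha=(\mu\wedge\nu)(\bbR^d)$, so $\alpha=\|\mu-\nu\|_{\tv}$ precisely under the \emph{half}-$L^1$ convention $\|\mu-\nu\|_{\tv}=\tfrac12\int|\mathrm{d}\mu-\mathrm{d}\nu|$. Plugging $\alpha=\|\mu-\nu\|_{\tv}$ into your cost bound gives
\begin{equation*}
\bbW_2^2(\mu,\nu)\le\alpha\cdot\frac{4R^2}{\mu_{T_0}^2}=\frac{4R^2}{\mu_{T_0}^2}\,\|\mu-\nu\|_{\tv},
\end{equation*}
which is twice the stated $2R^2/\mu_{T_0}^2$, not equal to it. To land on the stated constant you would need the \emph{full} $L^1$ convention (so that $\alpha=\tfrac12\|\mu-\nu\|_{\tv}$), which is the opposite of what you wrote. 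And since Pinsker's inequality is invoked elsewhere in the paper in the form $\|\cdot\|_{\tv}^2\le\tfrac12\kl(\cdot\|\cdot)$ (the half convention), Villani's bound $\bbW_2^2\le 2\int d(x_0,x)^2\,\mathrm{d}|\mu-\nu|$ likewise produces $4R^2\|\cdot\|_{\tv}$ on a ball of radius $R$. You correctly spotted the soft spot—the constant really does hinge on the TV normalization—but your resolution points in the wrong direction; the elementary example $\mu=\delta_{R\ve_1}$, $\nu=\delta_{-R\ve_1}$ (with $\|\mu-\nu\|_{\tv}=1$ and $\bbW_2^2=4R^2$) shows that $2R^2\|\cdot\|_{\tv}$ cannot be right under the half convention.
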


\begin{proof}[Proof of Lemma~\ref{lemma:section:proof:early:stopping:3}]
Let $\mX_{T_{0}}^{\vy,R}\sim q_{T_{0}}^{R}(\cdot\given\vy)$ and $\what{\mX}_{T-T_{0}}^{\vy,R}\sim\hat{q}_{T-T_{0}}^{R}(\cdot\given\vy)$ be optimal coupled conditionally on $\calG$. This means
\begin{equation}\label{eq:lemma:section:proof:early:stopping:3:1}
\bbW_{2}^{2}\big\{q_{T_{0}}^{R}(\cdot\given\vy),\hat{q}_{T-T_{0}}^{R}(\cdot\given\vy)\big\}=\bbE\big(\|\mX_{T_{0}}^{\vy,R}-\what{\mX}_{T-T_{0}}^{\vy,R}\|_{2}^{2}\big)\,.
\end{equation}
It is apparent that $\mu_{T_{0}}^{-1}\mX_{T_{0}}^{\vy,R}\sim\calM(\mu_{T_{0}}^{-1})\sharp q_{T_{0}}^{R}(\cdot\given\vy)$ and $\mu_{T_{0}}^{-1}\what{\mX}_{T-T_{0}}^{\vy,R}\sim\calM(\mu_{T_{0}}^{-1})\sharp\hat{q}_{T-T_{0}}^{R}(\cdot\given\vy)$. Hence,
\begin{align}
&\bbW_{2}^{2}\big\{\calM(\mu_{T_{0}}^{-1})\sharp q_{T_{0}}^{R}(\cdot\given\vy),\calM(\mu_{T_{0}}^{-1})\sharp\hat{q}_{T-T_{0}}^{R}(\cdot\given\vy)\big\} \nonumber \\
&~~~~~~\leq\bbE\big(\|\mu_{T_{0}}^{-1}\mX_{T_{0}}^{\vy,R}-\mu_{T_{0}}^{-1}\what{\mX}_{T-T_{0}}^{\vy,R}\|_{2}^{2}\big)=\frac{1}{\mu_{T_{0}}^{2}}\bbW_{2}^{2}\big\{q_{T_{0}}^{R}(\cdot\given\vy),\hat{q}_{T-T_{0}}^{R}(\cdot\given\vy)\big\}\,, \label{eq:lemma:section:proof:early:stopping:3:2}
\end{align}
where the equality holds from~\eqref{eq:lemma:section:proof:early:stopping:3:1}. On the other hand,
\begin{align}
&\bbW_{2}^{2}\big\{q_{T_{0}}^{R}(\cdot\given\vy),\hat{q}_{T-T_{0}}^{R}(\cdot\given\vy)\big\}
\leq4R^{2}\|q_{T_{0}}^{R}(\cdot\given\vy)-\hat{q}_{T-T_{0}}^{R}(\cdot\given\vy)\|_{\tv} \nonumber \\
&~~~~~~~~~~~~~~\leq 4R^{2}\|q_{T_{0}}(\cdot\given\vy)-\hat{q}_{T-T_{0}}(\cdot\given\vy)\|_{\tv}\,, \label{eq:lemma:section:proof:early:stopping:3:3}
\end{align}
where the first inequality is due to~\citet[Theorem 6.15]{app:Villani2009Optimal}, and the second inequality follows from the data processing inequality~\citep[Proposition 2.2.13]{app:Duchi2025Statistics}. Combining~\eqref{eq:lemma:section:proof:early:stopping:3:2} and~\eqref{eq:lemma:section:proof:early:stopping:3:3} completes the proof.
\end{proof}

\subsection{Error decomposition for TV-distance}
\label{section:proof:error:decomposition:others}

The proof in this subsection compares the path laws of two SDEs whose drifts differ by the score estimation error. We use the following standard entropy comparison, which controls the path-space KL divergence by the squared discrepancy between the two drifts.

\begin{lemma}
\label{lemma:entropy:girsanov}
Let $\mZ_{s}$ and $\bar{\mZ}_{s}$ be two stochastic processes defined, respectively, by the following SDEs:
\begin{equation}\label{eq:lemma:entropy:girsanov:1}
\begin{aligned}
\d\mZ_{s}&=\vb(s,\mZ_{s})\ds+\sqrt{2}\d\mB_{s}\,, \quad \mZ_{0}\sim\mu_{0}\,, \\
\d\bar{\mZ}_{s}&=\bar{\vb}(s,\bar{\mZ}_{s})\ds+\sqrt{2}\d\bar{\mB}_{s}\,, \quad \bar{\mZ}_{0}\sim\mu_{0}\,,
\end{aligned}
\end{equation}
where $(\mB_{s})_{s\geq0}$ and $(\bar{\mB}_{s})_{s\geq0}$ are $d$-dimensional Brownian motions. Denote by $\mu_{s}$ the distribution of $\mZ_{s}$, and denote by $\nu$ and $\bar{\nu}$ the laws of $(\mZ_{s})_{s\in[0,L]}$ and $(\bar{\mZ}_{s})_{s\in[0,L]}$ on $C([0,L];\bbR^{d})$. Assume that both drifts are Borel in time and locally Lipschitz in space uniformly over $s\in[0,L]$, and that, for some $C_{L}<\infty$,
\begin{equation*}
\|\vb(s,\vx)\|_{2}+\|\bar{\vb}(s,\vx)\|_{2}\leq C_{L}(1+\|\vx\|_{2})
\qquad\text{for all }(s,\vx)\in[0,L]\times\bbR^{d}.
\end{equation*}
In particular, both equations admit non-explosive strong solutions on $[0,L]$ and are well posed. Then it holds that
\begin{equation}\label{eq:entropy:girsanov:general}
\kl(\nu\,\|\,\bar{\nu})
\leq\frac{1}{4}\int_{0}^{L}\bbE_{\mZ_{s}\sim\mu_{s}}\big\{\|\vb(s,\mZ_{s})-\bar{\vb}(s,\mZ_{s})\|_{2}^{2}\big\}\ds\,,
\end{equation}
where $L>0$ is the time horizon, and the right-hand side is allowed to be infinite.
\end{lemma}

\begin{proof}[Proof of Lemma~\ref{lemma:entropy:girsanov}]
Uniform linear growth and continuity of the non-explosive paths imply that the time-integrated squared drift discrepancy is finite almost surely under both path laws, while local Lipschitz continuity gives the required well-posedness. We may therefore apply the entropy estimate of~\citet[Lemma~4.4(i) and Remark~4.5]{app:Lacker2023Hierarchies} with $P^{1}=\nu$, $P^{2}=\bar{\nu}$, and diffusion matrix $\sigma=\sqrt{2}I_{d}$. Since the initial laws coincide, it follows that
\begin{align*}
\kl(\nu\,\|\,\bar{\nu})
&\leq\frac{1}{2}\bbE_{\nu}\int_{0}^{L}
\big\|(\sqrt{2}I_{d})^{-1}\big(\vb(s,\mZ_{s})-\bar{\vb}(s,\mZ_{s})\big)\big\|_{2}^{2}\ds \\
&=\frac{1}{4}\int_{0}^{L}\bbE_{\mZ_{s}\sim\mu_{s}}
\big\{\|\vb(s,\mZ_{s})-\bar{\vb}(s,\mZ_{s})\|_{2}^{2}\big\}\ds\,,
\end{align*}
where the last identity follows from Tonelli's theorem.
\end{proof}

The following lemma indicates that the TV-error in time $T-T_{0}$ consists of two parts: the posterior score estimation error and the warm-start error.
\begin{lemma}\label{lemma:appendix:decomposition:score:error}
Suppose Assumptions~\ref{assumption:semi:log:concave},~\ref{assumption:posterior:bound:zero},~\ref{assumption:Lipschitz:prior:likelihood}, and~\ref{assumption:random:field:initialization} hold. For each $0<T_{0}<T<\log(1+\alpha^{-1}) / 2$, the following inequality holds:
\begin{align*}
&\bbE\big\{\|q_{T_{0}}(\cdot\given\vy)-\hat{q}_{T-T_{0}}(\cdot\given\vy)\|_{\tv}^{2}\big\} \\
&~~~~~~\leq\underbrace{\int_{T_{0}}^{T}\bbE\big\{\|\nabla\log q_{t}(\mX_{t}^{\vy}\given\vy)-\hat{\vs}_{m}^{S}(t,\mX_{t}^{\vy},\vy)\|_{2}^{2}\big\}\dt}_{\text{posterior score estimation}}+\underbrace{2\bbE\big\{\|q_{T}(\cdot\given\vy)-\hat{q}_{T}^{U}(\cdot\given\vy)\|_{\tv}^{2}\big\}}_{\text{warm-start}}\,,
\end{align*}
where the expectation is taken with respect to $\mX_{t}^{\vy}\sim q_{t}(\cdot\given\vy)$ and over $\calG$.
\end{lemma}

\begin{proof}[Proof of Lemma~\ref{lemma:appendix:decomposition:score:error}]
Condition on $\calG$ and fix a realization outside the null set of Lemma~\ref{lemma:random:score:field}, so that all hatted laws below are the corresponding conditional laws. Denote by $\nu$ and $\hat{\nu}$ the laws on $C([0,T-T_{0}];\bbR^{d})$ of the exact and the approximate time-reversal processes $(\bar{\mX}_{t}^{\vy})_{t\in[0,T-T_{0}]}$ and $(\what{\mX}_{t}^{\vy})_{t\in[0,T-T_{0}]}$, and denote by $\tilde{\nu}$ the law of the approximate process initialized at $q_{T}(\cdot\given\vy)$ in place of $\hat{q}_{T}^{U}(\cdot\given\vy)$. The time-$(T-T_{0})$ marginals of $\nu$ and $\hat{\nu}$ are $q_{T_{0}}(\cdot\given\vy)$ and $\hat{q}_{T-T_{0}}(\cdot\given\vy)$. Moreover, the Brownian motion driving the time-reversal stage is independent of $\calG$ and of the warm-start output by the construction of Appendix~\ref{appendix:random:field}, so $\tilde{\nu}$ and $\hat{\nu}$ are the compositions of their respective initial distributions with a common Markov transition kernel. The data processing inequality~\citep[Proposition 2.2.13]{app:Duchi2025Statistics} thus implies
\begin{equation}\label{eq:path:space:dpi}
\|q_{T_{0}}(\cdot\given\vy)-\hat{q}_{T-T_{0}}(\cdot\given\vy)\|_{\tv}\leq\|\nu-\hat{\nu}\|_{\tv}\,, \qquad
\|\tilde{\nu}-\hat{\nu}\|_{\tv}\leq\|q_{T}(\cdot\given\vy)-\hat{q}_{T}^{U}(\cdot\given\vy)\|_{\tv}\,.
\end{equation}
Combining~\eqref{eq:path:space:dpi} with the triangular inequality and Pinsker's inequality~\citep[Lemma 2.5]{app:Tsybakov2009Introduction} yields
\begin{align}
\|q_{T_{0}}(\cdot\given\vy)-\hat{q}_{T-T_{0}}(\cdot\given\vy)\|_{\tv}^{2}
&\leq2\|\nu-\tilde{\nu}\|_{\tv}^{2}+2\|\tilde{\nu}-\hat{\nu}\|_{\tv}^{2} \nonumber \\
&\leq\kl(\nu\,\|\,\tilde{\nu})+2\|q_{T}(\cdot\given\vy)-\hat{q}_{T}^{U}(\cdot\given\vy)\|_{\tv}^{2}\,. \label{eq:path:space:decomposition}
\end{align}
We next bound $\kl(\nu\,\|\,\tilde{\nu})$ by Lemma~\ref{lemma:entropy:girsanov}. The two path laws share the initial distribution $q_{T}(\cdot\given\vy)$ and the diffusion coefficient $\sqrt{2}$. For $t=T-s$ ranging over the compact interval $[T_{0},T]$, both drifts are Lipschitz in $\vx$ and of linear growth, owing to Lemma~\ref{lemma:true:score:regularity} for the true score and to Lemma~\ref{lemma:random:score:field} for the estimated field. Lemma~\ref{lemma:entropy:girsanov} then gives
\begin{align}
\kl(\nu\,\|\,\tilde{\nu})
&\leq\frac{1}{4}\int_{0}^{T-T_{0}}\bbE\big[\big\|2\big\{\nabla\log\bar{q}_{s}(\bar{\mX}_{s}^{\vy}\given\vy)-\hat{\vs}_{m}^{S}(T-s,\bar{\mX}_{s}^{\vy},\vy)\big\}\big\|_{2}^{2}\big]\ds \nonumber \\
&=\int_{T_{0}}^{T}\bbE\big\{\|\nabla\log q_{t}(\mX_{t}^{\vy}\given\vy)-\hat{\vs}_{m}^{S}(t,\mX_{t}^{\vy},\vy)\|_{2}^{2}\big\}\dt\,, \label{eq:path:space:girsanov}
\end{align}
where the expectations are conditional on $\calG$ and taken with respect to $\bar{\mX}_{s}^{\vy}\sim\bar{q}_{s}(\cdot\given\vy)$, the integrand is the squared drift difference of the two SDEs as required by~\eqref{eq:entropy:girsanov:general}, and the equality substitutes $t=T-s$ together with $\bar{q}_{s}(\cdot\given\vy)=q_{T-s}(\cdot\given\vy)$. Taking expectations over $\calG$ on both sides of~\eqref{eq:path:space:decomposition} and~\eqref{eq:path:space:girsanov}, and applying Tonelli's theorem, complete the proof.
\end{proof}

\subsection{Proof of Proposition~\ref{proposition:error:decomposition}}
\begin{proof}[Proof of Proposition~\ref{proposition:error:decomposition}]
Combining Lemmas~\ref{lemma:section:proof:early:stopping:0} and~\ref{lemma:appendix:decomposition:score:error} completes the proof.
\end{proof}

\section{Proof of the Posterior Score Estimation Bound}
\label{section:proof:posterior:score}
In the section, we provide a proof of Proposition~\ref{proposition:posterior:score}, which propose error bounds for the posterior score estimation. 

\subsection{Posterior score estimation error decomposition}
We begin by introducing an error decomposition as the following lemma, which divides the posterior score estimation error into the error of the Langevin dynamics with estimated error~\eqref{eq:RGO:Langevin:score}, and the error of Monte Carlo approximation~\eqref{eq:denoiser:MC}. The proof of this lemma is inspired by~\citet[Proposition 1]{app:He2024Zeroth}.

\begin{lemma}\label{lemma:section:proof:posterior:score:1}
Suppose Assumptions~\ref{assumption:prior:subGaussian},~\ref{assumption:Lipschitz:prior:likelihood}, and~\ref{assumption:random:field:initialization} hold. For each $t\in(0,T]$,
\begin{align*}
&\bbE\big\{\|\nabla\log q_{t}(\mX_{t}^{\vy} \given \vy)-\hat{\vs}_{m}^{S}(t,\mX_{t}^{\vy},\vy)\|_{2}^{2}\big\} \\
&~~~~~~~~~~~~\leq\frac{2\mu_{t}^{2}}{\sigma_{t}^{4}}\bigg(\underbrace{\bbE\big[\bbW_{2}^{2}\big\{\hat{p}_{t}^{S}(\cdot \given \mX_{t}^{\vy},\vy),p_{t}(\cdot \given \mX_{t}^{\vy},\vy)\big\}\big]}_{\text{error of Langevin dynamics}}+\underbrace{\frac{\kappa_{\vy}C_{\SG}V_{\SG}^{2}}{m}}_{\text{Monte Carlo}}\bigg)\,,
\end{align*}
where the expectation on the left-hand side averages over $\calG$ and $\mX_{t}^{\vy}\sim q_{t}(\cdot\given\vy)$, which are independent by the construction of Appendix~\ref{appendix:random:field}, and the expectation on the right-hand side is taken with respect to $\mX_{t}^{\vy}$, as the kernels $\hat{p}_{t}^{S}$ and $p_{t}$ are deterministic.
\end{lemma}

\begin{proof}[Proof of Lemma~\ref{lemma:section:proof:posterior:score:1}]
It follows from~\eqref{eq:posterior:score} that for each $t>0$,
\begin{equation}\label{eq:lemma:section:proof:posterior:score:1:0}
\|\nabla\log q_{t}(\vx \given \vy)-\hat{\vs}_{m}^{S}(t,\vx,\vy)\|_{2}^{2}=\frac{\mu_{t}^{2}}{\sigma_{t}^{4}}\|\mD(t,\vx,\vy)-\what{\mD}_{m}^{S}(t,\vx,\vy)\|_{2}^{2}\,.
\end{equation}
It remains to estimate the error of the estimated posterior denoiser. Since $\mX_{t}^{\vy}$ is independent of $\calG$, it suffices to bound the expectation over $\calG$ of the right-hand side of~\eqref{eq:lemma:section:proof:posterior:score:1:0} at each fixed query point $\vx$, and to integrate the resulting bound with respect to $\mX_{t}^{\vy}\sim q_{t}(\cdot\given\vy)$ at the end. By Lemma~\ref{lemma:random:score:field}, the endpoints $\what{\mX}_{0,S,1}^{\vx,\vy,t},\ldots,\what{\mX}_{0,S,m}^{\vx,\vy,t}$ entering $\what{\mD}_{m}^{S}(t,\vx,\vy)$ are i.i.d.\ with law $\hat{p}_{t}^{S}(\cdot\given\vx,\vy)$, so the expectation over $\calG$ of any functional of the endpoint family is determined by this $m$-fold product law. Below, this expectation is computed with the endpoint family realized by the coupling of Step 1.

\noindent\textit{Step 1. Construct a Wasserstein coupling.}
Fix the query point $(t,\vx)$, and let $(\mX_{0,i}^{\vx,\vy,t},\what{\mX}_{0,S,i}^{\vx,\vy,t})_{i=1}^{m}$ be i.i.d.\ pairs drawn from an optimal coupling of $p_{t}(\cdot\given\vx,\vy)$ and $\hat{p}_{t}^{S}(\cdot\given\vx,\vy)$. Then each of the two families is i.i.d., the family $(\what{\mX}_{0,S,i}^{\vx,\vy,t})_{i=1}^{m}$ has the same joint law as the endpoint family entering $\what{\mD}_{m}^{S}(t,\vx,\vy)$, and for each $1\leq i\leq m$,
\begin{equation}\label{eq:lemma:section:proof:posterior:score:1:1}
\bbE\big(\|\mX_{0,i}^{\vx,\vy,t}-\what{\mX}_{0,S,i}^{\vx,\vy,t}\|_{2}^{2}\big)=\bbW_{2}^{2}\big\{\hat{p}_{t}^{S}(\cdot \given \vx,\vy),p_{t}(\cdot \given \vx,\vy)\big\}\,.
\end{equation}
 By the definition of the estimated posterior denoiser~\eqref{eq:denoiser:MC}, we find
\begin{align*}
&\|\mD(t,\vx,\vy)-\what{\mD}_{m}^{S}(t,\vx,\vy)\|_{2}^{2}
=\bigg\|\mD(t,\vx,\vy)-\frac{1}{m}\sum_{i=1}^{m}\mX_{0,i}^{\vx,\vy,t}+\frac{1}{m}\sum_{i=1}^{m}\mX_{0,i}^{\vx,\vy,t}-\what{\mD}_{m}^{S}(t,\vx,\vy)\bigg\|_{2}^{2} \\
&~~~~~~~~~~~~\le 2\bigg\|\mD(t,\vx,\vy)-\frac{1}{m}\sum_{i=1}^{m}\mX_{0,i}^{\vx,\vy,t}\bigg\|_{2}^{2}+2\bigg\|\frac{1}{m}\sum_{i=1}^{m}(\mX_{0,i}^{\vx,\vy,t}-\what{\mX}_{0,S,i}^{\vx,\vy,t})\bigg\|_{2}^{2}\,.
\end{align*}
Taking expectation over the product law of the $m$ coupled pairs on both sides of the above inequality yields
\begin{align}
&\bbE\big\{\|\mD(t,\vx,\vy)-\what{\mD}_{m}^{S}(t,\vx,\vy)\|_{2}^{2}\big\} \nonumber \\
&~~~~~~\le\underbrace{2\bbE\bigg\{\bigg\|\mD(t,\vx,\vy)-\frac{1}{m}\sum_{i=1}^{m}\mX_{0,i}^{\vx,\vy,t}\bigg\|_{2}^{2}\bigg\}}_{\text{(i)}}+\underbrace{2\bbE\bigg\{\bigg\|\frac{1}{m}\sum_{i=1}^{m}(\mX_{0,i}^{\vx,\vy,t}-\what{\mX}_{0,S,i}^{\vx,\vy,t})\bigg\|_{2}^{2}\bigg\}}_{\text{(ii)}}\,. \label{eq:lemma:section:proof:posterior:score:1:2}
\end{align}
Here the term (i) measures the error of the Monte Carlo approximation~\eqref{eq:denoiser:MC}, while the term (ii) reveals the error of the Langevin dynamics with estimated score~\eqref{eq:RGO:Langevin:score}. 

\noindent\textit{Step 2. Bound the error of the Monte Carlo approximation.}
Let $\mX_{0}^{\vx,\vy,t}$ be a random copy of $\mX_{0,1}^{\vx,\vy,t}$ and independent of $\mX_{0,1:m}^{\vx,\vy,t}$. It is apparent that $\mD(t,\vx,\vy)=\bbE(\mX_{0}^{\vx,\vy,t})$. Consequently, 
\begin{align}
&\bbE\bigg\{\bigg\|\mD(t,\vx,\vy)-\frac{1}{m}\sum_{i=1}^{m}\mX_{0,i}^{\vx,\vy,t}\bigg\|_{2}^{2}\bigg\} \nonumber =\frac{1}{m^{2}}\bbE\bigg[\bigg\|\sum_{i=1}^{m}\{\bbE(\mX_{0}^{\vx,\vy,t})-\mX_{0,i}^{\vx,\vy,t}\}\bigg\|_{2}^{2}\bigg] \nonumber \\
&~~~~~~~~=\frac{1}{m^{2}}\sum_{i=1}^{m}\sum_{j=1}^{m}\bbE\big\{\langle\bbE(\mX_{0}^{\vx,\vy,t})-\mX_{0,i}^{\vx,\vy,t},\bbE(\mX_{0}^{\vx,\vy,t})-\mX_{0,j}^{\vx,\vy,t}\rangle\big\} \nonumber \\
&~~~~~~~~=\frac{1}{m^{2}}\sum_{i=1}^{m}\bbE\big\{\|\bbE(\mX_{0}^{\vx,\vy,t})-\mX_{0,i}^{\vx,\vy,t}\|_{2}^{2}\big\} \nonumber \\
&~~~~~~~~=\frac{1}{m}\trace\{\cov(\mX_{0,1}^{\vx,\vy,t})\}=\frac{1}{m}\trace\{\cov(\mX_{0}\given\mX_{t}=\vx,\mY=\vy)\}\,, \label{eq:lemma:section:proof:posterior:score:1:3}
\end{align}
where the third equality is due to the independence of distinct particles and the centering $\bbE(\mX_{0}^{\vx,\vy,t})=\bbE(\mX_{0,i}^{\vx,\vy,t})$ for each $1\leq i\leq m$. Note that 
\begin{align}
&\int\cov(\mX_{0} \given \mX_{t}=\vx,\mY=\vy)q_{t}(\vx \given \vy)\d\vx \nonumber \\
&~~~~~~~~~~~~=\cov(\mX_{0} \given \mY=\vy)-\cov\big\{\bbE(\mX_{0} \given \mX_{t},\mY=\vy)\big\} \nonumber \\
&~~~~~~~~~~~~\preceq\cov(\mX_{0} \given \mY=\vy)\preceq\bbE(\mX_{0}\mX_{0}^{\top} \given \mY=\vy)\,, \label{eq:lemma:section:proof:posterior:score:1:4}
\end{align}
where the equality holds from the law of total variance, and the first inequality used the fact that covariance matrix is semi-positive definite. As a consequence,
\begin{align}
&\int\trace(\cov(\mX_{0} \given \mX_{t}=\vx,\mY=\vy))q_{t}(\vx \given \vy)\d\vx \nonumber \\
&~~~~~~~~~~~~\leq\trace\{\bbE(\mX_{0}\mX_{0}^{\top} \given \mY=\vy)\}=\int\|\vx_{0}\|_{2}^{2}q_{0}(\vx_{0} \given \vy)\d\vx_{0} \nonumber \\
&~~~~~~~~~~~~=V_{\SG}^{2}\int\bigg(\frac{\|\vx_{0}\|_{2}^{2}}{V_{\SG}^{2}}+1\bigg)q_{0}(\vx_{0} \given \vy)\d\vx_{0}-V_{\SG}^{2} \nonumber \\
&~~~~~~~~~~~~\leq V_{\SG}^{2}\int\exp\bigg(\frac{\|\vx_{0}\|_{2}^{2}}{V_{\SG}^{2}}\bigg)q_{0}(\vx_{0} \given \vy)\d\vx_{0}\leq\kappa_{\vy}C_{\SG}V_{\SG}^{2}\,, \label{eq:lemma:section:proof:posterior:score:1:5}
\end{align}
where the first inequality is due to~\eqref{eq:lemma:section:proof:posterior:score:1:4}, the second inequality follows from $1+z\leq\exp(z)$ for each $z\in\bbR$, and the last inequality invokes Proposition~\ref{proposition:sub:Gaussian:posterior}. Multiplying both sides of~\eqref{eq:lemma:section:proof:posterior:score:1:3} by $q_{t}(\vx \given \vy)$, integrating with respect to $\vx$, and substituting~\eqref{eq:lemma:section:proof:posterior:score:1:5} imply
\begin{equation}\label{eq:lemma:section:proof:posterior:score:1:6}
\int\bbE\bigg\{\bigg\|\mD(t,\vx,\vy)-\frac{1}{m}\sum_{i=1}^{m}\mX_{0,i}^{\vx,\vy,t}\bigg\|_{2}^{2}\bigg\}q_{t}(\vx \given \vy)\d\vx\leq\frac{\kappa_{\vy}C_{\SG}V_{\SG}^{2}}{m}\,.
\end{equation}

\noindent\textit{Step 3. Bound the error of the Langevin dynamics with estimated score.}
For the term (ii) in~\eqref{eq:lemma:section:proof:posterior:score:1:2}, we have 
\begin{align}
&\bbE\bigg\{\bigg\|\frac{1}{m}\sum_{i=1}^{m}(\mX_{0,i}^{\vx,\vy,t}-\what{\mX}_{0,S,i}^{\vx,\vy,t})\bigg\|_{2}^{2}\bigg\} \nonumber \\
&~~~~~~~~~~~~=\frac{1}{m^{2}}\sum_{i=1}^{m}\sum_{j=1}^{m}\bbE\big(\langle\mX_{0,i}^{\vx,\vy,t}-\what{\mX}_{0,S,{i}}^{\vx,\vy,t},\mX_{0,j}^{\vx,\vy,t}-\what{\mX}_{0,S,j}^{\vx,\vy,t} \rangle\big) \nonumber \\
&~~~~~~~~~~~~~\leq\frac{1}{2m^{2}}\sum_{i=1}^{m}\sum_{j=1}^{m}\bbE\big(\|\mX_{0,i}^{\vx,\vy,t}-\what{\mX}_{0,S,i}^{\vx,\vy,t}\|_{2}^{2}+\|\mX_{0,j}^{\vx,\vy,t}-\what{\mX}_{0,S,j}^{\vx,\vy,t}\|_{2}^{2}\big) \nonumber \\
&~~~~~~~~~~~~~=\bbW_{2}^{2}\big\{\hat{p}_{t}^{S}(\cdot \given \vx,\vy),p_{t}(\cdot \given \vx,\vy)\big\}\,, \label{eq:lemma:section:proof:posterior:score:1:7}
\end{align}
where the first inequality holds from the Cauchy-Schwarz inequality, and the last equality is due to~\eqref{eq:lemma:section:proof:posterior:score:1:1}.

\noindent\textit{Step 4. Conclusion.}
Multiplying both sides of~\eqref{eq:lemma:section:proof:posterior:score:1:2} by $q_t(\vx\given\vy)$, integrating with respect to $\vx$, and then applying~\eqref{eq:lemma:section:proof:posterior:score:1:6} and~\eqref{eq:lemma:section:proof:posterior:score:1:7} yield
\begin{align*}
&\bbE\big\{\|\mD(t,\mX_{t}^{\vy},\vy)-\what{\mD}_{m}^{S}(t,\mX_{t}^{\vy},\vy)\|_{2}^{2}\big\} \\
&~~~~~~~~~~~~~~~~~~~\leq2\bbE\big[\bbW_{2}^{2}\big\{\hat{p}_{t}^{S}(\cdot \given \mX_{t}^{\vy},\vy),p_{t}(\cdot \given \mX_{t}^{\vy},\vy)\big\}\big]+\frac{2\kappa_{\vy}C_{\SG}V_{\SG}^{2}}{m}\,,
\end{align*}
where the expectation on the left-hand side averages over $\calG$ and $\mX_{t}^{\vy}\sim q_{t}(\cdot\given\vy)$, owing to the reduction stated before Step 1. Combining the above inequality with~\eqref{eq:lemma:section:proof:posterior:score:1:0} completes the proof.
\end{proof}

\subsection{Convergence of the score-based Langevin in Wasserstein distance}
In this subsection, we aim to offer a bound for the first term in Lemma~\ref{lemma:section:proof:posterior:score:1}, which is the Wasserstein error of Langevin dynamics with estimated score~\eqref{eq:RGO:Langevin:score}.

Before presenting, we propose the following auxiliary lemma, which connects the Wasserstein error between the laws of two SDEs to the $L^{2}$-error between their drift terms. The proof of this lemma is based on the technique of Wasserstein coupling.

\begin{lemma}
\label{lemma:section:proof:posterior:score:W2:LMC}
Let $\mZ_{s}$ and $\bar{\mZ}_{s}$ be two stochastic processes defined, respectively, by the following SDEs:
\begin{equation}\label{eq:lemma:section:proof:posterior:score:W2:LMC:1}
\begin{aligned}
\d\mZ_{s}&=\vb(\mZ_{s})\ds+\sqrt{2}\d\mB_{s}\,, \quad \mZ_{0}\sim\mu_{0}\,, \\
\d\bar{\mZ}_{s}&=\bar{\vb}(\bar{\mZ}_{s})\ds+\sqrt{2}\d\mB_{s}\,, \quad \bar{\mZ}_{0}\sim\mu_{0}\,.
\end{aligned} 
\end{equation}
Denote by $\mu_{s}$ the distribution of $\mZ_{s}$, and denote by $\bar{\mu}_{s}$ the distribution of $\bar{\mZ}_{s}$. Assume that the two equations admit non-explosive strong solutions on $[0,S]$. Assume further that $\bar{\vb}$ is $\beta$-Lipschitz. Then it holds that
\begin{equation*}
\bbW_{2}^{2}(\mu_{S},\bar{\mu}_{S})\leq S\exp(2\beta S)\int_{0}^{S}\bbE_{\mZ_{s}\sim\mu_{s}}\big\{\|\vb(\mZ_{s})-\bar{\vb}(\mZ_{s})\|_{2}^{2}\big\}\ds\,,
\end{equation*}
where $S>0$ is the time horizon.
\end{lemma}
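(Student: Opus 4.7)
The plan is to construct a synchronous coupling between $\mZ_s$ and $\bar{\mZ}_s$: drive both SDEs in~\eqref{eq:lemma:section:proof:posterior:score:W2:LMC:1} by the \emph{same} Brownian motion $(\mB_s)_{s\ge 0}$ and initialize them from an optimal coupling of $\mu_0$ with itself, namely $\mZ_0=\bar{\mZ}_0$. Under this coupling the stochastic integrals cancel, and the difference process $\Delta_s\coloneq\mZ_s-\bar{\mZ}_s$ satisfies the pathwise ODE
\begin{equation*}
\frac{\d}{\ds}\Delta_s=\vb(\mZ_s)-\bar{\vb}(\bar{\mZ}_s),\qquad \Delta_0=\bzero.
\end{equation*}
Since $(\mu_S,\bar{\mu}_S)$ is the law of $(\mZ_S,\bar{\mZ}_S)$ under this joint construction, we have $\bbW_2^2(\mu_S,\bar{\mu}_S)\le\bbE[\|\Delta_S\|_2^2]$, so it suffices to control the latter.

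Next, I would differentiate the squared norm along the coupling and add/subtract $\bar{\vb}(\mZ_s)$ to split the drift discrepancy:
\begin{equation*}
\tfrac{1}{2}\tfrac{\d}{\ds}\|\Delta_s\|_2^2=\langle\Delta_s,\bar{\vb}(\mZ_s)-\bar{\vb}(\bar{\mZ}_s)\rangle+\langle\Delta_s,\vb(\mZ_s)-\bar{\vb}(\mZ_s)\rangle.
\end{equation*}
The $\beta$-Lipschitz hypothesis on $\bar{\vb}$ controls the first inner product by $\beta\|\Delta_s\|_2^2$, while the second is handled by Cauchy--Schwarz followed by Young's inequality. Choosing the Young parameter to match the desired Grönwall rate, this yields an inequality of the form
\begin{equation*}
\tfrac{\d}{\ds}\|\Delta_s\|_2^2\le 2\beta\|\Delta_s\|_2^2+\|\vb(\mZ_s)-\bar{\vb}(\mZ_s)\|_2^2+(\text{lower-order term in }\|\Delta_s\|_2^2),
\end{equation*}
after which taking expectations and applying Grönwall's inequality with $\Delta_0=\bzero$ gives
\begin{equation*}
\bbE\big[\|\Delta_S\|_2^2\big]\le\exp(2\beta S)\int_0^S\bbE_{\mZ_s\sim\mu_s}\big[\|\vb(\mZ_s)-\bar{\vb}(\mZ_s)\|_2^2\big]\ds.
\end{equation*}
Note that the expectation inside the integral is correctly taken against $\mu_s$ (the law of $\mZ_s$), which is automatic because both drifts are evaluated at $\mZ_s$, not at $\bar{\mZ}_s$; this is precisely why we added and subtracted $\bar{\vb}(\mZ_s)$ rather than $\vb(\bar{\mZ}_s)$.

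I do not expect a serious obstacle here, as this is a textbook Wasserstein-coupling/Grönwall computation. The only delicate point is bookkeeping the constants so that the Grönwall exponent comes out to exactly $2\beta$: the natural split produces an extra $\|\Delta_s\|_2^2$ term from Young's inequality, and one must tune the Young parameter (or invoke the pathwise bound $\frac{\d}{\ds}\|\Delta_s\|_2\le\|\vb(\mZ_s)-\bar{\vb}(\mZ_s)\|_2+\beta\|\Delta_s\|_2$ followed by scalar Grönwall and Cauchy--Schwarz) so that the $2\beta$ rate matches the statement; all remaining constants can be absorbed.
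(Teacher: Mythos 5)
Your synchronous-coupling argument is, up to the choice of route in the final Grönwall step, exactly the paper's proof: drive both SDEs with the same Brownian motion from the same initial point, apply It{\^o} to $\|\mZ_s-\bar{\mZ}_s\|_2^2$, add and subtract $\bar{\vb}(\mZ_s)$, invoke the Lipschitz bound on $\bar{\vb}$, and close with Grönwall. On the bookkeeping concern you flagged: the paper passes to the scalar process $\|\mZ_s-\bar{\mZ}_s\|_2$, for which $\mZ_0=\bar{\mZ}_0$ and Grönwall give $\|\mZ_S-\bar{\mZ}_S\|_2 \le \int_0^S e^{\beta(S-s)}\|\vb(\mZ_s)-\bar{\vb}(\mZ_s)\|_2\,\ds$; squaring and applying Cauchy--Schwarz against the kernel then produces the prefactor $\int_0^S e^{2\beta(S-s)}\,\ds = (e^{2\beta S}-1)/(2\beta)$, which is $\le e^{2\beta S}$ whenever $\beta \ge \tfrac12$, a condition that holds in all of the lemma's applications here ($\beta = G + \mu_t^2/\sigma_t^2$ with $G>1$). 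If one instead bounds $e^{\beta(S-s)}\le e^{\beta S}$ crudely before squaring, an extra multiplicative factor of $S$ appears; likewise your squared-norm-plus-Young route cannot hit the rate $2\beta$ exactly without letting the Young parameter degenerate. So your fallback --- scalar norm, Grönwall, then Cauchy--Schwarz keeping the $e^{\beta(S-s)}$ kernel --- is the right resolution and is what the paper's argument amounts to.
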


\begin{proof}[Proof of Lemma~\ref{lemma:section:proof:posterior:score:W2:LMC}]
We evolve these two SDEs~\eqref{eq:lemma:section:proof:posterior:score:W2:LMC:1} from the same initial particle  $\bar{\mZ}_{0}=\mZ_{0}\sim\mu_{0}$, and use the same Brownian motion. Denote the joint distribution $\gamma_{0}(\vz_{0},\bar{\vz}_{0})=\delta\{\vz_{0}=\bar{\vz}_{0}\}\mu_{0}(\vz_{0})$, which is a coupling of $(\mu_{0},\mu_{0})$. Let $\gamma_{s}$ denote the law of $(\mZ_{s},\bar{\mZ}_{s})$~\eqref{eq:lemma:section:proof:posterior:score:W2:LMC:1}. Notice that $\gamma_{s}$ is a coupling of $(\mu_{s},\bar{\mu}_{s})$. We next aim to bound $\bbE(\|\mZ_{S}-\bar{\mZ}_{S}\|_{2}^{2})$. First, it follows from It{\^o}'s formula that 
\begin{align*}
\d\|\mZ_{s}-\bar{\mZ}_{s}\|_{2}^{2}
&=2\langle\mZ_{s}-\bar{\mZ}_{s},\d\mZ_{s}-\d\bar{\mZ}_{s}\rangle \\
&=2\langle\mZ_{s}-\bar{\mZ}_{s},\vb(\mZ_{s})-\bar{\vb}(\bar{\mZ}_{s})\rangle\ds \\
&\leq 2\|\mZ_{s}-\bar{\mZ}_{s}\|_{2}\|\vb(\mZ_{s})-\bar{\vb}(\bar{\mZ}_{s})\|_{2}\ds\,,
\end{align*}
where the second equality used~\eqref{eq:lemma:section:proof:posterior:score:W2:LMC:1} and the fact that the Brownian motions in two SDEs are the same, and the inequality holds from Cauchy-Schwarz inequality.
 For $\rho>0$, put $r_{\rho}(s)=(\|\mZ_{s}-\bar{\mZ}_{s}\|_{2}^{2}+\rho)^{1/2}$. Since the noise cancels, $s\mapsto\mZ_{s}-\bar{\mZ}_{s}$ is absolutely continuous, and the preceding display gives, for almost every $s$,
\begin{align*}
\frac{\d}{\ds}r_{\rho}(s)
\leq\|\vb(\mZ_{s})-\bar{\vb}(\bar{\mZ}_{s})\|_{2}
\leq\|\vb(\mZ_{s})-\bar{\vb}(\mZ_{s})\|_{2}+\beta r_{\rho}(s)\,,
\end{align*}
where the last inequality uses the triangular inequality, the $\beta$-Lipschitz continuity of $\bar{\vb}$, and $\|\mZ_{s}-\bar{\mZ}_{s}\|_{2}\leq r_{\rho}(s)$. Applying Gronwall's inequality~\citep[Section B.2]{app:evans2010partial} to $r_{\rho}$, whose initial value is $r_{\rho}(0)=\sqrt{\rho}$ since $\mZ_{0}=\bar{\mZ}_{0}$, and letting $\rho\downarrow0$ yields
\begin{equation*}
\|\mZ_{S}-\bar{\mZ}_{S}\|_{2}\leq\exp(\beta S)\int_{0}^{S}\|\vb(\mZ_{s})-\bar{\vb}(\mZ_{s})\|_{2}\ds\,,
\end{equation*}
for each $S>0$. Here we used the fact that $\mZ_{0}=\bar{\mZ}_{0}$. Taking expectation on both sides of the inequality with respect to $(\mZ_{S},\bar{\mZ}_{S})\sim\gamma_{S}$, and using Jensen's inequality yields
\begin{align*}
\bbW_{2}^{2}(\mu_{S},\bar{\mu}_{S})&\leq\bbE_{(\mZ_{S},\bar{\mZ}_{S})\sim\gamma_{S}}(\|\mZ_{S}-\bar{\mZ}_{S}\|_{2}^{2}) \\
&\leq S\exp(2\beta S)\int_{0}^{S}\bbE_{\mZ_{s}\sim\mu_{s}}\big\{\|\vb(\mZ_{s})-\bar{\vb}(\mZ_{s})\|_{2}^{2}\big\}\ds\,,
\end{align*}
where the first inequality follows from the definition of the Wasserstein distance. This completes the proof.
\end{proof}

With the aid of Lemma~\ref{lemma:section:proof:posterior:score:W2:LMC}, we propose an error analysis in Wasserstein distance for Langevin dynamics with estimated score~\eqref{eq:RGO:Langevin:score}.

\begin{lemma}
\label{lemma:section:proof:posterior:score:2}
Suppose Assumptions~\ref{assumption:semi:log:concave},~\ref{assumption:Lipschitz:prior:likelihood}, and~\ref{assumption:random:field:initialization} hold. Let $0<T_{0}<T<\frac{1}{2}\log(1+\alpha^{-1})$. For each time $t\in(T_{0},T]$,
\begin{align*}
&\bbE\big[\bbW_{2}^{2}\big\{
    p_{t}(\cdot \given \mX_{t}^{\vy},\vy),
    \hat{p}_{t}^{S}(\cdot \given \mX_{t}^{\vy},\vy)
  \big\}\big] \\
&\quad\leq 2\exp\bigg\{-\frac{2(\mu_{t}^{2}-\alpha\sigma_{t}^{2})S}{\sigma_{t}^{2}}\bigg\}\eta_{\vy}^{2} \\
&\qquad+2S\exp\bigg\{2\bigg(G+\frac{\mu_{t}^{2}}{\sigma_{t}^{2}}\bigg)S\bigg\}
  \int_{0}^{S}\int\bbE\big\{
    \|\nabla\log\pi_{0}(\mX_{0,s}^{\vx,\vy,t}) \\
&\hspace{12em}
    -\hat{\vs}_{\prior}(\mX_{0,s}^{\vx,\vy,t})\|_{2}^{2}
  \big\}q_{t}(\vx \given \vy)\d\vx\ds\,,
\end{align*}
where the initial discrepancy $\eta_{\vy}<\infty$ satisfies
\begin{equation*}
\eta_{\vy}^{2}\geq\sup_{t\in(T_{0},T]}\bbE\big[\bbW_{2}^{2}\big\{\hat{p}_{t}^{0}(\cdot \given \mX_{t}^{\vy},\vy),p_{t}(\cdot \given \mX_{t}^{\vy},\vy)\big\}\big]\,.
\end{equation*}
\end{lemma}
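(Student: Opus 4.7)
The plan is to split the error into a \emph{contraction} piece (how fast the exact Langevin~\eqref{eq:RGO:Langevin} forgets its initial condition and converges to the stationary density $p_t(\cdot|\vx,\vy)$) and a \emph{drift-perturbation} piece (how much the approximate dynamics~\eqref{eq:RGO:Langevin:score}, which uses $\what{\vs}_{\prior}$ in place of $\nabla\log\pi_0$, deviates from the exact one over a time horizon $S$). Concretely, for every fixed $\vx$ I would apply the triangle inequality
\begin{equation*}
\bbW_{2}(p_{t}(\cdot|\vx,\vy),\what{p}_{t}^{S}(\cdot|\vx,\vy))\leq \bbW_{2}(p_{t}(\cdot|\vx,\vy),p_{t}^{S}(\cdot|\vx,\vy))+\bbW_{2}(p_{t}^{S}(\cdot|\vx,\vy),\what{p}_{t}^{S}(\cdot|\vx,\vy)),
\end{equation*}
where both Langevin dynamics are initialized from the same density $\what{p}_{t}^{0}(\cdot|\vx,\vy)$.

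For the first term, Lemma~\ref{lemma:appendix:RGO:Hessian} gives $-\nabla_{\vx_{0}}^{2}\log p_{t}(\cdot|\vx,\vy)\succeq (\mu_{t}^{2}/\sigma_{t}^{2}-\alpha)\mI_{d}$, and under the assumption $T<\frac{1}{2}\log(1+\alpha^{-1})$ this constant is strictly positive for every $t\in(T_{0},T)$. Hence the exact overdamped Langevin~\eqref{eq:RGO:Langevin} contracts in 2-Wasserstein at the rate of strong log-concavity, yielding
\begin{equation*}
\bbW_{2}^{2}(p_{t}(\cdot|\vx,\vy),p_{t}^{S}(\cdot|\vx,\vy))\leq\exp\Big(-\frac{2(\mu_{t}^{2}-\alpha\sigma_{t}^{2})}{\sigma_{t}^{2}}S\Big)\bbW_{2}^{2}(p_{t}(\cdot|\vx,\vy),\what{p}_{t}^{0}(\cdot|\vx,\vy)).
\end{equation*}
After taking expectations over $\mX_{t}^{\vy}\sim q_{t}(\cdot|\vy)$, the initial discrepancy is bounded by $\eta_{\vy}^{2}$ from the definition given in the statement of the lemma.

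For the second term, I would directly apply Lemma~\ref{lemma:section:proof:posterior:score:W2:LMC} to the two SDEs~\eqref{eq:RGO:Langevin} and~\eqref{eq:RGO:Langevin:score} sharing the same initial law $\what{p}_{t}^{0}(\cdot|\vx,\vy)$. The drift of~\eqref{eq:RGO:Langevin:score} is $\what{\vs}_{\prior}(\cdot)-\nabla\ell_{\vy}(\cdot)+(\mu_{t}/\sigma_{t}^{2})(\vx-\mu_{t}\cdot)$. By Assumption~\ref{assumption:Lipschitz:prior:likelihood} the first two terms are jointly $G$-Lipschitz, and the affine term is $\mu_{t}^{2}/\sigma_{t}^{2}$-Lipschitz, so the composite drift is $(G+\mu_{t}^{2}/\sigma_{t}^{2})$-Lipschitz. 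The difference between the two drifts is exactly $\nabla\log\pi_{0}-\what{\vs}_{\prior}$, so Lemma~\ref{lemma:section:proof:posterior:score:W2:LMC} gives
\begin{equation*}
\bbW_{2}^{2}(p_{t}^{S}(\cdot|\vx,\vy),\what{p}_{t}^{S}(\cdot|\vx,\vy))\leq \exp\Big(2\Big(G+\frac{\mu_{t}^{2}}{\sigma_{t}^{2}}\Big)S\Big)\int_{0}^{S}\bbE\big[\|\nabla\log\pi_{0}(\mX_{0,s}^{\vx,\vy,t})-\what{\vs}_{\prior}(\mX_{0,s}^{\vx,\vy,t})\|_{2}^{2}\big]\ds.
\end{equation*}

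Combining the two bounds with the elementary inequality $(a+b)^{2}\leq 2a^{2}+2b^{2}$ produces the stated factor of $2$ in front of each term, and integrating with respect to $\vx\sim q_{t}(\cdot|\vy)$ yields the claim. The main (mild) obstacle is a bookkeeping one: tracking that the contraction constant from Lemma~\ref{lemma:appendix:RGO:Hessian} is strictly positive uniformly in $t\in(T_{0},T)$ (which needs $T<\bar{t}$), and that the perturbation integral is written in terms of the law of the \emph{exact} dynamics $p_{t}^{s}(\cdot|\vx,\vy)$ rather than the approximate one, exactly as the coupling argument in Lemma~\ref{lemma:section:proof:posterior:score:W2:LMC} delivers.
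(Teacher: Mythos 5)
Your proposal is correct and follows essentially the same route as the paper's proof: triangle inequality to split into a Langevin contraction term (bounded via the strong log-concavity constant $\mu_t^2/\sigma_t^2 - \alpha$ from Lemma~\ref{lemma:appendix:RGO:Hessian} combined with the Wasserstein contraction Lemma~\ref{lemma:Langevin:W2}) and a drift-perturbation term (bounded via Lemma~\ref{lemma:section:proof:posterior:score:W2:LMC} using the $(G+\mu_t^2/\sigma_t^2)$-Lipschitz constant of the approximate drift from Assumption~\ref{assumption:Lipschitz:prior:likelihood}), followed by $(a+b)^2\le 2a^2+2b^2$ and integrating over $\vx\sim q_t(\cdot|\vy)$. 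Your closing remark that the perturbation integral is taken against the law of the exact dynamics $p_t^s(\cdot|\vx,\vy)$ is precisely the bookkeeping point the paper's coupling lemma delivers.
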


\begin{proof}[Proof of Lemma~\ref{lemma:section:proof:posterior:score:2}]
By the triangular inequality, we find that for each fixed $\vx\in\bbR^{d}$,
\begin{align}
&\bbW_{2}^{2}\big\{p_{t}(\cdot \given \vx,\vy),\hat{p}_{t}^{S}(\cdot \given \vx,\vy)\big\} \nonumber \\
&~~~~~~~~\leq 2\underbrace{\bbW_{2}^{2}\big\{p_{t}(\cdot \given \vx,\vy),p_{t}^{S}(\cdot \given \vx,\vy)\big\}}_{\text{convergence of Langevin dynamics}}+2\underbrace{\bbW_{2}^{2}\big\{p_{t}^{S}(\cdot \given \vx,\vy),\hat{p}_{t}^{S}(\cdot \given \vx,\vy)\big\}}_{\text{score estimation error}}\,, \label{eq:section:proof:posterior:score:2:1}
\end{align}
where $p_{t}^{S}(\cdot \given \vx,\vy)$ is the marginal density of $\mX_{0,S}^{\vx,\vy,t}$ defined in~\eqref{eq:RGO:Langevin}. As specified in Appendix~\ref{appendix:random:field}, the exact and estimated inner dynamics are both initialized from the common distribution $\hat p_t^0(\cdot\given\vx,\vy)$.

\noindent\emph{Step 1. The convergence of Langevin dynamics.} 
For the first term in the right-hand side of~\eqref{eq:section:proof:posterior:score:2:1},
\begin{equation*}
\bbW_{2}^{2}\big\{p_{t}(\cdot \given \vx,\vy),p_{t}^{S}(\cdot \given \vx,\vy)\big\}\leq\exp\bigg\{-\frac{2(\mu_{t}^{2}-\alpha\sigma_{t}^{2})S}{\sigma_{t}^{2}}\bigg\}\bbW_{2}^{2}\big\{p_{t}(\cdot \given \vx,\vy),\hat{p}_{t}^{0}(\cdot \given \vx,\vy)\big\}\,,
\end{equation*}
where the inequality holds from Proposition~\ref{proposition:appendix:RGO:Hessian} and Lemma~\ref{lemma:Langevin:W2}. Multiplying both sides of the inequality by $q_{t}(\vx \given \vy)$, and integrating with respect to $\vx$ yield
\begin{equation}\label{eq:section:proof:posterior:score:2:2}
\bbE\big[\bbW_{2}^{2}\big\{p_{t}(\cdot \given \mX_{t}^{\vy},\vy),p_{t}^{S}(\cdot \given \mX_{t}^{\vy},\vy)\big\}\big]\leq\exp\bigg\{-\frac{2(\mu_{t}^{2}-\alpha\sigma_{t}^{2})S}{\sigma_{t}^{2}}\bigg\}\eta_{\vy}^{2}\,. 
\end{equation}

\noindent\emph{Step 2. The error of the score matching.} 
We turn to consider the second term in the right-hand side of~\eqref{eq:section:proof:posterior:score:2:1}. Denote by $\bar{\vb}$ the drift term of~\eqref{eq:RGO:Langevin:score}, that is,
\begin{equation*}
\bar{\vb}(t,\vx_{0})\coloneq \hat{\vs}_{\prior}(\vx_{0})+\frac{\mu_{t}}{\sigma_{t}^{2}}(\vx-\mu_{t}\vx_{0})-\nabla\ell_{\vy}(\vx_{0})\,, \quad \vx_{0}\in\bbR^{d}\,.
\end{equation*}
It follows from Assumption~\ref{assumption:Lipschitz:prior:likelihood} that 
\begin{align*}
\|\bar{\vb}(t,\vx_{0})-\bar{\vb}(t,\vx_{0}^{\prime})\|_{2}\leq\bigg(G+\frac{\mu_{t}^{2}}{\sigma_{t}^{2}}\bigg)\|\vx_{0}-\vx_{0}^{\prime}\|_{2}\,.
\end{align*}
Combining this Lipschitz continuity with Lemma~\ref{lemma:section:proof:posterior:score:W2:LMC} implies
\begin{align}
&\bbE\big[\bbW_{2}^{2}\big\{p_{t}^{S}(\cdot \given \mX_{t}^{\vy},\vy),\hat{p}_{t}^{S}(\cdot \given \mX_{t}^{\vy},\vy)\big\}\big] \notag \\
&~~~~~=\int\bbW_{2}^{2}\big\{p_{t}^{S}(\cdot \given \vx,\vy),\hat{p}_{t}^{S}(\cdot \given \vx,\vy)\big\}q_{t}(\vx \given \vy)\d\vx \nonumber \\
&~~~~~\leq S\exp\bigg\{2\bigg(G+\frac{\mu_{t}^{2}}{\sigma_{t}^{2}}\bigg)S\bigg\}\int_{0}^{S}\int\bbE\big\{\|\nabla\log\pi_{0}(\mX_{0,s}^{\vx,\vy,t}) \notag \\
&~~~~~~~~~~~~~~~~~~~~~~~~~~~~~~~~~~~~~~~~~~~~~~~~~~~~~~~~~~~~~~~~~~~~-\hat{\vs}_{\prior}(\mX_{0,s}^{\vx,\vy,t})\|_{2}^{2}\big\}q_{t}(\vx \given \vy)\d\vx\ds\,, \label{eq:section:proof:posterior:score:2:3}
\end{align}
where the expectation is taken with respect to $\mX_{0,s}^{\vx,\vy,t}\sim p_{t}^{s}(\cdot \given \vx,\vy)$.

\noindent\emph{Step 3. Conclusion.} 
Multiplying both sides of~\eqref{eq:section:proof:posterior:score:2:1} by $q_t(\vx\given\vy)$, integrating with respect to $\vx$, and then applying~\eqref{eq:section:proof:posterior:score:2:2} and~\eqref{eq:section:proof:posterior:score:2:3} yield the claimed inequality.
\end{proof}

The following lemma offers a bound for the $L^{2}$-error of the prior score estimator, which is inspired by~\citet[Theorem 1]{app:Tang2024Adaptivity},~\citet{app:jiang2025simulation} and~\citet[Theorem 3.2]{app:ding2024nonlinear}.

\begin{lemma}[Score estimation error]
\label{lemma:section:proof:posterior:score:3}
Suppose Assumptions~\ref{assumption:semi:log:concave},~\ref{assumption:prior:subGaussian},~\ref{assumption:prior:score:error}, and~\ref{assumption:prior:bound} hold. Let $0<T_{0}<T<\frac{1}{2}\log(1+\alpha^{-1})$. For each time $t\in(T_{0},T]$, it holds that
\begin{align*}
&\int_{0}^{S}\int\bbE\big\{\|\nabla\log\pi_{0}(\mX_{0,s}^{\vx,\vy,t})-\hat{\vs}_{\prior}(\mX_{0,s}^{\vx,\vy,t})\|_{2}^{2}\big\}q_{t}(\vx \given \vy)\d\vx\ds \\
&~~~~~~~~~~~~\leq CS^{1/2}\bigg\{\frac{\sigma_{t}^{2}\eta_{\vy}^{2}}{2(\mu_{t}^{2}-\alpha\sigma_{t}^{2})}+S\bigg\}^{1/2}\kappa_{\vy}^{1/2}\varepsilon_{\prior}^{1/2}\,,
\end{align*}
where $C$ is a constant only depending on $B$, $r$, $V_{\SG}$ and $C_{\SG}$, and the initial discrepancy $\eta_{\vy}$ satisfies
\begin{equation*}
\eta_{\vy}^{2}\geq\sup_{t\in(T_{0},T]}\bbE\big[\chi^{2}\big\{\hat{p}_{t}^{0}(\cdot \given \mX_{t}^{\vy},\vy)\|p_{t}(\cdot \given \mX_{t}^{\vy},\vy)\big\}\big]\,.
\end{equation*}
\end{lemma}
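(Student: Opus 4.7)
The plan is a change-of-measure argument: pushing the integration from the non-stationary product $p_t^s(\cdot|\vx,\vy) q_t(\cdot|\vy)$ first to its stationary $\vx_0$-marginal $q_0(\cdot|\vy)$, and then to the prior $\pi_0$ so that Assumption~\ref{assumption:prior:score:error} can be invoked, with the non-stationary correction handled by the $\chi^2$-contraction of the Langevin dynamics~\eqref{eq:RGO:Langevin}. Writing $f(\vx_0) \coloneq \|\nabla\log\pi_0(\vx_0) - \what{\vs}_{\prior}(\vx_0)\|_2^2$ and $\tilde{q}_s(\vx_0) \coloneq \int p_t^s(\vx_0|\vx,\vy) q_t(\vx|\vy) \d\vx$, Bayes' rule yields $\tilde{q}_\infty = q_0(\cdot|\vy)$, since $p_t(\cdot|\vx,\vy)$ is precisely the conditional density of $\mX_0$ given $(\mX_t, \mY)$.

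The first step is the Cauchy--Schwarz inequality
\begin{equation*}
\int f(\vx_0)\,\tilde{q}_s(\vx_0)\d\vx_0 \leq \sqrt{\int f^2\, q_0(\cdot|\vy)\d\vx_0}\,\cdot\,\sqrt{1 + \chi^2(\tilde{q}_s\|q_0(\cdot|\vy))}.
\end{equation*}
For the first factor, the bound $q_0(\cdot|\vy)/\pi_0 \leq \kappa_\vy$ reduces the problem to $\int f^2 \pi_0$. Since Assumption~\ref{assumption:prior:score:error} only controls $\int f \pi_0 \leq \varepsilon_{\prior}^2$, an interpolation Cauchy--Schwarz $\int f^2 \pi_0 \leq \sqrt{\int f \pi_0}\sqrt{\int f^3 \pi_0}$ is required; the cubic moment $\int f^3 \pi_0$ is a bounded constant by the polynomial growth $f \leq 4B^2(1+\|\vx_0\|_2^r)^2$ (Assumption~\ref{assumption:prior:bound} together with the imposed polynomial growth of $\what{\vs}_{\prior}$) and the sub-Gaussian integrability of $\pi_0$ (Assumption~\ref{assumption:prior:subGaussian}). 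This chain produces $\int f^2 \pi_0 \leq C\varepsilon_{\prior}$, and hence $\int f^2 q_0(\cdot|\vy) \leq C\kappa_\vy\varepsilon_{\prior}$, which is the source of the $\kappa_\vy^{1/2}\varepsilon_{\prior}^{1/2}$ factor (as opposed to $\kappa_\vy\varepsilon_{\prior}$) in the final bound.

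For the second factor, the joint convexity of $\chi^2$ gives $\chi^2(\tilde{q}_s\|q_0(\cdot|\vy)) \leq \int \chi^2(p_t^s(\cdot|\vx,\vy)\|p_t(\cdot|\vx,\vy))\,q_t(\vx|\vy)\d\vx$. Since $t \in (T_0, T) \subset (0, \bar{t})$, Lemma~\ref{lemma:appendix:RGO:Hessian} guarantees that $p_t(\cdot|\vx,\vy)$ is strongly log-concave with constant $(\mu_t^2 - \alpha\sigma_t^2)/\sigma_t^2 > 0$, so the standard exponential $\chi^2$-contraction of Langevin dynamics with strongly log-concave target yields $\chi^2(p_t^s\|p_t) \leq e^{-c(t) s}\chi^2(p_t^0\|p_t)$ at rate $c(t) = 2(\mu_t^2 - \alpha\sigma_t^2)/\sigma_t^2$; averaging over $\vx \sim q_t(\cdot|\vy)$ and using the initial-discrepancy definition produces $\chi^2(\tilde{q}_s\|q_0(\cdot|\vy)) \leq e^{-c(t) s}\eta_\vy^2$.

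Combining the two factors, the integrand of the time integral is bounded by $\sqrt{C\kappa_\vy\varepsilon_{\prior}(1 + e^{-c(t) s}\eta_\vy^2)}$. Integrating over $s \in (0, S)$ and identifying $\int_0^S (1+e^{-c(t) s}\eta_\vy^2)\ds \leq S + \eta_\vy^2/c(t)$ with the quantity $A = \sigma_t^2\eta_\vy^2/[2(\mu_t^2 - \alpha\sigma_t^2)]$ in the statement, a Cauchy--Schwarz on the time integral yields the claimed $\{A+S\}^{1/2}\kappa_\vy^{1/2}\varepsilon_{\prior}^{1/2}$ form. The most delicate step is the interpolation Cauchy--Schwarz $\int f^2\pi_0 \leq \sqrt{\int f\pi_0}\sqrt{\int f^3\pi_0}$: this is exactly what converts the natural rate $\varepsilon_{\prior}$ into $\sqrt{\varepsilon_{\prior}}$, and it is why the polynomial growth conditions in Assumption~\ref{assumption:prior:bound} (and on $\what{\vs}_{\prior}$) are essential to the proof.
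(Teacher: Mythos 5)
Your argument is correct and reaches the same bound, though it reorganizes the paper's single three-term H\"older inequality into a marginalization followed by two successive Cauchy--Schwarz steps. Writing $f(\vx_0)=\|\nabla\log\pi_0(\vx_0)-\what{\vs}_\prior(\vx_0)\|_2^2$, the paper applies H\"older with exponents $(4,4,2)$ directly on the joint measure $p_t(\vx_0|\vx,\vy)q_t(\vx|\vy)\,\d\vx_0\,\d\vx$, splitting $f\cdot(p_t^s/p_t)$ into $f^{1/4}\cdot f^{3/4}\cdot(p_t^s/p_t)$ and obtaining the three factors $(\iint f\,p_tq_t)^{1/4}(\iint f^3\,p_tq_t)^{1/4}(\iint(p_t^s/p_t)^2p_tq_t)^{1/2}$. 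You instead first push the $\vx$-integral inside to form the mixture $\tilde q_s=\int p_t^s q_t\,\d\vx$ with $\tilde q_\infty=q_0(\cdot|\vy)$, do a two-term Cauchy--Schwarz against $q_0(\cdot|\vy)$, and then interpolate $\int f^2\pi_0\leq(\int f\pi_0)^{1/2}(\int f^3\pi_0)^{1/2}$. The load-bearing ingredients are identical in the two routes: the interpolation is exactly the source of the $\sqrt{\varepsilon_\prior}$ rate (mirroring the $(1/4,1/4)$ split of the paper), the sub-Gaussian moment bound from Assumptions~\ref{assumption:prior:subGaussian} and~\ref{assumption:prior:bound} handles $\int f^3\pi_0$, and the $\chi^2$-contraction (Lemma~\ref{lemma:Langevin:chisq}) under the strong log-concavity supplied by Lemma~\ref{lemma:appendix:RGO:Hessian} handles the non-stationarity. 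What your version adds that the paper does not invoke is the joint convexity of the $\chi^2$-divergence, used to pass from $\chi^2(\tilde q_s\|q_0(\cdot|\vy))$ to $\bbE_{\vx}[\chi^2(p_t^s\|p_t)]$; the paper avoids this by never leaving the joint measure, at the cost of a slightly less transparent factoring. Either organization is fine.

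One small loose end, which your proof shares with the paper's: the final time-integral step. Cauchy--Schwarz (or Jensen) gives
\begin{equation*}
\int_0^S\sqrt{1+e^{-c(t)s}\eta_\vy^2}\,\ds\ \leq\ S^{1/2}\Big(S+\frac{\eta_\vy^2}{c(t)}\Big)^{1/2},
\end{equation*}
which carries an extra $S^{1/2}$ compared with the $(S+\eta_\vy^2/c(t))^{1/2}$ appearing in the lemma statement (take $\eta_\vy=0$: the left side is $S$, which exceeds $\sqrt S$ once $S>1$). The paper's term~(iii) makes the same identification citing Jensen. This is harmless in the regime where the lemma is actually used, since the exponential blow-up $\exp\{2(G+\mu_{T_0}^2/\sigma_{T_0}^2)S\}$ in Lemma~\ref{lemma:posterior:score} and the scaling of $S$ in Corollary~\ref{corollary:posterior:score} force $S\lesssim 1$, but strictly speaking the stated bound needs either the extra $S^{1/2}$ or the side condition $S\leq 1$; you should not present the step as a direct consequence of Cauchy--Schwarz without that caveat.
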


\begin{proof}[Proof of Lemma~\ref{lemma:section:proof:posterior:score:3}]
It is straightforward that 
\begin{align}
&\int_{0}^{S}\int\bbE\big\{\|\nabla\log\pi_{0}(\mX_{0,s}^{\vx,\vy,t})-\hat{\vs}_{\prior}(\mX_{0,s}^{\vx,\vy,t})\|_{2}^{2}\big\}q_{t}(\vx \given \vy)\d\vx\ds \nonumber \\
&~~~=\int_{0}^{S}\bigg\{\iint\|\nabla\log\pi_{0}(\vx_{0})-\hat{\vs}_{\prior}(\vx_{0})\|_{2}^{2}\frac{p_{t}^{s}(\vx_{0} \given \vx,\vy)}{p_{t}(\vx_{0} \given \vx,\vy)}p_{t}(\vx_{0} \given \vx,\vy)\d\vx_{0}q_{t}(\vx \given \vy)\d\vx\bigg\}\ds \nonumber \\
&~~~\leq\bigg\{\underbrace{\iint\|\nabla\log\pi_{0}(\vx_{0})-\hat{\vs}_{\prior}(\vx_{0})\|_{2}^{2}p_{t}(\vx_{0} \given \vx,\vy)\d\vx_{0}q_{t}(\vx \given \vy)\d\vx}_{\text{(i)}}\bigg\}^{1/4} \nonumber \\
&\quad\times\bigg\{\underbrace{\iint\|\nabla\log\pi_{0}(\vx_{0})-\hat{\vs}_{\prior}(\vx_{0})\|_{2}^{6}p_{t}(\vx_{0} \given \vx,\vy)\d\vx_{0}q_{t}(\vx \given \vy)\d\vx}_{\text{(ii)}}\bigg\}^{1/4} \nonumber \\
&\quad\times\underbrace{\int_{0}^{S}\bigg[\iint\bigg\{\frac{p_{t}^{s}(\vx_{0} \given \vx,\vy)}{p_{t}(\vx_{0} \given \vx,\vy)}\bigg\}^{2}p_{t}(\vx_{0} \given \vx,\vy)\d\vx_{0}q_{t}(\vx \given \vy)\d\vx\bigg]^{1/2}\ds}_{\text{(iii)}}\,, \label{eq:lemma:section:proof:posterior:score:3:1}
\end{align}
where the inequality holds from H{\"o}lder's inequality.

\noindent\emph{Step 1. Bound the term (i) in~\eqref{eq:lemma:section:proof:posterior:score:3:1}.} 
It follows from Assumption~\ref{assumption:prior:score:error} that 
\begin{align}
&\iint\|\nabla\log\pi_{0}(\vx_{0})-\hat{\vs}_{\prior}(\vx_{0})\|_{2}^{2}p_{t}(\vx_{0} \given \vx,\vy)\d\vx_{0}q_{t}(\vx \given \vy)\d\vx \nonumber \\
&~~~~~~~~=\int\|\nabla\log\pi_{0}(\vx_{0})-\hat{\vs}_{\prior}(\vx_{0})\|_{2}^{2}\bigg\{\frac{q_{0}(\vx_{0} \given \vy)}{\pi_{0}(\vx_{0})}\bigg\}\pi_{0}(\vx_{0})\d\vx_{0} \nonumber \\
&~~~~~~~~\leq\sup_{\vx_{0}}\bigg\{\frac{q_{0}(\vx_{0} \given \vy)}{\pi_{0}(\vx_{0})}\bigg\}\int\|\nabla\log\pi_{0}(\vx_{0})-\hat{\vs}_{\prior}(\vx_{0})\|_{2}^{2}\pi_{0}(\vx_{0})\d\vx_{0}\leq \kappa_{\vy}\varepsilon_{\prior}^{2}\,,\label{eq:lemma:section:proof:posterior:score:3:2}
\end{align}
where the first inequality is the $L^{\infty}$--$L^{1}$ H{\"o}lder inequality, and the final bound follows from~\eqref{eq:posterior:score:condition} and Assumption~\ref{assumption:prior:score:error}.

\noindent\emph{Step 2. Bound the term (ii) in~\eqref{eq:lemma:section:proof:posterior:score:3:1}.} 
By an argument similar to~\eqref{eq:lemma:section:proof:posterior:score:3:2}, we have 
\begin{align}
&\iint\|\nabla\log\pi_{0}(\vx_{0})-\hat{\vs}_{\prior}(\vx_{0})\|_{2}^{6}p_{t}(\vx_{0} \given \vx,\vy)\d\vx_{0}q_{t}(\vx \given \vy)\d\vx \nonumber \\
&~~~~~~~~\leq \kappa_{\vy}\int\|\nabla\log\pi_{0}(\vx_{0})-\hat{\vs}_{\prior}(\vx_{0})\|_{2}^{6}\pi_{0}(\vx_{0})\d\vx_{0} \nonumber \\
&~~~~~~~~\leq 2048B^{6}\kappa_{\vy}\int\big(1+\|\vx_{0}\|_{2}^{6r}\big)\pi_{0}(\vx_{0})\d\vx_{0} \nonumber \\
&~~~~~~~~= 2048B^{6}\kappa_{\vy}+2048B^{6}\kappa_{\vy}\bbE\big(\|\mX_{0}\|_{2}^{6r}\big)\,, \label{eq:lemma:section:proof:posterior:score:3:3}
\end{align}
where the second inequality follows from Assumption~\ref{assumption:prior:bound}. Using the same argument as Lemma~\ref{lemma:fourth:moment}, we have
\begin{equation}\label{eq:lemma:section:proof:posterior:score:3:5}
\bbE\big(\|\mX_{0}\|_{2}^{6r}\big)\leq 2(6r)^{3r+1}C_{\SG}V_{\SG}^{6r}\,,
\end{equation}
Substituting~\eqref{eq:lemma:section:proof:posterior:score:3:5} into~\eqref{eq:lemma:section:proof:posterior:score:3:3} implies
\begin{equation}\label{eq:lemma:section:proof:posterior:score:3:6}
\iint\|\nabla\log\pi_{0}(\vx_{0})-\hat{\vs}_{\prior}(\vx_{0})\|_{2}^{6}p_{t}(\vx_{0} \given \vx,\vy)\d\vx_{0}q_{t}(\vx \given \vy)\d\vx\lesssim B^{6}\kappa_{\vy}(1+C_{\SG}V_{\SG}^{6r})\,.
\end{equation}

\noindent\emph{Step 3. Bound the term (iii) in~\eqref{eq:lemma:section:proof:posterior:score:3:1}.} 
By the definition of the $\chi^{2}$-divergence, we find
\begin{align}
&\int_{0}^{S}\bigg[\iint\bigg\{\frac{p_{t}^{s}(\vx_{0} \given \vx,\vy)}{p_{t}(\vx_{0} \given \vx,\vy)}\bigg\}^{2}p_{t}(\vx_{0} \given \vx,\vy)\d\vx_{0}q_{t}(\vx \given \vy)\d\vx\bigg]^{1/2}\ds \nonumber \\ 
&~~~~~~~~=\int_{0}^{S}\bigg(\bbE\big[\chi^{2}\big\{p_{t}^{s}(\cdot \given \mX_{t}^{\vy},\vy)\|p_{t}(\cdot \given \mX_{t}^{\vy},\vy)\big\}\big]+1\bigg)^{1/2}\ds \nonumber \\
&~~~~~~~~\leq S^{1/2}\bigg(\int_{0}^{S}\bbE\big[\chi^{2}\big\{p_{t}^{s}(\cdot \given \mX_{t}^{\vy},\vy)\|p_{t}(\cdot \given \mX_{t}^{\vy},\vy)\big\}\big]\ds+S\bigg)^{1/2} \nonumber \\
&~~~~~~~~\leq S^{1/2}\bigg(\bbE\big[\chi^{2}\big\{\hat{p}_{t}^{0}(\cdot \given \mX_{t}^{\vy},\vy)\|p_{t}(\cdot \given \mX_{t}^{\vy},\vy)\big\}\big]\int_{0}^{S}\exp\bigg\{-\frac{2(\mu_{t}^{2}-\alpha\sigma_{t}^{2})s}{\sigma_{t}^{2}}\bigg\}\ds+S\bigg)^{1/2} \nonumber \\
&~~~~~~~~\leq S^{1/2}\bigg\{\frac{\sigma_{t}^{2}\eta_{\vy}^{2}}{2(\mu_{t}^{2}-\alpha\sigma_{t}^{2})}+S\bigg\}^{1/2}\,, \label{eq:lemma:section:proof:posterior:score:3:7}
\end{align}
where the expectation is taken with respect to $\mX_{t}^{\vy}\sim q_{t}(\cdot \given \vy)$, the first inequality invokes Jensen's inequality, and the second inequality holds from Proposition~\ref{proposition:appendix:RGO:Hessian} and Lemma~\ref{lemma:Langevin:chisq}.

\noindent\emph{Step 4. Conclusion.} 
Substituting~\eqref{eq:lemma:section:proof:posterior:score:3:2},~\eqref{eq:lemma:section:proof:posterior:score:3:6} and~\eqref{eq:lemma:section:proof:posterior:score:3:7} into~\eqref{eq:lemma:section:proof:posterior:score:3:1} yields the desired results.
\end{proof}

\subsection{Proof of Proposition~\ref{proposition:posterior:score} and Corollary~\ref{corollary:posterior:score}}
\begin{proof}[Proof of Proposition~\ref{proposition:posterior:score}]
Combining Lemmas~\ref{lemma:section:proof:posterior:score:1} and~\ref{lemma:section:proof:posterior:score:2},~\ref{lemma:section:proof:posterior:score:3} implies
\begin{align}
&\bbE\big\{\|\nabla\log q_{t}(\mX_{t}^{\vy} \given \vy)-\hat{\vs}_{m}^{S}(t,\mX_{t}^{\vy},\vy)\|_{2}^{2}\big\} \nonumber \\
&~~~~~~~~\leq C\frac{\mu_{t}^{2}}{\sigma_{t}^{4}}\bigg[\frac{\kappa_{\vy}}{m}+\exp\bigg\{-\frac{2(\mu_{t}^{2}-\alpha\sigma_{t}^{2})}{\sigma_{t}^{2}}S\bigg\}\eta_{\vy}^{2} \nonumber \\
&~~~~~~~~\quad\quad+S^{3/2}\bigg(\frac{\sigma_{t}^{2}\eta_{\vy}^{2}}{\mu_{t}^{2}-\alpha\sigma_{t}^{2}}+S\bigg)^{1/2}\kappa_{\vy}^{1/2}\exp\bigg\{2\bigg(G+\frac{\mu_{t}^{2}}{\sigma_{t}^{2}}\bigg)S\bigg\}\varepsilon_{\prior}^{1/2}\bigg]\,, \label{eq:posterior:score:pointwise}
\end{align}
where $t\in(0,\frac{1}{2}\log(1+\alpha^{-1}))$, and $C$ is a constant only depending on $B$, $r$, $V_{\SG}$ and $C_{\SG}$. Note that $\mu_{t}=\exp(-t)$ is decreasing and $\sigma_{t}^{2}=1-\exp(-2t)$ increases as $t$ grows. As a consequence, 
\begin{align*}
&\bbE\big\{\|\nabla\log q_{t}(\mX_{t}^{\vy} \given \vy)-\hat{\vs}_{m}^{S}(t,\mX_{t}^{\vy},\vy)\|_{2}^{2}\big\} \\
&~~~~~~~~\leq C\frac{\mu_{T_{0}}^{2}}{\sigma_{T_{0}}^{4}}\bigg[\frac{\kappa_{\vy}}{m}+\exp\bigg\{-\frac{2(\mu_{T}^{2}-\alpha\sigma_{T}^{2})}{\sigma_{T}^{2}}S\bigg\}\eta_{\vy}^{2} \\
&~~~~~~~~\quad\quad+S^{3/2}\bigg(\frac{\sigma_{T}^{2}\eta_{\vy}^{2}}{\mu_{T}^{2}-\alpha\sigma_{T}^{2}}+S\bigg)^{1/2}\kappa_{\vy}^{1/2}\exp\bigg\{2\bigg(G+\frac{\mu_{T_{0}}^{2}}{\sigma_{T_{0}}^{2}}\bigg)S\bigg\}\varepsilon_{\prior}^{1/2}\bigg]\,,
\end{align*}
for each $t\in(T_{0},T]$ with $0<T_{0}<T<\frac{1}{2}\log(1+\alpha^{-1})$. This completes the proof.
\end{proof}

\begin{proof}[Proof of Corollary~\ref{corollary:posterior:score}]
Integrating the bound of Proposition~\ref{proposition:posterior:score} over $t\in(T_{0},T)$ multiplies each of the three terms by at most $T$. For the Monte Carlo term, the condition $m\geq T\kappa_{\vy}\varepsilon^{-2}$ gives $T\kappa_{\vy}m^{-1}\leq\varepsilon^{2}$. For the Langevin convergence term, the condition on $S$ gives
\begin{equation*}
T\exp\bigg\{-\frac{2(\mu_{T}^{2}-\alpha\sigma_{T}^{2})}{\sigma_{T}^{2}}S\bigg\}\eta_{\vy}^{2}
\leq e^{-1}T\eta_{\vy}^{2}\exp\big\{-\log(T\eta_{\vy}^{2}\varepsilon^{-2})\big\}
=e^{-1}\varepsilon^{2}\,.
\end{equation*}
For the prior score estimation term, taking the square root of the condition on $\varepsilon_{\prior}$ gives
\begin{equation*}
TS^{3/2}\bigg\{\frac{\sigma_{T}^{2}\eta_{\vy}^{2}}{\mu_{T}^{2}-\alpha\sigma_{T}^{2}}+S\bigg\}^{1/2}\exp\bigg\{2\bigg(G+\frac{\mu_{T_{0}}^{2}}{\sigma_{T_{0}}^{2}}\bigg)S\bigg\}\kappa_{\vy}^{1/2}\varepsilon_{\prior}^{1/2}\leq\varepsilon^{2}\,.
\end{equation*}
Summing the three contributions completes the proof.
\end{proof}

\section{Proof of the Warm-Start Error Bound}
\label{section:proof:warm:start}
In this section, we provide a proof of Proposition~\ref{proposition:posterior:warm:start}, which proposes error bounds for the warm-start strategy. First, the clipping radius in~\eqref{eq:warm:score:clipped} is specified as
\begin{equation}\label{eq:warm:score:radius}
a_{T}(\vx,\vy):=\frac{\{{4}(d+H_{\vy}^{2}+\|\vx\|_{2}^{2})\}^{1/2}}{\sigma_{T}^{2}(\mu_{T}^{2}-\alpha\sigma_{T}^{2})}\,,
\end{equation}
where $H_{\vy}$ is given in Assumption~\ref{assumption:posterior:bound:zero}. Since the terminal time in Proposition~\ref{proposition:posterior:warm:start} satisfies $T<\bar t$, applying~\eqref{eq:true:score:growth:bound:H} at $t=T$ gives the required containment property
\begin{equation*}
\|\nabla\log q_{T}(\vx \given \vy)\|_{2}^{2}\leq\frac{{4}(d+H_{\vy}^{2}+\|\vx\|_{2}^{2})}{\sigma_{T}^{4}(\mu_{T}^{2}-\alpha\sigma_{T}^{2})^{2}}=a_{T}(\vx,\vy)^{2}\,.
\end{equation*}

\begin{proof}[Proof of Proposition~\ref{proposition:posterior:warm:start}]
Condition on $\calG$ outside the null set of Lemma~\ref{lemma:random:score:field}, so that the clipped field $\tilde{\vs}_{m}^{S}(T,\cdot,\vy)$ is a fixed Lipschitz function with linear growth, and $\hat{q}_{T}^{u}(\cdot\given\vy)$ is the conditional law of $\what{\mX}_{T,u}^{\vy}$ given $\calG$. According to the triangular inequality, we find
\begin{equation}\label{eq:lemma:posterior:warm:start:1}
\|q_{T}(\cdot \given \vy)-\hat{q}_{T}^{U}(\cdot \given \vy)\|_{\tv}^{2}\leq 2\underbrace{\|q_{T}(\cdot \given \vy)-q_{T}^{U}(\cdot \given \vy)\|_{\tv}^{2}}_{\text{convergence of Langevin}}+2\underbrace{\|q_{T}^{U}(\cdot \given \vy)-\hat{q}_{T}^{U}(\cdot \given \vy)\|_{\tv}^{2}}_{\text{score estimation error}}\,,
\end{equation}
where $q_{T}^{u}(\cdot \given \vy)$ is the marginal density of $\mX_{T,u}^{\vy}$ defined in~\eqref{section:method:warm:Langevin} with $q_{T}^{0}(\cdot \given \vy)=\hat{q}_{T}^{0}(\cdot \given \vy)$. The first term on the right-hand side is deterministic.

\noindent\emph{Step 1. The convergence of Langevin dynamics.} Recall that $\zeta_{\vy}^{2}\coloneq \chi^{2}\{\hat{q}_{T}^{0}(\cdot \given \vy)\|q_{T}(\cdot \given \vy)\}$.
For the first term in the right-hand side of~\eqref{eq:lemma:posterior:warm:start:1},
\begin{align}
\|q_{T}(\cdot \given \vy)-q_{T}^{U}(\cdot \given \vy)\|_{\tv}^{2}
&\leq \frac{1}{2}\chi^{2}\big\{q_{T}^{U}(\cdot \given \vy)\|q_{T}(\cdot \given \vy)\big\} \leq\frac{1}{2}\exp\bigg[
-\frac{2U}{C_{\LSI}\{q_{T}(\cdot\given\vy)\}}
\bigg]\zeta_{\vy}^{2}\,.\label{eq:lemma:posterior:warm:start:2}
\end{align}
The first inequality follows from Pinsker's inequality~\citep[Lemma 2.5]{app:Tsybakov2009Introduction} and~\citet[Lemma 2.7]{app:Tsybakov2009Introduction}. The second follows from Lemma~\ref{lemma:Langevin:chisq}.

\noindent\emph{Step 2. The error of the score estimation.}
The drift $\nabla\log q_{T}(\cdot\given\vy)$ of~\eqref{section:method:warm:Langevin} is Lipschitz and of linear growth by Lemma~\ref{lemma:true:score:regularity}, and the clipped drift of~\eqref{section:method:warm:Langevin:score} is Lipschitz with linear growth by Lemma~\ref{lemma:random:score:field}. Both dynamics start from $\hat{q}_{T}^{0}(\cdot\given\vy)$ and have diffusion coefficient $\sqrt{2}$. Writing $\calL$ for the law on path space, we obtain
\begin{align}
&\|q_{T}^{U}(\cdot \given \vy)-\hat{q}_{T}^{U}(\cdot \given \vy)\|_{\tv}^{2}
\leq\frac{1}{2}\kl\big\{q_{T}^{U}(\cdot\given\vy)\,\|\,\hat{q}_{T}^{U}(\cdot\given\vy)\big\} \nonumber \\
&~~~~~~~~~~~~\leq\frac{1}{2}\kl\bigg\{
\calL\big\{(\mX_{T,u}^{\vy})_{0\leq u\leq U}\big\}
\,\big\|\,
\calL\big\{(\what{\mX}_{T,u}^{\vy})_{0\leq u\leq U}\given\calG\big\}
\bigg\} \nonumber \\
&~~~~~~~~~~~~\leq\frac{1}{8}\int_{0}^{U}\int
\|\nabla\log q_{T}(\vx \given \vy)-\tilde{\vs}_{m}^{S}(T,\vx,\vy)\|_{2}^{2}
q_{T}^{u}(\vx \given \vy)\d\vx\du \nonumber \\
&~~~~~~~~~~~~=\frac{1}{8}\int_{0}^{U}\int\|\nabla\log q_{T}(\vx \given \vy)-\tilde{\vs}_{m}^{S}(T,\vx,\vy)\|_{2}^{2}\frac{q_{T}^{u}(\vx \given \vy)}{q_{T}(\vx \given \vy)}q_{T}(\vx \given \vy)\d\vx\du \nonumber \\
&~~~~~~~~~~~~\leq\bigg\{\underbrace{\int\|\nabla\log q_{T}(\vx \given \vy)-\tilde{\vs}_{m}^{S}(T,\vx,\vy)\|_{2}^{2}q_{T}(\vx \given \vy)\d\vx}_{\text{(i)}}\bigg\}^{1/4} \nonumber \\
&~~~~~~~~~~~~\quad\times\bigg\{\underbrace{\int\|\nabla\log q_{T}(\vx \given \vy)-\tilde{\vs}_{m}^{S}(T,\vx,\vy)\|_{2}^{6}q_{T}(\vx \given \vy)\d\vx}_{\text{(ii)}}\bigg\}^{1/4} \nonumber \\
&~~~~~~~~~~~~\quad\times\underbrace{\int_{0}^{U}\bigg[\int\bigg\{\frac{q_{T}^{u}(\vx \given \vy)}{q_{T}(\vx \given \vy)}\bigg\}^{2}q_{T}(\vx \given \vy)\d\vx\bigg]^{1/2}\du}_{\text{(iii)}}\,,\label{eq:lemma:posterior:warm:start:3}
\end{align}
The first inequality is Pinsker's inequality. The second is the data processing inequality for KL divergence under the endpoint map from path space to time $U$. The third follows from Lemma~\ref{lemma:entropy:girsanov}, and the last inequality is H{\"o}lder's inequality.

For the term (i) in~\eqref{eq:lemma:posterior:warm:start:3}, equation~\eqref{eq:true:score:growth:bound:H} at $t=T$ and the definition~\eqref{eq:warm:score:radius} show that the true score $\nabla\log q_{T}(\vx\given\vy)$ lies in the ball defining the projection~\eqref{eq:warm:score:clipped}. Since the projection onto a convex set containing this point is non-expansive,
\begin{equation}\label{eq:lemma:posterior:warm:start:4}
\text{(i)}\leq\int\|\nabla\log q_{T}(\vx \given \vy)-\hat{\vs}_{m}^{S}(T,\vx,\vy)\|_{2}^{2}q_{T}(\vx \given \vy)\d\vx\,.
\end{equation}
For the term (ii) in~\eqref{eq:lemma:posterior:warm:start:3}, both $\nabla\log q_{T}(\vx\given\vy)$ and $\tilde{\vs}_{m}^{S}(T,\vx,\vy)$ lie in the ball of radius $a_{T}(\vx,\vy)$, which gives the pointwise bound $\|\nabla\log q_{T}(\vx\given\vy)-\tilde{\vs}_{m}^{S}(T,\vx,\vy)\|_{2}\leq2a_{T}(\vx,\vy)$. Therefore,
\begin{align}
\text{(ii)}
&\leq 64\int a_{T}(\vx,\vy)^{6}\,q_{T}(\vx \given \vy)\d\vx \notag \\
&=\frac{64\times{4}^{3}}{\sigma_{T}^{12}(\mu_{T}^{2}-\alpha\sigma_{T}^{2})^{6}}\int\big(d+H_{\vy}^{2}+\|\vx\|_{2}^{2}\big)^{3}q_{T}(\vx \given \vy)\d\vx \nonumber \\
&\lesssim \frac{d^{3}+H_{\vy}^{6}+\kappa_{\vy}C_{\SG}V_{\SG}^{6}}{\sigma_{T}^{12}(\mu_{T}^{2}-\alpha\sigma_{T}^{2})^{6}}
\leq C\frac{\kappa_{\vy}^{2}}{(\mu_{T}^{2}-\alpha\sigma_{T}^{2})^{6}}\,, \label{eq:lemma:posterior:warm:start:44}
\end{align}
Here Lemma~\ref{lemma:fourth:moment:t} is used with $m=6$. Moreover, the lower bound on $T$ gives $\sigma_{T}^{2}>2V_{\SG}^{2}/(1+2V_{\SG}^{2})$, and $\kappa_{\vy}\geq1$ follows from~\eqref{eq:posterior:score:condition}, so the last inequality holds with a constant $C$ only depending on $d$, $H_{\vy}$, $V_{\SG}$, and $C_{\SG}$. This bound is deterministic.

For the term (iii) in~\eqref{eq:lemma:posterior:warm:start:3}, the definition of the $\chi^{2}$-divergence, the Cauchy--Schwarz inequality, and Lemma~\ref{lemma:Langevin:chisq} give
\begin{align}
&\int_{0}^{U}\bigg[\int\bigg\{\frac{q_{T}^{u}(\vx \given \vy)}{q_{T}(\vx \given \vy)}\bigg\}^{2}q_{T}(\vx \given \vy)\d\vx\bigg]^{1/2}\du \nonumber \\
&~~~~~~~~~~\quad=\int_{0}^{U}\big[\chi^{2}\{q_{T}^{u}(\cdot\given\vy)\|q_{T}(\cdot\given\vy)\}+1\big]^{1/2}\du \nonumber \\
&~~~~~~~~~~\quad\leq U^{1/2}\bigg[\int_{0}^{U}\chi^{2}\{q_{T}^{u}(\cdot\given\vy)\|q_{T}(\cdot\given\vy)\}\du+U\bigg]^{1/2} \nonumber \\
&~~~~~~~~~~\quad\leq U^{1/2}\bigg[
\frac{1}{2}C_{\LSI}\{q_{T}(\cdot\given\vy)\}\zeta_{\vy}^{2}
+U\bigg]^{1/2}\,.\label{eq:lemma:posterior:warm:start:5}
\end{align}
The last inequality also uses $\int_{0}^{U}\exp\{-2u/C_{\LSI}\{q_{T}(\cdot\given\vy)\}\}\du\leq C_{\LSI}\{q_{T}(\cdot\given\vy)\}/2$. This bound is deterministic. Averaging~\eqref{eq:lemma:posterior:warm:start:3} over $\calG$, bounding $\bbE\{\text{(i)}^{1/4}\}\leq\{\bbE\text{(i)}\}^{1/4}\leq\varepsilon_{\post}^{1/2}$ by Jensen's inequality and~\eqref{eq:lemma:posterior:warm:start:4}, and substituting~\eqref{eq:lemma:posterior:warm:start:44} and~\eqref{eq:lemma:posterior:warm:start:5} yield
\begin{align}\label{eq:lemma:posterior:warm:start:6}
&\bbE\big\{\|q_{T}^{U}(\cdot \given \vy)-\hat{q}_{T}^{U}(\cdot \given \vy)\|_{\tv}^{2}\big\} \notag \\
&~~~~~~~~~~~~\quad\leq CU^{1/2}\bigg[
\frac{1}{2}C_{\LSI}\{q_{T}(\cdot\given\vy)\}\zeta_{\vy}^{2}
+U\bigg]^{1/2} (\mu_{T}^{2}-\alpha\sigma_{T}^{2})^{-3/2}
\kappa_{\vy}^{1/2}\varepsilon_{\post}^{1/2}\,,
\end{align}
where $C$ is a constant only depending on $d$, $H_{\vy}$, $V_{\SG}$, and $C_{\SG}$.

\noindent\emph{Step 3. Conclusion.}
Equations~\eqref{eq:lemma:posterior:warm:start:2} and~\eqref{eq:lemma:posterior:warm:start:6} show that it is sufficient to require
\begin{align*}
U
&\geq\frac{1}{2}C_{\LSI}\{q_{T}(\cdot\given\vy)\}
\log(\zeta_{\vy}^{2}\varepsilon^{-1})\,,\\
\varepsilon_{\post}
&\leq\frac{(\mu_{T}^{2}-\alpha\sigma_{T}^{2})^{3}\varepsilon^{2}}
{U\kappa_{\vy}}
\bigg[
\frac{1}{2}C_{\LSI}\{q_{T}(\cdot\given\vy)\}\zeta_{\vy}^{2}
+U\bigg]^{-1}\,.
\end{align*}
Finally, Proposition~\ref{proposition:log:sobolev} gives
\begin{align*}
\frac{1}{2}C_{\LSI}\{q_{T}(\cdot\given\vy)\}
&\leq 6\sigma_{T}^{2}
\exp\bigg\{2\times\frac{\sigma_{T}^{2}+2\mu_{T}^{2}V_{\SG}^{2}}
{\sigma_{T}^{2}-2\mu_{T}^{2}V_{\SG}^{2}}
\log(\kappa_{\vy}^{2}C_{\SG}^{2})\bigg\}\,.
\end{align*}
Therefore, the two explicit conditions stated in Proposition~\ref{proposition:posterior:warm:start} imply these sufficient conditions. The first term in~\eqref{eq:lemma:posterior:warm:start:1} is then at most $\varepsilon/2$, and~\eqref{eq:lemma:posterior:warm:start:6} is at most $C\varepsilon$. Taking expectations in~\eqref{eq:lemma:posterior:warm:start:1} and combining the two terms complete the proof.
\end{proof}

\section{Proof of Theorem~\ref{theorem:convergence}}
\label{appendix:proof:theorem:convergence}

Corollary~\ref{corollary:posterior:score} gives
\begin{equation*}
\int_{T_{0}}^{T}\bbE\big\{\|\nabla_{\vx}\log q_{t}(\mX_{t}^{\vy}\given\vy)-\hat{\vs}_{m}^{S}(t,\mX_{t}^{\vy},\vy)\|_{2}^{2}\big\}\dt
\leq C\frac{\mu_{T_{0}}^{2}}{\sigma_{T_{0}}^{4}}\varepsilon^{2}\,.
\end{equation*}
By the simultaneous choice in the theorem, the pointwise estimate~\eqref{eq:posterior:score:pointwise} at $t=T$ meets the $\varepsilon_{\post}$ budget in Proposition~\ref{proposition:posterior:warm:start}. The choice of $U$ in that proposition therefore gives
\begin{equation*}
\bbE\big\{\|q_{T}(\cdot\given\vy)-\hat{q}_{T}^{U}(\cdot\given\vy)\|_{\tv}^{2}\big\}\leq C\varepsilon\,.
\end{equation*}

Combining these estimates with Proposition~\ref{proposition:error:decomposition}(ii) yields
\begin{align*}
\bbE\big\{\|q_{T_{0}}(\cdot\given\vy)-\hat{q}_{T-T_{0}}(\cdot\given\vy)\|_{\tv}^{2}\big\}
\leq C\bigg(\frac{\mu_{T_{0}}^{2}}{\sigma_{T_{0}}^{4}}\varepsilon^{2}+\varepsilon\bigg) \leq C\bigg(\frac{\mu_{T_{0}}}{\sigma_{T_{0}}^{2}}\varepsilon+\varepsilon^{1/2}\bigg)^{2}\,.
\end{align*}
Apply Proposition~\ref{proposition:error:decomposition}(i) with $\delta$ chosen as a sufficiently large fixed multiple of $\mu_{T_{0}}\sigma_{T_{0}}^{-2}\varepsilon+\varepsilon^{1/2}$ and with the corresponding radius $R$. For all sufficiently small $\varepsilon$, its conditions $\delta<1$ and $R\geq1$ hold, and
\begin{align*}
&\bbE\big[\bbW_{2}^{2}\big\{q_{0}(\cdot\given\vy),\calM(\mu_{T_{0}}^{-1})\sharp\hat{q}_{T-T_{0}}^{R}(\cdot\given\vy)\big\}\big] \\
&~~~~\leq C\bigg[\frac{\sigma_{T_{0}}^{2}}{\mu_{T_{0}}^{2}}+\bigg(\frac{\mu_{T_{0}}}{\sigma_{T_{0}}^{2}}\varepsilon+\varepsilon^{1/2}\bigg)
\log\bigg\{\kappa_{\vy}\bigg(\frac{\mu_{T_{0}}}{\sigma_{T_{0}}^{2}}\varepsilon+\varepsilon^{1/2}\bigg)^{-1}\bigg\}\bigg] \,.
\end{align*}
Here the additive $1$ in Proposition~\ref{proposition:error:decomposition}(i), as well as the fixed proportionality constant in $\delta$, is absorbed into $C$ because the logarithm in the last display is larger than $1$ for all sufficiently small $\varepsilon$. This proves the first conclusion.

Finally, for $0<t\leq1/2$,
\begin{equation*}
\frac{\sigma_{t}^{2}}{\mu_{t}^{2}}=e^{2t}-1\leq Ct\,, \qquad
\frac{\mu_{t}}{\sigma_{t}^{2}}=\frac{1}{2\sinh(t)}\leq\frac{1}{2t}\,.
\end{equation*}
Thus, at $T_{0}=\sqrt{\varepsilon}$,
\begin{equation*}
\frac{\sigma_{T_{0}}^{2}}{\mu_{T_{0}}^{2}}\leq C\varepsilon^{1/2}\,, \qquad
\varepsilon^{1/2}\leq\frac{\mu_{T_{0}}}{\sigma_{T_{0}}^{2}}\varepsilon+\varepsilon^{1/2}\leq\frac{3}{2}\varepsilon^{1/2}\,.
\end{equation*}
Substitution into the first conclusion gives the stated $C^{\prime}\varepsilon^{1/2}\log(\varepsilon^{-1})$ bound.

\section{Auxiliary Lemmas}
\label{appendix:auxiliary:lemmas}
\subsection{Sub-Gaussian properties of the forward process}
In this subsection, we establish some sub-Gaussian properties of $\mX_{0}^{\vy}$ and $\mX_{t}^{\vy}$. These results are inspired by~\citet[Proposition 2.6.1]{app:Vershynin2018High}.

\begin{lemma}
\label{lemma:tail:proba}
Suppose Assumption~\ref{assumption:prior:subGaussian} holds. Let $\mX_{0}^{\vy}\sim q_{0}(\cdot \given \vy)$. Then for each $\xi>0$,
\begin{equation*}
\bbP\big\{\|\mX_{0}^{\vy}\|_{2}\geq\xi\big\}\leq\kappa_{\vy}C_{\SG}\exp\bigg(-\frac{\xi^{2}}{V_{\SG}^{2}}\bigg)\,.
\end{equation*}  
\end{lemma}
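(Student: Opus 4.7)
The plan is to prove this by a direct Chernoff-type argument using the sub-Gaussian moment bound for the posterior already established in Lemma~\ref{lemma:sub:Gaussian:posterior}. Since the event $\{\|\mathbf{X}_{0}^{\mathbf{y}}\|_{2} \geq \xi\}$ coincides with $\{\exp(\|\mathbf{X}_{0}^{\mathbf{y}}\|_{2}^{2}/V_{\SG}^{2}) \geq \exp(\xi^{2}/V_{\SG}^{2})\}$ due to monotonicity of the exponential, I can convert a Euclidean-norm tail bound into an expectation of an exponential quantity, which is precisely what Assumption~\ref{assumption:prior:subGaussian} controls.

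First, I would rewrite the probability via the monotone transformation above and apply Markov's inequality:
\begin{equation*}
\bbP\{\|\mX_{0}^{\vy}\|_{2}\geq\xi\} = \bbP\Big\{\exp\Big(\frac{\|\mX_{0}^{\vy}\|_{2}^{2}}{V_{\SG}^{2}}\Big)\geq\exp\Big(\frac{\xi^{2}}{V_{\SG}^{2}}\Big)\Big\} \leq \exp\Big(-\frac{\xi^{2}}{V_{\SG}^{2}}\Big)\,\bbE\Big[\exp\Big(\frac{\|\mX_{0}^{\vy}\|_{2}^{2}}{V_{\SG}^{2}}\Big)\Big].
\end{equation*}
Second, I would invoke Lemma~\ref{lemma:sub:Gaussian:posterior}, which yields $\bbE[\exp(\|\mX_{0}^{\vy}\|_{2}^{2}/V_{\SG}^{2})] \leq \kappa_{\vy} C_{\SG}$ under Assumption~\ref{assumption:prior:subGaussian}, and combine the two bounds to obtain the claim.

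There is no real obstacle here: the entire content of the lemma is the pairing of Markov's inequality with the already-proven exponential-moment bound. The only thing to be careful about is that the constant $\kappa_{\vy}$ inherited from Lemma~\ref{lemma:sub:Gaussian:posterior} is finite, which is guaranteed by the definition~\eqref{eq:posterior:score:condition}.
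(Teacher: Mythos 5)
Your proof is correct and follows the paper's own argument exactly: rewrite the tail event through the monotone exponential transform, apply Markov's inequality, and bound the resulting exponential moment by $\kappa_{\vy}C_{\SG}$ via Lemma~\ref{lemma:sub:Gaussian:posterior}. Nothing to add.
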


\begin{proof}[Proof of Lemma~\ref{lemma:tail:proba}]
It is straightforward that 
\begin{align*}
\bbP\big\{\|\mX_{0}^{\vy}\|_{2}\geq\xi\big\}
&=\bbP\bigg\{\frac{\|\mX_{0}^{\vy}\|_{2}^{2}}{V_{\SG}^{2}}\geq\frac{\xi^{2}}{V_{\SG}^{2}}\bigg\}=\bbP\bigg\{\exp\bigg(\frac{\|\mX_{0}^{\vy}\|_{2}^{2}}{V_{\SG}^{2}}\bigg)\geq\exp\bigg(\frac{\xi^{2}}{V_{\SG}^{2}}\bigg)\bigg\} \\
&\leq\exp\bigg(-\frac{\xi^{2}}{V_{\SG}^{2}}\bigg)\bbE\bigg\{\exp\bigg(\frac{\|\mX_{0}^{\vy}\|_{2}^{2}}{V_{\SG}^{2}}\bigg)\bigg\}\leq\kappa_{\vy}C_{\SG}\exp\bigg(-\frac{\xi^{2}}{V_{\SG}^{2}}\bigg)\,,
\end{align*} 
where the first inequality holds from Markov's inequality, and the last inequality invokes Assumption~\ref{assumption:prior:subGaussian} and Proposition~\ref{proposition:sub:Gaussian:posterior}. This completes the proof.
\end{proof}

\begin{lemma}
\label{lemma:fourth:moment}
Suppose Assumption~\ref{assumption:prior:subGaussian} holds. Let $\mX_{0}^{\vy}\sim q_{0}(\cdot \given \vy)$. Then for $m\geq1$,
\begin{equation*}
\bbE\big(\|\mX_{0}^{\vy}\|_{2}^{m}\big)\leq 2m^{\frac{m}{2}+1}\kappa_{\vy}C_{\SG}V_{\SG}^{m}\,.
\end{equation*}
\end{lemma}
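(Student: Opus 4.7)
The plan is to convert the moment into a tail integral via the layer-cake formula and then plug in the sub-Gaussian tail bound already established in Lemma~\ref{lemma:tail:proba}. Concretely, I will use the identity
\begin{equation*}
\bbE\big[\|\mX_{0}^{\vy}\|_{2}^{m}\big]=\int_{0}^{\infty}m\,\xi^{m-1}\,\bbP\big\{\|\mX_{0}^{\vy}\|_{2}\geq\xi\big\}\d\xi,
\end{equation*}
substitute the bound $\bbP\{\|\mX_{0}^{\vy}\|_{2}\geq\xi\}\leq\kappa_{\vy}C_{\SG}\exp(-\xi^{2}/V_{\SG}^{2})$ from Lemma~\ref{lemma:tail:proba}, and reduce the resulting Gaussian-moment integral to a Gamma function by the change of variables $u=\xi^{2}/V_{\SG}^{2}$, which yields
\begin{equation*}
\bbE\big[\|\mX_{0}^{\vy}\|_{2}^{m}\big]\leq\kappa_{\vy}C_{\SG}\,\frac{m\,V_{\SG}^{m}}{2}\,\Gamma\!\Big(\frac{m}{2}\Big).
\end{equation*}

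The only remaining step is a quantitative bound on $\Gamma(m/2)$ that produces the factor $m^{m/2}$ claimed in the lemma. I would use a Stirling-type inequality, e.g. $\Gamma(z)\leq z^{z}$ for $z\geq 1$ (which follows from $\Gamma(z+1)\leq z^{z}$ and the recursion $\Gamma(z+1)=z\Gamma(z)$, and from direct computation for $z\in(0,1)$), to obtain $\Gamma(m/2)\leq(m/2)^{m/2}\leq m^{m/2}$. Combining with the prefactor $m/2$ and absorbing constants produces the bound $2m^{m/2+1}\kappa_{\vy}C_{\SG}V_{\SG}^{m}$ stated in the lemma.

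The main obstacle, such as it is, is the bookkeeping around the constant in front of the Gamma term, since one needs to ensure the final bound absorbs both the factor $m/2$ coming from the layer-cake computation and the factor arising from bounding $\Gamma(m/2)$; the factor of $2$ in the statement provides comfortable slack, so no sharp Stirling estimate is required. I expect the proof to be brief (a few lines) because both the tail bound and the Gamma identity are standard, and the stated constants are loose enough that any reasonable estimate of $\Gamma(m/2)$ suffices.
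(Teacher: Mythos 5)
Your proposal is correct and follows essentially the same route as the paper's proof: layer-cake representation of the moment, substitution of the tail bound from Lemma~\ref{lemma:tail:proba}, a change of variables reducing the integral to $\varGamma(m/2)$, and a crude bound $\varGamma(m/2)\lesssim m^{m/2}$. One small caveat: your parenthetical claim that $\Gamma(z)\le z^z$ holds on $z\in(0,1)$ is actually false (e.g.\ $\Gamma(1/2)=\sqrt{\pi}\approx 1.77 > (1/2)^{1/2}\approx 0.71$), so the bound must be argued for $z\ge 1$ (i.e.\ $m\ge 2$) and the case $m=1$ checked separately; but as you note the factor of $2$ in the statement absorbs this easily, so the final inequality $\tfrac{1}{2}m\,\varGamma(m/2)\le 2m^{m/2+1}$ does hold for all $m\ge 1$.
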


\begin{proof}[Proof of Lemma~\ref{lemma:fourth:moment}]
It follows from Lemma~\ref{lemma:tail:proba} that 
\begin{align*}
\bbE\big(\|\mX_{0}^{\vy}\|_{2}^{m}\big)
&=\int_{0}^{\infty}\bbP\big\{\|\mX_{0}^{\vy}\|_{2}^{m}\geq\xi\big\}\d\xi =m\int_{0}^{\infty}\bbP\big\{\|\mX_{0}^{\vy}\|_{2}\geq\eta\big\}\eta^{m-1}\d\eta \\
&\leq m\kappa_{\vy}C_{\SG}\int_{0}^{\infty}\exp\bigg(-\frac{\eta^{2}}{V_{\SG}^{2}}\bigg)\eta^{m-1}\d\eta \\
&=\frac{1}{2}m\kappa_{\vy}C_{\SG}V_{\SG}^{m}\int_{0}^{\infty}\exp\bigg(-\frac{\eta^{2}}{V_{\SG}^{2}}\bigg)\bigg(\frac{\eta^{2}}{V_{\SG}^{2}}\bigg)^{\frac{m}{2}-1}\d\frac{\eta^{2}}{V_{\SG}^{2}} \\
&=\frac{1}{2}m\kappa_{\vy}C_{\SG}V_{\SG}^{m}\int_{0}^{\infty}\exp(-\zeta)\zeta^{\frac{m}{2}-1}\d\zeta \\
&=\frac{1}{2}m\varGamma\bigg(\frac{m}{2}\bigg)\kappa_{\vy}C_{\SG}V_{\SG}^{m}\leq 2m^{\frac{m}{2}+1}\kappa_{\vy}C_{\SG}V_{\SG}^{m}\,,
\end{align*}
where the second equality used the change of variables $\xi=\eta^{m}$, the first inequality invokes Lemma~\ref{lemma:tail:proba}, the third equality rewrites the integrand in terms of $\eta^{2}V_{\SG}^{-2}$, the fourth equality follows from a change of variables $\zeta=\eta^{2}V_{\SG}^{-2}$. This completes the proof.
\end{proof}

\begin{lemma}
\label{lemma:fourth:moment:t}
Suppose Assumption~\ref{assumption:prior:subGaussian} holds. Let $\mX_{t}^{\vy}\sim q_{t}(\cdot \given \vy)$. Then for $t\geq0$ and  $m\geq1$,
\begin{equation*}
\bbE\big(\|\mX_{t}^{\vy}\|_{2}^{m}\big)\leq 2^{m-1}\big\{2m^{\frac{m}{2}+1}\kappa_{\vy}C_{\SG}V_{\SG}^{m}+(d+m)^{\frac{m}{2}}\big\}\,.
\end{equation*}
\end{lemma}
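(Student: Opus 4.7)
The plan is to reduce the bound on $\bbE[\|\mX_t^{\vy}\|_2^m]$ to the already-established bound on $\bbE[\|\mX_0^{\vy}\|_2^m]$ from Lemma~\ref{lemma:fourth:moment} by exploiting the explicit Gaussian form of the forward transition~\eqref{eq:method:forward:solution}.

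First, I would recall that under the posterior forward process~\eqref{eq:method:forward:y}, the conditional law of $\mX_t^{\vy}$ given $\mX_0^{\vy}$ matches~\eqref{eq:method:forward:solution}, so there is a representation
\begin{equation*}
\mX_t^{\vy} \stackrel{d}{=} \mu_t \mX_0^{\vy} + \sigma_t \mW, \qquad \mW \sim N(\bzero,\mI_d), \quad \mW \perp\!\!\!\perp \mX_0^{\vy}.
\end{equation*}
By the triangle inequality followed by the elementary convexity bound $(a+b)^m \leq 2^{m-1}(a^m + b^m)$ for $a,b \geq 0$, and using $\mu_t,\sigma_t \in (0,1)$, I get
\begin{equation*}
\|\mX_t^{\vy}\|_2^m \leq 2^{m-1}\bigl(\mu_t^m \|\mX_0^{\vy}\|_2^m + \sigma_t^m \|\mW\|_2^m\bigr) \leq 2^{m-1}\bigl(\|\mX_0^{\vy}\|_2^m + \|\mW\|_2^m\bigr).
\end{equation*}

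Next, I would take expectations and bound the two terms separately. The first is controlled directly by Lemma~\ref{lemma:fourth:moment}, giving $\bbE[\|\mX_0^{\vy}\|_2^m] \leq 2m^{m/2+1}\kappa_{\vy}C_{\SG}V_{\SG}^m$. The second is a standard chi-type moment bound: since $\|\mW\|_2^2 \sim \chi_d^2$, one has the identity $\bbE[\|\mW\|_2^{2k}] = d(d+2)\cdots(d+2k-2) \leq (d+m)^{m/2}$ for even $m=2k$, and Jensen's inequality extends this to odd $m$, yielding $\bbE[\|\mW\|_2^m] \leq (d+m)^{m/2}$. Combining the two bounds produces the claimed inequality.

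There is no real obstacle here — the lemma is essentially a routine consequence of Lemma~\ref{lemma:fourth:moment} together with the Gaussian smoothing structure of the OU forward process. The only mildly delicate point is keeping track of constants so that the final bound appears in the form stated; specifically, one needs to verify that the Gaussian contribution $\bbE[\|\mW\|_2^m]$ can be bounded by $(d+m)^{m/2}$ uniformly in the parity of $m$ (which follows from $\bbE[\|\mW\|_2^m] \leq \bbE[\|\mW\|_2^{2\lceil m/2\rceil}]^{m/(2\lceil m/2\rceil)}$ and the closed-form even-moment formula).
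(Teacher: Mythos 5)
Your proof follows the same decomposition as the paper's: write $\mX_t^{\vy}\stackrel{d}{=}\mu_t\mX_0^{\vy}+\sigma_t\mW$, apply the $2^{m-1}$ convexity split, invoke Lemma~\ref{lemma:fourth:moment} for the first term, and bound the Gaussian moment for the second, so this is essentially the paper's argument. One small imprecision: your Jensen interpolation for odd $m$ yields $\bbE[\|\mW\|_2^m]\leq(d+m+1)^{m/2}$ rather than the stated $(d+m)^{m/2}$, whereas the paper uses the closed-form expression $\bbE[\|\mW\|_2^m]=2^{m/2}\,\Gamma\!\big(\tfrac{d+m}{2}\big)\big/\Gamma\!\big(\tfrac{d}{2}\big)$ (valid for all real $m>0$) together with the Gamma-ratio bound $\Gamma(x+a)/\Gamma(x)\leq(x+a)^{a}$ to get $(d+m)^{m/2}$ uniformly in $m$.
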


\begin{proof}[Proof of Lemma~\ref{lemma:fourth:moment:t}]
Let $\vepsilon\sim N(\bzero,\mI_{d})$. It is straightforward that 
\begin{equation*}
\bbE\big(\|\vepsilon\|_{2}^{m}\big)=2^{\frac{m}{2}}\varGamma\bigg(\frac{d+m}{2}\bigg)/\varGamma\bigg(\frac{d}{2}\bigg)\leq (d+m)^{\frac{m}{2}}\,.
\end{equation*}
Since $\mX_{t}^{\vy}\stackrel{\d}{=}\mu_{t}\mX_{0}^{\vy}+\sigma_{t}\vepsilon$ with $\mX_{0}^{\vy}\sim q_{0}(\cdot \given \vy)$ independent of $\vepsilon$, it follows from the triangular inequality that 
\begin{align*}
\bbE\big(\|\mX_{t}^{\vy}\|_{2}^{m}\big)
&\leq 2^{m-1}\mu_{t}^{m}\bbE\big(\|\mX_{0}^{\vy}\|_{2}^{m}\big)+2^{m-1}\sigma_{t}^{m}\bbE\big(\|\vepsilon\|_{2}^{m}\big) \\
&\leq 2^{m-1}\big\{2m^{\frac{m}{2}+1}\kappa_{\vy}C_{\SG}V_{\SG}^{m}+(d+m)^{\frac{m}{2}}\big\}\,,
\end{align*}
where we used the fact that $\mu_{t},\sigma_{t}\leq 1$. This completes the proof.
\end{proof}

\begin{lemma}
\label{lemma:tail:proba:T}
Suppose Assumption~\ref{assumption:prior:subGaussian} holds. Let $\mX_{t}^{\vy}\sim q_{t}(\cdot \given \vy)$ be a random variable defined by~\eqref{eq:method:forward:y}. Then for each $t>0$ and $\xi>0$,
\begin{equation*}
\bbP\big\{\|\mX_{t}^{\vy}\|_{2}\geq\xi\big\}\leq 2^{d}\kappa_{\vy}C_{\SG}\exp\bigg(-\frac{\xi^{2}}{2\mu_{t}^{2}V_{\SG}^{2}+8\sigma_{t}^{2}}\bigg)\,.
\end{equation*}
\end{lemma}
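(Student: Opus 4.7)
The plan is to mirror the Chernoff-style argument used for the unshifted case in Lemma~\ref{lemma:tail:proba}, but applied at the diffusion time $t$ via the forward representation $\mX_{t}^{\vy}\stackrel{\d}{=}\mu_{t}\mX_{0}^{\vy}+\sigma_{t}\vepsilon$ with $\vepsilon\sim N(\bzero,\mI_{d})$ independent of $\mX_{0}^{\vy}\sim q_{0}(\cdot|\vy)$ (see~\eqref{eq:method:forward:solution}). The key quantity to control is the moment generating function $\bbE[\exp(\|\mX_{t}^{\vy}\|_{2}^{2}/V^{2})]$ for a well-chosen variance parameter $V^{2}$, after which Markov's inequality delivers the stated tail bound.

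First, I would set $V^{2}\coloneq 2\mu_{t}^{2}V_{\SG}^{2}+8\sigma_{t}^{2}$ and apply the elementary inequality $\|\va+\vb\|_{2}^{2}\leq 2\|\va\|_{2}^{2}+2\|\vb\|_{2}^{2}$ to $\mu_{t}\mX_{0}^{\vy}+\sigma_{t}\vepsilon$. Using the independence of $\mX_{0}^{\vy}$ and $\vepsilon$, this yields
\begin{equation*}
\bbE\Big[\exp\Big(\frac{\|\mX_{t}^{\vy}\|_{2}^{2}}{V^{2}}\Big)\Big]\leq\bbE\Big[\exp\Big(\frac{2\mu_{t}^{2}\|\mX_{0}^{\vy}\|_{2}^{2}}{V^{2}}\Big)\Big]\,\bbE\Big[\exp\Big(\frac{2\sigma_{t}^{2}\|\vepsilon\|_{2}^{2}}{V^{2}}\Big)\Big].
\end{equation*}
Because $V^{2}\geq 2\mu_{t}^{2}V_{\SG}^{2}$, the first factor is dominated by $\bbE[\exp(\|\mX_{0}^{\vy}\|_{2}^{2}/V_{\SG}^{2})]$, which is at most $\kappa_{\vy}C_{\SG}$ by the sub-Gaussian posterior estimate in Lemma~\ref{lemma:sub:Gaussian:posterior}. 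Because $V^{2}\geq 8\sigma_{t}^{2}$, the Gaussian MGF identity $\bbE[\exp(\lambda\|\vepsilon\|_{2}^{2})]=(1-2\lambda)^{-d/2}$ applied with $\lambda=2\sigma_{t}^{2}/V^{2}\leq 1/4$ gives a factor of at most $2^{d/2}\leq 2^{d}$ (keeping the looser bound to match the stated constant).

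Combining these two bounds, I obtain $\bbE[\exp(\|\mX_{t}^{\vy}\|_{2}^{2}/V^{2})]\leq 2^{d}\kappa_{\vy}C_{\SG}$, and Markov's inequality applied to the nondecreasing function $z\mapsto\exp(z^{2}/V^{2})$ yields
\begin{equation*}
\bbP\{\|\mX_{t}^{\vy}\|_{2}\geq\xi\}\leq\exp\Big(-\frac{\xi^{2}}{V^{2}}\Big)\bbE\Big[\exp\Big(\frac{\|\mX_{t}^{\vy}\|_{2}^{2}}{V^{2}}\Big)\Big]\leq 2^{d}\kappa_{\vy}C_{\SG}\exp\Big(-\frac{\xi^{2}}{2\mu_{t}^{2}V_{\SG}^{2}+8\sigma_{t}^{2}}\Big),
\end{equation*}
which is exactly the claimed bound.

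The only subtle point is calibrating the constants: the choice $V^{2}=2\mu_{t}^{2}V_{\SG}^{2}+8\sigma_{t}^{2}$ is tailored so that both factors in the product above are simultaneously controlled (the first by the assumed sub-Gaussian constant $V_{\SG}$, the second by keeping $\lambda\leq 1/4$ in the Gaussian MGF). Any other splitting of $\|\mu_{t}\mX_{0}^{\vy}+\sigma_{t}\vepsilon\|_{2}^{2}$, e.g. via $(1+c)\|\va\|_{2}^{2}+(1+1/c)\|\vb\|_{2}^{2}$ with $c>0$, would produce a different denominator constant but the same structural form. I do not anticipate any serious obstacle, since all ingredients (independence of $\mX_{0}^{\vy}$ and $\vepsilon$, sub-Gaussianity of the posterior, Gaussian MGF, Markov's inequality) are standard and already invoked in Lemmas~\ref{lemma:tail:proba} and~\ref{lemma:fourth:moment:t}.
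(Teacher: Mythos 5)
Your proposal is correct and takes essentially the same route as the paper: both proceed by a Chernoff-type bound on $\bbE[\exp(\|\mX_t^{\vy}\|_2^2/V^2)]$ with $V^2=2\mu_t^2V_{\SG}^2+8\sigma_t^2$, split via $\|\mu_t\mX_0^{\vy}+\sigma_t\vepsilon\|_2^2\leq 2\|\mu_t\mX_0^{\vy}\|_2^2+2\|\sigma_t\vepsilon\|_2^2$, factor using independence, and bound the two factors by the posterior sub-Gaussian constant $\kappa_{\vy}C_{\SG}$ and the Gaussian MGF (giving $2^{d/2}\leq 2^d$), respectively. The only cosmetic difference is that the paper first rewrites the common denominator as $\mu_t^2V_{\SG}^2+4\sigma_t^2$ and then relaxes each denominator separately, whereas you keep $V^2$ throughout and invoke the closed-form chi-squared MGF; the estimates are otherwise identical.
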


\begin{proof}[Proof of Lemma~\ref{lemma:tail:proba:T}]
According to Assumption~\ref{assumption:prior:subGaussian} and Proposition~\ref{proposition:sub:Gaussian:posterior}, we have
\begin{equation}\label{eq:lemma:tail:proba:T:1}
\bbE\bigg\{\exp\bigg(\frac{\|\mu_{t}\mX_{0}^{\vy}\|_{2}^{2}}{\mu_{t}^{2}V_{\SG}^{2}}\bigg)\bigg\}=\bbE\bigg\{\exp\bigg(\frac{\|\mX_{0}^{\vy}\|_{2}^{2}}{V_{\SG}^{2}}\bigg)\bigg\}\leq \kappa_{\vy}C_{\SG}\,.
\end{equation}
Let $\vepsilon\sim N(\bzero,\mI_{d})$. Then it follows that 
\begin{align}
&\bbE\bigg\{\exp\bigg(\frac{\|\sigma_{t}\vepsilon\|_{2}^{2}}{4\sigma_{t}^{2}}\bigg)\bigg\}
=\bbE\bigg\{\exp\bigg(\frac{\|\vepsilon\|_{2}^{2}}{4}\bigg)\bigg\} \nonumber \\
&~~~~~~~~~~~~=(2\pi)^{-\frac{d}{2}}\int\exp\bigg(\frac{\|\vepsilon\|_{2}^{2}}{4}\bigg)\exp\bigg(-\frac{\|\vepsilon\|_{2}^{2}}{2}\bigg)\d\vepsilon \nonumber \\
&~~~~~~~~~~~~=(2\pi)^{-\frac{d}{2}}\int\exp\bigg(-\frac{\|\vepsilon\|_{2}^{2}}{4}\bigg)\d\vepsilon\leq 2^{d}\,. \label{eq:lemma:tail:proba:T:2}
\end{align}
Notice that $\mX_{t}^{\vy}\stackrel{\d}{=}\mu_{t}\mX_{0}^{\vy}+\sigma_{t}\vepsilon$, where $\mX_{0}^{\vy}\sim q_{0}(\cdot \given \vy)$ and $\vepsilon\sim N(\bzero,\mI_{d})$ are independent. Therefore, 
\begin{align}
&\bbE\bigg\{\exp\bigg(\frac{\|\mX_{t}^{\vy}\|_{2}^{2}}{2\mu_{t}^{2}V_{\SG}^{2}+8\sigma_{t}^{2}}\bigg)\bigg\}
=\bbE\bigg\{\exp\bigg(\frac{\|\mu_{t}\mX_{0}^{\vy}+\sigma_{t}\vepsilon\|_{2}^{2}}{2\mu_{t}^{2}V_{\SG}^{2}+8\sigma_{t}^{2}}\bigg)\bigg\} \nonumber \\
&~~~~~~~~~~~~\leq\bbE\bigg\{\exp\bigg(\frac{\|\mu_{t}\mX_{0}^{\vy}\|_{2}^{2}}{\mu_{t}^{2}V_{\SG}^{2}+4\sigma_{t}^{2}}+\frac{\|\sigma_{t}\vepsilon\|_{2}^{2}}{\mu_{t}^{2}V_{\SG}^{2}+4\sigma_{t}^{2}}\bigg)\bigg\} \nonumber \\
&~~~~~~~~~~~~\leq\bbE\bigg\{\exp\bigg(\frac{\|\mu_{t}\mX_{0}^{\vy}\|_{2}^{2}}{\mu_{t}^{2}V_{\SG}^{2}+4\sigma_{t}^{2}}\bigg)\bigg\}\bbE\bigg\{\exp\bigg(\frac{\|\sigma_{t}\vepsilon\|_{2}^{2}}{\mu_{t}^{2}V_{\SG}^{2}+4\sigma_{t}^{2}}\bigg)\bigg\} \nonumber \\ 
&~~~~~~~~~~~~\leq\bbE\bigg\{\exp\bigg(\frac{\|\mu_{t}\mX_{0}^{\vy}\|_{2}^{2}}{\mu_{t}^{2}V_{\SG}^{2}}\bigg)\bigg\}\bbE\bigg\{\exp\bigg(\frac{\|\sigma_{t}\vepsilon\|_{2}^{2}}{4\sigma_{t}^{2}}\bigg)\bigg\}\leq 2^{d}\kappa_{\vy}C_{\SG}\,, \label{eq:lemma:tail:proba:T:3}
\end{align}
where the first inequality follows from  $\|\vu+\vv\|_{2}^{2}\leq2\|\vu\|_{2}^{2}+2\|\vv\|_{2}^{2}$, the second inequality holds from the independence of $\mX_{0}^{\vy}$ and $\vepsilon$, and the last inequality is due to~\eqref{eq:lemma:tail:proba:T:1} and~\eqref{eq:lemma:tail:proba:T:2}. Then we aim to bound the tail probability. For each $\xi>0$, we have 
\begin{align*}
\bbP\big\{\|\mX_{t}^{\vy}\|_{2}\geq \xi\big\}
&=\bbP\bigg\{\frac{\|\mX_{t}^{\vy}\|_{2}^{2}}{2\mu_{t}^{2}V_{\SG}^{2}+8\sigma_{t}^{2}}\geq\frac{\xi^{2}}{2\mu_{t}^{2}V_{\SG}^{2}+8\sigma_{t}^{2}}\bigg\} \\
&=\bbP\bigg\{\exp\bigg(\frac{\|\mX_{t}^{\vy}\|_{2}^{2}}{2\mu_{t}^{2}V_{\SG}^{2}+8\sigma_{t}^{2}}\bigg)\geq\exp\bigg(\frac{\xi^{2}}{2\mu_{t}^{2}V_{\SG}^{2}+8\sigma_{t}^{2}}\bigg)\bigg\} \\
&\leq\exp\bigg(-\frac{\xi^{2}}{2\mu_{t}^{2}V_{\SG}^{2}+8\sigma_{t}^{2}}\bigg)\bbE\bigg\{\exp\bigg(\frac{\|\mX_{t}^{\vy}\|_{2}^{2}}{2\mu_{t}^{2}V_{\SG}^{2}+8\sigma_{t}^{2}}\bigg)\bigg\} \\
&\leq 2^{d}\kappa_{\vy}C_{\SG}\exp\bigg(-\frac{\xi^{2}}{2\mu_{t}^{2}V_{\SG}^{2}+8\sigma_{t}^{2}}\bigg)\,,
\end{align*}
where the first inequality invokes Markov's inequality, and the last inequality is due to~\eqref{eq:lemma:tail:proba:T:3}. This completes the proof.
\end{proof}

\subsection{Convergence of Langevin dynamics}
Consider the Langevin dynamics with an invariant density $\nu$:
\begin{equation}\label{eq:Langevin:dynamics}
\d\mZ_{t}=\nabla\log\nu(\mZ_{t})\dt+\sqrt{2}\d\mB_{t}\,, \quad \mZ_{0}\sim\nu_{0}\,,
\end{equation}
where $(\mB_{t})_{t\geq 0}$ is a $d$-dimensional Brownian motion. Denote by $\nu_{t}$ the marginal density of $\mZ_{t}$. In this subsection, we establish the convergence results of Langevin dynamics \eqref{eq:Langevin:dynamics}. At every invocation in this paper, the invariant density $\nu$ is either the strongly log-concave restricted Gaussian oracle target or the terminal posterior $q_{T}(\cdot\given\vy)$, whose $C^{2}$ potentials and Hessian bounds guarantee the well-posedness and semigroup conditions assumed in the two lemmas below. We first characterize its convergence in Wasserstein distance:
\begin{lemma}
\label{lemma:Langevin:W2}
Suppose the invariant distribution $\nu$ is $m$-strongly log-concave, the initial distribution $\nu_{0}$ has a finite second moment, and~\eqref{eq:Langevin:dynamics} admits a unique non-explosive strong solution. Then for each $t>0$,
\begin{equation*}
\bbW_{2}^{2}(\nu_{t},\nu)\leq\exp(-2mt)\bbW_{2}^{2}(\nu_{0},\nu)\,.
\end{equation*}
\end{lemma}

\begin{proof}[Proof of Lemma~\ref{lemma:Langevin:W2}]
We first construct a Wasserstein coupling. Let $\gamma_{0}$ be the optimal coupling of $(\nu_{0},\nu)$, which means,
\begin{equation}\label{eq:lemma:Langevin:W2:1}
\bbE_{(\mZ_{0},\mZ_{0}^{*})\sim\gamma_{0}}\big(\|\mZ_{0}-\mZ_{0}^{*}\|_{2}^{2}\big)=\int\|\vz_{0}-\vz_{0}^{*}\|_{2}^{2}\d\gamma_{0}(\vz_{0},\vz_{0}^{*})=\bbW_{2}^{2}(\nu_{0},\nu)\,. 
\end{equation}
We define two processes $\mZ_{t}$ and $\mZ_{t}^{*}$ by evolving the Langevin dynamics~\eqref{eq:Langevin:dynamics} with the same Brownian motion, that is, 
\begin{equation}\label{eq:lemma:Langevin:W2:2}
\d\mZ_{t}=\nabla\log\nu(\mZ_{t})\dt+\sqrt{2}\d\mB_{t}\,, \quad\text{and}\quad \d\mZ_{t}^{*}=\nabla\log\nu(\mZ_{t}^{*})\dt+\sqrt{2}\d\mB_{t}\,.
\end{equation}
Denote by $\gamma_{t}$ the joint distribution of $(\mZ_{t},\mZ_{t}^{*})$. It is apparent that $\gamma_{t}$ is a coupling of $(\nu_{t},\nu)$, as $\nu$ is the invariant distribution of the Langevin dynamics~\eqref{eq:Langevin:dynamics} and $\mZ_{t}^{*}\sim\nu$ for each $t>0$. Then 
\begin{align}
\d\|\mZ_{t}-\mZ_{t}^{*}\|_{2}^{2}
&=2(\mZ_{t}-\mZ_{t}^{*},\d\mZ_{t}-\d\mZ_{t}^{*}) \nonumber \\
&=2(\mZ_{t}-\mZ_{t}^{*},\nabla\log\nu(\mZ_{t})-\nabla\log\nu(\mZ_{t}^{*}))\dt \nonumber \\
&\leq -2m\|\mZ_{t}-\mZ_{t}^{*}\|_{2}^{2}\dt\,, \label{eq:lemma:Langevin:W2:3}
\end{align}
where the second equality involves~\eqref{eq:lemma:Langevin:W2:2} and the fact that two Langevin dynamics share the same Brownian motion, and the inequality follows from the strong log-concavity of $\nu$. Applying Gronwall's inequality~\citep[Section B.2]{app:evans2010partial} to~\eqref{eq:lemma:Langevin:W2:3} yields 
\begin{equation*}
\|\mZ_{t}-\mZ_{t}^{*}\|_{2}^{2}\leq\exp(-2mt)\|\mZ_{0}-\mZ_{0}^{*}\|_{2}^{2}\,.
\end{equation*}
Taking expectation with respect to both sides of the inequality implies 
\begin{align*}
\bbW_{2}^{2}(\nu_{t},\nu)
&\leq\bbE_{(\mZ_{t},\mZ_{t}^{*})\sim\gamma_{t}}\big(\|\mZ_{t}-\mZ_{t}^{*}\|_{2}^{2}\big) \\
&\leq\exp(-2mt)\bbE_{(\mZ_{0},\mZ_{0}^{*})\sim\gamma_{0}}\big(\|\mZ_{0}-\mZ_{0}^{*}\|_{2}^{2}\big) \\
&=\exp(-2mt)\bbW_{2}^{2}(\nu_{0},\nu)\,,
\end{align*}
where the first inequality is owing to the definition of the Wasserstein distance, and the equality follows from~\eqref{eq:lemma:Langevin:W2:1}. This completes the proof.
\end{proof}

Then, the convergence of Langevin dynamics~\eqref{eq:Langevin:dynamics} in $\chi^{2}$-divergence is stated as the following lemma. The proof can be found in ~\citet{app:Chewi2024Analysis}.

\begin{lemma}
\label{lemma:Langevin:chisq}
Suppose that the invariant density $\nu\in C^{2}(\bbR^{d})$ satisfies the log-Sobolev inequality with constant $C_{\LSI}(\nu)$, and that the Langevin dynamics~\eqref{eq:Langevin:dynamics} generates the conservative reversible Markov semigroup on $L^{2}(\nu)$ associated with $L=\Delta+\langle\nabla\log\nu,\nabla\rangle$, then for each initial density $\nu_{0}$, it holds that 
\begin{equation*}
\chi^{2}(\nu_{t}\|\nu)\leq\exp\bigg\{-\frac{2t}{C_{\LSI}(\nu)}\bigg\}\chi^{2}(\nu_{0}\|\nu)\,.
\end{equation*}
\end{lemma}


\subsection{Log-concavity}
Let $\mu$ be a probability distribution absolutely continuous with respect to the Lebesgue measure. Denote by $\rho$ the density of $\mu$, that is, $\d\mu(\vx)=\rho(\vx)\d\vx$. Suppose that $\rho\in C^{2}(\bbR^{d})$. In this work, we will consider some regimes of the distribution $\mu$:
\begin{enumerate}[label=(\roman*)]
\item The distribution $\mu$ is called strongly log-concave, if there exists a constant $\alpha>0$ such that $-\nabla^{2}\log\rho(\vx)\succeq\alpha\mI_{d}$ for each $\vx\in\bbR^{d}$.
\item The distribution $\mu$ is called semi-log-concave, if there exists a constant $\alpha>0$ such that $-\nabla^{2}\log\rho(\vx)\succeq-\alpha\mI_{d}$ for each $\vx\in\bbR^{d}$.
\item The distribution $\mu$ on $\bbR^{d}$ satisfies the log-Sobolev inequality, if there exists a constant $C_{\LSI}(\mu)>0$ such that
\begin{equation}\tag{LSI}\label{eq:LSI}
\ent_{\mu}(f^{2})\leq 2C_{\LSI}(\mu)\bbE_{\mu}\big(\|\nabla f\|_{2}^{2}\big)\,, \quad \text{for each}~f\in C_{0}^{\infty}(\bbR^{d})\,,
\end{equation}
where $\ent_{\mu}(g)=\bbE_{\mu}(g\log g)-\bbE_{\mu}(g)\log\bbE_{\mu}(g)$ for nonnegative $g$, and $\bbE_{\mu}(\cdot)$ denotes the expectation with respect to $\mu$.
\item The distribution $\mu$ on $\bbR^{d}$ has sub-Gaussian tails, if there exist constants $V_{\SG}>0$ and $C_{\SG}>0$, such that 
\begin{equation*}
\int\exp\bigg(\frac{\|\vx\|_{2}^{2}}{V_{\SG}^{2}}\bigg)\d\mu(\vx)\leq C_{\SG}\,.
\end{equation*}
\end{enumerate}
The conditions (i) to (iv) have close connections:
\begin{equation*}
\text{strongly log-concavity} \subset \text{log-Sobolev inequality} \subset \text{sub-Gaussian tails}\,.
\end{equation*}
Specifically, the Bakry--{\'E}mery theorem~\citep{app:Bakry1985Diffusions} shows that $\alpha$-strong log-concavity implies a log-Sobolev inequality with $C_{\LSI}(\mu)\leq\alpha^{-1}$. A log-Sobolev inequality implies sub-Gaussian tails~\citep{app:Ledoux1999Concentration}. See~\citet[Theorem 9.9]{app:Villani2003Topics} and~\citet[Section 5.4]{app:Bakry2014Analysis} for further details. 

\section{Experimental Details and Additional Results}
\label{app:experimental_details}
In this appendix, we provide the specific parameter settings, mathematical formulations for the measurement operators, and detailed implementation steps for the algorithms and baseline methods used in Section~\ref{section:experiments}, together with an additional cross-dataset experiment on robustness to prior mismatch.

\subsection{Details of measurement operators}
\label{app:measurement_details}

\textbf{Linear Operator.} For both linear tasks, the forward operator is a convolution with a kernel $\psi\in\bbR^{15\times 15}$. For Gaussian deblurring, $\psi$ is an isotropic Gaussian filter with standard deviation $\sigma_{\rm blur}=2.0$. For motion deblurring, $\psi$ is a synthetic motion kernel of intensity $0.5$ generated to simulate random camera trajectories.

\textbf{Nonlinear Operator.} The nonlinear forward operator $\calF_{\tau^*}$ approximates the GOPRO blur formation process~\citep{app:nah2017deep}, in which blur arises from temporal aggregation rather than spatial convolution: a latent signal is realized as a sequence of $M$ closely-spaced sharp frames within the camera exposure window, the frames are linearly averaged, and the result is passed through a nonlinear camera response function $\calR(\vz)=\vz^{1/2.2}$ (gamma correction). In our experiments, we use the pretrained distillation model of~\citet{app:tran2021explore} as a black-box realization of this process, yielding the deterministic nonlinear measurement model $\mY=\calF_{\tau^*}(\mX_{0},\vc^*)+\vn$, where $\vc^*$ is a fixed latent vector that conditions the operator on a single blur configuration.

For completeness, the network $\calF_{\tau}$ was originally trained on a dataset of sharp-blurred image pairs $\{(\mX_j,\mY_j)\}_{j=1}^{N}$ jointly with an auxiliary network $\calC_{\phi}$, which extracts blur information at training time, by minimizing the Charbonnier loss~\citep{app:lai2017deep}: $$\tau^*,\phi^*=\arg\min_{\tau,\phi}\sum_{j=1}^{N}\rho\big[\mY_j,\calF_{\tau}\{\mX_j,\calC_{\phi}(\mX_j,\mY_j)\}\big] \, ,$$ where $\rho$ is the Charbonnier function. To obtain a deterministic operator for our experiments, we fix the auxiliary input by sampling a single latent vector $\vc^*\sim N(\bm{0},\sigma_c^2\mI_c)$ with $\sigma_c=0.3$, after which $\calF_{\tau^*}(\cdot,\vc^*)$ is held fixed across all experiments.

\subsection{Implementation of PDPS}
\label{sec:hyperparameters}
\subsubsection{Prior score approximation}
Our method requires a neural network estimator, $\hat{\vs}_{\rm prior}$, for the prior score, $\nabla_{\vx_0} \log \pi_0(\vx_0)$, and we employ a publicly available, pre-trained model based on the well-known EDM framework~\citep{app:karras2022elucidating}. Specifically, it provides a neural network estimate for the unconditional denoiser $\mD(\vx, \sigma)$, denoted as $\what{\mD}(\vx, \sigma)$, where $$
\mD(\vx, \sigma) \coloneq  \bbE(\mX_0 \given \mX_0 + \sigma \mZ = \vx) \, , \quad \mX_0 \sim \pi_0 \, , \quad \mZ \sim N(\bm{0}, \mI_d) \, , \quad \mX_0 \perp\!\!\!\perp \mZ \, .
$$
Furthermore, the denoiser and score function are related by Tweedie's formula ~\citep{app:Efron2011Tweedie}:
\begin{align*}
\nabla_{\vx} \log \pi_{\sigma}(\vx) = \frac{\mD(\vx, \sigma) - \vx}{\sigma^2} \, ,
\end{align*}
where $\pi_{\sigma}$ is the density of $\mX_0 + \sigma\mZ$. Thus, $\{\what{\mD}(\vx, \sigma) - \vx\} / \sigma^2$ serves to approximate the $\sigma$-smoothed prior score, $\nabla_{\vx} \log \pi_{\sigma}(\vx)$. Given that $\pi_{\sigma} \rightarrow \pi$ as $\sigma \rightarrow 0$, achieving a precise approximation of the true prior score $\nabla_{\vx} \log \pi(\vx)$ theoretically necessitates a sufficiently small $\sigma$ for the denoiser $\what{\mD}(\vx, \sigma)$. However, practical implementation revealed significant numerical instability when using very small $\sigma$ values, likely due to the inherent approximation errors of the neural network $\what{\mD}$. After empirical investigation to balance accuracy and stability, we selected $\sigma_d = 0.09$. Consequently, the prior score estimator employed consistently throughout our numerical experiments is fixed as $\hat{\vs}_{\rm prior}(\vx) \triangleq \{\what{\mD}(\vx, \sigma_d) - \vx\} / \sigma_d^2$.

\subsubsection{Posterior score estimation (Algorithm~\ref{alg:unbiased:posterior:score})} The core of our method is the RGO-based score estimation, which we implement using a Langevin Monte Carlo (LMC) sampler. For the first posterior-score query of each independent reconstruction, the $M$ inner chains are initialized independently from $N(\bm{0},\mI_d)$. For every subsequent query, each chain is initialized from the terminal state of the corresponding chain at the preceding query. This continuation is used throughout the warm-start and reverse-diffusion stages, including across the boundary between the two stages. It is a computational acceleration heuristic and is distinct from the common-random-number construction adopted in the theoretical analysis. Specifically, at each query we discretize the RGO process~\eqref{eq:RGO:Langevin:score} using the Euler--Maruyama method. Let $\vxi_k \sim N(\bm{0}, \mI_d)$. For $k = 0, \ldots, N_{\rm in} - 1$, we iterate
\begin{equation*}
\what{\mX}_{0,k+1}^{\vx,\vy,t} - \what{\mX}_{0,k}^{\vx,\vy,t}=\bigg\{\hat{\vs}_{\rm prior}(\what{\mX}_{0,k}^{\vx,\vy,t})+\frac{\mu_{t}}{\sigma_{t}^{2}}(\vx-\mu_{t}\what{\mX}_{0,k}^{\vx,\vy,t})-\nabla\ell_{\vy}(\what{\mX}_{0,k}^{\vx,\vy,t})\bigg\}\Delta t_{\rm in}+\sqrt{2 \Delta t_{\rm in}} \vxi_k \, .
\end{equation*}
In practice, we use $M=20$ parallel chains to generate samples. The LMC step size $\Delta t_{\rm in}$ is determined adaptively based on a fixed signal-to-noise ratio (SNR) of $r_{\rm in} = 0.075$, following the strategy in~\cite{app:song2021scorebased}. To compute the MC average for the posterior denoiser~\eqref{eq:denoiser:MC}, we use samples from the latter half of the LMC iterations (a burn-in factor of $\rho=0.5$). Thus, $m = \rho M N_{\rm in}$. The LMC iteration count, $N_{\rm in}$, varies depending on the stage, as detailed below.

\subsubsection{Warm-start stage (Algorithm~\ref{alg:warm:start})} The warm-start procedure consists of an outer Langevin loop designed to sample from the initial posterior density $q_T(\cdot \given \vy)$: Let $\vxi_k \sim N(\bm{0}, \mI_d)$. For $k = 0, \ldots, N_{\rm out} - 1$, we start with $\what{\mX}^{\vy}_{T, 0} \sim N(\bm{0}, \mI_d)$ and discretize \eqref{section:method:warm:Langevin:score} as
\begin{equation*}
\what{\mX}_{T,k+1}^{\vy} - \what{\mX}_{T,k}^{\vy}=\hat{\vs}_m(T,\what{\mX}_{T,k}^{\vy},\vy) \Delta t_{\rm out} +\sqrt{2 \Delta t_{\rm out}}\,\vxi_k \, .
\end{equation*} 
The clipping operation in~\eqref{eq:warm:score:clipped} is introduced to control the warm-start drift in the theoretical analysis. In the numerical warm-start, we use the unclipped estimator $\hat{\vs}_m$, since clipping was not found necessary for numerical stability in our experiments. This outer loop runs for $N_{\rm out}=400$ iterations, and its adaptive step size $\Delta t_{\rm out}$ is governed by an SNR of $r_{\rm out} = 0.16$. We invoke Algorithm~\ref{alg:unbiased:posterior:score} to compute the score $\hat{\vs}_m(\vx, \vy, t)$, setting the inner LMC iteration count to $N_{\rm in}=50$. The choice of the warm-start time $T$ is critical and is tuned for each problem.

\subsubsection{Reverse diffusion stage (Algorithm~\ref{alg:posterior:sampling})} The main sampling process simulates the reverse SDE from the warm-start time $T$ down to an early-stopping time $T_0$. We set $T_0=0.05$ in all experiments except the terminal-time ablation in Section~\ref{sec:ablation}, for which we use $T_0=0.001$ so that $T_0<T$ for every tested value of $T$. In practice, an Euler--Maruyama discretization for~\eqref{eq:reversal:score} with step size $\Delta t_{\rm rev} = (T-T_0)/N_{\rm rev}$ is implemented: For $k = 0, \ldots, N_{\rm rev} - 1$, we have
\begin{equation*}
\what{\mX}_{k+1}^{\vy} - \what{\mX}_{k}^{\vy}=\big\{\what{\mX}_{k}^{\vy}+2\,\hat{\vs}_{m}(T-k \Delta t_{\rm rev},\what{\mX}_{k}^{\vy},\vy)\big\}\Delta t_{\rm rev} + \sqrt{2 \Delta t_{\rm rev}}\vxi_{k} \, ,
\end{equation*}
where $\what{\mX}^{\vy}_{0}  = \what{\mX}^{\vy}_{T, N_{\rm out}}$ from the warm-start phase and $\vxi_k\sim N(\bm{0}, \mI_d)$. The number of discretization steps is set to $N_{\rm rev} = 1200 \, T$. At each step of this reverse process, we again call upon Algorithm~\ref{alg:unbiased:posterior:score} for score estimation $\hat{\vs}_m(\vx, \vy, t)$, this time with an inner LMC iteration count of $N_{\rm in}=20$. In place of the theoretical post-processing (truncation $\calT_{R}$ followed by rescaling $\calM(\mu_{T_{0}}^{-1})$) prescribed by Algorithm~\ref{alg:posterior:sampling}, the practical pipeline adopts the following substitution: upon reaching $T_0$, we follow~\cite{app:song2021scorebased} and execute a deterministic step using only the drift term to map $\what{\mX}^{\vy}_{N_{\rm rev}}$ directly from $T_0$ to $0$. We denote the output as $\what{\mX}^{\vy}$.

\subsubsection{Final denoising step} Since our prior score estimator approximates a smoothed prior (with $\sigma_d = 0.09$), the raw output $\what{\mX}^{\vy}$ may contain minor residual noise. To mitigate this, we apply a final, unconditional denoising step using the pre-trained denoiser network $\what{\mD}$ with an empirically chosen noise level of $\sigma_d^{\prime} = 0.03$. The
final reported samples are thus $\what{\mD}(\what{\mX}^{\vy}, \sigma_d^{\prime})$, which enhances the perceptual quality by reducing noise without overly smoothing fine details.

\subsection{Details of comparison methods}
\label{app:comparison_details}
\subsubsection{Diffusion posterior sampling (DPS)}
In essence, the DPS method proposed in \cite{app:chung2023diffusion} relies on a standard DDPM \cite{app:ho2020denoising} forward noise addition process, namely:
\begin{align*}
\mX^{\rm DDPM}_t = \sqrt{\bar{\alpha}_t} \mX_0 + \sqrt{1 - \bar{\alpha}_t} \mZ \, , \quad \mZ \sim N(\bm{0}, \mI_d) \, , \quad \mX_0 \perp\!\!\!\perp \mZ \, .
\end{align*}
For brevity, we abbreviate $\mX^{\rm DDPM}_t$ as $\mX_t$. DPS further uses \eqref{eq:posterior:score:decomposition}, while approximates $p_{t}(\vx_0 \given \vx)$ with $\delta_{\bbE(\mX_{0} \given \mX_{t}=\vx)}(\vx_{0})$. It then controls the generation process by adding a guidance term $-\zeta \nabla_{\vx_t}\|\vy - \calF\{\bbE(\mX_0 \given \mX_t = \vx_t)\}\|_2$ to the unconditional DDPM iterative scheme.  Here, $\zeta$ represents a critical hyperparameter that controls the strength of conditional guidance. To ensure a fair comparison where both PDPS and DPS leverage the same prior knowledge, we fully reused the original codebase of \cite{app:chung2023diffusion}, while employing the score transformation techniques presented in \cite{app:karras2022elucidating} to reconstruct the DDPM-type score from the pre-trained EDM model $D_\theta$, and then use it to replace the original DDPM-type score in \cite{app:chung2023diffusion}. We use a $1000$-step DDPM sampler with a linear beta schedule, epsilon prediction, clipped denoised estimates, and fixed-small reverse variance.

\subsubsection{Total variation (TV)}
This classical approach solves the optimization problem:
\begin{align} \label{eq:tv_objective}
\min_{\mathbf{x} \in \mathbb{R}^d} \frac{1}{2}\|\mathbf{y} - \mathcal{F}(\mathbf{x})\|^2_2 + \lambda \|\nabla \mathbf{x}\|_{2,1}\,,
\end{align}
balancing data fidelity against isotropic Total Variation (TV) regularization, controlled by $\lambda > 0$. We solve~\eqref{eq:tv_objective} using deterministic proximal-gradient descent. Let $f(\mathbf{x})=\frac{1}{2}\|\mathbf{y}-\mathcal F(\mathbf{x})\|_2^2$. At outer iteration $j$, we compute $\nabla f(\mathbf{x}^{(j)})$ using PyTorch automatic differentiation and perform
\begin{equation*}
\mathbf{x}^{(j+1)}
=\operatorname{prox}_{\eta\lambda\mathrm{TV}}
\big\{\mathbf{x}^{(j)}-\eta\nabla f(\mathbf{x}^{(j)})\big\}.
\end{equation*}
The TV proximal map is evaluated using \texttt{TVPrior.prox} from \texttt{deepinv} 0.3.2\footnote{\url{https://github.com/deepinv/deepinv}}. For Gaussian, motion, and nonlinear deblurring, respectively, we use $(\eta,\lambda,K)=(1.0,0.01,50)$, $(1.0,0.01,150)$, and $(0.05,0.05,50)$, where $K$ is the number of outer iterations. For all reported experiments, the inner TV proximal solver uses at most $1000$ iterations with stopping tolerance $10^{-8}$. Each reconstruction is initialized from the measurement, with bilinear resizing only when its spatial dimensions differ from those of the reconstruction.
 
\subsection{Hyperparameter settings}
\label{app:hyperparameters}
The key hyperparameters for PDPS (warm-start time $T$) and DPS (guidance scale $\zeta$) used in our experiments are summarized in Table~\ref{tab:hyperparams}. We present the settings for both the quantitative evaluations on the full test sets (FFHQ and AFHQ) and the specific parameters used for the qualitative case studies, where fine-tuning was applied to exploit the potential of each method.

\begin{table}[H]
\centering
\caption{\footnotesize Hyperparameters $T$ (for PDPS) and $\zeta$ (for DPS) used in different experimental settings.}
\label{tab:hyperparams}
\small
\renewcommand{\arraystretch}{0.9} 
\begin{tabular}{l l c c}
\toprule
\textbf{Task} & \textbf{Subset / Image} & \textbf{PDPS ($T$)} & \textbf{DPS ($\zeta$)} \\
\midrule
\multicolumn{4}{l}{\textit{\textbf{1. Quantitative Evaluation (Batch Average)}}} \\
\multicolumn{4}{l}{{FFHQ Dataset (128 images)}} \\
Gaussian Deblur & full batch & 0.2 & 0.9 \\
Motion Deblur   & full batch & 0.5 & 1.2 \\
Nonlinear Deblur& full batch & 20.0 & 0.2 \\
\multicolumn{4}{l}{{AFHQ Dataset (Cross-Dataset, 128 images)}} \\
Motion Deblur   & full batch & 0.5 & 1.3 \\
Nonlinear Deblur& full batch & 20.0 & 0.2 \\

\midrule
\multicolumn{4}{l}{\textit{\textbf{2. Qualitative Case Studies (Specific Images)}}} \\
\multicolumn{4}{l}{{FFHQ (In-Distribution)}} \\
Gaussian Deblur & face 3 & 0.2 & 0.9 \\
& face 4 & 0.2 & 0.8 \\
Motion Deblur   & face 1-2 & 0.5 & 1.3 \\
Nonlinear Deblur& face 5 & 3.5 & 0.5 \\
& face 6 & 3.5 & 0.9 \\
\multicolumn{4}{l}{{AFHQ (Out-of-Distribution)}} \\
Motion Deblur   & Cat 1 \& Lion & 0.5 & 1.3 \\
Nonlinear Deblur& Dog & 5.0 & 0.4 \\
& Cat 2 & 9.0 & 0.3 \\
\bottomrule
\end{tabular}
\end{table}

\subsection{Robustness to prior mismatch}
\label{app:prior_robustness}
We investigate robustness to prior mismatch by using the FFHQ-trained prior score model to restore animal faces from the AFHQ dataset~\citep{app:choi2020stargan}. This cross-dataset experiment tests whether PDPS's Monte Carlo score estimation, which integrates prior information through the RGO mechanism rather than relying on it directly, can compensate for inaccuracies stemming from the mismatched prior. 

\begin{table}[H]
\small
\centering
\caption{\footnotesize Comparison for various methods under cross-dataset evaluation.}
\begin{tabular}{l ccc ccc}
\toprule
\multirow{2}{*}{} & \multicolumn{3}{c}{Motion Deblur}
& \multicolumn{3}{c}{Nonlinear Deblur} \\
\cmidrule(lr){2-4} \cmidrule(lr){5-7}
& TV & DPS & \textbf{PDPS} & TV & DPS & \textbf{PDPS} \\
\midrule
PSNR & 23.95 & 23.82 & \textbf{25.45} & 18.81 & 17.17 & \textbf{24.45} \\
SSIM & 0.76 & 0.78 & \textbf{0.83} & 0.45 & 0.40 & \textbf{0.80} \\
\bottomrule
\end{tabular}
\label{tab:afhq}
\end{table}

Table~\ref{tab:afhq} summarizes the average quantitative performance. While all methods suffer from the distribution shift, PDPS maintains a clear advantage. DPS is at a similar quantitative level to TV for motion deblurring and falls below TV for nonlinear deblurring, a pattern consistent with increased sensitivity to prior mismatch. PDPS continues to produce coherent reconstructions (Figure~\ref{fig:animal_results}) and recovers recognizable animal-specific structures, including whisker contours in ``cat 2,'' despite using a human-face prior. These results highlight the practical value of PDPS's RGO-based Monte Carlo estimator, which directly targets the posterior score and retains useful likelihood information even under prior mismatch.

\begin{figure}[H]
\centering
\includegraphics[width=0.75\textwidth]{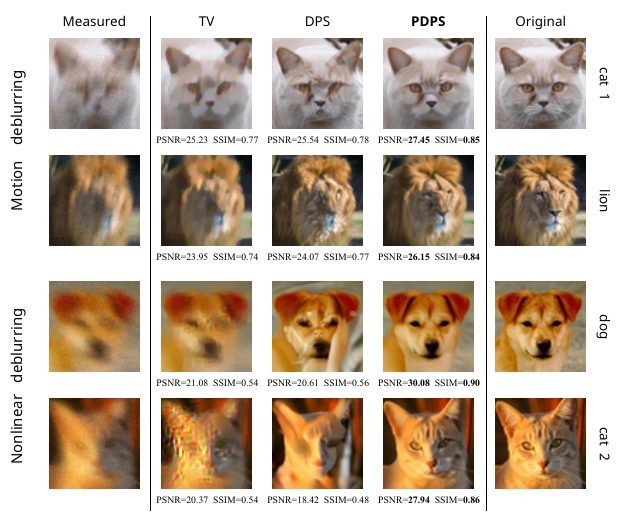}
\vspace{-5pt}
\caption{\footnotesize Cross-dataset deblurring results on AFHQ animal faces (`cat 1', `lion', `dog', `cat 2') using an FFHQ human face prior. Top two rows: Motion Deblurring. Bottom two rows: Nonlinear Deblurring. Comparison of Measured input, TV, DPS, PDPS (Ours), and original ground truth.}
\label{fig:animal_results}
\end{figure}


\begin{thebibliography}{xx}

\bibitem[Albergo and Vanden-Eijnden(2023)]{albergo2023building}
Albergo, M.~S. and Vanden-Eijnden, E. (2023). Building normalizing flows with stochastic interpolants, \textit{International Conference on Learning Representations}.

\bibitem[Albergo, Boffi and Vanden-Eijnden(2025)]{albergo2025stochastic}
Albergo, M.~S., Boffi, N.~M. and Vanden-Eijnden, E. (2025). Stochastic interpolants: A unifying framework for flows and diffusions, \textit{Journal of Machine Learning Research} \textbf{26}(209), 1--63.

\bibitem[Anderson(1982)]{anderson1982reverse}
Anderson, B.~D.~O. (1982). Reverse-time
diffusion equation models, \textit{Stochastic Processes and their Applications}
\textbf{12}(3),~313--326.

\bibitem[Bach et~al.(2025)]{bach2025machine}
Bach, E., Baptista, R., Sanz-Alonso, D. and Stuart, A. (2025). Machine learning for inverse problems and data assimilation.
\newblock arXiv:2410.10523.

\bibitem[Bakry, Gentil and Ledoux(2014)]{Bakry2014Analysis}
Bakry, D., Gentil, I. and Ledoux, M. (2014). \textit{Analysis and Geometry of Markov Diffusion Operators}, Vol. 348 of \textit{Grundlehren der mathematischen Wissenschaften
(GL)}, first edn, Springer Cham.

\bibitem[Benning and Burger(2018)]{Benning2018Modern}
Benning, M. and Burger, M. (2018).
Modern regularization methods for inverse problems, \textit{Acta Numerica}
\textbf{27},~1--111.

\bibitem[Beyler and Bach(2025)]{beyler2025convergence}
Beyler, E. and Bach, F. (2025).
Convergence of deterministic and stochastic diffusion-model samplers: A simple analysis in Wasserstein distance.
\newblock arXiv:2508.03210.

\bibitem[Bohra et~al.(2022)]{Bohra2022Bayesian}
Bohra, P., Pham, T.-A., Dong, J. and Unser, M. (2022). Bayesian inversion for nonlinear imaging models using deep generative priors, \textit{IEEE Transactions on Computational
Imaging} \textbf{8},~1237--1249.

\bibitem[Cai et~al.(2024)]{cai2024nf}
Cai, Z., Tang, J., Mukherjee, S., Li, J., Sch{\"o}nlieb, C.-B. and Zhang, X. (2024). NF-ULA: Normalizing flow-based unadjusted Langevin algorithm for imaging inverse problems, \textit{SIAM Journal on Imaging Sciences} \textbf{17}(2),~820--860.

\bibitem[Chen, Chewi and Niles-Weed(2021)]{Chen2021Dimension}
Chen, H.-B., Chewi, S. and Niles-Weed, J. (2021). Dimension-free log-{S}obolev inequalities for mixture distributions, \textit{Journal of Functional Analysis} \textbf{281}(11),~109236.

\bibitem[Chen, Lee and Lu(2023)]{Chen2023Improved}
Chen, H., Lee, H. and Lu, J. (2023).
Improved analysis of score-based generative modeling: {U}ser-friendly bounds under minimal smoothness assumptions, \textit{International Conference on Machine Learning}, pp.~4735--4763.

\bibitem[Chung et~al.(2023)]{chung2023diffusion}
Chung, H., Kim, J., Mccann, M.~T., Klasky, M.~L. and Ye, J.~C. (2023). Diffusion posterior sampling for general noisy inverse problems, \textit{International Conference on Learning Representations}.

\bibitem[Chen et~al.(2024)]{Chen2024Opportunities}
Chen, M., Mei, S., Fan, J. and Wang, M. (2024). Opportunities and challenges of diffusion models for generative {AI}, \textit{National Science Review} \textbf{11}(12),~nwae348.

\bibitem[Conrad et~al.(2016)]{Conrad2016Accelerating}
Conrad, P.~R., Marzouk, Y.~M., Pillai, N.~S. and Smith, A. (2016). Accelerating asymptotically exact MCMC for computationally intensive models via local approximations, \textit{Journal of the American Statistical Association} \textbf{111}(516),~1591--1607.

\bibitem[Dasgupta et~al.(2024)]{Dasgupta2024dimension}
Dasgupta, A., Patel, D.~V., Ray, D., Johnson, E.~A. and Oberai, A.~A. (2024). A dimension-reduced variational approach for solving physics-based inverse problems using generative adversarial network priors and normalizing flows, \textit{Computer Methods in Applied Mechanics and Engineering} \textbf{420},~116682.

\bibitem[Dashti and Stuart(2017)]{Dashti2017Bayesian}
Dashti, M. and Stuart, A.~M. (2017).
The Bayesian approach to inverse problems, In \textit{Handbook of Uncertainty
Quantification}, pp.~311--428.

\bibitem[Ding et~al.(2024a)]{ding2024characteristic}
Ding, Z., Duan, C., Jiao, Y., Li, R., Yang, J.~Z. and Zhang, P. (2024a). Characteristic learning for provable one step generation.
\newblock arXiv:2405.05512.

\bibitem[Ding et~al.(2024b)]{ding2024nonlinear}
Ding, Z., Duan, C., Jiao, Y., Yang, J.~Z., Yuan, C. and Zhang, P. (2024b). Nonlinear assimilation via score-based sequential {L}angevin sampling.
\newblock arXiv:2411.13443.

\bibitem[Dong and Tong(2022)]{Dong2022Spectral}
Dong, J. and Tong, X.~T. (2022).
Spectral gap of replica exchange Langevin diffusion on mixture distributions, \textit{Stochastic Processes and their Applications} \textbf{151},~451--489.

\bibitem[Durmus, Moulines and Pereyra(2022)]{Durmus2022Proximal}
Durmus, A., Moulines, E. and Pereyra, M. (2022). A proximal Markov chain Monte Carlo method for Bayesian inference in imaging inverse problems: When Langevin meets Moreau,
\textit{SIAM Review} \textbf{64}(4),~991--1028.

\bibitem[Efron(2011)]{Efron2011Tweedie}
Efron, B. (2011). Tweedie's formula and selection bias, \textit{Journal of the American Statistical Association} \textbf{106}(496),~1602--1614.

\bibitem[Fan, Gu and Li(2025)]{Fan2025Optimal}
Fan, J., Gu, Y. and Li, X. (2025). Optimal estimation of a factorizable density using diffusion models with {ReLU} neural networks.
\newblock arXiv:2510.03994.

\bibitem[Gamerman and Lopes(2006)]{Gamerman2006Markov}
Gamerman, D. and Lopes, H.~F. (2006). \textit{Markov Chain Monte Carlo: Stochastic Simulation for Bayesian
Inference}, CRC Texts in Statistical Science, second edn, Chapman \& Hall.

\bibitem[Garc{\'i}a Trillos, Hosseini and Sanz-Alonso(2023)]{Trillos2023from}
Garc{\'i}a Trillos, N., Hosseini, B. and Sanz-Alonso, D. (2023). From optimization to sampling through gradient flows, \textit{Notices of the American Mathematical Society}.

\bibitem[Grenioux et~al.(2024)]{grenioux2024stochastic}
Grenioux, L., Noble, M., Gabri\'{e}, M. and Oliviero~Durmus, A.
(2024). Stochastic localization via
iterative posterior sampling, \textit{International Conference on Machine Learning}, pp.~16337--16376.

\bibitem[Guo et~al.(2024)]{Guo2024Gradient}
Guo, Y., Yuan, H., Yang, Y., Chen, M. and Wang, M. (2024). Gradient guidance for diffusion models: An optimization perspective, \textit{Advances in Neural Information Processing
Systems} \textbf{37},~90736--90770.

\bibitem[Guo, Tao and Chen(2025)]{guo2025provable}
Guo, W., Tao, M. and Chen, Y. (2025). Provable benefit of annealed Langevin Monte Carlo for non-log-concave sampling, \textit{International Conference on Learning Representations}.

\bibitem[Hadamard(1902)]{hadamard1902problemes}
Hadamard, J. (1902). Sur les probl{\`e}mes
aux d{\'e}riv{\'e}es partielles et leur signification physique, \textit{Princeton university bulletin} pp.~49--52.

\bibitem[He, Rojas and Tao(2024)]{He2024Zeroth}
He, Y., Rojas, K. and Tao, M. (2024). Zeroth-order sampling methods for non-log-concave distributions:
Alleviating metastability by denoising diffusion, \textit{Advances in Neural Information Processing Systems} \textbf{37}, pp.~71122--71161.

\bibitem[Ho, Jain and Abbeel(2020)]{ho2020denoising}
Ho, J., Jain, A. and Abbeel, P. (2020). Denoising diffusion probabilistic models, \textit{Advances in Neural Information Processing Systems} \textbf{33},~6840--6851.

\bibitem[Huang et~al.(2025)]{Huang2025Schrodinger}
Huang, J., Jiao, Y., Kang, L., Liao, X., Liu, J. and Liu, Y.
(2025). Schr{\"o}dinger-F{\"o}llmer
sampler, \textit{IEEE Transactions on Information Theory} \textbf{71}(2),~1283--1299.

\bibitem[Huang et~al.(2024a)]{huang2024reverse}
Huang, X., Dong, H., Hao, Y., Ma, Y. and Zhang, T. (2024a). Reverse diffusion {M}onte {C}arlo, \textit{International Conference on Learning Representations}.

\bibitem[Huang et~al.(2024b)]{Huang2024Faster}
Huang, X., Zou, D., Dong, H., Ma, Y.-A. and Zhang, T.
(2024b). Faster sampling without isoperimetry
via diffusion-based Monte Carlo, \textit{Conference on Learning Theory}, pp.~2438--2493.

\bibitem[Huang, Wei and Chen(2026)]{Huang2026Denoising}
Huang, Z., Wei, Y. and Chen, Y. (2026). Denoising Diffusion probabilistic models are optimally adaptive to unknown low dimensionality, \textit{Mathematics of Operations Research}.

\bibitem[Hyv{\"a}rinen(2005)]{hyvarinen2005Estimation}
Hyv{\"a}rinen, A. (2005). Estimation of
non-normalized statistical models by score matching, \textit{Journal of Machine
Learning Research} \textbf{6}(24),~695--709.

\bibitem[Ito and Jin(2014)]{Ito2014Inverse}
Ito, K. and Jin, B. (2014). \textit{Inverse Problems}, World Scientific.

\bibitem[Janati et~al.(2025)]{Janati2025Bridging}
Janati, Y., Moulines, E., Olsson, J. and Oliviero-Durmus, A. (2025). Bridging diffusion posterior sampling and monte carlo methods: a survey, \textit{Philosophical Transactions of the Royal Society A: Mathematical, Physical and Engineering Sciences} \textbf{383}(2299),~20240331.

\bibitem[Ji et~al.(2025)]{ji2025potential}
Ji, X., Jiang, Z., Song, P. and Yuan, C. (2025). Potential identification via Tikhonov-PINNs, \textit{Inverse Problems} \textbf{41},~115008.

\bibitem[Jiao, Chen and Li(2025)]{jiao2025unified}
Jiao, Y., Chen, Y. and Li, G. (2025). Towards a unified framework for guided diffusion models.
\newblock arXiv:2512.04985.

\bibitem[Karras, Laine and Aila(2019)]{karras2019style}
Karras, T., Laine, S. and Aila, T. (2019). A style-based generator architecture for generative adversarial networks, \textit{IEEE/CVF Conference on Computer Vision and Pattern Recognition}, pp.~4401--4410.

\bibitem[Karras et~al.(2022)]{karras2022elucidating}
Karras, T., Aittala, M., Aila, T. and Laine, S. (2022). Elucidating the design space of diffusion-based generative models, \textit{Advances in Neural Information Processing Systems}
\textbf{35},~26565--26577.

\bibitem[Kornak et~al.(2026)]{Kornak2026Bayesian}
Kornak, J., Young, K., Friedman, E. and Bakas, K. (2026). Bayesian Image Analysis in {F}ourier Space, \textit{Journal of the American Statistical Association} \textbf{383}(2299),~1--14.

\bibitem[Kremling et~al.(2025)]{kremling2025nonasymptotic}
Kremling, G., Iafrate, F., Taheri, M. and Lederer, J. (2025). Non-asymptotic error bounds for probability flow ODEs under weak log-concavity. \newblock arXiv:2510.17608.

\bibitem[Laumont et~al.(2022)]{laumont2022bayesian}
Laumont, R., De~Bortoli, V., Almansa, A., Delon, J., Durmus, A. and Pereyra, M. (2022). Bayesian imaging using plug \& play priors: when Langevin meets Tweedie, \textit{SIAM Journal on Imaging Sciences} \textbf{15}(2),~701--737.

\bibitem[Lee, Shen and Tian(2021)]{Lee2021Structured}
Lee, Y.~T., Shen, R. and Tian, K. (2021). Structured logconcave sampling with a restricted Gaussian oracle, \textit{Conference on Learning Theory}, pp.~2993--3050.

\bibitem[Lee, Lu and Tan(2023)]{Lee2023Convergence}
Lee, H., Lu, J. and Tan, Y. (2023).
Convergence of score-based generative modeling for general data
distributions, \textit{Conference on Learning Theory}, pp.~946--985.

\bibitem[Li et~al.(2024)]{li2024faster}
Li, G., Wei, Y., Chen, Y. and Chi, Y. (2024). Towards faster non-asymptotic convergence for diffusion-based generative models, \textit{International Conference on Learning Representations}.

\bibitem[Li and Wang(2025)]{li2025efficient}
Li, J. and Wang, C. (2025). Efficient diffusion posterior sampling for noisy inverse problems, \textit{SIAM Journal on Imaging Sciences}.

\bibitem[Li et~al.(2025)]{li2025optimal}
Li, K., Han, W., Wang, Y. and Yang, Y. (2025). Optimal transport-based generative models for Bayesian posterior sampling.
\newblock arXiv:2504.08214.

\bibitem[Lipman et~al.(2023)]{lipman2023flow}
Lipman, Y., Chen, R. T.~Q., Ben-Hamu, H., Nickel, M. and Le, M.
(2023). Flow matching for generative
modeling, \textit{International Conference on Learning Representations}.

\bibitem[Nah, Kim and Lee(2017)]{nah2017deep}
Nah, S., Kim, T.~H. and Lee, K.~M. (2017). Deep multi-scale convolutional neural network for dynamic scene deblurring, \textit{IEEE/CVF Conference on Computer Vision and Pattern Recognition}, pp.~3883--3891.

\bibitem[Oko, Akiyama and Suzuki(2023)]{Oko2023Diffusion}
Oko, K., Akiyama, S. and Suzuki, T. (2023). Diffusion models are minimax optimal distribution estimators, \textit{International Conference on Machine Learning}, pp.~26517--26582.

\bibitem[Oliviero-Durmus et~al.(2025)]{Durmus2025Generative}
Oliviero-Durmus, A., Janati, Y., Moulines, E., Pereyra, M. and Reich,
S. (2025). Generative modelling meets
Bayesian inference: a new paradigm for inverse problems, \textit{Philosophical
Transactions of the Royal Society A: Mathematical, Physical and Engineering
Sciences} \textbf{383}(2299),~20240334.

\bibitem[Opper and Reich(2025)]{opper2025digital}
Opper, M. and Reich, S. (2025).
Digital twins: Mckean-pontryagin control for partially observed physical twins.
\newblock arXiv:2510.00937.

\bibitem[Patel, Ray and Oberai(2022)]{Patel2022Solution}
Patel, D.~V., Ray, D. and Oberai, A.~A. (2022). Solution of physics-based {B}ayesian inverse problems with deep generative priors, \textit{Computer Methods in Applied Mechanics and Engineering} \textbf{400},~115428.

\bibitem[Purohit et~al.(2025)]{purohit2024posterior}
Purohit, V., Repasky, M., Lu, J., Qiu, Q., Xie, Y. and Cheng, X.
(2025). Consistency posterior sampling for
diverse image synthesis, \textit{IEEE/CVF Conference on Computer Vision and Pattern Recognition}, pp.~28327--28336.

\bibitem[Reich(2019)]{Reich2019Data}
Reich, S. (2019). Data assimilation: The
Schr{\"o}dinger perspective, \textit{Acta Numerica} \textbf{28},~635--711.

\bibitem[Reich and Cotter(2015)]{Reich2015Probabilistic}
Reich, S. and Cotter, C. (2015).
\textit{Probabilistic Forecasting and Bayesian Data Assimilation}, Cambridge
University Press.

\bibitem[Rout et~al.(2023)]{rout2023solving}
Rout, L., Raoof, N., Daras, G., Caramanis, C., Dimakis, A. and
Shakkottai, S. (2023). Solving linear
inverse problems provably via posterior sampling with latent diffusion
models, \textit{Advances in Neural Information Processing Systems}.

\bibitem[S{\"a}rkk{\"a} and Solin(2019)]{sarkka2019applied}
S{\"a}rkk{\"a}, S. and Solin, A. (2019). \textit{Applied stochastic differential equations}, Vol.~10, Cambridge University Press.

\bibitem[Song et~al.(2020)]{song2020Sliced}
Song, Y., Garg, S., Shi, J. and Ermon, S. (2020). Sliced score matching: {A} scalable approach to density and score estimation, \textit{Conference on Uncertainty in Artificial Intelligence}, pp.~574--584.

\bibitem[Song et~al.(2021)]{song2021scorebased}
Song, Y., Sohl-Dickstein, J., Kingma, D.~P., Kumar, A., Ermon, S. and
Poole, B. (2021). Score-based generative
modeling through stochastic differential equations, \textit{International Conference on Learning Representations}.

\bibitem[Song et~al.(2023a)]{song2023pseudoinverse}
Song, J., Vahdat, A., Mardani, M. and Kautz, J. (2023a). Pseudoinverse-guided diffusion models for inverse problems, \textit{International Conference on Learning Representations}.

\bibitem[Song et~al.(2023b)]{song2023loss}
Song, J., Zhang, Q., Yin, H., Mardani, M., Liu, M.-Y., Kautz, J., Chen, Y.
and Vahdat, A. (2023b). Loss-guided
diffusion models for plug-and-play controllable generation, \textit{International Conference on Machine Learning}, pp.~32483--32498.

\bibitem[Stuart(2010)]{Stuart2010Inverse}
Stuart, A.~M. (2010). Inverse problems: A
Bayesian perspective, \textit{Acta Numerica} \textbf{19},~451--559.

\bibitem[Tang and Yang(2024)]{Tang2024Adaptivity}
Tang, R. and Yang, Y. (2024).
Adaptivity of diffusion models to manifold structures, \textit{International Conference on Artificial Intelligence and Statistics}, pp.~1648--1656.

\bibitem[Tran et~al.(2021)]{tran2021explore}
Tran, P., Tran, A.~T., Phung, Q. and Hoai, M. (2021). Explore image deblurring via encoded blur kernel space, \textit{IEEE/CVF Conference on Computer Vision and Pattern Recognition}, pp.~11956--11965.

\bibitem[Uehara et~al.(2025)]{uehara2025inference}
Uehara, M., Zhao, Y., Wang, C., Li, X., Regev, A., Levine, S. and
Biancalani, T. (2025). Inference-time
alignment in diffusion models with reward-guided generation: Tutorial and
review.
\newblock arXiv:2501.09685.

\bibitem[Ulyanov, Vedaldi and Lempitsky(2018)]{Ulyanov2018Deep}
Ulyanov, D., Vedaldi, A. and Lempitsky, V. (2018). Deep image prior, \textit{IEEE/CVF Conference on Computer Vision and Pattern Recognition}.

\bibitem[Vempala and Wibisono(2019)]{Vempala2019Rapid}
Vempala, S. and Wibisono, A. (2019).
Rapid convergence of the unadjusted {L}angevin algorithm: isoperimetry
suffices, \textit{Advances in Neural Information Processing Systems} \textbf{32}.

\bibitem[Villani(2003)]{Villani2003Topics}
Villani, C. (2003). \textit{Topics in Optimal
Transportation}, Vol.~58 of \textit{American Mathematical Society}, second edn,
Springer Cham.

\bibitem[Vincent(2011)]{vincent2011connection}
Vincent, P. (2011). A connection between
score matching and denoising autoencoders, \textit{Neural computation} \textbf{23}(7),~1661--1674.

\bibitem[Wu and Xie(2025)]{wu2025annealing}
Wu, D. and Xie, Y. (2025). Annealing
flow generative models towards sampling high-dimensional and multi-modal
distributions, \textit{International Conference on Machine Learning}.

\bibitem[Xie and Cheng(2026)]{Xie2026Flow}
Xie, Y. and Cheng, X. (2026). Flow-based generative models as iterative algorithms in probability space: an intuitive mathematical framework, \textit{IEEE Signal Processing Magazine} \textbf{43}(2),~37--50.

\bibitem[Xu and Chi(2024)]{Xu2024Provably}
Xu, X. and Chi, Y. (2024). Provably
robust score-based diffusion posterior sampling for plug-and-play image
reconstruction, \textit{Advances in Neural Information Processing Systems} \textbf{37}, pp.~36148--36184.

\bibitem[Zheng et~al.(2025)]{zheng2025inversebench}
Zheng, H., Chu, W., Zhang, B., Wu, Z., Wang, A., Feng, B., Zou, C., Sun, Y.,
Kovachki, N.~B., Ross, Z.~E., Bouman, K. and Yue, Y. (2025). Inversebench: Benchmarking
plug-and-play diffusion priors for inverse problems in physical sciences,
In The Thirteenth International Conference on Learning Representations.

\end{thebibliography}

\begin{thebibliography}{xx}

\bibitem[Achituve et~al.(2025)]{app:achituve2025inverse}
Achituve, I., Habi, H.~V., Rosenfeld, A., Netzer, A., Diamant, I. and Fetaya, E. (2025). Inverse problem sampling in latent space using sequential Monte Carlo, \textit{International Conference on Machine Learning}.

\bibitem[Arjovsky, Chintala and Bottou(2017)]{app:Arjovsky2017Wasserstein}
Arjovsky, M., Chintala, S. and Bottou, L. (2017). Wasserstein generative adversarial networks, \textit{International Conference on Machine Learning}, pp.~214--223.

\bibitem[Bakry and {\'E}mery(1985)]{app:Bakry1985Diffusions}
Bakry, D. and {\'E}mery, M. (1985).
Diffusions hypercontractives, In \textit{S{\'e}minaire de Probabilit{\'e}s XIX
1983/84}, pp.~177--206.

\bibitem[Bakry, Gentil and Ledoux(2014)]{app:Bakry2014Analysis}
Bakry, D., Gentil, I. and Ledoux, M. (2014). \textit{Analysis and Geometry of Markov Diffusion
Operators}, Vol. 348 of \textit{Grundlehren der mathematischen Wissenschaften
(GL)}, first edn, Springer Cham.

\bibitem[Bansal et~al.(2024)]{app:bansal2024universal}
Bansal, A., Chu, H.-M., Schwarzschild, A., Sengupta, R., Goldblum, M., Geiping,
J. and Goldstein, T. (2024).
Universal guidance for diffusion models, \textit{International Conference on Learning Representations}.

\bibitem[Bauer and Kohler(2019)]{app:bauer2019deep}
Bauer, B. and Kohler, M. (2019). On deep learning as a remedy for the curse of dimensionality in
nonparametric regression, \textit{The Annals of Statistics} \textbf{47}(4),~2261--2285.

\bibitem[Beyler and Bach(2025)]{app:beyler2025convergence}
Beyler, E. and Bach, F. (2025).
Convergence of deterministic and stochastic diffusion-model samplers: A
simple analysis in Wasserstein distance.
\newblock arXiv:2508.03210.

\bibitem[Brascamp and Lieb(1976)]{app:Brascamp1976Extensions}
Brascamp, H.~J. and Lieb, E.~H. (1976). On extensions of the Brunn-Minkowski and Pr{\'e}kopa-Leindler theorems, including inequalities for log concave functions, and with an application to the diffusion equation, \textit{Journal of Functional Analysis} \textbf{22}(4),~366--389.

\bibitem[Bruna and Han(2024)]{app:Bruna2024Provable}
Bruna, J. and Han, J. (2024).
Provable posterior sampling with denoising oracles via tilted transport, \textit{Advances in Neural Information Processing Systems} \textbf{37}, pp.~82863--82894.

\bibitem[Cai et~al.(2024)]{app:cai2024nf}
Cai, Z., Tang, J., Mukherjee, S., Li, J., Sch{\"o}nlieb, C.-B. and
Zhang, X. (2024). NF-ULA: Normalizing
flow-based unadjusted Langevin algorithm for imaging inverse problems, \textit{SIAM Journal on Imaging Sciences} \textbf{17}(2),~820--860.

\bibitem[Cardoso et~al.(2024)]{app:cardoso2024monte}
Cardoso, G., Janati, Y., Le~Corff, S. and Moulines, E.
(2024). Monte Carlo guided denoising
diffusion models for Bayesian linear inverse problems., \textit{International Conference on Learning Representations}.

\bibitem[Chang et~al.(2024)]{app:chang2024deep}
Chang, J., Ding, Z., Jiao, Y., Li, R. and Yang, J.~Z.
(2024). Deep conditional distribution
learning via conditional {F}\"ollmer flow.
\newblock arXiv:2402.01460.

\bibitem[Chen, Chewi and Niles-Weed(2021)]{app:Chen2021Dimension}
Chen, H.-B., Chewi, S. and Niles-Weed, J. (2021). Dimension-free log-{S}obolev inequalities for
mixture distributions, \textit{Journal of Functional Analysis} \textbf{281}(11),~109236.

\bibitem[Chen et~al.(2022)]{app:chen2022distribution}
Chen, M., Liao, W., Zha, H. and Zhao, T. (2022). Distribution approximation and statistical
estimation guarantees of generative adversarial networks.
\newblock arXiv:2002.03938.

\bibitem[Chen et~al.(2023a)]{app:chen2023probability}
Chen, S., Chewi, S., Lee, H., Li, Y., Lu, J. and Salim, A.
(2023a). The probability flow ODE is provably
fast, \textit{Advances in Neural Information Processing Systems} \textbf{36}, pp.~68552--68575.

\bibitem[Chen et~al.(2023b)]{app:chen2023sampling}
Chen, S., Chewi, S., Li, J., Li, Y., Salim, A. and Zhang, A.
(2023b). Sampling is as easy as learning the
score: theory for diffusion models with minimal data assumptions, \textit{International Conference on Learning Representations}.

\bibitem[Chen, Lee and Lu(2023)]{app:Chen2023Improved}
Chen, H., Lee, H. and Lu, J. (2023).
Improved analysis of score-based generative modeling: {U}ser-friendly bounds
under minimal smoothness assumptions, \textit{International Conference on Machine Learning}, pp.~4735--4763.

\bibitem[Chen et~al.(2025)]{app:chen2025solving}
Chen, H., Ren, Y., Min, M.~R., Ying, L. and Izzo, Z. (2025). Solving inverse problems via diffusion-based priors:
An approximation-free ensemble sampling approach.
\newblock arXiv:2506.03979.

\bibitem[Chewi et~al.(2024)]{app:Chewi2024Analysis}
Chewi, S., Erdogdu, M.~A., Li, M., Shen, R. and Zhang, M.~S.
(2024). Analysis of {L}angevin {M}onte
{C}arlo from {P}oincar{\'e} to log-{S}obolev, \textit{Foundations of
Computational Mathematics} .

\bibitem[Chidambaram et~al.(2024)]{app:Chidambaram2024what}
Chidambaram, M., Gatmiry, K., Chen, S., Lee, H. and Lu, J.
(2024). What does guidance do? a
fine-grained analysis in a simple setting, \textit{Advances in Neural
Information Processing Systems} \textbf{37},~84968--85005.

\bibitem[Choi et~al.(2020)]{app:choi2020stargan}
Choi, Y., Uh, Y., Yoo, J. and Ha, J.-W. (2020). Stargan v2: Diverse image synthesis for multiple
domains, \textit{IEEE/CVF Conference on Computer Vision and Pattern Recognition}, pp.~8188--8197.

\bibitem[Chu et~al.(2025)]{app:chu2025split}
Chu, W., Wu, Z., Chen, Y., Song, Y. and Yue, Y. (2025). Split Gibbs discrete diffusion posterior sampling, \textit{Advances in Neural Information Processing Systems}.

\bibitem[Chung et~al.(2022)]{app:chung2022improving}
Chung, H., Sim, B., Ryu, D. and Ye, J.~C. (2022). Improving diffusion models for inverse problems
using manifold constraints, \textit{Advances in Neural Information Processing
Systems} \textbf{35},~25683--25696.

\bibitem[Chung et~al.(2023)]{app:chung2023diffusion}
Chung, H., Kim, J., Mccann, M.~T., Klasky, M.~L. and Ye, J.~C.
(2023). Diffusion posterior sampling for
general noisy inverse problems, \textit{International Conference on Learning Representations}.

\bibitem[Chung et~al.(2025)]{app:chung2024cfg}
Chung, H., Kim, J., Park, G.~Y., Nam, H. and Ye, J.~C. (2025). CFG++: Manifold-constrained classifier free guidance for diffusion models, \textit{International Conference on Learning Representations}.

\bibitem[Coeurdoux, Dobigeon and Chainais(2024)]{app:Coeurdoux2024Plug}
Coeurdoux, F., Dobigeon, N. and Chainais, P. (2024). Plug-and-play split Gibbs sampler: Embedding deep
generative priors in Bayesian inference, \textit{IEEE Transactions on Image
Processing} \textbf{33},~3496--3507.

\bibitem[Cotter, Dashti and Stuart(2010)]{app:Cotter2010Approximation}
Cotter, S.~L., Dashti, M. and Stuart, A.~M. (2010). Approximation of Bayesian inverse problems for
pdes, \textit{SIAM Journal on Numerical Analysis} \textbf{48}(1),~322--345.

\bibitem[Dhariwal and Nichol(2021)]{app:dhariwal2021diffusion}
Dhariwal, P. and Nichol, A.~Q. (2021). Diffusion models beat {GAN}s on image synthesis, \textit{Advances in Neural Information Processing Systems}.

\bibitem[Ding et~al.(2024a)]{app:ding2024characteristic}
Ding, Z., Duan, C., Jiao, Y., Li, R., Yang, J.~Z. and Zhang, P.
(2024a). Characteristic learning for
provable one step generation.
\newblock arXiv:2405.05512.

\bibitem[Ding et~al.(2024b)]{app:ding2024nonlinear}
Ding, Z., Duan, C., Jiao, Y., Yang, J.~Z., Yuan, C. and Zhang, P.
(2024b). Nonlinear assimilation via
score-based sequential {L}angevin sampling.
\newblock arXiv:2411.13443.

\bibitem[Ding et~al.(2025)]{app:Ding2025Semi}
Ding, Z., Duan, C., Jiao, Y. and Yang, J.~Z. (2025). Semi-supervised deep Sobolev regression:
{E}stimation and variable selection by {ReQU} neural network, \textit{IEEE
Transactions on Information Theory} \textbf{71}(4),~2955--2981.

\bibitem[Duchi(2025)]{app:Duchi2025Statistics}
Duchi, J. (2025). \textit{Statistics and
Information Theory}, Lecture notes.

\bibitem[Duong et~al.(2025)]{app:duong2025telegrapher}
Duong, R., Chemseddine, J., Friz, P.~K. and Steidl, G.
(2025). Telegraphers generative model via
Kac flows.
\newblock arXiv:2506.20641.

\bibitem[Efron(2011)]{app:Efron2011Tweedie}
Efron, B. (2011). Tweedie's formula and selection bias, \textit{Journal of the American Statistical Association} \textbf{106}(496),~1602--1614.

\bibitem[Evans(2010)]{app:evans2010partial}
Evans, L.~C. (2010). \textit{{Partial
Differential Equations}}, Vol.~19 of \textit{Graduate Studies in Mathematics},
second edn, American Mathematical Society (AMS).

\bibitem[Fu et~al.(2024)]{app:fu2024unveil}
Fu, H., Yang, Z., Wang, M. and Chen, M. (2024). Unveil conditional diffusion models with
classifier-free guidance: A sharp statistical theory.
\newblock arXiv:2403.11968.

\bibitem[Galashov et~al.(2026)]{app:galashov2025learn}
Galashov, A., Pokle, A., Doucet, A., Gretton, A., Delbracio, M. and De~Bortoli, V. (2026). Learn to guide your diffusion model, \textit{International Conference on Learning Representations}.

\bibitem[Grenioux et~al.(2024)]{app:grenioux2024stochastic}
Grenioux, L., Noble, M., Gabri\'{e}, M. and Oliviero~Durmus, A.
(2024). Stochastic localization via
iterative posterior sampling, \textit{International Conference on Machine Learning}, pp.~16337--16376.

\bibitem[Gulrajani et~al.(2017)]{app:Gulrajani2017Improved}
Gulrajani, I., Ahmed, F., Arjovsky, M., Dumoulin, V. and Courville,
A.~C. (2017). Improved training of
{W}asserstein {GANs}, \textit{Advances in Neural Information Processing Systems} \textbf{30}.

\bibitem[Guo et~al.(2024)]{app:Guo2024Gradient}
Guo, Y., Yuan, H., Yang, Y., Chen, M. and Wang, M. (2024). Gradient guidance for diffusion models: An
optimization perspective, \textit{Advances in Neural Information Processing
Systems} \textbf{37},~90736--90770.

\bibitem[He, Rojas and Tao(2024)]{app:He2024Zeroth}
He, Y., Rojas, K. and Tao, M. (2024). Zeroth-order sampling methods for non-log-concave distributions:
Alleviating metastability by denoising diffusion, \textit{Advances in Neural Information Processing Systems} \textbf{37}, pp.~71122--71161.

\bibitem[Ho and Salimans(2022)]{app:ho2022classifier}
Ho, J. and Salimans, T. (2022). Classifier-free diffusion guidance, arXiv:2207.12598.

\bibitem[Ho, Jain and Abbeel(2020)]{app:ho2020denoising}
Ho, J., Jain, A. and Abbeel, P. (2020). Denoising diffusion probabilistic models, \textit{Advances in Neural Information Processing Systems} \textbf{33},~6840--6851.

\bibitem[Huang et~al.(2022)]{app:huang2022error}
Huang, J., Jiao, Y., Li, Z., Liu, S., Wang, Y. and Yang, Y.
(2022). An error analysis of generative
adversarial networks for learning distributions, \textit{Journal of Machine
Learning Research} \textbf{23}(116),~1--43.

\bibitem[Huang, Huang and Lin(2025)]{app:Huang2025Convergence}
Huang, D.~Z., Huang, J. and Lin, Z. (2025). Convergence analysis of probability flow ode for
score-based generative models, \textit{IEEE Transactions on Information Theory}
\textbf{71}(6),~4581--4601.

\bibitem[Hyv{\"a}rinen(2005)]{app:hyvarinen2005Estimation}
Hyv{\"a}rinen, A. (2005). Estimation of
non-normalized statistical models by score matching, \textit{Journal of Machine
Learning Research} \textbf{6}(24),~695--709.

\bibitem[Jiang, Wang and Yang(2025)]{app:jiang2025simulation}
Jiang, H., Wang, Y. and Yang, Y. (2025). Simulation-based inference via Langevin dynamics
with score matching.
\newblock arXiv:2509.03853.

\bibitem[Jiao et~al.(2023)]{app:Jiao2023deep}
Jiao, Y., Shen, G., Lin, Y. and Huang, J. (2023). Deep nonparametric regression on approximate
manifolds: Nonasymptotic error bounds with polynomial prefactors, \textit{The
Annals of Statistics} \textbf{51}(2),~691 -- 716.

\bibitem[Jiao, Chen and Li(2025)]{app:jiao2025unified}
Jiao, Y., Chen, Y. and Li, G. (2025). Towards a unified framework for guided diffusion models.
\newblock arXiv:2512.04985.

\bibitem[Karras et~al.(2022)]{app:karras2022elucidating}
Karras, T., Aittala, M., Aila, T. and Laine, S. (2022). Elucidating the design space of diffusion-based
generative models, \textit{Advances in Neural Information Processing Systems}
\textbf{35},~26565--26577.

\bibitem[Kelvinius, Zhao and Lindsten(2025)]{app:kelvinius2025solving}
Kelvinius, F.~E., Zhao, Z. and Lindsten, F. (2025). Solving linear-gaussian Bayesian inverse problems
with decoupled diffusion sequential Monte Carlo, \textit{International Conference on Machine Learning}.

\bibitem[Kim et~al.(2022)]{app:Kim2022Soft}
Kim, D., Shin, S., Song, K., Kang, W. and Moon, I.-C.
(2022). Soft truncation: A universal
training technique of score-based diffusion model for high precision score
estimation, \textit{International Conference on Machine Learning}, pp.~11201--11228.

\bibitem[Kim, Kim and Park(2025)]{app:kim2025testtime}
Kim, S., Kim, M. and Park, D. (2025). Test-time alignment of diffusion models without reward over-optimization,
In The Thirteenth International Conference on Learning
Representations.

\bibitem[Kohler and Langer(2021)]{app:kohler2021rate}
Kohler, M. and Langer, S. (2021).
On the rate of convergence of fully connected deep neural network regression
estimates, \textit{The Annals of Statistics} \textbf{49}(4),~2231--2249.

\bibitem[Kremling et~al.(2025)]{app:kremling2025nonasymptotic}
Kremling, G., Iafrate, F., Taheri, M. and Lederer, J.
(2025). Non-asymptotic error bounds for
probability flow ODEs under weak log-concavity.
\newblock arXiv:2510.17608.

\bibitem[Kwon, Fan and Lee(2022)]{app:Kwon2022Score}
Kwon, D., Fan, Y. and Lee, K. (2022). Score-based generative modeling secretly minimizes the Wasserstein
distance, \textit{Advances in Neural Information Processing Systems} \textbf{35}, pp.~20205--20217.

\bibitem[Lai et~al.(2017)]{app:lai2017deep}
Lai, W.-S., Huang, J.-B., Ahuja, N. and Yang, M.-H. (2017). Deep Laplacian pyramid networks for fast and accurate
super-resolution, \textit{IEEE/CVF Conference on Computer Vision and Pattern Recognition}, pp.~624--632.

\bibitem[Lacker(2023)]{app:Lacker2023Hierarchies}
Lacker, D. (2023). Hierarchies, entropy, and quantitative propagation of chaos for mean field diffusions, \textit{Probability and Mathematical Physics} \textbf{4}(2),~377--432.

\bibitem[Laumont et~al.(2022)]{app:laumont2022bayesian}
Laumont, R., De~Bortoli, V., Almansa, A., Delon, J., Durmus, A. and
Pereyra, M. (2022). Bayesian imaging using
plug \& play priors: when Langevin meets Tweedie, \textit{SIAM Journal on
Imaging Sciences} \textbf{15}(2),~701--737.

\bibitem[Ledoux(1999)]{app:Ledoux1999Concentration}
Ledoux, M. (1999). Concentration of measure
and logarithmic {S}obolev inequalities, In \textit{S{\'e}minaire de
Probabilit{\'e}s XXXIII}, pp.~120--216.

\bibitem[Lee, Lu and Tan(2023)]{app:Lee2023Convergence}
Lee, H., Lu, J. and Tan, Y. (2023).
Convergence of score-based generative modeling for general data
distributions, \textit{Conference on Learning Theory}, pp.~946--985.

\bibitem[Lee et~al.(2025)]{app:lee2025debiasing}
Lee, C.~K., Jeha, P., Frellsen, J., Lio, P., Albergo, M.~S. and
Vargas, F. (2025). Debiasing guidance for
discrete diffusion with sequential Monte Carlo.
\newblock arXiv:2502.06079.

\bibitem[Li and Wang(2025)]{app:li2025efficient}
Li, J. and Wang, C. (2025). Efficient diffusion posterior sampling for noisy inverse problems, \textit{SIAM Journal on Imaging Sciences}.

\bibitem[Li et~al.(2024)]{app:li2024faster}
Li, G., Wei, Y., Chen, Y. and Chi, Y. (2024). Towards faster non-asymptotic convergence for diffusion-based generative models, \textit{International Conference on Learning Representations}.

\bibitem[Martin et~al.(2025)]{app:martin2025pnpflow}
Martin, S.~T., Gagneux, A., Hagemann, P. and Steidl, G.
(2025). PnP-flow: Plug-and-play image
restoration with flow matching, \textit{International Conference on Learning Representations}.

\bibitem[Miyato et~al.(2018)]{app:miyato2018spectral}
Miyato, T., Kataoka, T., Koyama, M. and Yoshida, Y. (2018). Spectral normalization for generative adversarial
networks, \textit{International Conference on Learning Representations}.

\bibitem[Moufad et~al.(2025)]{app:moufad2025conditional}
Moufad, B., Janati, Y., Durmus, A., Ghorbel, A., Moulines, E. and
Olsson, J. (2025). Conditional diffusion
models with classifier-free gibbs-like guidance.
\newblock arXiv:2505.21101.

\bibitem[Nah, Kim and Lee(2017)]{app:nah2017deep}
Nah, S., Kim, T.~H. and Lee, K.~M. (2017). Deep multi-scale convolutional neural network for
dynamic scene deblurring, \textit{IEEE/CVF Conference on Computer Vision and Pattern Recognition}, pp.~3883--3891.

\bibitem[Oko, Akiyama and Suzuki(2023)]{app:Oko2023Diffusion}
Oko, K., Akiyama, S. and Suzuki, T. (2023). Diffusion models are minimax optimal distribution
estimators, \textit{International Conference on Machine Learning}, pp.~26517--26582.

\bibitem[Opper and Reich(2025)]{app:opper2025digital}
Opper, M. and Reich, S. (2025).
Digital twins: Mckean-pontryagin control for partially observed physical
twins.
\newblock arXiv:2510.00937.

\bibitem[Pidstrigach(2022)]{app:Pidstrigach2022Score}
Pidstrigach, J. (2022). Score-based
generative models detect manifolds, \textit{Advances in Neural Information Processing Systems} \textbf{35}, pp.~35852--35865.

\bibitem[Purohit et~al.(2025)]{app:purohit2024posterior}
Purohit, V., Repasky, M., Lu, J., Qiu, Q., Xie, Y. and Cheng, X.
(2025). Consistency posterior sampling for
diverse image synthesis, \textit{IEEE/CVF Conference on Computer Vision and Pattern Recognition}, pp.~28327--28336.

\bibitem[Saremi, Park and Bach(2024)]{app:saremi2024chain}
Saremi, S., Park, J.~W. and Bach, F. (2024). Chain of log-concave {M}arkov chains, \textit{International Conference on Learning Representations}.

\bibitem[Schmidt-Hieber(2020)]{app:schmidt2020nonparametric}
Schmidt-Hieber, J. (2020). Nonparametric
regression using deep neural networks with ReLU activation function, \textit{The Annals of Statistics} \textbf{48}(4),~1875--1897.

\bibitem[Skreta et~al.(2025)]{app:skreta2025feynmankac}
Skreta, M., Akhound-Sadegh, T., Ohanesian, V., Bondesan, R., Aspuru-Guzik, A.,
Doucet, A., Brekelmans, R., Tong, A. and Neklyudov, K.
(2025). Feynman-Kac correctors in diffusion:
Annealing, guidance, and product of experts, \textit{International Conference on Machine Learning}.

\bibitem[Song et~al.(2020)]{app:song2020Sliced}
Song, Y., Garg, S., Shi, J. and Ermon, S. (2020). Sliced score matching: {A} scalable approach to
density and score estimation, \textit{Conference on Uncertainty in Artificial Intelligence}, pp.~574--584.

\bibitem[Song et~al.(2021)]{app:song2021scorebased}
Song, Y., Sohl-Dickstein, J., Kingma, D.~P., Kumar, A., Ermon, S. and
Poole, B. (2021). Score-based generative
modeling through stochastic differential equations, \textit{International Conference on Learning Representations}.

\bibitem[Song et~al.(2023a)]{app:song2023pseudoinverse}
Song, J., Vahdat, A., Mardani, M. and Kautz, J. (2023a). Pseudoinverse-guided diffusion models for inverse
problems, \textit{International Conference on Learning Representations}.

\bibitem[Song et~al.(2023b)]{app:song2023loss}
Song, J., Zhang, Q., Yin, H., Mardani, M., Liu, M.-Y., Kautz, J., Chen, Y.
and Vahdat, A. (2023b). Loss-guided
diffusion models for plug-and-play controllable generation, \textit{International Conference on Machine Learning}, pp.~32483--32498.

\bibitem[Sun et~al.(2024)]{app:Sun2024Provable}
Sun, Y., Wu, Z., Chen, Y., Feng, B.~T. and Bouman, K.~L.
(2024). Provable probabilistic imaging
using score-based generative priors, \textit{IEEE Transactions on Computational
Imaging} \textbf{10},~1290--1305.

\bibitem[Tang and Yang(2024)]{app:Tang2024Adaptivity}
Tang, R. and Yang, Y. (2024).
Adaptivity of diffusion models to manifold structures, \textit{International Conference on Artificial Intelligence and Statistics}, pp.~1648--1656.

\bibitem[Tang, Lin and Yang(2024)]{app:tang2024conditional}
Tang, R., Lin, L. and Yang, Y. (2024). Conditional diffusion models are minimax-optimal and manifold-adaptive for
conditional distribution estimation.
\newblock arXiv:2409.20124.

\bibitem[Tran et~al.(2021)]{app:tran2021explore}
Tran, P., Tran, A.~T., Phung, Q. and Hoai, M. (2021). Explore image deblurring via encoded blur kernel
space, \textit{IEEE/CVF Conference on Computer Vision and Pattern Recognition}, pp.~11956--11965.

\bibitem[Tsybakov(2009)]{app:Tsybakov2009Introduction}
Tsybakov, A.~B. (2009). \textit{Introduction to
Nonparametric Estimation}, Springer Series in Statistics (SSS), first edn,
Springer New York, NY.

\bibitem[Uehara et~al.(2025)]{app:uehara2025inference}
Uehara, M., Zhao, Y., Wang, C., Li, X., Regev, A., Levine, S. and
Biancalani, T. (2025). Inference-time
alignment in diffusion models with reward-guided generation: Tutorial and
review.
\newblock arXiv:2501.09685.

\bibitem[Vempala and Wibisono(2019)]{app:Vempala2019Rapid}
Vempala, S. and Wibisono, A. (2019).
Rapid convergence of the unadjusted {L}angevin algorithm: isoperimetry
suffices, \textit{Advances in Neural Information Processing Systems} \textbf{32}.

\bibitem[Vershynin(2018)]{app:Vershynin2018High}
Vershynin, R. (2018). \textit{High-Dimensional
Probability: An Introduction with Applications in Data Science}, Cambridge
Series in Statistical and Probabilistic Mathematics, Cambridge University
Press.

\bibitem[Villani(2003)]{app:Villani2003Topics}
Villani, C. (2003). \textit{Topics in Optimal
Transportation}, Vol.~58 of \textit{American Mathematical Society}, second edn,
Springer Cham.

\bibitem[Villani(2009)]{app:Villani2009Optimal}
Villani, C. (2009). \textit{Optimal Transport:
Old and New}, Vol. 338 of \textit{Grundlehren der mathematischen Wissenschaften
(GL)}, first edn, Springer Berlin, Heidelberg.

\bibitem[Vincent(2011)]{app:vincent2011connection}
Vincent, P. (2011). A connection between
score matching and denoising autoencoders, \textit{Neural computation} \textbf{23}(7),~1661--1674.

\bibitem[Wu et~al.(2023)]{app:wu2023practical}
Wu, L., Trippe, B.~L., Naesseth, C.~A., Cunningham, J.~P. and Blei, D.
(2023). Practical and asymptotically exact
conditional sampling in diffusion models, \textit{Advances in Neural Information Processing Systems}.

\bibitem[Wu et~al.(2024a)]{app:wu2024theoretical}
Wu, Y., Chen, M., Li, Z., Wang, M. and Wei, Y. (2024a). Theoretical insights for diffusion guidance: A case study for Gaussian mixture models, \textit{International Conference on Machine Learning}.

\bibitem[Wu et~al.(2024b)]{app:Wu2024Principled}
Wu, Z., Sun, Y., Chen, Y., Zhang, B., Yue, Y. and Bouman, K.
(2024b). Principled probabilistic imaging
using diffusion models as plug-and-play priors, \textit{Advances in Neural
Information Processing Systems} \textbf{37},~118389--118427.

\bibitem[Xu and Chi(2024)]{app:Xu2024Provably}
Xu, X. and Chi, Y. (2024). Provably
robust score-based diffusion posterior sampling for plug-and-play image
reconstruction, \textit{Advances in Neural Information Processing Systems} \textbf{37}, pp.~36148--36184.

\bibitem[Yu et~al.(2023)]{app:Yu2023FreeDoM}
Yu, J., Wang, Y., Zhao, C., Ghanem, B. and Zhang, J. (2023). Freedom: Training-free energy-guided conditional
diffusion model, \textit{IEEE/CVF International Conference on Computer Vision}, pp.~23174--23184.

\bibitem[Zhang et~al.(2025)]{app:Zhang2025Improving}
Zhang, B., Chu, W., Berner, J., Meng, C., Anandkumar, A. and Song, Y.
(2025). Improving diffusion inverse problem
solving with decoupled noise annealing, \textit{IEEE/CVF Conference on Computer Vision and Pattern Recognition}, pp.~20895--20905.

\bibitem[Zheng et~al.(2025)]{app:zheng2025blade}
Zheng, H., Wang, A., Wu, Z., Huang, Z., Baptista, R. and Yue, Y.
(2025). Blade: A derivative-free Bayesian
inversion method using diffusion priors.
\newblock arXiv:2510.10968.

\end{thebibliography}
\end{document}